\setlist[enumerate]{itemsep=0.25\baselineskip, parsep=0pt}
 \theoremstyle{plain}
 \newtheorem{theorem}{Theorem}
 \newtheorem*{theorem*}{Theorem}
 \newtheorem{lemma}{Lemma}
 \newtheorem{proposition}{Proposition}
 \theoremstyle{definition}
 \newtheorem{definition}{Definition}
\newtheorem{assumption}{Assumption}
\newtheorem{property}{Property}
\theoremstyle{remark}
\newtheorem{remark}{\textit{Remark}}
\newtheorem*{remark*}{Remark}
\newcommand{\norm}[1]{\left\lVert#1\right\rVert}
\newcommand{\normplain}[1]{\lVert#1\rVert}
\newcommand{\normbig}[1]{\big\lVert#1\big\rVert}
\newcommand{\normBig}[1]{\Big\lVert#1\Big\rVert}
\newcommand{\E}[1]{\mathbb{E}\left[#1\right]}
\newcommand{\Eplain}[1]{\mathbb{E}[#1]}
\newcommand{\Ebig}[1]{\mathbb{E}\big[#1\big]}
\newcommand{\EBig}[1]{\mathbb{E}\Big[#1\Big]}
\newcommand{\givenplain}{\,|\,}
\newcommand{\givenbig}{\,\big|\,}
\newcommand{\givenBig}{\,\Big|\,}
\newcommand{\Var}[1]{\text{Var}{\left[#1\right]}}
\newcommand{\Probbig}[1]{\mathbb{P}\big[#1\big]}
\newcommand{\ProbBig}[1]{\mathbb{P}\Big[#1\Big]}
\newcommand{\Prob}[1]{\mathbb{P}\left[#1\right]}
\newcommand{\R}{\mathbb{R}}
\newcommand{\indibrac}[1]{\mathbbm{1}\!\left\{#1\right\}}
\newcommand{\indibracbig}[1]{\mathbbm{1}\!\big\{#1\big\}}
\newcommand{\abs}[1]{\left\lvert#1\right\rvert}
\newcommand{\absplain}[1]{\lvert#1\rvert}
\newcommand{\absbig}[1]{\big\lvert#1\big\rvert}
\newcommand{\absBig}[1]{\Big\lvert#1\Big\rvert}
\newcommand{\ceil}[1]{\left\lceil#1\right\rceil}
\newcommand{\floor}[1]{\left\lfloor#1\right\rfloor}
\newcommand{\ve}[1]{\bm{#1}}
\newcommand{\vone}{\mathbbm{1}}
\newcommand{\veS}{\ve{S}}
\newcommand{\veA}{\ve{A}}
\newcommand{\sspa}{\mathbb{S}}
\newcommand{\aspa}{\mathbb{A}}
\newcommand{\sumN}{\sum_{i\in[N]}}
\newcommand{\sumsa}{\sum_{s\in\sspa, a\in\aspa}}
\newcommand{\sums}{\sum_{s\in\sspa}}
\newcommand{\rel}{\textup{rel}} 
\newcommand{\rmax}{r_{\max}}
\newcommand{\ravg}{R}
\newcommand{\rsysn}{\ravg(\pi, \veS_0)}
\newcommand{\rliminf}{\ravg^{-}(\pi, \veS_0)}
\newcommand{\rlimsup}{\ravg^{+}(\pi, \veS_0)}
\newcommand{\ropt}{\ravg^*(N, \veS_0)}
\newcommand{\rrel}{\ravg^\rel}
\newcommand{\btotal}{\alpha N}
\newcommand{\bsub}{B}
\newcommand{\pibar}{{\sysbar{\pi}}}
\newcommand{\pibs}{{\pibar^*}}
\newcommand{\lppriority}{\textup{Optimal Local Control}}
\newcommand{\syshat}[1]{\widehat{#1}}
\newcommand{\sysbar}[1]{\bar{#1}}
\newcommand{\simplex}{\Delta}
\newcommand{\threshbar}{\overline{\eta}}
\newcommand{\sempty}{S^{\emptyset}}
\newcommand{\costvec}{c_\pibs}
\newcommand{\md}{m_d}
\newcommand{\Md}[1]{N\md(x)}
\newcommand{\statdist}{\mu^*}
\newcommand{\wmat}{W}
\newcommand{\umat}{U}
\newcommand{\rhoFinal}{\rho_1}
\newcommand{\errfe}{\epsilon_N^{\text{fe}}}
\newcommand{\errtol}{\epsilon_N^{\text{rd}}}
\newcommand{\hw}{h_\wmat}
\newcommand{\hu}{h_\umat}
\newcommand{\lamw}{\lambda_\wmat}
\newcommand{\lamu}{\lambda_\umat}
\newcommand{\rhow}{\rho_{w}}
\newcommand{\rhou}{\rho_{u}}
\newcommand{\Da}{D^{\pibs}}
\newcommand{\Db}{D^{\textup{OL}}}
\newcommand{\Dtemp}{\Db_{\textup{temp}}}
\newcommand{\Ltemp}{L}
\newcommand{\Fullstate}{\Sigma}
\newcommand{\lamq}{\lambda_{Q}}
\newcommand{\sneu}{\tilde{s}}
\newcommand{\vlam}{f^*}
\newcommand{\qlam}{Q^*}
\newcommand{\rlam}{r_{\lambda^*}}
\newcommand{\rewardgap}{\epsilon_0}
\newcommand{\slk}{\delta}
\newcommand{\slkb}{\delta}
\newcommand{\disy}{d_{\textup{IC}}}
\newcommand{\gapus}{\epsilon_1} 
\newcommand{\gapusmax}{\epsilon_2}
\newcommand{\gapst}{\epsilon_3}
\newcommand{\radiusus}{\eta'}
\newcommand{\noiseortho}{K_{\textup{noise}}}
\newcommand{\subus}{+}
\newcommand{\subst}{-}
\newcommand{\Jus}{J_{\subus}}
\newcommand{\Jst}{J_{\subst}}
\newcommand{\Ius}{I_{\subus}}
\newcommand{\Ist}{I_{\subst}}
\newcommand{\Uus}{U_{\subus}}
\newcommand{\Ust}{U_{\subst}}
\newcommand{\spus}{\mathcal{V}_{\subus}}
\newcommand{\spst}{\mathcal{V}_{\subst}}
\newcommand{\hus}{h_{\subus}}
\newcommand{\hst}{h_{\subst}}
\newcommand{\lamus}{\lambda_{\subus}}
\newcommand{\lamst}{\lambda_{\subst}}
\newcommand{\Knorm}{K_{\textup{norm}}}
\newcommand{\goodevent}{\mathcal{E}}
\newcommand{\hlb}{h_{\textup{lb}}}
\newcommand{\ratiolb}{L_{\textup{lb}}}
\newcommand{\ximax}{\xi_{\max}}
\newcommand{\ximin}{\xi_{\min}}
\newcommand{\vonescaled}{\vone_{\textup{scaled}}}
\newcommand{\leftsub}{\text{L}}
\newcommand{\rightsub}{\text{R}}
\Crefname{subroutine}{Subroutine}{Subroutines}
\crefname{subroutine}{subroutine}{subroutines}
\tikzstyle{lemma} = [rectangle, rounded corners, minimum width=3cm, minimum height=1cm, text centered, draw=black, fill=white]
\tikzstyle{theorem} = [ellipse, minimum width=3cm, minimum height=1cm, text centered, draw=black, fill=white]
\tikzstyle{arrow} = [thick,->,>=stealth]
 \author{%
   Yige Hong$^1$\thanks{Corresponding author} \quad Qiaomin Xie$^2$ \quad Yudong Chen$^3$ \quad  Weina Wang$^1$   \medskip\\
   $^1$Computer Science Department, Carnegie Mellon University  \\ $^2$Department of Industrial and Systems Engineering, University of Wisconsin-Madison  \\ $^3$Department of Computer Sciences, University of Wisconsin-Madison \smallskip \\
   \texttt{\{yigeh,weinaw\}@cs.cmu.edu}\\
   \texttt{\{qiaomin.xie,yudong.chen\}@wisc.edu}
 }
\title{Achieving Exponential Asymptotic Optimality in Average-Reward Restless Bandits without Global Attractor Assumption}
\date{}
\begin{document}
\maketitle
\begin{abstract}
    We consider the infinite-horizon average-reward restless bandit problem. 
    We propose a novel \emph{two-set policy} that maintains two dynamic subsets of arms: 
    one subset of arms has a nearly optimal state distribution and takes actions according to an Optimal Local Control routine; the other subset of arms is driven towards the optimal state distribution and gradually merged into the first subset. 
    We show that our two-set policy is asymptotically optimal with an $O(\exp(-C N))$ optimality gap for an $N$-armed problem, under the mild assumptions of aperiodic-unichain, non-degeneracy, and local stability. 
    Our policy is the first to achieve \emph{exponential asymptotic optimality} under the above set of easy-to-verify assumptions, whereas prior work either requires a strong \emph{global attractor} assumption or only achieves an $O(1/\sqrt{N})$ optimality gap. 
    We further discuss obstacles in weakening the assumptions by demonstrating examples where exponential asymptotic optimality is not achievable when any of the three assumptions is violated.
    Notably, we prove a lower bound for a large class of locally unstable restless bandits, showing that local stability is particularly fundamental for exponential asymptotic optimality. 
    Finally, we use simulations to demonstrate that the two-set policy outperforms previous policies on certain RB problems and performs competitively overall. 
\end{abstract}

\section{Introduction}\label{sec:intro}

The restless bandit (RB) \citep{Whi_88_rb} problem is a stochastic sequential decision-making problem consisting of multiple arms coupled by a constraint. 
Each arm operates as a Markov decision process (MDP) with two possible actions: activating or keeping passive. 
At each time step, the decision maker observes the states of the arms and chooses a fixed fraction of arms to activate, aiming to maximize the expected reward collected from the arms. 
The RB problem has a long history and diverse applications. We refer the readers to the recent survey paper \citep{Nin_23} for a comprehensive review of the literature.

In this paper, we study the infinite-horizon RBs in discrete time, with the long-run average-reward objective. 
Since computing the exact optimal policy for an RB problem is intractable when the number of arms $N$ is large \citep{PapTsi_99_pspace}, we focus on developing asymptotically optimal policies that can be efficiently computed. 
Specifically, the optimality gap of a policy is defined as the difference between the average reward per arm and that of an optimal policy. 
A policy is \emph{asymptotically optimal} if the optimality gap is $o(1)$ in the asymptotic limit $N\to\infty$.\footnote{In this paper, we use standard Bachmann–Landau notation and focus on the asymptotics with respect to $N$.}

The asymptotic optimality of average-reward RB in the large $N$ regime has been studied for several decades, starting with the seminal papers on the renowned Whittle index policy \citep{Whi_88_rb,WebWei_90}. 
Since then, researchers have been weakening the assumptions for asymptotic optimality \citep{Ver_16_verloop,HonXieCheWan_23,HonXieCheWan_24,Yan_24_multichain} or improving the order of the optimality gaps \citep{GasGauYan_23_exponential,GasGauYan_23_whittles}. 
A recent milestone is the \emph{exponential asymptotic optimality} established in \cite{GasGauYan_23_exponential,GasGauYan_23_whittles}, where a class of policies called LP-Priority is proved to achieve an optimality gap  $O(\exp(-CN))$ for a constant $C$, under some assumptions. 
This exponential gap is particularly noteworthy because it ``beats'' the Central Limit Theorem (CLT), an intriguing theoretical property that is not a priori obvious to be possible. 
In particular, on a high-level, all asymptotic optimality results in the RB literature are based on the concentration of the empirical distribution of the arms' states around a certain optimal distribution. Therefore, an $O(1/\sqrt{N})$ bound owing to CLT was believed to be fundamental. 
An important observation made in \cite{GasGauYan_23_exponential,GasGauYan_23_whittles} is that if the system has a certain notion of \emph{local linearity},  the optimality gap only depends on the distance between the optimal distribution and the \emph{expected} empirical state distribution, allowing one to go beyond CLT. 

Despite the above intuition,
it remains unclear what is the fundamental mechanism leading to exponential asymptotic optimality. Although a simple assumption called non-degeneracy (or non-singularity) suffices to ensure the local linearity property described above, non-degeneracy alone is not enough to guarantee exponential optimality of the LP-Priority policies considered in \cite{GasGauYan_23_exponential,GasGauYan_23_whittles}.
In particular, another crucial assumption is needed, namely the Uniform Global Attractor Property (UGAP). 
UGAP is a stronger version of the global attractor property (GAP) --- the latter is assumed in all asymptotic analyses of LP-Priority policies, without which an LP-Priority may even have a constant optimality gap  \citep[see][for concrete examples]{WebWei_90,GasGauYan_20_whittles,HonXieCheWan_23}. 
Unfortunately, UGAP is hard to interpret and verify, as it concerns the global convergence of a certain non-linear difference equation.

In light of the difficulty caused by UGAP, very recent work has studied new policies that are asymptotically optimal without assuming UGAP \citep{HonXieCheWan_23,HonXieCheWan_24,Yan_24_multichain}. 
Different from LP-Priority and Whittle index policies, these new policies actively control the empirical distribution of the states of the arms, driving the distribution to \emph{globally converge} towards a certain optimal distribution, without relying on extraneous assumptions. 
However, while these results conclude that UGAP is not needed for asymptotic optimality, it remains unclear whether UGAP is fundamental for exponential asymptotic optimality --- the best optimality gap proved in this line of work is $O(1/\sqrt{N})$. 

Given these developments, it is natural to ask if removing UGAP necessarily comes at the cost of degrading the optimality gap. 
Is it possible to efficiently find policies that achieve \emph{exponential asymptotic optimality} without assuming UGAP?

\paragraph*{Our contributions.}
The primary contribution of the paper is to design the first policy, the \emph{two-set policy}, that achieves exponential asymptotic optimality without UGAP. 
The two-set policy maintains two dynamic subsets of arms and applies two different subroutines to the subsets: 
one subset of arms has a nearly optimal state distribution and takes actions according to a certain Optimal Local Control; the other subset of arms is driven towards the optimal state distribution and gradually merged into the first subset. 
Intuitively, the Optimal Local Control in the first subset induces local linearity, while the second subset enables global convergence. 
While the ideas in each subroutine are not completely new, the two-set policy leverages them in a novel way to achieve the best of both worlds. 
The theoretical guarantee of the two-set policy is informally summarized in the theorem below.  

\begin{theorem*}[Informal version of Theorem~\ref{thm:two-set:achievability}]
    Suppose each arm is unichain, aperiodic, non-degenerate, and locally stable. 
    Then the two-set policy $\pi$ satisfies 
    \begin{align}
        \rrel - \rsysn = O(\exp(-CN)),
    \end{align}
    where $\rrel$ is an upper bound on the optimal long-run average reward, and $\rsysn$ is the long-run average reward under the policy $\pi$ given the vector of initial states $\veS_0$. 
\end{theorem*}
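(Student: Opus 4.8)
The plan is to decompose the optimality gap into three contributions and bound each one. First, I would compare the true optimal reward to a linear-program relaxation (the single-arm LP), which yields $\rrel$ as an upper bound; this is standard and follows from the fact that any feasible policy's occupancy measure satisfies the LP constraints in expectation, up to the budget constraint being enforced only on average. The substantive work is to lower-bound $\rsysn$ by $\rrel - O(\exp(-CN))$, i.e., to show the two-set policy loses only an exponentially small amount of reward relative to the relaxation's value.

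Second, I would analyze the two subroutines separately. Let $\Ngood$ denote the (random) number of arms in the ``good set'' at time $t$ --- the set running the Optimal Local Control. For the good set, I would invoke the \emph{local linearity} / non-degeneracy machinery: conditioned on the good-set arms having an empirical distribution close to the optimal distribution $\mu^*$, the Optimal Local Control keeps them there and collects reward at rate $\approx \rrel$ per arm, and crucially the deviation of the collected reward from the ideal depends only on the \emph{expected} drift of the empirical distribution, not its fluctuations. This is where local stability enters: the linearized dynamics around $\mu^*$ are contractive (spectral radius $<1$ on the relevant subspace), so the expected deviation decays geometrically and the per-step reward loss from the good set is bounded by something like $\rho^t$ plus a term that is exponentially small in the good-set size. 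For the ``bad set'' (the arms being driven toward $\mu^*$ and merged in), I would show that each such arm is absorbed into the good set within $O(1)$ expected time, or more precisely that the fraction of arms outside the good set decays: the bad-set subroutine is designed so that $N - \Ngood$ shrinks, and the reward shortfall is at most $\rmax$ per bad arm per step.

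Third --- and this is the main obstacle --- I would need to show that the good set stays large with overwhelming probability, i.e., $\mathbb{P}[\Ngood < N - O(\log N)]$ or similar is $O(\exp(-CN))$ uniformly in $t$, and that it never ``collapses.'' This requires a concentration argument: the merging process adds arms to the good set, but occasionally an arm must be ejected (or the good set's distribution drifts), and I must control these rare events. The natural tool is a drift/Lyapunov argument combined with Azuma--Hoeffding or a Chernoff bound on the number of arms whose individual state trajectories fail to track the target --- since arms are only weakly coupled (through the budget), their ``failure'' events are nearly independent, giving the exponential concentration. The delicate point is handling the coupling between the two sets through the shared activation budget $\alpha N$: I would argue that with high probability the good set uses budget close to $\alpha \cdot \Ngood$ (by local stability of the Optimal Local Control's action distribution), leaving enough slack for the bad set, and that any budget shortfall events are themselves exponentially rare.

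Putting these together: on the high-probability ``good event'' $\goodevent$ where the good set remains of size $N - o(N)$ with a near-$\mu^*$ distribution for all relevant times, the time-averaged reward is $\rrel - O(\exp(-CN))$; off $\goodevent$, which has probability $O(\exp(-CN))$, we lose at most $\rmax$ per arm, contributing another $O(\exp(-CN))$ term. Taking $N \to \infty$ and using that the long-run average reward is the Cesàro limit of the per-step rewards, the bound follows. I expect the bookkeeping around the transient phase (before the bad set has been merged) and the precise definition of $\goodevent$ as a time-uniform event to require the most care, since the average-reward criterion forces control over an infinite horizon rather than a fixed one.
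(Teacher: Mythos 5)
Your high-level decomposition matches the paper's: bound the gap by showing the optimal-local-control set covers almost all arms almost all the time, establish that local stability gives contractive dynamics near $\statdist$, and show the set stays large via a drift/Lyapunov argument. You also correctly identify the crucial role of local linearity --- that under non-degeneracy, the reward loss on the good set depends only on the \emph{expected} deviation of the empirical distribution, which is what lets one beat the CLT. These are the right skeleton.

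However, there are two concrete gaps. First, your framing of the concentration step --- bounding ``the number of arms whose individual state trajectories fail to track the target'' and arguing these failure events are ``nearly independent'' because arms are ``only weakly coupled through the budget'' --- does not match what is actually needed and would be hard to make rigorous: in the Optimal Local Control, the action of every neutral-state arm depends on the aggregate budget usage, so arms are strongly, not weakly, coupled, and the set membership itself is a function of the global state. The paper instead concentrates the \emph{aggregate} scaled state-count vector using a specifically engineered multivariate Lyapunov function $V(x, \Db, \Da) = \hu(x,\Db) + \hw(x,\Da) + L(1-m(\Db))$ built from two weighted $L_2$ norms (one for each subroutine's contraction) plus a term penalizing the fraction of uncovered arms; the key move is decomposing the one-step change into a set-update term (controlled by showing $\Db_t$ is almost non-shrinking and sufficiently covering) and a state-transition term (controlled by conditional independence of per-arm transitions given the actions). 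Without specifying this coupled object, the argument that the good set ``never collapses'' in steady state does not close. Second, your claim that one must show ``budget shortfall events are exponentially rare'' is a misdiagnosis: the two-set policy's choice of $|\Da_t| = \lfloor\beta(N-|\Db_t|)\rfloor$ with $\beta = \min(\alpha,1-\alpha)$ makes the budget constraint satisfiable \emph{deterministically} at every step (Lemma~\ref{lem:two-set:subroutine-conform}), so there is no probabilistic budget event to control at all --- pursuing one would be chasing a phantom.
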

We will make precise the definitions of non-degeneracy and local stability. We emphasize that we only need aperiodic unichain under a particular policy rather than all policies. 
The assumptions of Theorem~\ref{thm:two-set:achievability} are a subset of, and hence strictly weaker than, the assumptions made in prior work \cite{GasGauYan_23_exponential,GasGauYan_23_whittles} for exponential asymptotic optimality. Moreover, all our assumptions pertain to a linear program defined by problem primitives and are thus much easier to verify than UGAP.

The algorithmic ideas of the two-set policy share some common traits with those in the ``set-expansion policy'' in \cite{HonXieCheWan_24} and ``align and steer'' policy in \cite{Yan_24_multichain}.
However, these two policies have only been shown to have $O(1/\sqrt{N})$ and $o(1)$ optimality gaps, respectively.

The proof of Theorem~\ref{thm:two-set:achievability} employs a novel multivariate Lyapunov function technique, which generalizes the focus-set approach in \cite{HonXieCheWan_24}. This multivariate Lyapunov function enables us to decouple the complex dynamics under the two-set policy, where the states of the arms and the two dynamic subsets are coupled and change simultaneously.

We next elucidate the roles of our assumptions in achieving exponential optimality. 
The importance of unichain, aperiodicity and non-degeneracy is well recognized in the literature; we will complement the discussion with concrete examples.
We further establish the fundamental importance of local stability by proving the following theorem. 

\begin{theorem*}[Informal version of Theorem~\ref{thm:instability-lower-bound}]
    For regular unstable RBs, every policy $\pi$ satisfies
    \begin{align}
        \rrel - \rsysn = \Omega(1/\sqrt{N}).
    \end{align}
\end{theorem*}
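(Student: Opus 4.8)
The plan is to bound the optimality gap from below by the time-averaged mass that \emph{any} feasible $N$-arm policy is forced to place on sub-optimal actions, and then to show that local instability, together with the unavoidable $\Theta(1/N)$-scale fluctuations of the empirical distribution, makes this mass $\Omega(1/\sqrt N)$ per step. The resulting bound holds for every policy, in particular the optimal one.

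\emph{Reduction to ``wrong-action mass'' via LP duality.} Let $\nu^*$ optimize the relaxed LP with value $\rrel$, with optimal dual variables $(V,\lambda,\rho)$ so that $\rho+\lambda\alpha=\rrel$, and set $\Delta(s,a):=\rho+\lambda a+V(s)-\sum_{s'}P(s'\mid s,a)V(s')-r(s,a)\ge 0$. Under non-degeneracy the LP has a unique optimizer with a single fractional state $\bar s$ and a unique optimal action $a^*(s)$ at each $s\ne\bar s$, and strict complementary slackness gives $\Delta\equiv 0$ exactly on $\mathrm{supp}(\nu^*)$; put $c:=\min_{(s,a)\notin\mathrm{supp}(\nu^*)}\Delta(s,a)>0$. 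Writing $r(s,a)$ through the identity that defines $\Delta$, summing over arms, and telescoping the resulting Bellman term over a growing horizon gives, for any policy $\pi$ with time-$t$ state--action empirical distribution $\nu_t$,
\begin{align}
  \rrel-\rsysn\ \ge\ \liminf_{T\to\infty}\frac1T\sum_{t<T}\E{\langle\nu_t,\Delta\rangle}\ \ge\ c\cdot\liminf_{T\to\infty}\frac1T\sum_{t<T}\E{m_t},
\end{align}
where $m_t:=\sum_{s\ne\bar s}\nu_t(s,1-a^*(s))$ is the fraction of arms playing a sub-optimal action (if the activation budget is an inequality, a harmless extra nonnegative slack term appears). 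It thus suffices to prove that the time-averaged $\E{m_t}$ is $\Omega(1/\sqrt N)$.

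\emph{Global linearity and the unstable mode.} Let $\mathcal P$ be the linear map $(\mathcal P\nu)(s')=\sum_{s,a}\nu(s,a)P(s'\mid s,a)$ and $x_t:=\mu_t-\mu^*$ the zero-sum deviation of the empirical state distribution. Because $\E{\mu_{t+1}\mid\mathcal F_t}=\mathcal P\nu_t$ is \emph{exactly} linear in $\nu_t$, the decomposition $\nu_t=\nu^*+L^*(x_t)+\zeta_t$ --- where $L^*$ is the canonical lift that plays $a^*(\cdot)$ at every $s\ne\bar s$ and absorbs the budget at $\bar s$, and the residual obeys $\norm{\zeta_t}_1\le C m_t$ for an absolute constant $C$ --- yields the globally valid drift identity $\E{x_{t+1}\mid\mathcal F_t}=B^*x_t+\mathcal P\zeta_t$, where $B^*:=\mathcal P L^*$ is exactly the fluid Jacobian at $\mu^*$ whose stability (all eigenvalues of modulus $<1$) is the local-stability condition; for a regular unstable RB, $B^*$ has an eigenvalue $\theta$ with $|\theta|>1$. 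Let $v$ be a corresponding left-eigenvector and set $y_t:=v^\top x_t$ (if $\theta$ is complex, replace $y_t$ by the projection of $x_t$ onto the two-dimensional real invariant subspace and $|\cdot|$ by the Euclidean norm throughout; regularity guarantees a strictly dominant such mode). Then $\E{y_{t+1}\mid\mathcal F_t}=\theta y_t+\eta_t$ with $|\eta_t|\le\kappa m_t$, so $\E{|y_{t+1}|\mid\mathcal F_t}\ge|\E{y_{t+1}\mid\mathcal F_t}|\ge\theta|y_t|-\kappa m_t$; since $|y_t|\le 2\norm{v}_\infty$ is bounded, averaging over $t<T$ kills the telescoping term and yields
\begin{align}
  \liminf_{T\to\infty}\frac1T\sum_{t<T}\E{m_t}\ \ge\ \frac{\theta-1}{\kappa}\,\liminf_{T\to\infty}\frac1T\sum_{t<T}\E{|y_t|}.
\end{align}

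\emph{Anti-concentration and conclusion.} Conditioned on $\mathcal F_{t-1}$ and the actions taken at time $t-1$, $Ny_t$ equals a deterministic constant plus a sum of $N$ independent bounded random variables (the value of $v$ at each arm's state at time $t$), each of variance at least some $\underline\sigma^2>0$ under the regularity assumption. Hence $y_t$ cannot concentrate below scale $1/\sqrt N$; since a deterministic shift does not help ($\E{|Z-a|}\ge\tfrac12\E{|Z-\E Z|}$ for every constant $a$), a Berry--Esseen estimate gives $\E{|y_t|}\ge c_1/\sqrt N$ for all $t\ge 1$. Chaining the two displays then yields $\rrel-\rsysn\ge \frac{c(\theta-1)c_1}{\kappa\sqrt N}=\Omega(1/\sqrt N)$, uniformly over all policies and initial states. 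The two delicate points are: (i) the \emph{global} linearity above, which requires checking that the only reward-free degree of freedom in the control --- the activation fraction at the fractional state $\bar s$ --- is entirely pinned down by the exact budget constraint, so that any steering of $x_t$ necessarily incurs $m_t>0$; this is exactly where non-degeneracy and the precise form of $B^*$ (hence of local stability) enter; and (ii) ensuring that the $v$-directional noise stays bounded away from $0$ even when $\mu_t$ drifts far from $\mu^*$, so that $\E{|y_t|}=\Omega(1/\sqrt N)$ holds for all $t$ rather than merely in a neighbourhood of the fixed point. Point (ii), together with handling a complex dominant unstable eigenvalue, is the main technical obstacle and is where the ``regular'' qualifier does its work.
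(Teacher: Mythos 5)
Your overall plan --- reduce via LP strong complementarity to the time-averaged ``wrong-action mass'' $m_t$, relate $m_t$ to the growth of a functional that expands under the unstable dynamics, and then feed in an anti-concentration lower bound on that functional --- is structurally the same as the paper's, and the primal--dual reduction is essentially the paper's Lemma~\ref{lem:opt-gap-linear-comp}. The genuine gap is in the last step, and it is exactly the spot you yourself flag as ``the main technical obstacle'': you need $\E{|y_t|}=\Omega(1/\sqrt N)$ for \emph{all} $t$, which requires the one-step noise in the $v$-direction to be non-degenerate \emph{wherever the policy may drive the system}. But the paper's Definition~\ref{def:regular-unstable} (Property~\ref{assump:orthogonal-noise}) only postulates this anti-concentration in an $L_1$-ball around $\statdist$, and only under the Optimal Local Control; ``regular'' does not do the work you are attributing to it. A policy could keep the empirical distribution permanently far from $\statdist$, in a region where some arms sit in states with (nearly) deterministic transitions, and then your $\E{|y_t|}$ lower bound has no basis. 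Your chain of inequalities would then give nothing.

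The paper closes exactly this hole with a different Lyapunov construction: it uses not the scalar projection $y_t = v^\top x_t$, but the function $\hlb = \ratiolb \hus - \hst$, where $\hus$ is a $\Uus$-weighted seminorm supported on the unstable Jordan subspace and $\hst$ is a $\Ust$-weighted seminorm on the stable part. When the state is near $\statdist$, Property~\ref{assump:orthogonal-noise} makes $\hus$ grow by $\Omega(1/\sqrt N)$; when the state is far from $\statdist$ and the noise assumption is unavailable, $\hst$ is large (since $\hus+\hst\gtrsim\norm{X_t-\statdist}_1$) and its one-step contraction by a factor $1-\gapst$ produces the required $\Omega(1)$ per-step contribution to $\hlb$. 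Averaging the drift of $\hlb$ over time then forces $\E{\disy(Y_\infty)}=\Omega(1/\sqrt N)$ without ever invoking noise away from the fixed point. Your single-mode projection has no analogue of the $\hst$ term, so it cannot cover the far-from-$\statdist$ regime. Two smaller issues: (i) a single left-eigenvector projection does not correctly witness growth for a Jordan block of size $>1$ (the paper's $\Uus$, built from $\sum_k (V\Jus^\dagger V^{-1})^k(\cdot)^{\top k}$, does); (ii) Definition~\ref{def:regular-unstable}\ref{assump:instability} does \emph{not} guarantee a strictly dominant unstable eigenvalue --- though, fortunately, your telescoping does not actually need dominance, only some $|\theta|>1$.
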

We will formally define the class of regular (locally) unstable RB and provide concrete examples. Note that all known efficient policies rely on the upper bound $\rrel$.
For these policies, \Cref{thm:instability-lower-bound} shows that local stability is crucial for achieving exponential asymptotic optimality for a natural class of RB instances. 
The proof of \Cref{thm:instability-lower-bound} uses Lyapunov analysis to reveal an interesting trade-off between two competing needs when local stability fails: to keep the states of the arms near the optimal distribution, and to take the optimal actions when the states are near the optimal distribution.

\paragraph{Other related work.} Apart from the average-reward RBs, there is also a rich body of literature on RBs with finite-horizon total-reward \citep{HuFra_17_rb_asymptotic,ZayJasWan_19_rb,BroSmi_19_rb,ZhaFra_21,BroZha_22,DaeGhoGri_23,GhoNagJaiTam_23_finite_discount,BroZha_23_ftva_and_reopt,GasGauYan_23_exponential,GasGauYan_24_reopt} or the infinite-horizon discounted-reward criteria \citep{BroSmi_19_rb,ZhaFra_22_discounted_rb,GhoNagJaiTam_23_finite_discount,BroZha_23_ftva_and_reopt}. 
The approaches in these settings do not directly apply to the average-reward setting, as their computational complexities scale with the (effective) time horizon. 
However, there is still a large intersection of ideas --- in particular, the first $O(1/N)$ optimality gap result was established in the finite-horizon setting by \cite{ZhaFra_21}.

\section{Problem setup}
Consider the discrete-time average-reward restless bandit problem with $N$ homogeneous arms, referred to as the $N$-armed problem. Index the arms by $[N] \triangleq \{1,2,\dots, N\}.$
To avoid confusion with the Whittle index, we refer to $i$ as the \emph{ID} of Arm~$i$. Each arm is associated with a Markov decision process (MDP) defined by  $(\sspa, \aspa, P, r),$ which is called the single-armed MDP. 
Here $\sspa$ is a finite state space; 
$\aspa = \{0, 1\}$ is the action space, where action $1$ is activating/pulling the arm; 
$P:\sspa\times\aspa\times\sspa \to [0,1]$ is the transition kernel; 
$r: \sspa \times \aspa \to \mathbb{R}$ is the reward function. 
Let $\rmax = \max_{s\in\sspa,a\in\aspa} \abs{r(s,a)}$.
The RB problem has a \emph{budget constraint}, given by a constant $\alpha\in(0,1)$, which requires that $\alpha$ fraction of arms must be pulled at every time step. We assume that $\alpha N$ is an integer. 
We focus on the setting where all the model parameters, $\sspa, \aspa, P, r, \alpha$, are known.

A policy $\pi$ for the $N$-armed problem chooses the action for each of the $N$ arms in each time step.
The policy can be randomized and history-dependent. 
Under $\pi$, the \emph{state vector} $\ve{S}_t^\pi \triangleq (S_t^\pi(i))_{i\in[N]} \in \sspa^N$ represents the states of all arms at time $t$, and the \emph{action vector}  $\ve{A}_t^\pi \triangleq (A_t^\pi(i))_{i\in[N]} \in \aspa^N$ denotes the actions applied to each arm. 
For each policy $\pi,$ consider the following two quantities: the limsup average reward
$\rlimsup \triangleq \limsup_{T\to\infty} \frac{1}{T} \sum_{t=0}^{T-1} \frac{1}{N} \sumN \E{r(S_t^\pi(i), A_t^\pi(i))};$ and the liminf average reward $\rliminf \triangleq \liminf_{T\to\infty} \frac{1}{T} \sum_{t=0}^{T-1} \frac{1}{N} \sumN \E{r(S_t^\pi(i), A_t^\pi(i))}$.  
When the limsup and te liminf average rewards are equal, the \emph{long-run average reward} is defined as: 
\[
    \rsysn \triangleq \lim_{T\to\infty} \frac{1}{T} \sum_{t=0}^{T-1} \frac{1}{N} \sumN \E{r(S_t^\pi(i), A_t^\pi(i))}.
\]

Our goal is solving the optimization problem:
\begin{align}
    \label{eq:N-arm-formulation} \tag{RB}
    \underset{\text{policy } \pi}{\text{maximize}} & \quad \rliminf  \qquad
    \text{subject to}  
    \quad  \sumN A_t^\pi(i) = \alpha N,\quad \forall t\ge 0. 
\end{align} 
Let $\ropt$ be the optimal value of \eqref{eq:N-arm-formulation}, termed as the \emph{optimal reward}. 
Note that $\ropt = \sup_{\pi'} R^-(\pi', \veS_0) = \sup_{\pi'} R^+(\pi', \veS_0)$ because \eqref{eq:N-arm-formulation} is an MDP with finite state and action spaces \citep[][Theorem~9.1.6]{Put_05}. 
For any policy $\pi$, we define its optimality gap as $\ropt - \rliminf$. Consistent with the literature, we call a policy $\pi$ \emph{asymptotically optimal} if its optimality gap vanishes as $N\to\infty$, i.e., $\ropt - \rliminf = o(1)$ \cite[][Definition 4.11]{Ver_16_verloop}.

In the rest of the paper, we restrict ourselves to policies whose long-run average reward $\rsysn$ is well-defined. Such policies include all stationary Markovian policies \citep[][Proposition 8.1.1]{Put_05}, as well as any stationary policy that makes decisions based on an augmented system state with a finite state space, because they all induce finite-state Markov chains. 
Note that focusing on such policies is sufficient because there always exists a stationary Markovian policy that achieves the optimal reward \citep[][Theorem~9.1.8]{Put_05}. 
Consequently, we can refer to $\rsysn$ as the objective function of \eqref{eq:N-arm-formulation} and write the optimality gap as $\ropt - \rsysn$.

\textbf{Scaled state-count vector.~~}
We introduce an alternative representation of the $N$-armed system state, 
which is used extensively throughout the paper.
For each subset $D\subseteq[N]$, we define the \emph{scaled state-count vector on $D$} as $X_t^\pi(D) = (X_t^\pi(D, s))_{s\in\sspa}$, where 
$
    X_t^\pi(D, s) \triangleq \frac{1}{N} \sum_{i\in D} \indibrac{S_t^\pi(i) = s}
$ is the number of arms in $D$ in state $s$ scaled by $1/N.$ 
When $D = [N]$ is the set of all arms, we simply call $X_t^\pi([N])$ the \emph{scaled state-count vector}. 
Sometimes we view $X_t^\pi(D)$ as a vector-valued function of $D \subseteq [N]$. We refer to this function $X_t$ as the \emph{system state} at time $t$. Note that the system state $X_t^\pi$ contains the same information as the state vector $\veS_t^\pi$. 
We also define the \emph{scaled state-action-count vector} as $Y_t^\pi = (Y_t^\pi(s, a))_{s\in\sspa, a\in\aspa}$, where
$Y_t^\pi(s,a) \triangleq \frac{1}{N} \sumN \indibrac{S_t^\pi(i)=s, A_t^\pi(i) = a}$ is the fraction of arms with state $s$ and action $a$ at time $t$.

\textbf{Additional notation.~~} Let $[0,1]_N = \{0,1/N, 2/N,\dots, 1\}$. For each subset $D\subseteq[N]$, $m(D) = |D| / N$ denotes the fraction of arms contained in $D$, thus $m(D) \in [0,1]_N.$ 
Let $\Delta(\sspa)$ be probability simplex on  $\sspa$.
We treat each distribution $v\in\Delta(\sspa)$ as a row vector.
Recall that $\pi$ denotes a policy for the $N$-armed problem. When the context is clear, we drop superscript $\pi$ from $\veS_t^\pi$, $\veA_t^\pi$, and $X_t^\pi$.

\section{Two-Set Policy}\label{sec:preliminaries}
In this section, we describe our main policy, the two-set policy.  
We first discuss a linear programming (LP) relaxation of \eqref{eq:N-arm-formulation} in \Cref{sec:lp-relaxation} and state assumptions. 
The optimal solution of the LP is then used to define two subroutines, \emph{Unconstrained Optimal Control} in \Cref{sec:optimal-single-armed}, and \emph{Optimal Local Control} in \Cref{sec:LP-Priority}. We then use these two subroutines to construct the two-set policy in \Cref{sec:two-set}. 
The ideas in the two subroutines are not new; our novelty lies in how we build them into the two-set policy.

\subsection{LP relaxation}\label{sec:lp-relaxation}

We consider the linear programming (LP) relaxation of the $N$-armed problem \eqref{eq:N-arm-formulation}, which is crucial for the design and analysis of RB policies. 
\begin{align}
    \label{eq:lp-single} \tag{LP} \underset{\{y(s, a)\}_{s\in\sspa,a\in\aspa}}{\text{maximize}} \mspace{6mu}&\sum_{s\in\sspa,a\in\aspa} r(s, a) y(s, a) \\
    \text{subject to}\mspace{6mu}
    &\mspace{15mu}\sum_{s\in\sspa} y(s, 1) = \alpha, \label{eq:expect-budget-constraint}\\
    & \sum_{s'\in\sspa, a\in\aspa} \!\!\! y(s', a) P(s', a, s) = \sum_{a\in\aspa} y(s,a),\, \forall s; 
    &\sum_{s\in\sspa, a\in\aspa} \!\!\! y(s,a) = 1;  
    \;
     y(s,a) \geq 0, \; \forall s,a. \label{eq:non-negative-constraint}
\end{align} 
To see why \eqref{eq:lp-single} is a relaxation of \eqref{eq:N-arm-formulation}, for any stationary Markovian policy $\pi$, consider 
$
    y^\pi(s,a) = \lim_{T\to\infty} \frac{1}{T} \sum_{t=0}^{T-1} \Ebig{\frac{1}{N} \sumN \indibrac{S_t^\pi(i) = s, A_t^\pi(i) = a}}, \forall s\in\sspa, a\in\aspa.
$
It is not hard to see that $\rsysn = \sumsa r(s,a)y^\pi(s,a)$, and $(y^\pi(s,a))_{s\in\sspa, a\in\aspa}$ satisfies the constraints \eqref{eq:expect-budget-constraint}-\eqref{eq:non-negative-constraint}. 
Therefore, letting $\rrel$ be the optimal value of \eqref{eq:lp-single}, one can show that $\rrel \geq \ropt$ \citep[][]{HonXieCheWan_24}. 
Note that the LP relaxation can be efficiently solved. We can bound the optimality gap of any policy $\pi$ using the inequality $\ropt - \rliminf \leq \rrel - \rliminf$. 
This bound has been widely used in prior work \citep{WebWei_90,Ver_16_verloop,GasGauYan_23_exponential,GasGauYan_23_whittles,HonXieCheWan_23}.

Later in this section, we will make assumptions and construct our policy based on an optimal solution to \eqref{eq:lp-single}, $y^* = \{y^*(s, a)\}_{s\in\sspa,a\in\aspa}$. When there are multiple optimal solutions, we stick to an arbitrary one that satisfies all the assumptions. 

\textbf{Optimal single-armed policy.} For a fixed optimal solution to \eqref{eq:lp-single}, $y^* = \{y^*(s, a)\}_{s\in\sspa,a\in\aspa}$, we consider the conditional distribution $\pibs = (\pibs(a|s))_{s\in\sspa, a\in\aspa}$ given by: 
\begin{equation}\label{eq:single-arm-opt-def}
    \pibs(a | s) =
    \begin{cases}
        y^*(s, a) / (y^*(s, 0) + y^*(s,1)), & \text{if } y^*(s, 0) + y^*(s,1) > 0,  \\
        1/2, &  \text{if } y^*(s, 0) + y^*(s,1) = 0.
    \end{cases}
    \quad \text{for $s\in\sspa$, $a\in\aspa$.}
\end{equation}
We call $\pibs$ the \emph{optimal single-armed policy} associated with $y^*$, since it can be viewed as the optimal policy for the single-armed MDP $(\sspa, \aspa, P, r)$ with a relaxed budget constraint \citep[][]{HonXieCheWan_24}.

Let $P_{\pibs}$ be the transition matrix induced by the policy $\pibs$ on the single-armed MDP, i.e., $P_{\pibs}(s,s') = \sum\nolimits_{a\in\aspa} \pibs(a|s) P(s,a,s')$.  
Since $P_\pibs$ is a stochastic matrix, its largest eigenvalue is equal to $1$. For the other eigenvalues of $P_\pibs$, we consider the following assumption. 

\begin{assumption}[Aperiodic unichain]\label{assump:aperiodic-unichain}
    The transition probability matrix $P_\pibs$ has a simple eigenvalue $1$; the other eigenvalues all have a modulus strictly smaller than $1$. In other words, $P_\pibs$ defines an aperiodic unichain on $\sspa$. 
    \footnote{A finite-state Markov chain is a unichain if it has a single-recurrent class, with possibly some transient states.}
\end{assumption}

Most literature assume certain versions of the aperiodic unichain condition \citep{WebWei_90,Ver_16_verloop,GasGauYan_23_exponential,GasGauYan_23_whittles,HonXieCheWan_23,HonXieCheWan_24}, which are either stronger than or equivalent to \Cref{assump:aperiodic-unichain} (See the appendix of \cite{HonXieCheWan_24} for a detailed discussion). 
With \Cref{assump:aperiodic-unichain}, the transition matrix $P_\pibs$ induces a unique stationary distribution, $\statdist \triangleq (\statdist(s))_{s\in\sspa}$, where $\statdist(s) = y^*(s,1) + y^*(s,0)$, as one can verify using the definition of $P_\pibs$. 
We call $\statdist$ the \emph{optimal stationary distribution}. We define $\sempty \triangleq \{s\in\sspa\colon y^*(s,1) = y^*(s,0) = 0\},$ the set of transient states under policy $\pibs.$ 

Our next assumption imposes a non-degenerate condition on the optimal solution to \eqref{eq:lp-single}. 

\begin{assumption}[Non-degenerate]\label{assump:non-degeneracy}
    For the fixed optimal solution to \eqref{eq:lp-single}, $y^*$, 
    there exists a unique \emph{neutral state}, i.e., a state $\sneu\in\sspa$ s.t. $y^*(\sneu, 1) > 0$ and $y^*(\sneu, 0) > 0$. 
\end{assumption}

\Cref{assump:non-degeneracy} is considered in all prior analysis of RB policies that achieves exponential asymptotic optimality \citep{GasGauYan_23_exponential,GasGauYan_23_whittles}. 
Note that the restrictive aspect of \Cref{assump:non-degeneracy} is the existence rather than the uniqueness --- one can always find an optimal solution of \eqref{eq:lp-single} such that there is at most one neutral state \citep[][Proposition~2]{GasGauYan_23_exponential}.

\begin{assumption}[Local stability]\label{assump:local-stability}
    Given the non-degenerate condition, 
    define $\Phi$ as a $|\sspa|$-by-$|\sspa|$ matrix:  
    \begin{equation}\label{eq:phi-def}
        \Phi \triangleq P_\pibs - \vone^\top \statdist - (\costvec-\alpha\vone)^\top  (P_1(\sneu) - P_0(\sneu)), 
    \end{equation}
    where $\costvec \triangleq (\pibs(1|s))_{s\in\sspa}$ and $P_a(\sneu) \triangleq (P(\sneu,a, s))_{s\in\sspa}$ are both row vectors; $\vone$ is the all-one row vector. 
    We assume that the moduli of all eigenvalues of $\Phi$ are strictly less than $1$. 
\end{assumption}

The implication of \Cref{assump:local-stability} will become clear after we introduce the subroutine \emph{Optimal Local Control} in \Cref{sec:LP-Priority}, which is constructed from the optimal solution to \eqref{eq:lp-single} and has dynamics given by the matrix $\Phi$. 
We note that our local stability assumption is implied by UGAP assumed in \cite{GasGauYan_23_exponential,GasGauYan_23_whittles} (see Section 6.2.1 of \cite{GasGauYan_23_exponential} for a discussion).

\begin{remark}
    Assumptions~\ref{assump:aperiodic-unichain}, \ref{assump:non-degeneracy} and \ref{assump:local-stability} are easy to verify as they pertain to either the \emph{single-armed}  MDP primitives ($P, r, \alpha$) or a linear program define by these problem primitives. 
    In contrast, verifying UGAP is significantly harder due to the dependency on the \emph{non-linear $N$-armed} dynamics. 
\end{remark}

\subsection{Overview of our main policy}
Before starting to introduce elements for constructing our main policy, the two-set policy (\Cref{alg:two-set}), we first give an overview.  
The two-set policy utilizes two subroutines, Unconstrained Optimal Control (\Cref{alg:follow-pibs}), and a local version of LP-Priority named Optimal Local Control (\Cref{alg:follow-lp-priority}). 
At each time step, the two-set policy partitions $N$ arms into three subsets: the first subset $\Db_t$ has nearly optimal state distribution and takes actions according to the Optimal Local Control; the second subset $\Da_t$ is driven towards the optimal state distribution by the Unconstrained Optimal Control and is gradually merged into $\Da_t$; the third subset $(\Db_t  \cup \Da_t)^c$ chooses actions to meet the budget constraint $\sumN A_t(i) = \alpha N$, and will be gradually merged into $\Da_t$. 
In the long run, $\Db_t$ will contain all arms in the system and remain so for the majority of the time; the Optimal Local Control guarantees that the average reward is close to $\rrel$.

\subsection{Subroutine 1: Unconstrained Optimal Control}\label{sec:optimal-single-armed}

We now describe the \emph{Unconstrained Optimal Control}, as presented in \Cref{alg:follow-pibs}. The subroutine takes the optimal single-armed policy $\pibs$ defined in \eqref{eq:single-arm-opt-def} and a subset of arms $D$ from the RB system as the input. The subroutine decides the actions for all arms in $D$ to match with $\pibs.$ Specifically, let the number of arms from $D$ in state $s\in \sspa$ be $z(s).$ The subroutine aims to activate $\pibs(1|s)$ fraction of the $z(s)$ arms. To handle the integer effect, it performs a \emph{randomized rounding}: it activates $\floor{\pibs(1|s) z(s)}$ or $\ceil{\pibs(1|s) z(s)}$ arms with properly chosen probability such that the expected proportion is $\pibs(1|s)$.  

As shown in \Cref{lem:pibs-transition}, if all $N$ arms in the RB system follow the Unconstrained Optimal Control, as $t\to\infty$, 
the expected scaled state-count vector $\E{X_t([N])}$ converges to $\statdist$. As a result, the expected fraction of arms in state $s$ taking action $a$ converges to $y^*(s,a)$; 
the \emph{expected budget usage} of the arms converges to $\alpha N$; 
the long-run average reward achieves the optimum of \eqref{eq:lp-single}. 

We remark that \Cref{alg:follow-pibs} let the arms follow $\pibs$ in a way different from some prior work \citep{HonXieCheWan_23,HonXieCheWan_24}, where the actions are sampled from $\pibs$ independently across the arms.

\begin{subroutine}[t]
\caption{Unconstrained Optimal Control}
\label{alg:follow-pibs}
\hspace*{\algorithmicindent} \hspace{-0.8em} \textbf{Input}: A set of arms with number of arms in each state $(z(s))_{s\in\sspa}$, 
optimal single-armed policy $\pibs$
\begin{algorithmic}[1]
    \For{$s\in\sspa$}
        \State Activate $\pibs(1|s) z(s)$ expected number of arms in state $s$ through randomized rounding 
    \EndFor
\end{algorithmic}    
\end{subroutine}

\subsection{Subroutine 2: Optimal Local Control (Local LP-Priority)}\label{sec:LP-Priority}

Next, we describe \Cref{alg:follow-lp-priority}, \emph{Optimal Local Control}, another building block for constructing our policy. Optimal Local Control is essentially a ``local version'' of LP-Priority, i.e., applying LP-Priority to a subset of arms whose empirical state distribution is close to $\statdist$. Specifically, given a set of $n$ arms, the Optimal Local Control starts with a target budget $B$ generated using the randomized rounding satisfying $\E{B} = \alpha n$. Then it partitions the state space $\sspa$ into four subsets based on $y^*$: $S^+ = \{s\in \sspa \colon y^*(s,1) > 0, y^*(s,0) = 0\}$,  $S^- = \{s\in \sspa \colon y^*(s,1) = 0, y^*(s,0) > 0 \}$,  $S^{\emptyset} = \{s\in \sspa \colon y^*(s,1) = 0, y^*(s,0) = 0\}$,  and $S^0 = \{s\in \sspa \colon y^*(s,1) > 0, y^*(s,0) > 0\}$. 
Note that by \Cref{assump:local-stability}, $S^0 = \{\sneu\}$. Optimal Local Control tries to activate all arms in $S^+$, keep arms in $S^-$ passive, activate half of the arms in state $s$ for each $s\in S^{\emptyset}$, and activate a suitable number of arms in $\sneu$ to meet the target budget $\bsub$.
Equivalently, the subroutine aims to activate a $\pibs(1|s)$ fraction of the $z(s)$ arms from $D$ in state $s$ for each non-neutral state $s\in \sspa \backslash \{\sneu\}$, and activate arms in $\sneu$ according to $B$. 
To ensure that the target budget can be met, we consider the following conditions for the input subset $D$:
\begin{equation}
        \label{eq:lp-priority-condition-1}
        \sum\nolimits_{s\neq \sneu} \pibs(1|s) z(s) \leq  \alpha n - |\sempty|-1, \quad \mbox{and} \quad
         \sum\nolimits_{s\neq \sneu} \pibs(0|s) z(s) \leq (1-\alpha)n - |\sempty|-1.
\end{equation}
By the definition of $\pibs$ in \eqref{eq:single-arm-opt-def}, it is not hard to see that given \eqref{eq:lp-priority-condition-1}, there hold $\sum_{s\neq \sneu} \ceil{\pibs(1|s) z(s)} \leq \floor{\alpha n}$ and $z(\sneu) + \sum_{s\neq \sneu} \floor{\pibs(1|s) z(s)} \geq  \ceil{\alpha n}$. Therefore, the budget requested by non-neural states $\sspa \backslash \{\sneu\}$ does not exceed $B$, 
and the neural state ensures that exactly $\bsub$ unit of budget is used.

\begin{subroutine}[t]
\caption{Optimal Local Control (Local LP-Priority)}
\label{alg:follow-lp-priority}
\hspace*{\algorithmicindent} \textbf{Input}: A set of $n$ arms with number of arms in each state $(z(s))_{s\in\sspa}$, budget parameter $\alpha$, \\
\hspace*{\algorithmicindent} \hspace{0.37in} optimal single-armed policy $\pibs$, neutral state $\sneu$ \\
\hspace*{\algorithmicindent} \textbf{Assert} \Cref{assump:non-degeneracy} and conditions in \eqref{eq:lp-priority-condition-1}
\begin{algorithmic}[1]
    \State Let the target budget $\bsub=\floor{\alpha n}$ or $\ceil{\alpha n}$ with suitable probability such that $\E{B} = \alpha n$
    \State Activate all arms in state $s\in S^+$\label{alg:lp-priority:non-sneu-actions-0}
    \State Activate no arms in state $s\in S^-$\label{alg:lp-priority:non-sneu-actions-1}
    \State Activate $z(s)/2$ expected number of arms in state $s$ through randomized rounding for $s\in S^{\emptyset}$\label{alg:lp-priority:non-sneu-actions-2}
    \State Activate the arms with state $\sneu$ using the remaining budget out of $\bsub$ \label{alg:lp-priority:sneu-actions}
\end{algorithmic}
\end{subroutine}

If all arms in the RB system follow \Cref{alg:follow-lp-priority}, the expected scaled state-count vector $\Eplain{X_t([N])}$ has a linear dynamics, as stated by following the lemma (proof provided in \Cref{app:preliminary-proofs}). 

\begin{restatable}{lemma}{followlppriority}\label{lem:lp-priority-and-phi}
    Under \Cref{assump:non-degeneracy}, suppose \eqref{eq:lp-priority-condition-1} holds for the $N$-armed system at time step $t$. We have
    \begin{equation}\label{eq:lp-priority-trans}
        \Eplain{X_{t+1}([N]) -\statdist \givenplain X_t, \text{all arms follow \Cref{alg:follow-lp-priority}}} = (X_t([N]) - \statdist) \Phi,
    \end{equation} 
    where $\Phi$ is the $|\sspa|$-by-$|\sspa|$ matrix defined in \eqref{eq:phi-def}. 
\end{restatable}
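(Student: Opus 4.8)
The plan is to compute the one-step conditional expectation $\E{X_{t+1}([N]) \mid X_t}$ directly (with all arms following \Cref{alg:follow-lp-priority}) and then match it, term by term, against $\statdist + (X_t([N]) - \statdist)\Phi$. Abbreviate $X_t = X_t([N])$ and write $z(s) = N X_t(s)$ for the number of arms in state $s$. Conditionally on the chosen actions, each arm's next state is an independent draw from $P(S_t(i), A_t(i), \cdot)$, so by linearity of expectation
\[
\E{X_{t+1}(s') \mid X_t} = \sum_{s\in\sspa} X_t(s)\bigl[q_s\, P(s,1,s') + (1-q_s)\, P(s,0,s')\bigr],
\]
where $q_s$ is the marginal probability that an arm in state $s$ is activated. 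The first task is to identify $q_s$. For non-neutral states this is immediate from the construction plus exchangeability of arms within a state: $q_s = 1 = \pibs(1|s)$ for $s\in S^+$; $q_s = 0 = \pibs(1|s)$ for $s\in S^-$; and $q_s = 1/2 = \pibs(1|s)$ for $s\in\sempty$, since randomized rounding activates $z(s)/2$ arms in expectation; hence $q_s = \pibs(1|s)$ for all $s\ne\sneu$. For the neutral state $\sneu$ (unique by \Cref{assump:non-degeneracy}), the number of arms activated in $\sneu$ equals $B$ minus the total number activated among the non-neutral states, so using $\E{B} = \alpha N$ the expected count is $\alpha N - \sum_{s\ne\sneu}\pibs(1|s) z(s)$, and dividing by $z(\sneu)$ gives $q_\sneu = \bigl(\alpha - \sum_{s\ne\sneu}\pibs(1|s) X_t(s)\bigr)/X_t(\sneu)$. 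At this point I would invoke \eqref{eq:lp-priority-condition-1}: adding its two inequalities yields $z(\sneu) \ge 2|\sempty| + 2 > 0$, so the division is legitimate, and the same two inequalities force $q_\sneu \in [0,1]$, so the subroutine and these marginals are well-defined.

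Next comes the algebra. For $s\ne\sneu$ the bracketed term is exactly $P_\pibs(s,s')$. For $s = \sneu$, I would substitute $q_\sneu$ and use $\pibs(0|\sneu) = 1 - \pibs(1|\sneu)$ to write $P(\sneu,0,s') = P_\pibs(\sneu,s') - \pibs(1|\sneu)\bigl(P(\sneu,1,s') - P(\sneu,0,s')\bigr)$, then collect terms; the $\pibs(1|\sneu) X_t(\sneu)$ piece that appears merges with $-\sum_{s\ne\sneu}\pibs(1|s)X_t(s)$ to give $-\sum_{s\in\sspa}\pibs(1|s)X_t(s) = -X_t\costvec^\top$. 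Assembling the $s'$-components back into vectors yields
\[
\E{X_{t+1} \mid X_t} = X_t P_\pibs + \bigl(\alpha - X_t\costvec^\top\bigr)\bigl(P_1(\sneu) - P_0(\sneu)\bigr).
\]
Since $X_t\vone^\top = \sum_s X_t(s) = m([N]) = 1$, we have $\alpha = X_t(\alpha\vone)^\top$, hence $\alpha - X_t\costvec^\top = -X_t(\costvec - \alpha\vone)^\top$ and
\[
\E{X_{t+1} \mid X_t} = X_t P_\pibs - \bigl(X_t(\costvec - \alpha\vone)^\top\bigr)\bigl(P_1(\sneu) - P_0(\sneu)\bigr).
\]

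To conclude, I would subtract $\statdist$ and compare with $(X_t - \statdist)\Phi$, expanding $\Phi$ as in \eqref{eq:phi-def}. Three standard facts make the extra terms vanish: (i) $\statdist P_\pibs = \statdist$ (stationarity of the optimal distribution), so $(X_t - \statdist)P_\pibs = X_t P_\pibs - \statdist$; (ii) $(X_t - \statdist)\vone^\top = 1 - 1 = 0$, since both $X_t$ and $\statdist$ sum to one, so the $\vone^\top\statdist$ term contributes nothing; and (iii) $\statdist(\costvec - \alpha\vone)^\top = \sum_s \statdist(s)\pibs(1|s) - \alpha = \sum_s y^*(s,1) - \alpha = 0$ by the budget constraint \eqref{eq:expect-budget-constraint}, so the rank-one term contributes only through $X_t$. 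What survives is precisely $X_t P_\pibs - \statdist - \bigl(X_t(\costvec - \alpha\vone)^\top\bigr)\bigl(P_1(\sneu) - P_0(\sneu)\bigr)$, which is exactly $\E{X_{t+1} - \statdist \mid X_t}$ as computed above.

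I expect the main obstacle to be the bookkeeping in the second step --- handling the neutral state carefully so that the term $-X_t\costvec^\top$ emerges, and pinning down $q_\sneu$ exactly --- together with using \eqref{eq:lp-priority-condition-1} and \Cref{assump:non-degeneracy} to guarantee that the activation probabilities are genuine probabilities and that $X_t(\sneu) > 0$. Everything else is routine linear algebra built on the standard properties of $\statdist$ and the LP optimal solution $y^*$.
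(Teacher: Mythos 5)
Your proof is correct and takes essentially the same route as the paper: compute the conditional expectation of $X_{t+1}$ under \Cref{alg:follow-lp-priority}, arrive at the matrix identity $\E{X_{t+1}\mid X_t} = X_t P_\pibs + \bigl(\alpha - X_t\costvec^\top\bigr)\bigl(P_1(\sneu) - P_0(\sneu)\bigr)$, and then match against $(X_t - \statdist)\Phi + \statdist$ using $X_t\vone^\top = \statdist\vone^\top = 1$ and $\statdist\costvec^\top = \alpha$. The only presentational difference is that you phrase the neutral-state calculation via the per-arm marginal activation probability $q_\sneu$ (and explicitly invoke \eqref{eq:lp-priority-condition-1} to justify $X_t(\sneu) > 0$), whereas the paper works directly with expected counts, which sidesteps that division; both are valid and the remaining algebra is the same.
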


Thus under \Cref{assump:local-stability},  \Cref{alg:follow-lp-priority} induces locally stable dynamics. Specifically, if the RB system satisfies \eqref{eq:lp-priority-condition-1} for a sufficiently long time and we let all $N$ arms follow \Cref{alg:follow-lp-priority}, then the scaled state-count vector $X_t([N])$ would converge to $\statdist$. Consequently, the expected fraction of arms in state $s$ taking action $a$ would be close to $y^*(s,a)$ and the time-average of expected reward would be close to the upper bound $\rrel$---hence the name ``Optimal Local Control" for \Cref{alg:follow-lp-priority}.

\paragraph{When is the condition in \eqref{eq:lp-priority-condition-1} true?}

We argue that the condition in \eqref{eq:lp-priority-condition-1} is true when the empirical distribution on $D$, $x(D) / m(D)$, is sufficiently close to $\statdist$, and $m(D)$ is not too close to zero. 
Specifically, we will measure the distance by $U$-weighted $L_2$ norm $\norm{v}_\umat = \sqrt{v \umat v^\top}$ for $v\in \R^{|\sspa|}$, with $U$ given by $\umat \triangleq \sum_{k=0}^\infty \Phi^k (\Phi^\top)^k$. The role of the matrix $\umat$ will be clear when we introduce our main policy in \Cref{sec:two-set}; the well-definedness and properties of $\umat$ are give in Lemmas~\ref{lem:W-U-well-defined} and \ref{lem:one-step-contraction-W-U} of \Cref{app:weighted-l2-norm-lemmas}. 
In \Cref{lem:feasibility-ensuring} stated below, we show that there exists $\eta = \Theta(1)$ and $\errfe = \Theta(1/N)$ such that the condition in \eqref{eq:lp-priority-condition-1} is true if 
\begin{equation}\label{eq:sufficient-cond-for-OL-assertion}
  \norm{x(D) - m(D) \statdist}_{\umat} \leq \eta m(D) - \errfe. 
\end{equation}

\begin{restatable}[Feasibility-ensuring pair]{lemma}{feasibilityensuring}\label{lem:feasibility-ensuring}
    For any system state $x$ and $D\subseteq [N]$, if \eqref{eq:sufficient-cond-for-OL-assertion} holds with $\eta = |\sspa|^{-1/2}\min\{y^*(\sneu, 0), y^*(\sneu, 1)\}$ and $\errfe = |\sspa|^{-1/2}(|\sempty|+1)/N$, then \eqref{eq:lp-priority-condition-1} also holds. 
\end{restatable}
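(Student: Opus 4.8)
The plan is to verify the two inequalities in \eqref{eq:lp-priority-condition-1} directly, by substituting $z(s)=N\,x(D,s)$ and $n=N\,m(D)$ and expanding each state count around its ``ideal'' value $n\,\statdist(s)$. Writing $z(s)=n\,\statdist(s)+N\bigl(x(D,s)-m(D)\,\statdist(s)\bigr)$, linearity gives, for $a\in\{0,1\}$,
\[
\sum_{s\neq\sneu}\pibs(a|s)\,z(s)=n\sum_{s\neq\sneu}\pibs(a|s)\,\statdist(s)+N\sum_{s\neq\sneu}\pibs(a|s)\bigl(x(D,s)-m(D)\,\statdist(s)\bigr).
\]
For the first (``ideal'') term I would use the identity $\pibs(1|s)\,\statdist(s)=y^*(s,1)$, which holds for every $s$ (trivially on $\sempty$, where both sides are zero), together with the budget constraint $\sum_s y^*(s,1)=\alpha$, to obtain $\sum_{s\neq\sneu}\pibs(1|s)\,\statdist(s)=\alpha-y^*(\sneu,1)$; symmetrically $\sum_{s\neq\sneu}\pibs(0|s)\,\statdist(s)=(1-\alpha)-y^*(\sneu,0)$ using $\sum_s y^*(s,0)=1-\alpha$.

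For the second (``deviation'') term, I would bound it in absolute value using $0\le\pibs(a|s)\le1$ by $N\,\norm{x(D)-m(D)\,\statdist}_1$ (including the $s=\sneu$ coordinate only enlarges the bound), then apply Cauchy--Schwarz, $\norm{v}_1\le\sqrt{\abs{\sspa}}\,\norm{v}_2$, and the inequality $\umat\succeq I$ --- which holds since the $k=0$ term of $\umat=\sum_{k\ge0}\Phi^k(\Phi^\top)^k$ is the identity --- to get $\norm{v}_2\le\norm{v}_\umat$. Hence the deviation term is at most $N\sqrt{\abs{\sspa}}\,\norm{x(D)-m(D)\,\statdist}_\umat$. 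This step is exactly what forces the $\abs{\sspa}^{-1/2}$ factors appearing in the prescribed $\eta$ and $\errfe$.

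Combining the two bounds and invoking the hypothesis \eqref{eq:sufficient-cond-for-OL-assertion}, the activation constraint becomes
\[
\sum_{s\neq\sneu}\pibs(1|s)\,z(s)\le\alpha n-n\,y^*(\sneu,1)+N\sqrt{\abs{\sspa}}\bigl(\eta\,m(D)-\errfe\bigr),
\]
and substituting $n=N\,m(D)$, $\eta\le\abs{\sspa}^{-1/2}y^*(\sneu,1)$, and $\errfe=\abs{\sspa}^{-1/2}(\abs{\sempty}+1)/N$ cancels the two $n\,y^*(\sneu,1)$ terms and leaves exactly $\alpha n-\abs{\sempty}-1$; the passive constraint is proved identically with $y^*(\sneu,0)$ in place of $y^*(\sneu,1)$, now using $\eta\le\abs{\sspa}^{-1/2}y^*(\sneu,0)$. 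I do not expect a genuine obstacle here: the only delicate point is the constant bookkeeping, and the choice $\eta=\abs{\sspa}^{-1/2}\min\{y^*(\sneu,0),y^*(\sneu,1)\}$ is precisely what makes both constraints close simultaneously while the integer/rounding slack $\abs{\sempty}+1$ is absorbed by $\errfe$.
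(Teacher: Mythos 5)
Your proof is correct and follows essentially the same route as the paper's: decompose $\sum_{s\neq\sneu}\pibs(a|s)x(D,s)$ into the ideal term determined by $\statdist$ plus a deviation term, bound the deviation by $\normplain{x(D)-m(D)\statdist}_1 \le |\sspa|^{1/2}\normplain{x(D)-m(D)\statdist}_\umat$ using $\umat\succeq I$, and substitute the hypothesis to cancel the $y^*(\sneu,a)$ contributions. The only cosmetic difference is that you write the argument in terms of $z(s)=N\,x(D,s)$ while the paper works directly with $x(D,s)$; the underlying algebra and constant bookkeeping are identical.
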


We prove \Cref{lem:feasibility-ensuring} in \Cref{app:proof-of-exist-fe}. 
For a pair of $(\eta, \errfe)$ such that \eqref{eq:sufficient-cond-for-OL-assertion} implies the assertion, we refer to it as \emph{feasibility ensuring}.

We can equivalently write \eqref{eq:sufficient-cond-for-OL-assertion} as the non-negativity of a \textit{slack} function, $\slk(x, D)$, given by
\begin{equation}
    \label{eq:slkb-def}
    \slkb(x, D) = \eta m(D) -  \norm{x(D) - m(D) \statdist}_{\umat}  - \errfe. 
\end{equation}

In our main policy, we also need a notion of approximate maximality for subsets satisfying \eqref{eq:sufficient-cond-for-OL-assertion}. 

\begin{definition}[$\errtol$-maximal feasible set]
    Given the current system state $x$, 
    a set of arms $D\subseteq[N]$ is $\errtol$-maximal feasible if the two conditions hold: 
    (1) $\slk(x, D)\geq 0$ or $D=\emptyset$; 
    (2) for any $D'$ such that $D \subseteq D'\subseteq [N]$ and $\slkb(x, D') \geq \errtol$, we have $m(D') \leq m(D) + \errtol$. 
\end{definition}

To understand this definition, note that when $\errtol = 0$,
a $\errtol$-maximal feasible set is an element of the collection of sets $\mathcal{F}(x) = \{D'\subseteq[N] \colon \slk(x, D') \geq 0\}\cup \{\emptyset\}$ whose supersets are not in $\mathcal{F}(x)$. 
Thus, this definition captures the ideas of letting as many arms follow the Optimal Local Control as possible.  
By letting $\errtol$ to be a small positive number, we tolerate some errors in the maximality, which allows a $\errtol$-maximal feasible set to be identified efficiently for any system state $x$. 
In \Cref{app:exp-details:two-set}, we provide concrete algorithms to efficiently determine the $\errtol$-maximal feasible sets for some $\errtol = O(1/N)$ when discussing the implementations of our main algorithm.

\subsection{Constructing the Two-Set Policy}\label{sec:two-set}

We now define the \emph{two-set policy} for the $N$-armed system, whose pseudocode is given in \Cref{alg:two-set}. 
On a high level, the two-set policy works by picking out two disjoint subsets of arms among the $N$ arms in each time step, and letting them follow two subroutines, Optimal Local Control and Unconstrained Optimal Control, respectively. 
When the two sets are picked properly, the policy pushes the scaled state-count vector $X_t([N])$ of the $N$-armed system towards $\statdist$. Ultimately, the system will stabilize in the region where all arms can follow \Cref{alg:follow-lp-priority} $\lppriority$.

Specifically, at time step $t$, the two-set policy first picks a subset $\Db_t$, where the arms will follow $\lppriority$. 
The policy chooses $\Db_t$ to be a $\errtol$-maximal feasible set for some predetermined $\errtol=O(1/N)$, with some additional requirements as specified in Lines~\ref{alg:two-set:update-db-step-begin}--\ref{alg:two-set:update-db-step-end} of \Cref{alg:two-set}. 
Roughly speaking, the two-set policy intends to maximize the cardinality of $\Db_t$ and let $\Db_t$ be a superset of $\Db_{t-1}$ whenever possible, under the constraint $\slkb(X_t, \Db_t) \geq 0$. 
The constraint $\slkb(X_{t}, \Db_{t}) \geq 0$ ensures that arms in $\Db_t$ can follow the Optimal Local Control. Moreover, due to nice properties of the $\umat$-weighted norm, the choice of $\Db_t$ is ``closed'' under the Optimal Local Control, in the sense that $\slkb(X_{t+1}, \Db_{t})$ remains non-negative with high probability, and $\Db_{t+1}$ can usually be a superset of $\Db_t$ (See \Cref{lem:two-set:non-shrink}).

After choosing $\Db_t$, the two-set policy picks a subset $\Da_{t}$ that contains $\beta \triangleq \min(\alpha, 1-\alpha)$ fraction of the remaining arms, lower-rounded to the nearest integer.  
The set $\Da_{t}$ should be such that $\Da_t \cap \Db_t = \emptyset$ and $\Da_{t} \supseteq \Da_{t-1} \backslash \Db_{t}$. 

With $\Db_t$ and $\Da_t$ chosen, the two-set policy let arms in $\Db_t$ follow \Cref{alg:follow-lp-priority} \lppriority, let the arms in $\Da_t$ follow \Cref{alg:follow-pibs} Unconstrained Optimal Control, and chooses the actions of remaining arms to meet the budget constraint. 
We will show that these steps are always feasible in \Cref{lem:two-set:subroutine-conform} in \Cref{sec:two-set-proof}.

\begin{algorithm}[t] 
\caption{Two-Set Policy}
\label{alg:two-set}
\hspace*{\algorithmicindent} \textbf{Input}: number of arms $N$, budget $\alpha N$, an optimal solution of LP-relaxation $y^*$, \\
\hspace*{\algorithmicindent} \hspace{0.37in} initial system state $X_0$, initial state vector $\veS_0$, initial focus set $D_{-1}=\emptyset$, \\
\hspace*{\algorithmicindent} \hspace{0.37in} a feasibility-ensuring pair of ($\eta$, $\errfe$), error tolerance for subproblems $\errtol = O(1/N)$
\begin{algorithmic}[1]
    \For{$t=0,1,\dots$}
        \If{$\slk(X_t, [N]) \geq 0$} 
        \label{alg:two-set:update-db-step-begin}
            \State Let $\Db_t = [N]$
        \ElsIf{$\slk(X_t, \Db_{t-1}) \geq 0$} 
            \State Let $\Db_t$ be any $\errtol$-maximal feasible subset such that $\Db_t \supseteq \Db_{t-1}$. 
        \Else
            \State Let $\Db_t$ be any $\errtol$-maximal feasible subset. 
        \label{alg:two-set:update-db-step-end}
        \EndIf
        \State Let $\Da_{t}$ be s.t. $\Da_{t-1} \backslash \Db_{t} \subseteq \Da_{t} \subseteq [N] \backslash \Db_t$ and $|\Da_{t}| = \big\lfloor \beta (N -|\Db_{t}|)\big\rfloor$. 
        \State Set $A_t(i)$ for $i\in \Db_{t}$ using \Cref{alg:follow-lp-priority} Optimal Local Control. 
        \label{alg:two-set:db-action}
        \State Set $A_t(i)$ for $i\in \Da_{t}$ using \Cref{alg:follow-pibs} Unconstrained Optimal Control. 
        \label{alg:two-set:da-action}
        \State Set $A_t(i)$ for $i\notin \Db_{t} \cup \Da_{t}$ such that $\sumN A_t(i) = \alpha N$. 
        \label{alg:two-set:remaining-action}
        \State Apply $A_t(i)$ for $i\in[N]$ and observe the new state vector $\veS_{t+1}$. 
    \EndFor
\end{algorithmic}
\end{algorithm}

\section{Main results}

In this section, we present our main results. 

\subsection{Optimality guarantee on Two-Set Policy}\label{sec:results-policy-and-opt-gap}

Our first main theorem states an upper bound on the optimality gap of the two-set policy. 
\begin{theorem}\label{thm:two-set:achievability}
    Suppose Assumptions~\ref{assump:aperiodic-unichain}, \ref{assump:non-degeneracy} and \ref{assump:local-stability} hold. Let $\pi$ be the two-set policy in \Cref{alg:two-set} with a pair of feasibility-ensuring $(\eta, \errfe)$ such that $\eta=\Theta(1)$ and $\errfe=O(1/N)$. Then for some $C > 0$ independent of $N$:
    \begin{equation}
        \rrel - \rsysn = O\big(\exp(-CN)\big).
    \end{equation}
\end{theorem}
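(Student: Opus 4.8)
First I would set up the reduction: the optimality gap will be controlled by a single scalar, the long-run Cesàro frequency of the ``bad'' event $\{\Db_t \neq [N]\}$ on which not all arms follow Optimal Local Control, which I denote $\bar p \triangleq \limsup_{T\to\infty}\frac1T\sum_{t=0}^{T-1}\Prob{\Db_t\neq[N]}$. The first ingredient is a clean reward identity on the good event. When $\Db_t = [N]$ every arm follows \Cref{alg:follow-lp-priority}, and because its randomized roundings are unbiased, the conditional expected state--action counts $\E{Y_t(s,a)\mid X_t}$ are \emph{affine} in $X_t([N])$: for non-neutral $s$ the expected number of activations equals $\pibs(1|s)\,N X_t([N],s)$, and the activations in $\sneu$ absorb the residual of the budget $B$ with $\E{B}=\alpha N$. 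A direct computation shows these counts equal $y^*$ when $X_t([N]) = \statdist$, so the conditional expected per-step reward equals $\rrel$ there; by affineness, $\rrel - \E{\frac1N\sum_{i\in[N]} r(S_t(i),A_t(i)) \mid X_t, \Db_t=[N]} = L(\statdist - X_t([N]))$ for a fixed linear functional $L$ (note this gap can be negative, which is why a naive absolute-value bound only yields $\Obrac{1/\sqrt N}$). Splitting the Cesàro average of the reward gap according to whether $\Db_t=[N]$, the bad part is at most $2\rmax\,\bar p$, and the good part is at most $\norm{L}$ times the Cesàro average of $\bigl\lVert\E{(X_t([N])-\statdist)\indibrac{\Db_t=[N]}}\bigr\rVert$, which differs from $\norm{\E{X_t([N])-\statdist}}$ by at most $\Obrac{1}\bar p$. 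For the bias, on $\{\Db_t=[N]\}$ condition \eqref{eq:lp-priority-condition-1} holds by \Cref{lem:feasibility-ensuring}, so \Cref{lem:lp-priority-and-phi} gives $\E{X_{t+1}([N])-\statdist\mid X_t} = (X_t([N])-\statdist)\Phi$, while on the complement the conditional drift is merely bounded; hence $a_t \triangleq \E{X_t([N])-\statdist}$ obeys $a_{t+1} = a_t\Phi + \epsilon_t$ with $\norm{\epsilon_t}=\Obrac{1}\Prob{\Db_t\neq[N]}$. By \Cref{assump:local-stability}, $\Phi$ is a strict contraction in the $\umat$-weighted norm (\Cref{lem:one-step-contraction-W-U}), so $\sum_{k\ge0}\lVert\Phi^k\rVert<\infty$ and unrolling the recursion shows the Cesàro average of $\norm{a_t}$ is $\Obrac{1}\bar p$. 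Combining, $\rrel-\rsysn = \Obrac{1}\bar p$, and it remains to show $\bar p = \Obrac{\exp(-CN)}$.

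For that I would track a single scalar \emph{multivariate Lyapunov function} $V_t$ of the coupled triple $(X_t,\Db_t,\Da_t)$, combining (a) a $\umat$-weighted measure of $\norm{X_t(\Db_t)-m(\Db_t)\statdist}$, (b) a $\wmat$-weighted measure of the deviations of the Unconstrained-Optimal-Control arms $\Da_t$ and of the remaining arms, and (c) a penalty on the residual mass $1-m(\Db_t)$. The estimates I would establish are: (i) the two subroutines contract their respective weighted deviations at geometric rates $\lamu,\lamw<1$ up to an $\Obrac{1/N}$ rounding error (the $\wmat$- and $\umat$-norm lemmas together with \Cref{lem:pibs-transition} and \Cref{lem:lp-priority-and-phi}); (ii) by the superset-preferring $\errtol$-maximal-feasible selection rule of \Cref{alg:two-set}, $\Db_t$ almost always absorbs $\Da_{t-1}$ and grows, while $\Da_t$ is continually refreshed with arms whose distribution is steered toward $\statdist$, so term (c) also contracts geometrically (cf.\ \Cref{lem:two-set:non-shrink} and \Cref{lem:two-set:subroutine-conform}); and (iii) the migrations of arms among the three groups perturb $V_t$ by only $\Obrac{1/N}$ per step, since all group sizes and within-group count vectors change by $\Obrac{1/N}$. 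Items (i)--(iii) yield, whenever $V_t$ exceeds the $\Theta(1)$ threshold $v_0$ below which $\slkb(X_t,[N])\ge0$ is guaranteed, a uniform negative drift of order $1/N$ together with increments that are $\Obrac{1/N}$-bounded and, being conditional sums of $N$ bounded terms, concentrated at scale $1/\sqrt N$.

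A standard exponential-supermartingale argument at scale $\theta=\Theta(N)$ --- choosing $\theta$ so that $e^{\theta V_t}$ is a supermartingale outside $\{V_t\le v_0\}$, using the concentration in (iii) to dominate the exponential moment of the increment by the linear drift --- then gives $\Prob{V_t>v_0}=\Obrac{\exp(-CN)}$ in the stationary regime. Since $\{\Db_t\neq[N]\}\subseteq\{\slkb(X_t,[N])<0\}\subseteq\{V_t>v_0\}$ up to absorbing constants into $C$, this gives $\bar p=\Obrac{\exp(-CN)}$; plugging into the reduction, and noting the initial transient contributes nothing to the Cesàro limits (so the bound is uniform in $\veS_0$), yields $\rrel-\rsysn = \Obrac{\exp(-CN)}$.

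The main obstacle I anticipate is items (ii)--(iii) above: arms migrate among $\Db_t$, $\Da_t$, and the remainder \emph{while} undergoing state transitions, so the same weighted-deviation term is evaluated against different index sets at consecutive times, and one must verify the $\errtol$-maximal-feasible, superset-preferring rule is compatible with geometric shrinkage of $1-m(\Db_t)$ without ever forcing $\slkb$ negative. Producing a \emph{single} Lyapunov function whose drift simultaneously absorbs the two subroutine contractions, the set growth, and the membership-change perturbations --- rather than a system of quantities with circular dependencies --- is precisely where the new multivariate technique is needed, and this is the step I expect to be the most delicate.
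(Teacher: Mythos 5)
Your first paragraph, the reduction to $\bar p=\lim_T\frac1T\sum_t\Prob{\Db_t\neq[N]}$, is essentially correct and matches \Cref{lem:two-set:opt-gap-bound-by-prob}; you get there by unrolling the $\Phi$-recursion for the bias $a_t=\Ebig{X_t([N])-\statdist}$ while the paper passes through a Stein-type identity using $Q=(I-\Phi)^{-1}$, but these are the same calculation.

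The Lyapunov half has real gaps, however. \textbf{(1) Drift scale.} You claim a negative drift ``of order $1/N$'' when $V_t>v_0$. For the exponential-supermartingale step to run at $\theta=\Theta(N)$ you need the drift to dominate the second-order term $\sim\theta^2\cdot(1/N)$, which forces a $\Theta(1)$ drift, not $\Theta(1/N)$; with a $1/N$ drift the admissible $\theta$ is $O(1)$ and you only get a constant tail. The paper's \Cref{lem:drift-to-high-prob-bound} explicitly requires $\Ebig{V_{t+1}-V_t\givenbig V_t>\threshbar}\leq-\gamma$ for a constant $\gamma$. \textbf{(2) Mechanism for the mass term.} Your item (ii) attributes a geometric contraction to $1-m(\Db_t)$ via non-shrinkage and subroutine conformity. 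Neither lemma gives a contraction of $1-m(\Db_t)$. What actually closes the argument is the \emph{sufficient coverage} bound (\Cref{lem:two-set:sufficient-coverage}): $1-m(\Db_t)\leq\frac{\lamu^{1/2}}{\eta\beta}\hw(X_t,\Da_t)+O(1/N)$. It is only through this that $V(\Fullstate_t)$ itself becomes bounded by a constant multiple of $(1-\rhou)\hu(X_t,\Db_t)+(1-\rhow)\hw(X_t,\Da_t)$, which is where the constant drift comes from when $V>\threshbar$. Your sketch never produces such a bound. \textbf{(3) Lyapunov function.} You propose including a $\wmat$-weighted deviation of the remaining arms $(\Db_t\cup\Da_t)^c$, but these arms do \emph{not} run Unconstrained Optimal Control; their actions only serve to hit the budget, so that deviation has no contraction and a $\hw$-type term would not drift down. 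The paper accounts for these arms solely through the $L(1-m(\Db))$ penalty. \textbf{(4) Increment sizes.} ``All group sizes ... change by $O(1/N)$ per step'' is false almost surely: $\Db_{t+1}$ can differ from $\Db_t$ by $\Theta(1)$ mass in a single step. Growth of $\Db_t$ is benign only because $L$ is chosen so that growth decreases $V$ (the $m(\Db_{t+1}\backslash\Db_t)$ terms cancel in the key inequality), and shrinkage is rare by \Cref{lem:two-set:non-shrink}; neither of these is captured by an a.s.\ $O(1/N)$ bound on increments.
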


\Cref{thm:two-set:achievability} establishes the \emph{first} exponential asymptotic optimality without UGAP. 
In contrast, prior work either requires UGAP to achieve exponential asymptotic optimality or only has an $O(1/\sqrt{N})$ optimality gap. 
We prove \Cref{thm:two-set:achievability} in \Cref{sec:two-set-proof}.

\begin{remark}\label{remark:two-set-one-set}
We discuss the relationship between the two-set policy in this paper and a policy from prior work named the set-expansion policy \citep{HonXieCheWan_24}. The set-expansion policy achieves an $O(1/\sqrt{N})$ optimality gap under the aperiodic unichain assumption (\Cref{assump:aperiodic-unichain}); it works by letting arms in a specific subset follow $\pibs$ and gradually expanding the subset. 
The two-set policy in this paper refines this approach by maintaining two judiciously chosen sets instead of one set. 
Furthermore, we argue that the two-set policy can be implemented in a way such that when Assumption~\ref{assump:non-degeneracy} or \ref{assump:local-stability} fails, it achieves $O(1/\sqrt{N})$ optimality gap, because it becomes almost the same as the set-expansion policy. 
To see this, note that when Assumption~\ref{assump:non-degeneracy} (non-degeneracy) fails, it holds that $\eta = 0$; when Assumption~\ref{assump:local-stability} (local stability) fails, the matrix $U$ becomes infinity; in either case, $\Db_t = \emptyset$. 
By taking suitable actions for the arms in $(\Db_t\cup\Da_t)^c$, we can maintain a subset of arms following the Unconstrained Optimal Control that gradually expands in the same way as the set-expansion policy. 
Using similar arguments as those in \cite{HonXieCheWan_24}, one can prove an $O(1/\sqrt{N})$ bound for the two-set policy. 
\end{remark}

\subsection{Discussion on the necessities of the assumptions}
\label{sec:results-necessity}

In this subsection, we discuss the necessities of Assumptions~\ref{assump:aperiodic-unichain}--\ref{assump:local-stability} in \Cref{thm:two-set:achievability}. 
In particular, we establish lower bounds of the following flavor: when any of Assumptions~\ref{assump:aperiodic-unichain}--\ref{assump:local-stability} does not hold, there exist RB problem instances such that for every policy $\pi$ and certain initial state $\veS_0$, it holds that
\begin{equation}
    \label{eq:lower-bound}
    \rrel - \rsysn = \Omega\Big(\frac{1}{\sqrt{N}}\Big).
\end{equation}
It is important to note that $\rrel - \rsysn$ is an upper bound on the optimality gap $\ropt - \rsysn$, so a lower bound on $\rrel - \rsysn$ does not necessarily imply a lower bound on the optimality gap. 
Nevertheless, \emph{all} existing bounds on the optimality gap reply on $\rrel$ or its variants, and our lower bounds reveal insights on when it is impossible to get exponentially close to $\rrel$.

\paragraph{When \Cref{assump:aperiodic-unichain} (aperiodic  unichain) or \Cref{assump:non-degeneracy} (non-degeneracy) is violated.}
We construct RB instances where \eqref{eq:lower-bound} holds for every policy $\pi$ and certain initial states $\veS_0$; the details are provided in \Cref{app:examples_vio_assump}.

\paragraph{When \Cref{assump:local-stability} is violated.}

When local stability (\Cref{assump:local-stability}) fails, instead of finding one counterexample, we take a step further to establish the lower bound \eqref{eq:lower-bound} for a large class of instances. 

\begin{definition}[Regular unstable RBs]
    \label{def:regular-unstable}
    For an RB instance satisfying \Cref{assump:aperiodic-unichain} and \Cref{assump:non-degeneracy}, we call it regular unstable if it has the following properties: 
    \begin{enumerate}[leftmargin=2em, label=(\alph*)]
        \item \label{assump:unique-and-communicating}
        \eqref{eq:lp-single} has a unique optimal solution $y^*$, and $y^*(s,1) + y^*(s,0) > 0$ for all $s\in\sspa$. 
        \item \label{assump:instability}
        The matrix $\Phi$ defined in \eqref{eq:phi-def} has at least one eigenvalue whose modulus is strictly larger than $1$, and the modulus of any eigenvalue of $\Phi$ does not equal $1$. 
        \item \label{assump:orthogonal-noise}
        For each $\xi \in \R^{|\sspa|}$ with $\normplain{\xi}_2 = 1$ and $\statdist \xi^\top = 0$, for each large-enough $N$  and each time step $t$, 
        \begin{equation*}
            \Ebig{\absbig{\big(X_{t+1}([N]) - \statdist - (X_t([N]) - \statdist)\Phi \big)  \xi^\top } \givenbig X_t, \veA_t} \geq \frac{\noiseortho}{\sqrt{N}} \quad \text{ if } \norm{X_t([N]) - \statdist}_1 \leq \radiusus,
        \end{equation*}
        where $\radiusus, \noiseortho > 0$ are some constants; $\veA_t$ follows the Optimal Local Control (\Cref{alg:follow-lp-priority}), which is always feasible if $\norm{X_t([N]) - \statdist}_1 \leq \min\{y^*(\sneu, 0), y^*(\sneu, 1)\}-1/N$. 
        \end{enumerate}
\end{definition}

Our next main theorem is a lower bound for regular unstable RBs; it is proved in \Cref{app:proof_instability}.

\begin{theorem}\label{thm:instability-lower-bound}
    For every regular unstable RB, every policy $\pi$ and any initial state vector $\veS_0$, we have
    \[
        \rrel - \rsysn = \Omega\Big(\frac{1}{\sqrt{N}}\Big).
    \] 
\end{theorem}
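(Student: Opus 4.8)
The plan is to argue by contradiction. Suppose $\rrel - \rsysn$ is \emph{not} $\Omega(1/\sqrt{N})$, so that along some sequence of values of $N$ one has $\sqrt{N}\,(\rrel-\rsysn)\to 0$. First I would pass from reward loss to a statement about how closely $\pi$ imitates \Cref{alg:follow-lp-priority}. Write $\disy(Y) = \sum_{s\in S^+} Y(s,0) + \sum_{s\in S^-} Y(s,1)$ for the mismatch functional, which vanishes exactly when the actions on $S^+\cup S^-$ agree with Optimal Local Control, and put $\overline{\disy} \triangleq \limsup_{T\to\infty}\frac{1}{T}\sum_{t=0}^{T-1}\E{\disy(Y_t^\pi)}$ (a genuine limit for the policies we consider, e.g.\ stationary policies on a finite augmented state space). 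A primal--dual argument --- this is where property~(a) of \Cref{def:regular-unstable} (uniqueness and full support of $y^*$) enters --- produces a constant $\rewardgap>0$ with $\rrel - \rsysn \ge \rewardgap\,\overline{\disy}$, so $\sqrt{N}\,\overline{\disy}\to 0$ along the chosen sequence. I condition throughout on $\mathcal{G}_t$, the history through the time-$t$ state and action.

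Next I would linearize the $N$-arm dynamics around $\statdist$. Property~(a) also forces $\sempty=\emptyset$, so $\sspa = S^+\sqcup S^-\sqcup\{\sneu\}$. Since $\disy(Y_t)$ bounds both the mass of arms in $S^+$ kept passive and the mass in $S^-$ activated, and the budget constraint $\sum_s Y_t(s,1)=\alpha$ then pins the allocation at $\sneu$, the state--action counts $Y_t$ differ by $O(\disy(Y_t))$ in $\ell_1$ from the idealized Optimal-Local-Control allocation, whose induced one-step conditional mean of $X_{t+1}([N])$ is $\statdist + (X_t([N])-\statdist)\Phi$ by \Cref{lem:lp-priority-and-phi}. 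Hence, for \emph{every} policy and every state,
\[
   \E{X_{t+1}([N]) - \statdist \mid \mathcal{G}_t} = (X_t([N]) - \statdist)\Phi + E_t, \qquad \norm{E_t}_1 \le C_1\big(\disy(Y_t) + 1/N\big).
\]
Since $\statdist\Phi = 0$ (immediate from \eqref{eq:phi-def}), the hyperplane $V_0 \triangleq \{v : v\vone^\top = 0\}$, which contains $X_t([N])-\statdist$, is $\Phi$-invariant; by property~(b) it splits as $V_s\oplus V_u$ with $\Phi|_{V_s}$ a strict contraction and $\Phi|_{V_u}$ a strict expansion in suitable adapted norms, and no modulus-one directions. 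Let $\zeta_t$ be the $V_u$-component of $X_t([N])-\statdist$. The spectral projector onto $V_u$ commutes with $\Phi$, so the displayed identity and Jensen's inequality give $\E{\norm{\zeta_{t+1}}_{V_u}\mid \mathcal{G}_t}\ge \rho\,\norm{\zeta_t}_{V_u} - C_2\big(\disy(Y_t)+1/N\big)$ for some $\rho>1$; telescoping against the boundedness of $\norm{\zeta_t}_{V_u}$ yields
\[
   \limsup_{T\to\infty}\frac{1}{T}\sum_{t<T}\E{\norm{\zeta_t}_{V_u}} \;\le\; \frac{C_2}{\rho-1}\Big(\overline{\disy} + \frac{1}{N}\Big) \;=\; o(1/\sqrt{N}).
\]
A companion estimate for the $V_s$-component --- where the residual one-step fluctuation is $O(1/\sqrt{N})$ plus a term of order $\disy(Y_t)$ --- shows that the long-run average of $\E{\norm{X_t([N])-\statdist}_1}$ is $O(1/\sqrt{N})$. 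Hence, for any fixed small $\theta>0$, Markov's inequality makes each of the events $\{\norm{X_t([N])-\statdist}_1\le\radiusus\}$, $\{\norm{\zeta_t}_{V_u}\le \theta/\sqrt{N}\}$, and $\{\disy(Y_t)\le \theta/\sqrt{N}\}$ hold on a $1-o(1)$ fraction of time steps.

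The opposing bound is the noise floor. Because $\statdist\Phi = 0$, the line $\mathrm{span}(\statdist)$ is a $\Phi$-invariant complement of $V_0$, so the projector onto $V_u$ annihilates $\statdist$; one can therefore fix a unit vector $\xi$ with $\statdist\xi^\top = 0$ that is dual to a chosen direction of $V_u$, meaning $\abs{v\,\xi^\top}$ lower-bounds the $V_u$-component of $v$ up to a positive constant. Property~(c) --- whose $\Omega(1/\sqrt{N})$ lower bound reflects genuine non-degeneracy of the per-arm transition noise, and is therefore robust to perturbing the actions of an $o(1)$ fraction of arms away from \Cref{alg:follow-lp-priority} --- then gives $\E{\norm{\zeta_{t+1}}_{V_u}\mid \mathcal{G}_t}\ge c_3/\sqrt{N}$ on the intersection of the three events above, provided $\theta$ is small. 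Averaging over $t$, the long-run average of $\E{\norm{\zeta_t}_{V_u}}$ is at least $(c_3/\sqrt{N})(1-o(1))$, contradicting the $o(1/\sqrt{N})$ bound of the previous step. So $\overline{\disy} = \Omega(1/\sqrt{N})$, and the primal--dual bound gives $\rrel-\rsysn = \Omega(1/\sqrt{N})$.

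The crux --- the trade-off the paper advertises --- is reconciling the two estimates of the last two steps: instability of $\Phi$ forces any near-optimal policy to hold the unstable coordinate $\zeta_t$ only as large as the control error $\disy(Y_t)$ permits, while the transition noise keeps $\zeta_t$ from being driven below the $1/\sqrt{N}$ scale; making these compatible requires an \emph{independent} certificate that $X_t([N])$ stays within $\radiusus$ of $\statdist$ most of the time, which does not follow from $\disy$ being small and is exactly the role of the stable-subspace contraction. The two supporting facts I expect to need careful proofs of are: (i) the error bound $\norm{E_t}_1\lesssim \disy(Y_t)+1/N$ at \emph{all} states, where $\sempty=\emptyset$ is essential so that the $\Phi$-linearization of \Cref{lem:lp-priority-and-phi} carries no uncontrolled transient-state contributions; and (ii) the robustness of the lower bound in property~(c) to $o(1)$-fraction action perturbations, so that $\disy(Y_t)=O(1/\sqrt{N})$ --- rather than $\disy(Y_t)=0$ --- suffices to invoke it.
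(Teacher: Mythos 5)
Your proposal follows essentially the same route as the paper: the primal--dual bound $\rrel-\rsysn \ge \rewardgap\,\overline{\disy}$ (this is the paper's \Cref{lem:opt-gap-linear-comp}, whose proof uses strict complementarity); the $\Phi$-linearization of the one-step conditional mean with imitation error measured by $\disy$; the splitting of the $\Phi$-invariant hyperplane $\{v : v\vone^\top=0\}$ into stable and unstable subspaces with adapted expansion/contraction norms (the paper's $\Uus,\Ust$ built from Jordan blocks in \Cref{lem:uus-ust-properties}); and the noise floor via Property~\ref{assump:orthogonal-noise}, pitted against the expansion of the unstable coordinate via Jensen. The contradiction framing is cosmetically different from the paper's direct argument, which sums the expansion and contraction estimates into a single Lyapunov function $\hlb = \ratiolb\hus - \hst$ and takes long-run averages, but the underlying estimates match: your step~5 is \Cref{lem:hus-drift}, your stable companion estimate is \Cref{lem:hst-drift}, and your Markov-inequality gluing corresponds to the paper's case analysis inside \Cref{lem:hus-drift} via the event $\goodevent_t$.

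The one technically substantive divergence is how the $\disy$-slack enters the noise-floor step. You work with $[N]$ throughout, absorb the imitation mismatch into a mean-shift $E_t$ with $\norm{E_t}_1 \lesssim \disy(Y_t)+1/N$, and then flag that Property~\ref{assump:orthogonal-noise} must be shown robust to an $O(\disy)=O(\theta/\sqrt{N})$-fraction action perturbation --- this is a genuine lemma, provable via conditional independence of the per-arm transitions plus an anti-concentration estimate in the spirit of \Cref{lem:assump-6-helper}, but it requires care since the perturbation lives at the same $\Theta(1/\sqrt{N})$ scale as the noise floor. The paper sidesteps this entirely with \Cref{lem:exist-subset-lp-priority}: at each step it identifies a subset $D_t$ of scaled size $\ge 1 - 2\disy(Y_t)/\beta - 1/(\beta N)$ on which the actual actions agree with \lppriority\ \emph{up to one arm}, so the coupling between the actual next state and the Optimal-Local-Control counterfactual costs only $O(1/N)$ in $\ell_1$ a.s. --- negligible against $1/\sqrt{N}$ --- and Property~\ref{assump:orthogonal-noise} can be invoked without a robustness argument. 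The price of the paper's route is that one must carry the $X_t(D_t)$-versus-$X_t([N])$ bookkeeping through the Lyapunov analysis; the price of yours is the robustness lemma you rightly flag as item~(ii). Both are sound; the paper's subset trick trades a mean-shift of size $O(\disy)$ for a pathwise coupling of size $O(1/N)$, which is the cleaner way to keep the slack from interfering with the $1/\sqrt{N}$-scale noise floor.
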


\Cref{thm:instability-lower-bound} implies that an exponential optimality gap cannot be achieved for a significant subset of  RB instances violating local stability (\Cref{assump:local-stability}).
Therefore, in addition to the widely accepted non-degenerate condition, local stability is also fundamental for exponential  optimality. 

We briefly comment on Property \ref{assump:orthogonal-noise} of regular unstable RB: this property requires the state-count vector to have sufficient randomness in most directions under the Optimal Local Control when the system is near the optimal state distribution $\statdist$. 
In \Cref{app:remark-noise-assumption}, we provide an intuitive sufficient condition for \ref{assump:orthogonal-noise}, and a weaker version of the condition under which \Cref{thm:instability-lower-bound} remains valid.

The proof of \Cref{thm:instability-lower-bound} is based on the following idea: By \Cref{lem:opt-gap-linear-comp}, arms need to closely follow the Optimal Local Control to maintain a small optimality gap; however, if the Optimal Local Control is locally unstable, the system will quickly drift away from the region where the Optimal Local Control is applicable. 
In the proof, we use Lyapunov functions to quantify how much the system drifts away from $\statdist$ in each step, based on how closely it follows the Optimal Local Control.

We prove the following in \Cref{app:locally-unstable-example} by giving a concrete example of regular unstable RB. 

\begin{restatable}{corollary}{existregularunstable}\label{cor:exist-regular-unstable} 
    There exists an RB satisfying Assumptions \ref{assump:aperiodic-unichain} and \ref{assump:non-degeneracy} and violating \Cref{assump:local-stability} such that 
    $ 
        \rrel - \rsysn \geq \Omega\Big(\frac{1}{\sqrt{N}}\Big).
     $ 
\end{restatable}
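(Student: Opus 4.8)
Theorem~\ref{thm:instability-lower-bound} already gives the desired bound $\rrel - \rsysn = \Omega(1/\sqrt{N})$ for \emph{every} regular unstable RB, so the entire content of the corollary is to exhibit one concrete single-armed MDP $(\sspa,\aspa,P,r)$ together with a budget $\alpha\in(0,1)$ that (i) satisfies Assumptions~\ref{assump:aperiodic-unichain} and \ref{assump:non-degeneracy}, (ii) violates Assumption~\ref{assump:local-stability}, and (iii) meets Properties~\ref{assump:unique-and-communicating}--\ref{assump:orthogonal-noise} of \Cref{def:regular-unstable}. The corollary then follows by applying \Cref{thm:instability-lower-bound} to that instance. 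The plan is to build such an instance on a small state space, $|\sspa|=3$, with one neutral state $\sneu$ and two non-neutral states, one destined to lie in $S^+$ and one in $S^-$.

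I would choose the transitions $P(s,a,\cdot)$, rewards $r(s,a)$, and the fraction $\alpha$ so that the LP~\eqref{eq:lp-single} has a unique optimal vertex $y^*$ with $y^*(s,0)+y^*(s,1)>0$ for all $s$ (this yields Property~\ref{assump:unique-and-communicating} and $\sempty=\emptyset$), exactly one neutral state $\sneu$ with $y^*(\sneu,0),y^*(\sneu,1)>0$ (Assumption~\ref{assump:non-degeneracy}), and induced kernel $P_{\pibs}$ forming an aperiodic unichain (Assumption~\ref{assump:aperiodic-unichain}). Uniqueness and non-degeneracy of an LP vertex are generic, so these impose no real obstruction; one may either write down explicit numbers or argue by a perturbation/continuity argument starting from a nearby non-degenerate base instance.

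The crux is engineering $\Phi = P_{\pibs} - \vone^\top\statdist - (\costvec - \alpha\vone)^\top(P(\sneu,1,\cdot) - P(\sneu,0,\cdot))$ to have spectral radius strictly larger than $1$ while keeping every eigenvalue off the unit circle, which is exactly Property~\ref{assump:instability} and is the precise way in which Assumption~\ref{assump:local-stability} is violated. The difficulty is that the base matrix $P_{\pibs}-\vone^\top\statdist$ is automatically stable --- its nonzero eigenvalues are the non-unit eigenvalues of $P_{\pibs}$, which lie strictly inside the unit disk by Assumption~\ref{assump:aperiodic-unichain}. Hence destabilization must come from the rank-one correction, which forces the two next-state laws $P(\sneu,1,\cdot)$ and $P(\sneu,0,\cdot)$ to be very different and the activation-probability vector $\costvec=(\pibs(1|s))_{s\in\sspa}$ to have a large component orthogonal to $\alpha\vone$. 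I would therefore push the neutral state's two actions toward near-deterministic, nearly antipodal next-state distributions, choose the rewards so that the priority order produces the needed spread in $\costvec$, and then verify by a direct $3\times 3$ eigenvalue computation (or by a continuity argument as the correction is scaled up) that the resulting $\Phi$ has a real eigenvalue exceeding $1$ and no eigenvalue of modulus exactly $1$. Keeping every such choice consistent with ``$y^*$ is genuinely optimal for the relaxed single-armed problem'' is the bookkeeping I expect to be the main obstacle.

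Finally, Property~\ref{assump:orthogonal-noise} can be checked through the intuitive sufficient condition discussed in \Cref{app:remark-noise-assumption}: since the instance has no transient states and the Optimal Local Control activates a strictly interior fraction of the arms in $\sneu$ while all transitions are genuinely stochastic, the one-step increment $X_{t+1}([N]) - \statdist - (X_t([N])-\statdist)\Phi$ is, conditionally on $(X_t,\veA_t)$, a centered sum of $N$ nearly independent bounded terms whose conditional covariance is bounded below on the neighborhood $\norm{X_t([N])-\statdist}_1\le\radiusus$. Projecting onto any unit vector $\xi$ with $\statdist\xi^\top=0$ keeps the conditional variance $\Omega(1/N)$, and standard anticoncentration for sums of independent bounded variables then yields the conditional $L_1$-norm bound $\Omega(1/\sqrt{N})$ required in Property~\ref{assump:orthogonal-noise}, with suitable constants $\radiusus,\noiseortho>0$. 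With all of \Cref{def:regular-unstable} verified, \Cref{thm:instability-lower-bound} applies to this instance and gives $\rrel-\rsysn=\Omega(1/\sqrt{N})$, which is exactly the corollary.
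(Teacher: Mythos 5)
Your plan is the same in spirit as the paper's: exhibit one concrete regular unstable instance and then invoke \Cref{thm:instability-lower-bound}; for Property~\ref{assump:orthogonal-noise} the paper likewise appeals to the sufficient condition (it has \Cref{lem:assump-6-helper}, which requires $P(s,a,s')>0$ for all $s,a,s'$ and then delivers the anticoncentration argument you sketch). So there is no disagreement in method.

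The gap is that the corollary is an existence claim, and you never actually produce the instance. The entire logical content of the statement is that a suitable $(\sspa,\aspa,P,r,\alpha)$ exists, and you explicitly flag the crux yourself: making the rank-one correction large enough to push an eigenvalue of $\Phi$ outside the unit disk while keeping $y^*$ optimal, unique, fully supported, and non-degenerate, with $P_{\pibs}$ aperiodic unichain. You then say this ``bookkeeping'' is the main obstacle and leave it unresolved, proposing only a perturbation/continuity heuristic. That heuristic is not automatic: the destabilizing correction is tied to $(\costvec-\alpha\vone)^\top$ and to $P(\sneu,1,\cdot)-P(\sneu,0,\cdot)$, both of which are constrained by the very LP whose optimal vertex you need to remain fixed, so it is not a free parameter you can crank up without re-solving the LP and re-checking non-degeneracy, uniqueness, and the unichain structure. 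Without either a concrete numerical instance (which is what the paper supplies: the three-state example of prior work, with $P$, $r$, $\alpha$ and the computed eigenvalue moduli $\{1.133,0,0.059\}$ of $\Phi$), or a rigorous existence argument that actually carries out the perturbation and checks all properties of \Cref{def:regular-unstable} survive, the proof is not complete. The rest of your argument --- verifying Property~\ref{assump:orthogonal-noise} via the everywhere-positive-transitions sufficient condition, then applying \Cref{thm:instability-lower-bound} --- is fine once an instance is in hand.
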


\section{Experiments}
\label{sec:experiments}

\begin{figure}[t]
     \subcaptionbox{First non-UGAP instance. \label{fig:new2-eight-state-045}}{\includegraphics[height=5.5cm]{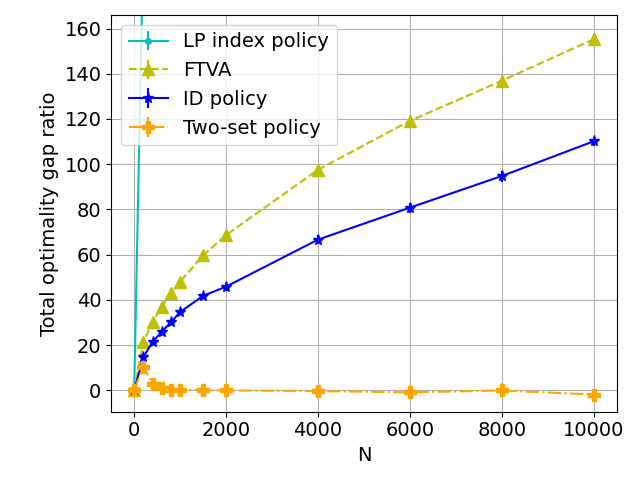}}
    \hfill
     \subcaptionbox{Second non-UGAP instance. \label{fig:conveyor-nd}}{\includegraphics[height=5.5cm]{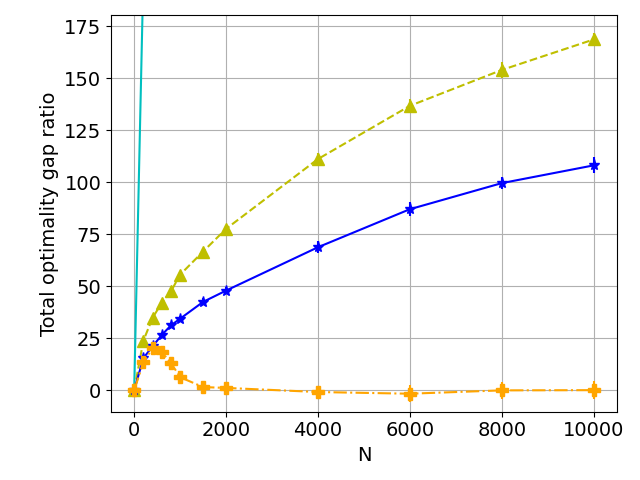}}
     \caption{RB instances satisfying Assumptions~\ref{assump:aperiodic-unichain}, \ref{assump:non-degeneracy}, and \ref{assump:local-stability}, but violate UGAP.}
     \label{fig:constructed-examples}
\end{figure}

In this section, we compare the performance of the two-set policy with those of the policies from prior work; these policies include two popular versions of LP-Priority policies, the Whittle index policy \citep{Whi_88_rb} and the LP index policy \citep{GasGauYan_23_exponential}, as well as two more recent policies, Follow-the-Virtual-Advice \citep{HonXieCheWan_23}, and the ID policy \citep{HonXieCheWan_24}. 

We make the comparisons on two types of instances of the RB problem. The first type of RB instances satisfy all of our assumptions, namely, Assumption~\ref{assump:aperiodic-unichain}, \ref{assump:non-degeneracy}, and \ref{assump:local-stability}, but violate UGAP. On this type of instances, no previous policies can achieve exponential optimality gaps. 
The second type of RB instances are random instances generated from the uniform distribution (See Appendix~\ref{app:exp-details:examples} for details on how they are generated). On this type of instances, simulations in prior work have shown that the Whittle index and LP index policies often perform quite well \citep[see, e.g.][]{GasGauYan_23_exponential}; here we want to see if the two-set policy also performs reasonably well. 
In addition to the two types of instances mentioned above, there also exist RB instances where one of our three assumptions fail. 
In particular, when Assumptions~\ref{assump:non-degeneracy} or \ref{assump:local-stability} fails, the two set policy could preserve an $O(1/\sqrt{N})$ optimality gap if the actions of the arms in $(\Db_t\cup\Da_t)^c$ are chosen appropriately, as argued in \Cref{remark:two-set-one-set} (See \Cref{app:exp-details:two-set} for implementation detail). Moreover, in this case, the empirical performances of the two-set policy and the ID policy turn out to be indistinguishable, which we omit displaying here.

For the first type of instances, we consider two examples, which are adapted from \cite{HonXieCheWan_23}. The definitions of these two examples are given in Appendix~\ref{app:exp-details:examples}, where we also discuss potential methods to construct more instances of this type. 
In Figure~\ref{fig:constructed-examples}, we consider the \emph{total optimality gap ratio} as the performance metric, which is defined as $N (1 - \rsysn/\rrel)$.  
Under this metric, a horizontal line indicates an $O(1/N)$ optimality gap. 
In both examples, we find that the Whittle index policy to be non-indexable. 
The curves for the LP index policy grow linearly with a sharp slope, indicating its asymptotic suboptimality. 
The curves for FTVA and the ID policy grow with $N$, consistent with their $O(1/\sqrt{N})$ theoretical bounds. 
The curves for the two-set policy quickly converge to zero, demonstrating better-than-$O(1/N)$ optimality gaps. 
On both examples, the two-set policy performs significantly better than the other policies simulated here.

\begin{figure}[t]
     \subcaptionbox{Total optimality gap ratios.\label{fig:uniform-1-total}}{\includegraphics[height=5.5cm]{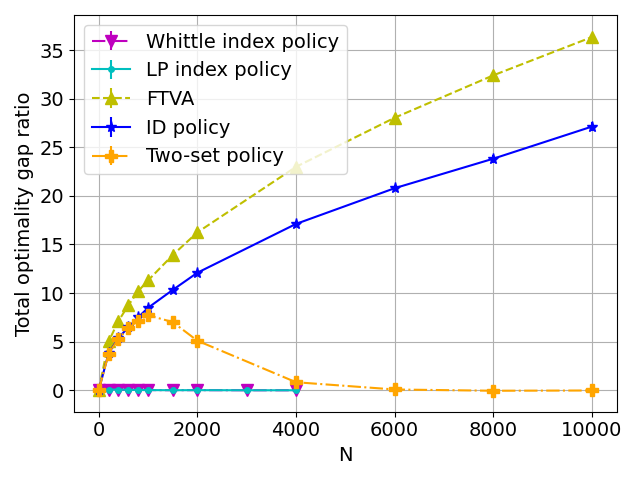}}
    \hfill
     \subcaptionbox{Log optimality gap ratios. (We do not include data of the Whittle index and LP index policies when $N > 4000$ due to the computational difficulty of high-accuracy simulations.) \label{fig:uniform-1-log}}{\includegraphics[height=5.5cm]{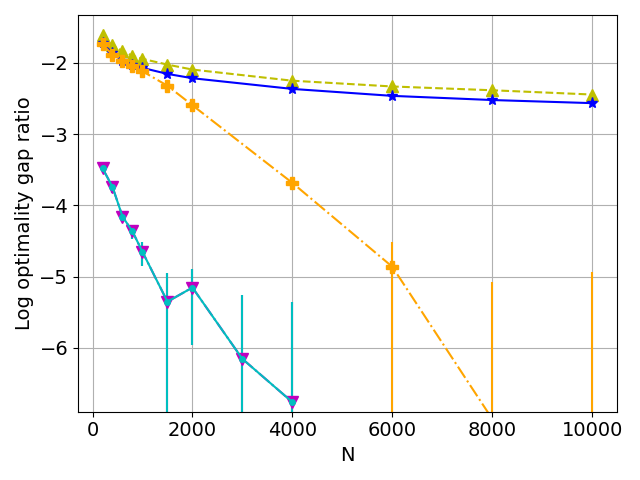}}
     \caption{First random instance sampled from uniform distribution.}
     \label{fig:uniform-1}
\end{figure}

\begin{figure}[t]
     \subcaptionbox{Second random instance.\label{fig:uniform-6-total}}{\includegraphics[height=5.5cm]{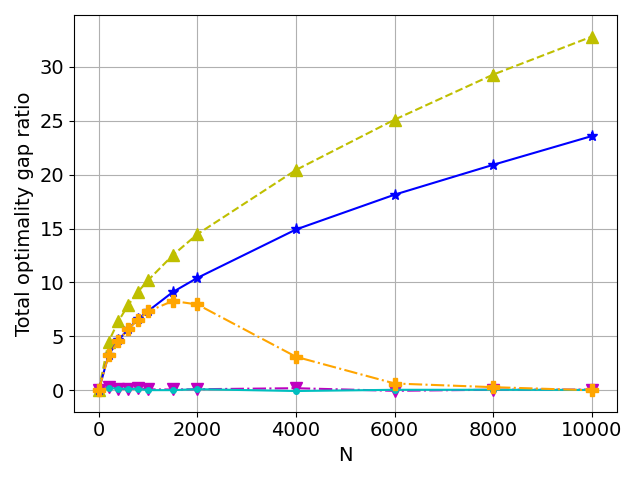}}
    \hfill
     \subcaptionbox{Third random instance.\label{fig:uniform-0-total}}{\includegraphics[height=5.5cm]{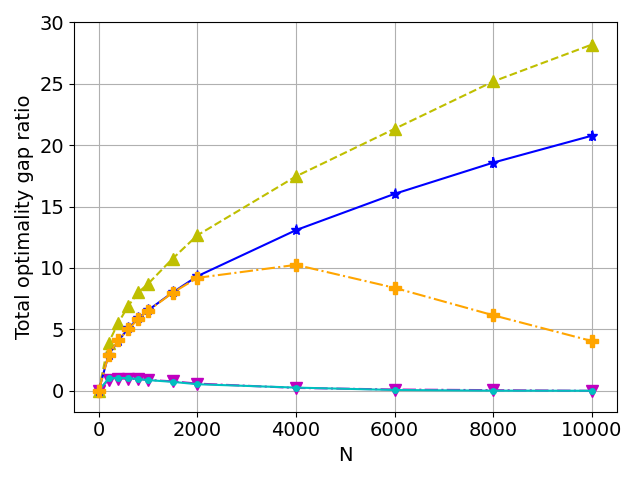}}
     \caption{Total optimality gap ratios of two additional uniformly-generated random instances.}
     \label{fig:uniform-6-and-0}
\end{figure}

For the second type of instances, we consider a few uniformly randomly generated instances, each has an eight-state single-armed MDP. 
The simulation result of one such instance is displayed in \Cref{fig:uniform-1}. In \Cref{fig:uniform-1-total}, we compare the total optimality gap ratios of different policies. As shown in the figure, the Whittle index policy and the LP-index policy perform the best, whereas FTVA and the ID policy perform visibly worse. 
The total optimality gap ratios of the two-set policy matches those of the ID policy for small $N$, but quickly decrease after $N>2000$ and converge to zero. 
To further distinguishing the orders of optimality gaps, in \Cref{fig:uniform-1-log}, we also plot the \emph{log optimality gap ratios}, defined as $\log_{10}(1-\rsysn/\rrel)$, for the same RB instance. In \Cref{fig:uniform-1-log}, one can observe rapid decrease of the curves corresponding to the Whittle index policy, the LP index policy, and the two-set policy, which supports their exponential asymptotic optimality as predicted by the theory.

We find that the trends in \Cref{fig:uniform-1} are representative and can be observed in many other random instances. For example, \Cref{fig:uniform-6-total} shows the total optimality gap rations in another random instance. Although the Whittle index and LP index policies still perform better than the two-set policy, the two-set policy demonstrates better-than-$O(1/N)$ optimality gaps and consistently perform better than those policies with $O(1/\sqrt{N})$ optimality gaps. 
Note that there are also a smaller fraction of random instances that show different trends, where the two-set policy starts displaying better-than-$O(1/N)$ optimality gaps only for large $N$; see \Cref{fig:uniform-0-total} for example.

\section{Proof sketch of Theorem~\ref{thm:two-set:achievability}}

The two-set policy maintains two dynamics subsets, each running a different subroutine. The main challenge of the proof lies in analyzing the complex dynamics of the two subsets. 
To tackle this challenge, we use a novel \emph{multivariate Lyapunov function} that takes not only the system state $X_t$ but also the subsets $\Db_t$, $\Da_t$ as variables.

\textbf{Step 1.} We first show that the optimality gap can be bounded as follows:
$
    \rrel - \rsysn \leq O(1)\cdot \Probbig{\Db_\infty \neq [N]},
$
where the subscript $\infty$ indicates the steady-state distribution, which we assume to exist in this proof sketch. The proof of this bound is similar to those of Theorems 3 and 8 of \cite{GasGauYan_23_whittles} and is provided in \Cref{lem:two-set:opt-gap-bound-by-prob}. 

\textbf{Step 2.} Next, by the definition of $\Db_t$, it is not hard to see that if $\norm{X_t([N]) - \statdist}_\umat \leq \eta$, then $\Db_t = [N]$. Therefore, it suffices to show that $\Prob{\norm{X_\infty([N]) - \statdist}_\umat > \eta}$ is exponentially small. 
To bound the distance $\normplain{X_\infty([N]) - \statdist}_\umat$, an important step is to construct proper bivariate functions that capture the dynamics of the arms in the subsets $\Db_t$ and $\Da_t$. We define 
\begin{align*}    
    \hw(x, D) &= \norm{x(D) - m(D)\statdist}_\wmat, \quad
    \hu(x, D) = \norm{x(D) - m(D)\statdist}_\umat,
\end{align*}
where $\wmat$ is a $|\sspa|$-by-$|\sspa|$ weight matrix and ${v}_\wmat = \sqrt{v \wmat v^\top}$ for $v\in \R^{|\sspa|}$. 
To understand $\hw$ and $\hu$, note that the matrix $\wmat$ is designed such that $P_\pibs$ is a pseudo-contraction on $\simplex(\sspa)$ with the fixed point $\statdist$ under the $\wmat$-weighted norm (\Cref{lem:one-step-contraction-W-U}). The bivariate function $\hw$ thus witnesses the convergence of state distributions on a fixed subset of arms when that subset continues to follow the Unconstrained Optimal Control. Similarly, $\Phi$ is a pseudo-contraction under the $\umat$-weighted norm, so $\hu$ witnesses the convergence under the Optimal Local Control (\Cref{lem:hw-hu-properties}).

\textbf{Step 3.} We use subset Lyapunov functions $\hw$ and $\hu$ as building block for constructing our global Lyapunov function $V(x, \Db, \Da)$, defined as
\begin{equation*}
    V(x, \Db, \Da) = \hu(x, \Db) + \hw(x, \Da) + \Ltemp (1 - m(\Db)),
\end{equation*}
where $L$ is a constant ensuring that $V(x, \Db, \Da)$ bounds $\hu(x, [N]) = \normplain{X_\infty([N]) - \statdist}_\umat$.
We show that $V(X_t, \Db_t, \Da_t)$ decreases by a large amount every time step with high probability,  unless $V(X_t, \Db_t, \Da_t) = O(1/\sqrt{N})$, which implies that the Lyapunov function is small in steady-state. 

A key idea in analyzing the changes of $V(X_t, \Db_t, \Da_t)$ is to analyze those changes caused by state transitions and those caused by set-updates separately, leading to the key inequality below: 
\begin{align}
    \nonumber
    & V(X_{t+1}, \Db_{t+1}, \Da_{t+1}) - V(X_t, \Db_t, \Da_t) \\
    \nonumber
    &\leq \big(\hu(X_{t+1}, \Db_t) - \hu(X_{t}, \Db_t)\big)+ \big(\hw(X_{t+1}, \Da_t) - \hw(X_{t}, \Da_t)\big)\\
    \nonumber
    &\mspace{20mu} + 4\big(\lamu^{1/2} + \lamw^{1/2}\big) m(\Db_t \backslash \Db_{t+1}) + O\Big(\frac{1}{N}\Big),
\end{align}
where $\lamu$ and $\lamw$ are the spectral norms of $\umat$ and $\wmat$. 
To upper bound the RHS of this inequality, we prove two lemmas to show that $m(\Db_t \backslash \Db_{t+1})$ is small (\Cref{lem:two-set:non-shrink}) and that $\big(\hu(X_{t+1}, \Db_t) - \hu(X_{t}, \Db_t)\big)+ \big(\hw(X_{t+1}, \Da_t) - \hw(X_{t}, \Da_t)\big)$ is sufficiently negative (\Cref{lem:two-set:sufficient-coverage}). 
This gives us a negative-drift inequality that allows us to bound the tail probabilities of $V(X_\infty, \Db_\infty, \Da_\infty)$. In particular, we can show that $P(V(X_\infty, \Db_\infty, \Da_\infty) \geq \eta - \errfe)  = O(\exp(-CN))$ for some constant $C>0$. Because $V(X_t, \Db_t, \Da_t)$ upper bounds $\hu(X_t, \Db_t)$, and $\hu(X_t, \Db_t) \leq \eta - \errfe$ implies $\Db_t = [N]$, we finish the proof. 
See \Cref{sec:two-set-proof} for full details of the proof.

\section{Conclusion}
In this paper, we have studied the conditions for achieving exponentially asymptotic optimality in discrete-time RBs with the infinite-horizon average-reward criterion. We propose a novel two-set policy that achieves exponential asymptotic optimality under three assumptions: aperiodic unichain (\Cref{assump:aperiodic-unichain}), non-degeneracy (\Cref{assump:non-degeneracy}), and local stability (\Cref{assump:local-stability}). Our assumptions are strictly weaker than those in the prior work, where a global attractor assumption is also required for their policies to be exponentially asymptotic optimal. 

To complement our positive result, we also discuss the necessities of our three assumptions. In particular, we prove a lower bound of $\rrel - \ropt$ for a large class of RB instances that do not satisfy local stability, illustrating the fundamental role of the local stability for exponential asymptotic optimality. 

Finally, we numerically evaluate the performance of the two-set policies. Our numerical results show that the two-set policy achieves the best performance among the policies simulated here when our three assumptions are satisfied but UGAP is violated; the two-set policy also performs reasonably well on RB instances generated uniformly at random.

For the future work, one interesting direction is to give a more precise characterization of the constant $C$ in the $O(\exp(-CN))$ optimality gap bound of the two-set policy. Another possible direction is extending the lower bound in \Cref{thm:instability-lower-bound} to RB instances that are not regular unstable.

\bibliographystyle{alpha}
\bibliography{refs-yige-v240809}

\appendix

\section{Preliminary lemmas and proofs}\label{app:preliminary-proofs}
In this section, we state and prove several preliminary lemmas. 
In \Cref{app:subroutine-transition-lemmas}, we prove two lemmas about the transition dynamics under the two subroutines, Unconstrained Optimal Control (\Cref{alg:follow-pibs}) and Optimal Local Control (\Cref{alg:follow-lp-priority}). 
In \Cref{app:weighted-l2-norm-lemmas}, we define two weight matrices, $\wmat$ and $\umat$, and show the pseudo-contraction properties of $P_\pibs$ and $\Phi$ under the $L_2$ norms weighted by $\wmat$ and $\umat$, respectively.

\subsection{Transition dynamics under subroutines}\label{app:subroutine-transition-lemmas}

The lemma below characterizes the dynamics of a $N$-armed RB system if all arms follow Unconstrained Optimal Control (\Cref{alg:follow-pibs}). 

\begin{restatable}{lemma}{followpibs}\label{lem:pibs-transition}
    For any time step $t$, 
    \begin{equation}
        \E{X_{t+1}([N]) \givenplain X_t, \textup{all arms follow \Cref{alg:follow-pibs}}} = X_t([N]) P_{\pibs} \quad a.s., 
    \end{equation}
    where expectation is taken entry-wise for the random vector $X_{t+1}([N])$; $P_{\pibs}$ is the transition matrix induced by $\pibs$ on the single-armed MDP, i.e., $P_{\pibs}(s,s') = \sum_{a\in\aspa} \pibs(a|s) P(s,a,s')$ for $s,s'\in\sspa$.  
\end{restatable}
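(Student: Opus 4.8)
The plan is to compute the conditional expectation of $X_{t+1}([N])$ entry by entry, conditioning on $X_t$ (equivalently on the state vector $\veS_t$) and on the event that all arms apply Subroutine~\ref{alg:follow-pibs}. First I would fix a target state $s'\in\sspa$ and write $X_{t+1}([N], s') = \frac{1}{N}\sum_{i\in[N]} \indibrac{S_{t+1}(i) = s'}$. Taking conditional expectation and using linearity, it suffices to evaluate $\Prob{S_{t+1}(i) = s' \givenplain X_t, A_t(i)}$ for each arm $i$, and then to understand how the action $A_t(i)$ is distributed under the randomized rounding of \Cref{alg:follow-pibs}.

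The key step is the following observation about the randomized rounding: although the rounding couples the actions of the arms within each state class (to hit the integer activation counts $\floor{\pibs(1|s)z(s)}$ or $\ceil{\pibs(1|s)z(s)}$ exactly), its marginal is still correct. That is, for an arm $i$ with $S_t(i) = s$, we have $\Prob{A_t(i) = 1 \givenplain X_t} = \pibs(1|s)$. This is because the subroutine is designed so that the \emph{expected} number of activated arms in state $s$ equals $\pibs(1|s) z(s)$, and by symmetry/exchangeability among the $z(s)$ arms in state $s$ each one is activated with probability exactly $\pibs(1|s)$. Given this, the arm's next-state distribution is $\Prob{S_{t+1}(i) = s' \givenplain X_t} = \sum_{a\in\aspa}\pibs(a|s) P(s,a,s') = P_{\pibs}(s, s')$, where the conditional independence of the state transition from the rounding randomness (given the action) is used; only the marginal of $A_t(i)$ matters here, so the within-class coupling is harmless.

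Assembling the pieces, I would group the sum over arms by their time-$t$ state: writing $N X_t([N], s)$ for the number of arms in state $s$,
\begin{align*}
    \E{X_{t+1}([N], s') \givenplain X_t, \text{all arms follow \Cref{alg:follow-pibs}}}
    &= \frac{1}{N}\sum_{i\in[N]} P_{\pibs}(S_t(i), s') \\
    &= \sum_{s\in\sspa} X_t([N], s)\, P_{\pibs}(s, s'),
\end{align*}
which is exactly the $s'$-th entry of the row vector $X_t([N]) P_{\pibs}$. Since $s'$ was arbitrary, this establishes the claim, and the identity holds for every realization of $X_t$, i.e.\ almost surely.

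I expect the only subtle point — hence the main obstacle, though a mild one — to be justifying carefully that the randomized rounding produces the correct per-arm activation marginal $\pibs(1|s)$ despite introducing dependence across arms in the same state class; once that is in hand, the rest is a routine conditioning-and-linearity computation. This marginal-correctness fact is presumably the content of how ``randomized rounding'' is specified (it is the defining property: activate $\floor{\pibs(1|s)z(s)}$ or $\ceil{\pibs(1|s)z(s)}$ arms with probabilities chosen so the expectation is $\pibs(1|s)z(s)$, with arms chosen exchangeably), so I would state it explicitly as the one nontrivial ingredient and then conclude.
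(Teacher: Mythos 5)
Your per-arm computation is a valid route to the conclusion, but it is a genuinely different decomposition from the paper's. The paper argues at the level of \emph{aggregate counts}: the expected number of $(s,a)$-arms is $N X_t([N],s)\pibs(a|s)$ by the defining property of the randomized rounding (correct conditional expectation of the activation count in each state class), and among those arms the expected fraction transitioning to $s'$ is $P(s,a,s')$; summing over $s$ and $a$ gives the identity without ever referring to per-arm marginals. Your approach instead asserts $\Prob{A_t(i)=1\givenplain X_t}=\pibs(1|s)$ for each arm $i$ with $S_t(i)=s$, which requires the extra ingredient that the rounding is \emph{exchangeable} within each state class. That exchangeability is natural and almost surely intended, but the description of \Cref{alg:follow-pibs} only fixes the aggregate expected activation count $\pibs(1|s)z(s)$; a non-exchangeable rounding (e.g.\ deterministically activating the $\floor{\pibs(1|s)z(s)}$ lowest-ID arms in state $s$ and randomizing only the last one) satisfies the stated specification while giving unequal per-arm marginals, so your key step would fail for it even though the lemma's conclusion still holds. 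You correctly flag this as the one nontrivial point; the cleanest fix is to follow the paper's aggregate route, writing $\E{K_s}=\pibs(1|s)z(s)$ for the activation count and computing $\E{K_s P(s,1,s') + (z(s)-K_s)P(s,0,s')} = z(s)P_{\pibs}(s,s')$ by linearity, which is the content of the lemma precisely because its conclusion is itself an aggregate quantity. Alternatively, add exchangeability to the rounding's specification as a stated hypothesis.
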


\begin{proof}
    By definition, if all $N$ arms follow Unconstrained Optimal Control at time $t$, the expected number of arms in state $s$ taking action $a$ is equal to $X_t([N], s)\pibs(a|s)$. Among these arms, the fraction of arms transitioning to state $s'$ in the next time step equals $P(s,a,s')$. Since the state transition of each arm is Markovian given the state and action, the expected number of arms going from state $s$ to state $s'$ and taking action $a$ equals $X_t([N], s)\pibs(a|s)P(s,a,s')$. Summing over $s$ and $a$, we get 
    \begin{align*}
        &\mspace{23mu} \E{X_{t+1}([N], s') \givenplain X_t, \textup{all arms follow \Cref{alg:follow-pibs}}} \\
        &= \sumsa X_t([N], s)\pibs(a|s)P(s,a,s') \quad \forall s'\in\sspa,    
    \end{align*}
    which completes the proof of the lemma.  
\end{proof}

If all arms in the RB system follow the Optimal Local Control (\Cref{alg:follow-lp-priority}), the expectation of the scaled state-count vector $X_t([N])$ has a linear dynamics, as stated in the lemma below. The lemma is basically a restatement of Lemma 2 in \cite{GasGauYan_23_whittles}, although in \cite{GasGauYan_23_whittles}, the LP-Priority policy was considered, and it was assumed that $y^*(s,1) + y^*(s,0) > 0$ for all $s\in\sspa$.

\followlppriority*

\begin{proof}
Throughout this proof, we use the shorthand $x$ for $x([N])$, and omit ``$\veA_t\sim \lppriority$'' in the conditioning. 

We fixed an arbitrary $x$ and let $X_t = x$.  
For any $s\neq \sneu$ and $a\in\aspa$, the expected number of arms taking action $a$ equals $x(s) \pibs(a|s)$; for $s=\sneu$, the expected number of arms taking action $1$ equals $\alpha -\sum_{s\neq\sneu }x(s)\pibs(1|s)$. and the expected number of arms taking action $0$ equals $1-\alpha- \sum_{s\neq\sneu }x(s)\pibs(0|s)$. 
Therefore, 
for any $s'\in\sspa$ such that $s'\neq \sneu$, 
\begin{align}
    &\E{X_{t+1}(s') \givenplain X_t=x} \nonumber \\
    \nonumber
    &= \sum_{s\neq \sneu, a\in\aspa} x(s) \pibs(a|s) P(s,a,s') \\ 
    \nonumber
    &\mspace{20mu} + \Big(\alpha - \sum_{s\neq \sneu} x(s)\pibs(1|s)\Big) P(\sneu, 1, s') 
    \nonumber
    + \Big(1 - \alpha - \sum_{s\neq \sneu} x(s) \pibs(0|s) \Big) P(\sneu, 0, s')  \\
    \nonumber
    &= \sum_{s\neq \sneu} x(s)P_\pibs(s, s')
    + \Big(\alpha - \sum_{s\neq \sneu} x(s) \pibs(1|s)\Big)P_1(\sneu, s') 
    + \Big(1 - \alpha - \sum_{s\neq \sneu} x(s)\pibs(0|s) \Big)P_0(\sneu, s') \\
    \label{eq:derive-phi-1}
    &= \sums x(s)P_\pibs(s,s')  - x(\sneu)P_\pibs(\sneu, s')  \\
    \nonumber
    &\mspace{20mu} + \Big(\alpha - \sum_{s\neq \sneu} x(s) \pibs(1|s)\Big)P_1(\sneu, s') 
    + \Big(1 - \alpha - \sum_{s\neq \sneu} x(s)\pibs(0|s) \Big)P_0(\sneu, s'). 
\end{align}
Because $P_\pibs(\sneu,s') =  \pibs(1|\sneu) P_1(\sneu,s')+ \pibs(0|\sneu)P_0(\sneu, s')$, we expand the second term in \eqref{eq:derive-phi-1} and rearrange the terms to get
\begin{align}
    &\mspace{20mu} \E{X_{t+1}(s') \givenplain X_t=x} \nonumber\\
    &=  \sums x(s)P_\pibs(s,s')
    +  \Big(\alpha - \sums x(s) \pibs(1|s) \Big) P_1(\sneu, s')
    + \Big(1 - \alpha - \sums x(s) \pibs(0|s) \Big) P_0(\sneu, s')\nonumber \\
    &= \sums x(s) P_\pibs(s,s') 
    + \Big(\alpha - \sums x(s)\pibs(1|s) \Big) P_1(\sneu, s')
    -\Big(  \alpha - \sums x(s)\pibs(1|s) \Big) P_0(\sneu, s') \label{eq:derive-phi-2}\\
    &= \sums x(s)\Big(P_\pibs(s,s') -  \pibs(1|s)\big(P_1(\sneu, s') - P_0(\sneu, s')\big) \Big) + \alpha (P_1(\sneu, s') - P_0(\sneu, s')),  \label{eq:derive-phi-3}
\end{align}
where \eqref{eq:derive-phi-2} uses the facts that $\pibs(0|s) = 1-\pibs(1|s)$ and $\sums x(s) = 1$. 

Next, we calculate $\E{X_{t+1}(\sneu) \givenplain X_t=x}$. 
Observe that $X_{t+1}(\sneu) = 1 - \sum_{s'\neq \sneu} X_{t+1}(s')$, so
\begin{align}
    \nonumber
    &\mspace{20mu} \E{X_{t+1}(\sneu) \givenplain X_t=x}\\
    \nonumber
    &= 1 - \sum_{s'\neq \sneu} \E{X_{t+1}(s') \givenplain X_t=x} \nonumber \\
    \nonumber
    &= \sums  x(s)\Big(1 - \sum_{s'\neq \sneu} P_\pibs(s,s') + \sum_{s'\neq\sneu}  \pibs(1|s) \big(P_1(\sneu, s') - P_0(\sneu, s')\big) \Big) - \alpha \sum_{s'\neq\sneu}(P_1(\sneu, s') - P_0(\sneu, s')) \\
    \label{eq:derive-phi-3-sneu}
    &= \sums x(s) \Big(P_\pibs(s,\sneu) -  \pibs(1|s) \big(P_1(\sneu, \sneu) - P_0(\sneu, \sneu)\big)\Big)  + \alpha (P_1(\sneu, \sneu) - P_0(\sneu, \sneu)).
\end{align}
Note that \eqref{eq:derive-phi-3-sneu} has the same form as \eqref{eq:derive-phi-3}, with $s'$ replaced by $\sneu$. 

We rewrite \eqref{eq:derive-phi-3} and \eqref{eq:derive-phi-3-sneu} in matrix form to get 
\begin{align}\label{eq:derive-phi-4}
    \E{X_{t+1} \givenplain X_t=x} 
    &= x\Big(P_\pibs - \costvec^\top \big(P_1(\sneu) - P_0(\sneu)\big) \Big) + \alpha (P_1(\sneu) - P_0(\sneu)).
\end{align}
Recall that we have let $\Phi = P_\pibs - \vone^\top \statdist - (\costvec - \alpha \vone)^\top \big(P_1(\sneu) - P_0(\sneu)\big)$, we can verify that the RHS of \eqref{eq:derive-phi-4} is the same as $(x-\statdist)\Phi + \statdist$. Specifically, 
\begin{align}
    (x-\statdist)\Phi + \statdist 
    \nonumber
    &= (x-\statdist)\Big(P_\pibs - \vone^\top \statdist - (\costvec - \alpha \vone)^\top \big(P_1(\sneu) - P_0(\sneu)\big)\Big) + \statdist \\
    \nonumber
    &= x \Big(P_\pibs - \costvec^\top \big(P_1(\sneu) - P_0(\sneu)\big) \Big) + x\Big(- \vone^\top \statdist -  \alpha \vone^\top \big(P_1(\sneu) - P_0(\sneu)\big)\Big) \\
    \nonumber
    &\mspace{20mu} - \statdist \Big(P_\pibs - \costvec^\top \big(P_1(\sneu) - P_0(\sneu)\big) \Big) - \statdist \Big(- \vone^\top \statdist -  \alpha \vone^\top \big(P_1(\sneu) - P_0(\sneu)\big)\Big) + \statdist\\
    \nonumber
    &= x \Big(P_\pibs - \costvec^\top \big(P_1(\sneu) - P_0(\sneu)\big) \Big) - \statdist - \alpha \big(P_1(\sneu) - P_0(\sneu)\big) \\
    \label{eq:derive-phi-5}
    &\mspace{20mu} - \statdist  + \alpha \big(P_1(\sneu) - P_0(\sneu)\big) +  \statdist +  \alpha \big(P_1(\sneu) - P_0(\sneu)\big) + \statdist \\
    \nonumber
    &= x\Big(P_\pibs - \costvec^\top \big(P_1(\sneu) - P_0(\sneu)\big) \Big) + \alpha (P_1(\sneu) - P_0(\sneu)). 
\end{align}
where to get \eqref{eq:derive-phi-5} we use the facts that $x\vone^\top = \statdist \vone^\top = 1$ and $\statdist \costvec^\top = \alpha$. 
Substituting the RHS of \eqref{eq:derive-phi-4}  with $(x-\statdist)\Phi + \statdist$, we have proved that under the Optimal Local Control, $\E{X_{t+1} \givenplain X_t=x} = (x-\statdist)\Phi + \statdist$. 
This completes the proof of \Cref{lem:lp-priority-and-phi}. 
\end{proof}

\subsection{Weighted $L_2$ norms for quantifying the convergence of distributions}\label{app:weighted-l2-norm-lemmas}
As discussed in \Cref{sec:optimal-single-armed} and \Cref{sec:LP-Priority}, if a set of arms follow the Unconstrained Optimal Control or the Optimal Local Control for a long time, the distribution of their states converge to the optimal stationary distribution $\statdist$. 
In our analysis, it is convenient to define a proper metric to witness this convergence in each time step and quantify the convergence rate. 
For this purpose, we introduce two matrices, $\wmat$ and $\umat$, and study the $\wmat$-weighted ($\umat$-weighted) $L_2$ norm on $\sspa$. 
Note that the same matrix $\wmat$ and $\wmat$-weighted norm have also been used in \cite{HonXieCheWan_24}. 

The definition of $\wmat$ and $\umat$ are given as follows. 

\begin{restatable}{definition}{wudef}\label{def:w-and-u}
    Assume Assumption~\ref{assump:aperiodic-unichain}, \ref{assump:non-degeneracy} and \ref{assump:local-stability}. 
    Let $\wmat$ and $\umat$ be $|\sspa|$-by-$|\sspa|$ matrices given by  
    \begin{align}
        \label{eq:w-def}
        \wmat &= \sum_{k=0}^\infty (P_\pibs - \vone^\top \statdist)^k ((P_\pibs - \vone^\top \statdist)^\top)^k \\
        \umat &= \sum_{k=0}^\infty \Phi^k (\Phi^\top)^k,
    \end{align}
\end{restatable}

Next, we show that the matrices $\wmat$ and $\umat$ are well-defined and positive definite, and lower bound their eigenvalues by $1$. Then if we let $\lamw$ and $\lamu$ denote maximal eigenvalues of $\wmat$ and $\umat$, the eigenvalues of $\wmat$ and  $\umat$ are in the range $[1, \lamw]$ and $[1, \lamu]$, respectively.

\begin{lemma}\label{lem:W-U-well-defined}
    Assume \Cref{assump:aperiodic-unichain}, \ref{assump:non-degeneracy} and \ref{assump:local-stability}, the matrices $\wmat$ and $\umat$ given in \Cref{def:w-and-u} are well-defined. Moreover, their eigenvalues are lower bounded by $1$. 
\end{lemma}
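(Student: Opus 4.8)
The plan is to treat the two assertions --- convergence of the defining series and the eigenvalue lower bound --- separately, and to reduce convergence to bounding the spectral radii of the matrices appearing inside the sums. Write $M := P_\pibs - \vone^\top\statdist$, so that $\wmat = \sum_{k\ge 0} M^k (M^\top)^k$ and $\umat = \sum_{k\ge 0} \Phi^k (\Phi^\top)^k$. I would first show $\rho(M) < 1$ and $\rho(\Phi) < 1$, where $\rho(\cdot)$ denotes spectral radius; the bound $\rho(\Phi)<1$ is exactly \Cref{assump:local-stability}, so all the work is in the former.

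To see $\rho(M) < 1$, I would decompose $\R^{|\sspa|}$ (viewing vectors as columns, on which $P_\pibs$ acts by left multiplication) as $\mathrm{span}(\vone^\top) \oplus V_0$ with $V_0 = \{v \in \R^{|\sspa|} : \statdist v = 0\}$. Both subspaces are $P_\pibs$-invariant ($P_\pibs\vone^\top = \vone^\top$ since $P_\pibs$ is stochastic, and $\statdist v = 0$ implies $\statdist P_\pibs v = \statdist v = 0$), and the sum is direct because $\statdist\vone^\top = 1 \neq 0$. On $\mathrm{span}(\vone^\top)$ one computes $M\vone^\top = \vone^\top - \vone^\top(\statdist\vone^\top) = 0$, while on $V_0$ the rank-one correction $\vone^\top\statdist$ vanishes, so $M$ restricted to $V_0$ coincides with $P_\pibs$ restricted to $V_0$. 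Because the eigenvalue $1$ of $P_\pibs$ is \emph{simple} (\Cref{assump:aperiodic-unichain}), its eigenspace is one-dimensional and equals $\mathrm{span}(\vone^\top)$, so $V_0$ is an invariant complement to it and the characteristic polynomial of $P_\pibs$ factors as $(\lambda-1)$ times the characteristic polynomial of $P_\pibs|_{V_0}$; hence the spectrum of $P_\pibs|_{V_0}$ consists exactly of the eigenvalues of $P_\pibs$ other than that copy of $1$, all of modulus strictly less than $1$ by \Cref{assump:aperiodic-unichain}. Therefore $\rho(M) = \rho(P_\pibs|_{V_0}) < 1$ (the case $|\sspa| = 1$ giving $M = 0$ trivially).

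Given $\rho_0 := \max\{\rho(M), \rho(\Phi)\} < 1$, fix $\bar\rho \in (\rho_0, 1)$; by standard properties of the spectral radius (Gelfand's formula) there is a constant $C$ with $\norm{M^k}_2 \le C\bar\rho^k$ and $\norm{\Phi^k}_2 \le C\bar\rho^k$ for all $k \ge 0$. Then $\norm{M^k(M^\top)^k}_2 \le \norm{M^k}_2^2 \le C^2\bar\rho^{2k}$, and likewise for $\Phi$, so the partial sums defining $\wmat$ and $\umat$ are Cauchy in the operator norm and the series converge; equivalently, $\wmat$ and $\umat$ are the unique solutions of the discrete Lyapunov equations $\wmat = I + M\wmat M^\top$ and $\umat = I + \Phi\umat\Phi^\top$. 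For the lower bound, each summand $M^k(M^\top)^k = M^k(M^k)^\top$ is positive semidefinite and similarly $\Phi^k(\Phi^\top)^k \succeq 0$, while the $k=0$ term is $I$ in both series; hence $\wmat \succeq I$ and $\umat \succeq I$, and since both matrices are symmetric all of their eigenvalues are at least $1$.

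The only step that is not bookkeeping is the spectral-radius computation for $M$: the content there is that subtracting the rank-one matrix $\vone^\top\statdist$ deletes precisely the eigenvalue $1$ of $P_\pibs$ while leaving the rest of the spectrum intact, which is exactly where simplicity of the eigenvalue $1$ (\Cref{assump:aperiodic-unichain}) is used --- if $1$ had higher algebraic multiplicity, $\vone^\top\statdist$ would capture only part of the associated spectral projection and $M$ could still carry an eigenvalue of modulus $1$. Everything else reduces to elementary linear algebra together with the geometric-decay bound on $\norm{M^k}_2$ and $\norm{\Phi^k}_2$.
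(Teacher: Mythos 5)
Your proof is correct and rests on the same underlying fact the paper invokes, namely that all eigenvalues of $P_{\pibar^*} - \vone^\top\statdist$ (respectively $\Phi$) have modulus strictly less than $1$, so the series converges and the $k=0$ identity term gives the eigenvalue lower bound; the paper simply cites prior work for the $\wmat$ case and states that the $\umat$ case is analogous, whereas you spell out the spectral-projection argument (deleting the simple eigenvalue $1$ via the rank-one correction $\vone^\top\statdist$) and the Gelfand-formula convergence step explicitly.
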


\begin{proof}
    The well-definedness and eigenvalue lower bound for $\wmat$ has been proved in \cite{HonXieCheWan_24}. The main fact used in the proof is that all eigenvalues of $P_\pibs - \vone^\top \statdist$ have moduli strictly less than $1$ under \Cref{assump:aperiodic-unichain}. 
    Since we have assumed the same things for $\Phi$ in \Cref{assump:local-stability}, the proof for $\umat$ is analogous with $P_\pibs - \vone^\top \statdist$ replaced by  $\Phi$. 
\end{proof}

With matrices $\wmat$ and $\umat$, we can define some weighted $L_2$ norms. 
Let $\norm{\cdot}_\wmat$ denote the $\wmat$-weighted $L_2$ norm, given by $\norm{u}_\wmat = \sqrt{u \wmat u^\top}$ for any row vector $u\in \R^{|\sspa|}$; let $\norm{\cdot}_\umat$ denote the $\umat$-weighted $L_2$ norm, defined similarly. 

\Cref{lem:one-step-contraction-W-U} below shows that $P_\pibs$ and $\Phi$ are pseudo-contractions on the probability simplex $\simplex(\sspa)$, under the $\wmat$-weighted $L_2$ norm and the $\umat$-weighted $L_2$ norm, respectively.

\begin{restatable}[Pseudo-contraction under the weighted $L_2$ norms]{lemma}{wucontraction}\label{lem:one-step-contraction-W-U}
    Assume Assumption~\ref{assump:aperiodic-unichain}, \ref{assump:non-degeneracy} and \ref{assump:local-stability}. 
    For any distribution $v\in \Delta(\sspa)$, we have
    \begin{align}
        \label{eq:pibar-contraction}
        \norm{(v  - \statdist)P_\pibs}_\wmat &\leq \rhow \norm{v - \statdist}_\wmat \\
         \label{eq:lp-contraction}
        \norm{(v  - \statdist)\Phi}_\umat &\leq \rhou \norm{v - \statdist}_\umat.
    \end{align}
    where $\norm{\cdot}_\wmat$ is the $\wmat$-weighted $L_2$ norm, i.e., $\norm{u}_\wmat = \sqrt{u \wmat u^\top}$ for any row vector $u\in \R^{|\sspa|}$; $\norm{\cdot}_\umat$ is the $\umat$-weighted $L_2$ norm defined similarly; $\rhow = 1 - 1/(2\lamw)$; $\rhou = 1 - 1/(2\lamu)$.  
\end{restatable}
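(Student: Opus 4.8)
The plan is to prove the two pseudo-contraction bounds \eqref{eq:pibar-contraction} and \eqref{eq:lp-contraction} by an identical abstract argument applied to the pair $(P_\pibs - \vone^\top\statdist, \wmat)$ and to the pair $(\Phi, \umat)$ respectively, using the structural fact that each matrix $M\in\{P_\pibs-\vone^\top\statdist,\,\Phi\}$ has spectral radius strictly below $1$ (\Cref{assump:aperiodic-unichain} and \Cref{assump:local-stability}) and that the associated weight matrix is $N_M \triangleq \sum_{k\ge 0} M^k (M^\top)^k$, which is well-defined and positive definite with eigenvalues in $[1,\lambda_M]$ by \Cref{lem:W-U-well-defined}. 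First I would record the telescoping identity
\begin{equation*}
    M N_M M^\top = \sum_{k=1}^\infty M^k (M^\top)^k = N_M - I,
\end{equation*}
which is the algebraic heart of the argument. For a row vector $u$, writing $\|u\|_{N_M}^2 = u N_M u^\top$, this gives $\|uM\|_{N_M}^2 = u M N_M M^\top u^\top = \|u\|_{N_M}^2 - \|u\|_2^2$. Combining with the eigenvalue bound $\|u\|_{N_M}^2 \le \lambda_M \|u\|_2^2$, i.e. $\|u\|_2^2 \ge \|u\|_{N_M}^2/\lambda_M$, yields
\begin{equation*}
    \|uM\|_{N_M}^2 \le \Big(1 - \tfrac{1}{\lambda_M}\Big)\|u\|_{N_M}^2 \le \Big(1 - \tfrac{1}{2\lambda_M}\Big)^2 \|u\|_{N_M}^2,
\end{equation*}
where the last step uses $1 - x \le (1 - x/2)^2$ for $x\in[0,1]$. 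Taking square roots gives the contraction with ratio $\rho_M = 1 - 1/(2\lambda_M)$.

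It remains to connect $uM$ to the quantities appearing in the lemma. For \eqref{eq:pibar-contraction}, take $u = v - \statdist$ and $M = P_\pibs - \vone^\top\statdist$; since $v,\statdist\in\simplex(\sspa)$ we have $(v-\statdist)\vone^\top = 0$, so $(v-\statdist)(P_\pibs - \vone^\top\statdist) = (v-\statdist)P_\pibs - 0 = (v-\statdist)P_\pibs$ (using that $(v-\statdist)\vone^\top\statdist = 0$ as well because $(v-\statdist)\vone^\top=0$), hence $uM = (v-\statdist)P_\pibs$ and $N_M = \wmat$, giving exactly \eqref{eq:pibar-contraction}. For \eqref{eq:lp-contraction}, take $u = v - \statdist$ and $M = \Phi$ directly, so $uM = (v-\statdist)\Phi$ and $N_M = \umat$; here no simplex reduction is needed since $\Phi$ is already the operator of interest, and \eqref{eq:lp-contraction} follows immediately from the abstract bound.

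The only genuinely delicate point — and the one I would treat most carefully — is the telescoping identity $M N_M M^\top = N_M - I$: it requires that the series defining $N_M$ converges absolutely (so that term-by-term manipulation is legitimate), which is exactly what \Cref{lem:W-U-well-defined} supplies via the spectral radius being strictly less than $1$. Everything else is routine linear algebra. I would also note in passing that the bound $1-x\le(1-x/2)^2$ is only used to present $\rho_M$ in the clean form $1-1/(2\lambda_M)$; one could equally state the sharper ratio $\sqrt{1-1/\lambda_M}$, but the stated form is what downstream lemmas use.
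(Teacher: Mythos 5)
Your proof is correct and gives a clean, self-contained derivation of exactly the argument that the paper defers to \cite{HonXieCheWan_24}: the telescoping identity $MN_M M^\top = N_M - I$, the eigenvalue bound $\|u\|_2^2 \geq \|u\|_{N_M}^2/\lambda_M$, and the elementary inequality $1-x\leq(1-x/2)^2$ are precisely the ingredients used there for the $\wmat$ case, and your observation that $(v-\statdist)\vone^\top = 0$ correctly reduces $(v-\statdist)(P_\pibs-\vone^\top\statdist)$ to $(v-\statdist)P_\pibs$. This is the same approach as the paper, just written out in full rather than deferred to the citation.
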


\begin{proof}
    The inequality \eqref{eq:pibar-contraction} has been proved in \cite{HonXieCheWan_24}. The proof of \eqref{eq:lp-contraction} is analogous. 
\end{proof}

\subsection{Proof of \Cref{lem:feasibility-ensuring}}
\label{app:proof-of-exist-fe}

In this section, we prove \Cref{lem:feasibility-ensuring}, which show that there exists $\eta = \Theta(1)$ and $\errfe = O(1/N)$ such that $(\eta, \errfe)$ is feasibility ensuring, i.e., for any system state $x$ and $D\subseteq[N]$ such that
\begin{equation}\tag{\ref{eq:sufficient-cond-for-OL-assertion}}
  \norm{x(D) - m(D) \statdist}_{\umat} \leq \eta m(D) - \errfe,
\end{equation}
we have
\begin{align}
    \label{eq:lp-priority-condition-XD-restate-1}
    \sum\nolimits_{s\neq \sneu} \pibs(1|s) x(D, s) &\leq  \alpha m(D) - \frac{|\sempty|+1}{N} \\ 
    \label{eq:lp-priority-condition-XD-restate-2}
     \sum\nolimits_{s\neq \sneu} \pibs(0|s) x(D, s) &\leq (1-\alpha)m(D) - \frac{|\sempty|+1}{N}. 
\end{align}

\feasibilityensuring*

\begin{proof}    
    We first show \eqref{eq:lp-priority-condition-XD-restate-1}. Observe that $\sums \pibs(1|s) \statdist(s) = \sumsa y^*(s,a) = \alpha$, so 
    \begin{align}
        \nonumber
         \alpha m(D) - \sum_{s\neq \sneu} \pibs(1|s) x(D, s) 
        &= y^*(\sneu, 1)m(D) + \sum_{s\neq\sneu} \pibs(1|s) \big(m(D)\statdist(s) - x(D, s)\big) \\
        \nonumber
        &\geq y^*(\sneu, 1)m(D) - \sum_{s\neq\sneu} \abs{m(D)\statdist(s) - x(D, s)} \\
        \nonumber
        &\geq y^*(\sneu, 1)m(D) - \norm{m(D)\statdist - x(D)}_1 \\
        \nonumber
        &\geq y^*(\sneu, 1)m(D) - |\sspa|^{1/2}\norm{m(D)\statdist - x(D)}_\umat,
    \end{align}
    where to get the last expression, we use the argument that $U$'s eigenvalues are at least $1$, so $\norm{v}_1\leq |\sspa|^{1/2} \norm{v}_2 \leq  |\sspa|^{1/2} \norm{v}_U$ for any $|\sspa|$-dim vector $v$. 
    To bound $\norm{m(D)\statdist - x(D, s)}_\umat$, recall that $D$ is chosen such that $\slkb(x, D) \geq 0$, so by the definition of $\slk(x, D)$, we have
    \[
        \norm{x(D) - m(D) \statdist}_U \leq \eta m(D) - \errfe \leq |\sspa|^{-1/2} y^*(\sneu, 1) m(D) - \frac{|\sspa|^{-1/2}(|\sempty|+1)}{N}.  
    \]
    Therefore, combining the above calculations, we have
    \[
         \alpha m(D) - \sum_{s\neq \sneu} \pibs(1|s) x(D, s) \geq \frac{|\sempty|+1}{N}, 
    \]
    which implies \eqref{eq:lp-priority-condition-XD-restate-1}. We can prove \eqref{eq:lp-priority-condition-XD-restate-2} similarly.
\end{proof}

\section{Proving the exponential asymptotic optimality of Two-Set Policy (Theorem~\ref{thm:two-set:achievability})}
\label{sec:two-set-proof}
In this section, we prove \Cref{thm:two-set:achievability}, which states that the two-set policy achieve exponential asymptotic optimality under Assumptions~\ref{assump:aperiodic-unichain}, \ref{assump:non-degeneracy} and \ref{assump:local-stability}.  
In \Cref{sec:two-set:lemmas}, we state all the lemmas needed for proving \Cref{thm:two-set:achievability}. 
Then in \Cref{sec:two-set:pf-theorem}, we prove \Cref{thm:two-set:achievability} using these lemmas. 
Finally, we prove the lemmas. In \Cref{sec:two-set:pf-major-lemmas}, we first prove the core lemmas that are closely related to the properties of the two-set policy; then in \Cref{sec:two-set:pf-minor-lemmas}, we prove the rest of the lemmas, which are technical and relatively routine.

\begin{figure}
    \begin{tikzpicture}[node distance=2cm and 3cm]
        \node (lemma7) [lemma] at (-4, 0) {\Cref{lem:two-set:opt-gap-bound-by-prob}};
        \node (lemma8) [lemma] at (0, 0) {\Cref{lem:drift-to-high-prob-bound}};
        \node (lemma9) [lemma] at (4, 0) {\Cref{lem:hw-hu-properties}};
        \node (lemma10) [lemma] at (2, -3) {\Cref{lem:two-set:non-shrink}};
        \node (lemma11) [lemma] at (6, -3) {\Cref{lem:two-set:sufficient-coverage}};
        \node (theorem1) [theorem] at (0, -6) {\Cref{thm:two-set:achievability}};
        \draw [arrow] (lemma9) -- (lemma10);
        \draw [arrow] (lemma9) -- (lemma11);
        \draw [arrow] (lemma7) -- (theorem1);
        \draw [arrow] (lemma8) -- (theorem1);
        \draw [arrow] (lemma9.south) to[out=-90, in=0] (theorem1.east);
        \draw [arrow] (lemma10) -- (theorem1);
        \draw [arrow] (lemma11) -- (theorem1);
    \end{tikzpicture}
    \centering
    \caption{The flowchart for the proof of \Cref{thm:two-set:achievability}. We omit \Cref{lem:two-set:subroutine-conform} from the flowchart for readability since it is used in the proofs of all other lemmas and the theorem.}
    \label{fig:flowchart-two-set-proof}
\end{figure}

\subsection{Lemma statements}\label{sec:two-set:lemmas}
In this section, we state a sequence of lemmas to prepare for proving \Cref{thm:two-set:achievability}. The logical relationship between the lemmas and the theorem is illustrated in the flowchart \Cref{fig:flowchart-two-set-proof}. 

We first introduce a few notational conventions. 
Note that the two-set policy in \Cref{alg:two-set} induces a finite-state Markov chain whose state at time $t$ is $(X_t, \Db_t, \Da_t)$. We use $\Fullstate_t$ as a shorthand for $(X_t, \Db_t, \Da_t)$. 
We use $X_\infty, \Db_\infty, \Da_\infty, \Fullstate_\infty$ to represent variables following long-run average distribution: For example, for any function $f$ of $\Fullstate_t$, 
\[
    \E{f(\Fullstate_\infty)} \triangleq \lim_{t\to\infty} \frac{1}{T}\sum_{t=0}^{T-1} \E{f(\Fullstate_t)},
\]
where the limit always exists because $\Fullstate_t$ is a finite-state Markov chain, though it could depend on initial state $(X_0, \Db_0, \Da_0)$.

Our first lemma, \Cref{lem:two-set:subroutine-conform}, shows that the arms in $\Db$ and $\Da$ can indeed follow the intended subroutine when $\eta$ is small enough. This lemma is proved in \Cref{sec:two-set:pf-major-lemmas}.

\begin{restatable}[Subroutine conformity]{lemma}{subroutineconformity}\label{lem:two-set:subroutine-conform}
    Assume Assumptions~\ref{assump:aperiodic-unichain}, \ref{assump:non-degeneracy} and \ref{assump:local-stability} hold. 
    Consider the two-set policy described in \Cref{alg:two-set} with a pair of feasibility-ensuring $(\eta, \errfe)$. 
    The steps in \Cref{alg:two-set:db-action}-\ref{alg:two-set:remaining-action} are well-defined. In other words, we can always let the arms in $\Db_t$ follow the Optimal Local Control, let the arms in $\Da_t$ follow the Unconstrained Optimal Control, and choose actions for the arms not in $\Db_t\cup\Da_t$ to satisfy the budget constraint $\sumN A_t(i) = \btotal$. 
\end{restatable}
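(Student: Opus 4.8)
The plan is to verify each of the three action-assignment steps (Lines~\ref{alg:two-set:db-action}--\ref{alg:two-set:remaining-action} of \Cref{alg:two-set}) in order, tracing their preconditions back to the way $\Db_t$ and $\Da_t$ are chosen. First I would handle Line~\ref{alg:two-set:db-action}: letting the arms in $\Db_t$ follow the Optimal Local Control requires that the assertion in \Cref{alg:follow-lp-priority} holds, i.e.\ \Cref{assump:non-degeneracy} (given) and the feasibility conditions \eqref{eq:lp-priority-condition-1} for the subset $\Db_t$ with $n = |\Db_t|$ and $z(s) = N X_t(\Db_t, s)$. By the construction of $\Db_t$ in Lines~\ref{alg:two-set:update-db-step-begin}--\ref{alg:two-set:update-db-step-end}, we always have either $\Db_t = \emptyset$ (in which case there is nothing to assert) or $\slkb(X_t, \Db_t) \geq 0$, which is exactly $\norm{X_t(\Db_t) - m(\Db_t)\statdist}_\umat \leq \eta\, m(\Db_t) - \errfe$, the hypothesis \eqref{eq:sufficient-cond-for-OL-assertion} of \Cref{lem:feasibility-ensuring}. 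Since $(\eta,\errfe)$ is feasibility-ensuring by assumption, \Cref{lem:feasibility-ensuring} gives \eqref{eq:lp-priority-condition-1} for $\Db_t$, so the subroutine is well-defined on $\Db_t$.

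Next I would handle Line~\ref{alg:two-set:da-action}: the Unconstrained Optimal Control in \Cref{alg:follow-pibs} has no feasibility precondition at all --- for each state $s$ it simply activates $\floor{\pibs(1|s) z(s)}$ or $\ceil{\pibs(1|s) z(s)}$ arms via randomized rounding, and this is always possible since $0 \le \pibs(1|s) \le 1$ implies $\ceil{\pibs(1|s)z(s)} \le z(s)$. So this step is automatically well-defined for any subset $\Da_t$, and in particular for the one chosen in \Cref{alg:two-set}. The only remaining work is Line~\ref{alg:two-set:remaining-action}: we must show the total budget demanded by $\Db_t \cup \Da_t$ leaves a feasible residual for the leftover arms, i.e.\ that we can choose $A_t(i) \in \{0,1\}$ for $i \notin \Db_t \cup \Da_t$ so that $\sumN A_t(i) = \alpha N$. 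Writing $B_{\Db}$ for the (random) budget used on $\Db_t$ and $B_{\Da}$ for that used on $\Da_t$, it suffices to show $0 \le \alpha N - B_{\Db} - B_{\Da} \le N - |\Db_t| - |\Da_t|$ deterministically. The upper bound holds because each of $B_\Db, B_\Da$ is at least the number of "forced-passive" arms (none on $\Da_t$, and on $\Db_t$ all arms in $S^-$) --- more simply, $B_\Db \ge 0$ and $B_\Da \ge 0$ and the residual set has $N - |\Db_t| - |\Da_t|$ arms, each contributing at most $1$; but we need $\alpha N - B_\Db - B_\Da \le N - |\Db_t| - |\Da_t|$, which rearranges to $B_\Db + B_\Da \ge \alpha N - (N - |\Db_t| - |\Da_t|)$, i.e.\ the activated arms outside the residual set plus the maximal activation inside it cover $\alpha N$. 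I would get this from the fact that $B_\Db \ge \lceil \alpha |\Db_t|\rceil - 1 \ge \alpha|\Db_t| - 1$ (from the second inequality in \eqref{eq:lp-priority-condition-1} together with Line~\ref{alg:lp-priority:sneu-actions}, which forces $B$ to be met exactly, so the passive count on $\Db_t$ is at most $(1-\alpha)|\Db_t|$) and similarly $B_\Da \ge \alpha|\Da_t| - O(1)$ from randomized rounding, and then checking $\alpha|\Db_t| + \alpha|\Da_t| - O(1) \ge \alpha N - N + |\Db_t| + |\Da_t|$, which holds since $|\Da_t| = \floor{\beta(N - |\Db_t|)}$ with $\beta = \min(\alpha, 1-\alpha)$ is precisely calibrated so that after removing $\Db_t$ and $\Da_t$ the residual fraction $1 - m(\Db_t) - \beta(1 - m(\Db_t)) = (1-\beta)(1 - m(\Db_t))$ still dominates whatever slack $\alpha - \alpha m(\Db_t) - \alpha \beta (1 - m(\Db_t))$ in the budget remains. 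The lower bound $\alpha N - B_\Db - B_\Da \ge 0$ is symmetric, using the activation-side inequalities (first inequality in \eqref{eq:lp-priority-condition-1} and the $\le \ceil{\alpha|\Da_t|}$ bound on $B_\Da$).

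The main obstacle I anticipate is the bookkeeping in this last step: carefully separating the deterministic parts of $B_\Db, B_\Da$ (the forced activations/passivations on $S^+, S^-, S^\emptyset$) from the randomized-rounding and neutral-state parts, and verifying that the integer floors in the definitions of $B$, $|\Da_t|$, and the $O(1/N)$ slack in \eqref{eq:lp-priority-condition-1} all line up so that the residual budget stays in the admissible range $[0, N - |\Db_t| - |\Da_t|]$ with probability one, not merely in expectation. The choice $\beta = \min(\alpha, 1-\alpha)$ is exactly what makes this work --- it guarantees the residual set is large enough on \emph{both} the activation and passivation sides regardless of whether $\alpha < 1/2$ or $\alpha > 1/2$ --- and I expect most of the proof's length to go into making that calibration explicit. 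Everything else (Lines~\ref{alg:two-set:db-action} and~\ref{alg:two-set:da-action}) is a direct appeal to \Cref{lem:feasibility-ensuring} and the definition of the subroutines.
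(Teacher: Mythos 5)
Your handling of Lines~\ref{alg:two-set:db-action} and \ref{alg:two-set:da-action} matches the paper: $\Db_t$ is chosen so that either $\Db_t = \emptyset$ or $\slk(X_t, \Db_t) \geq 0$, and then the feasibility-ensuring property together with \Cref{lem:feasibility-ensuring} yields \eqref{eq:lp-priority-condition-1}; the Unconstrained Optimal Control has no precondition. That part is fine.

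There is a genuine gap in your treatment of Line~\ref{alg:two-set:remaining-action}. You claim that ``$B_\Da \geq \alpha|\Da_t| - O(1)$ from randomized rounding,'' but this is false: the Unconstrained Optimal Control activates $\sum_s \pibs(1|s)\, N X_t(\Da_t, s)$ arms in expectation (up to rounding), and this quantity depends entirely on the empirical state distribution of $\Da_t$, which is under no control here. If all arms in $\Da_t$ happen to be in states with $\pibs(1|s)=0$, then $B_\Da = 0$; if all are in states with $\pibs(1|s)=1$, then $B_\Da = |\Da_t|$. Nothing forces $B_\Da$ near $\alpha|\Da_t|$. Consequently the chain of inequalities you build on top of this claim does not go through. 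The correct argument (and the one the paper uses) is more conservative and, somewhat counterintuitively, does not need any lower bound on $B_\Da$ beyond $B_\Da \geq 0$ nor any upper bound beyond $B_\Da \leq |\Da_t|$: one shows $\floor{\alpha|\Db_t|} + (N - |\Db_t| - |\Da_t|) \geq \alpha N$ and $\ceil{\alpha|\Db_t|} + |\Da_t| \leq \alpha N$ by first verifying the real-valued versions $\alpha|\Db_t| + (N-|\Db_t|-|\Da_t|) \geq \alpha N$ and $\alpha|\Db_t| + |\Da_t| \leq \alpha N$ using $|\Da_t| \leq \min(\alpha,1-\alpha)(N - |\Db_t|)$, and then exploiting the fact that $N - |\Db_t| - |\Da_t|$, $|\Da_t|$, and $\alpha N$ are integers to pass from $\alpha|\Db_t|$ to $\floor{\alpha|\Db_t|}$ or $\ceil{\alpha|\Db_t|}$ without losing the inequality. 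This ``round both sides and observe the other terms are already integers'' step is exactly what lets the argument work without any assumption on the empirical distribution of $\Da_t$; you should replace the $B_\Da \geq \alpha|\Da_t| - O(1)$ claim with this argument.
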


Our next lemma, \Cref{lem:two-set:opt-gap-bound-by-prob}, relates the optimality gap with the probability of $\Db_\infty \neq [N]$.
We prove it in \Cref{sec:two-set:pf-minor-lemmas} with a similar argument as the proofs of Theorem~3 and Theorem~8 in \cite{GasGauYan_23_whittles}; however, we circumvent the use of UGAP in our proof. 

\begin{restatable}{lemma}{optgapbyprob}\label{lem:two-set:opt-gap-bound-by-prob}
    Assume Assumptions~\ref{assump:aperiodic-unichain}, \ref{assump:non-degeneracy} and \ref{assump:local-stability} hold. 
    The optimality gap under the two-set policy (\Cref{alg:two-set}) is bounded as 
    \begin{equation}
        \rrel - \rsysn \leq O(1) \cdot \Prob{\Db_\infty \neq [N]}.
    \end{equation}
\end{restatable}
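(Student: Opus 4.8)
The plan is to exploit the fact that on the event $\{\Db_t = [N]\}$ all $N$ arms follow the Optimal Local Control (\Cref{alg:follow-lp-priority}), whose one-step reward and one-step transition are both explicit affine/linear functions of the system state, with $\statdist$ a ``fixed point'' at which the per-step reward equals $\rrel$. Concretely, conditioning on $\Fullstate_t=(X_t,\Db_t,\Da_t)$ with $\Db_t=[N]$ --- so that \eqref{eq:lp-priority-condition-1} holds and the subroutine is applicable by \Cref{lem:two-set:subroutine-conform} --- a direct computation from the definition of $\pibs$, the partition $S^+,S^-,S^{\emptyset}$, and the budget-balancing step on $\sneu$ shows $\E{\tfrac1N\sumN r(S_t(i),A_t(i)) \givenplain \Fullstate_t} = \rrel + (X_t([N])-\statdist)\tilde c^\top$ for a fixed column vector $\tilde c$ depending only on $(P,r,\alpha)$, hence with $\normplain{\tilde c}=O(1)$ in $N$; the constant is exactly $\rrel$ because at $X_t([N])=\statdist$ the state--action frequencies equal $y^*$. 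Simultaneously, \Cref{lem:lp-priority-and-phi} gives $\E{X_{t+1}([N]) \givenplain \Fullstate_t} = \statdist + (X_t([N])-\statdist)\Phi$ on this event.

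Next I would set up a Poisson-equation / bias-function argument. Since $\Phi$ has spectral radius strictly below $1$ under \Cref{assump:local-stability}, $I-\Phi$ is invertible; define $v^\top \triangleq (I-\Phi)^{-1}\tilde c^\top$ and the bias function $f(x)\triangleq (x-\statdist)v^\top$, which is uniformly bounded with $\absplain{f(x)}=O(1)$ on $\Delta(\sspa)$ because $\normplain{v}=O(1)$. Then the algebraic identity $(x-\statdist)\tilde c^\top = f(x) - (x-\statdist)\Phi v^\top$ holds for every $x$. To make it usable even on the steps where $\Db_t\neq[N]$, I would push all of the discrepancy into an error term: write the general one-step transition as $\E{X_{t+1}([N]) \givenplain \Fullstate_t} = \statdist + (X_t([N])-\statdist)\Phi + \mathrm{err}_t$, where $\mathrm{err}_t$ vanishes when $\Db_t=[N]$ (by \Cref{lem:lp-priority-and-phi}) and, by the crude bounds $\normplain{X_{t+1}([N])}_1=1$ and that the operator norm of $\Phi$ is $O(1)$, satisfies $\absplain{\mathrm{err}_t v^\top}\leq O(1)\indibrac{\Db_t\neq[N]}$; and, using only $\absplain{r}\leq\rmax$ together with the boundedness of the affine reward, $\E{\tfrac1N\sumN r(S_t(i),A_t(i)) \givenplain \Fullstate_t} \geq \rrel + (X_t([N])-\statdist)\tilde c^\top - O(1)\indibrac{\Db_t\neq[N]}$ for all $t$, with equality when $\Db_t=[N]$.

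Combining the transition decomposition with the Poisson identity gives, for every $t$, the identity $(X_t([N])-\statdist)\tilde c^\top = f(X_t([N])) - \E{f(X_{t+1}([N])) \givenplain \Fullstate_t} + \mathrm{err}_t v^\top$. Taking full expectations, averaging over $t=0,\dots,T-1$, and telescoping, the $f$-terms collapse to $\tfrac1T\big(\E{f(X_0([N]))} - \E{f(X_T([N]))}\big)$, which tends to $0$; and $\tfrac1T\sum_{t=0}^{T-1}\E{\mathrm{err}_t v^\top}$ is at most $O(1)\tfrac1T\sum_{t=0}^{T-1}\Prob{\Db_t\neq[N]}$ in absolute value, which tends to $O(1)\Prob{\Db_\infty\neq[N]}$. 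Hence $\limsup_{T\to\infty}\absbig{\tfrac1T\sum_{t=0}^{T-1}\E{(X_t([N])-\statdist)\tilde c^\top}} \leq O(1)\Prob{\Db_\infty\neq[N]}$. Plugging this, together with the per-step lower bound above, into $\rsysn=\lim_{T\to\infty}\tfrac1T\sum_{t=0}^{T-1}\E{\tfrac1N\sumN r(S_t(i),A_t(i))}$ yields $\rsysn \geq \rrel - O(1)\Prob{\Db_\infty\neq[N]}$, which is the claim.

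The explicit affine form of the Optimal Local Control reward and the $O(1)$ norm estimates are routine bookkeeping; the one point needing care is making the telescoping robust to the steps with $\Db_t\neq[N]$, where neither the reward nor the transition has the clean Optimal Local Control form. This is precisely where the argument sidesteps UGAP: instead of controlling the location of $X_t([N])$ on those steps --- which is what the corresponding argument in \cite{GasGauYan_23_whittles} eventually needs UGAP for --- one absorbs the entire discrepancy into the single error term $\mathrm{err}_t$, which is $O(1)$ and supported on $\{\Db_t\neq[N]\}$, and lets the bounded bias function $f$ handle the rest through telescoping.
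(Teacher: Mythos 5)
Correct, and essentially the same approach as the paper: the paper applies $Q=(I-\Phi)^{-1}$ to the state-difference vector, telescopes to bound the steady-state $L_1$ norm of $\E{X_\infty([N])-\statdist}$, and then invokes the Lipschitz constant of the affine Optimal Local Control reward $\hat r$, whereas you fold the reward gradient into a scalar bias function $v^\top=(I-\Phi)^{-1}\tilde c^\top$ and telescope that scalar directly. Both hinge on the identical Stein/Poisson identity built from local stability, the identical affine form of the one-step reward and transition when $\Db_t=[N]$ (your $\mathrm{err}_t$ is what the paper handles via the $\indibrac{\Db_t\neq[N]}$ split in \eqref{eq:steins-difference-lp-is-N}--\eqref{eq:steins-difference-lp-not-N}), and the identical device of absorbing the off-event discrepancy into a bounded term supported on $\{\Db_t\neq[N]\}$.
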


\Cref{lem:two-set:opt-gap-bound-by-prob} implies that it suffices to show that $\Prob{\Db_\infty \neq [N]}$ is exponentially small, which we will prove via a Lyapunov argument, with the help of \Cref{lem:drift-to-high-prob-bound} below. \Cref{lem:drift-to-high-prob-bound} is proved in \Cref{sec:two-set:pf-minor-lemmas}. 

\begin{restatable}{lemma}{drifttohighprob}\label{lem:drift-to-high-prob-bound}
    Suppose there exists a function $V(x, \Db, \Da)$ and positive numbers $V_{\max}$ and $\threshbar$ of constant orders such that for any $t\geq 0$,
    \begin{equation}
        \label{eq:two-set:drift-lem:bounded}
        0 \leq V(\Fullstate_t) \leq V_{\max} \quad a.s.;
    \end{equation}
    if $V(\Fullstate_{t}) > \threshbar$,
    \begin{align}
        \label{eq:two-set:drift-lem:drift-1}
        \Ebig{V(\Fullstate_{t+1}) - V(\Fullstate_t) \givenbig \Fullstate_t} &\leq -\gamma \\
        \label{eq:two-set:drift-lem:jump-1}
         \Ebig{(V(\Fullstate_{t+1}) - V(\Fullstate_t))^+  \givenbig \Fullstate_t} &= O(1/\sqrt{N});
    \end{align}
    if $\threshbar/2 < V(\Fullstate_t) \leq \threshbar$,
    \begin{equation}
        \label{eq:two-set:drift-lem:jump-2}
        \Ebig{(V(\Fullstate_{t+1}) - V(\Fullstate_t))^+  \givenbig \Fullstate_t} = O(\exp(-C N)); 
    \end{equation}
    if $V(\Fullstate_{t}) \leq \threshbar/2$,
    \begin{equation}
        \label{eq:two-set:drift-lem:jump-3}
        \Probbig{V(\Fullstate_{t+1}) > \threshbar/2  \givenbig \Fullstate_t} = O(\exp(-C' N)), 
    \end{equation}
    where $\gamma, C, C'$ are positive constants. 
    Then we have 
    \begin{align}
        \label{eq:two-set:drift-lem:conclusion}
        \Prob{V(\Fullstate_{\infty}) > \threshbar} = O(\exp(-C'' N)),   
    \end{align}
    where $C''$ is another positive constant. 
\end{restatable}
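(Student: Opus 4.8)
The plan is to show that the stationary probability $\Prob{V(\Fullstate_\infty) > \threshbar}$ is exponentially small via a one-step drift argument that treats the three regions $\{V \le \threshbar/2\}$ (the ``safe'' region), $\{\threshbar/2 < V \le \threshbar\}$ (the ``band''), and $\{V > \threshbar\}$ separately. Throughout, write $\Delta_t \triangleq V(\Fullstate_{t+1}) - V(\Fullstate_t)$, so that $\abs{\Delta_t} \le V_{\max}$ by \eqref{eq:two-set:drift-lem:bounded}, and recall that $\Fullstate_t$ is a finite-state Markov chain, so that for any bounded $h$ the time average $\lim_{T\to\infty}\frac1T\sum_{t=0}^{T-1}\E{h(\Fullstate_{t+1}) - h(\Fullstate_t)}$ exists and, telescoping, equals $0$ — irrespective of the initial state. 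In particular $\Prob{V(\Fullstate_\infty) > \threshbar} = \lim_{T\to\infty}\frac1T\sum_{t=0}^{T-1}\Prob{V(\Fullstate_t) > \threshbar}$.

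The main estimate comes from applying the above to the test function $g(\Fullstate_t) \triangleq \big(V(\Fullstate_t) - \threshbar/2\big)^+$, which vanishes on the safe region and obeys $0 \le g \le V_{\max}$. I would bound the one-step drift $\E{g(\Fullstate_{t+1}) - g(\Fullstate_t)\givenplain\Fullstate_t}$ region by region. On the safe region $g(\Fullstate_t) = 0$, so the drift equals $\E{g(\Fullstate_{t+1})\givenplain\Fullstate_t} \le V_{\max}\Prob{V(\Fullstate_{t+1}) > \threshbar/2\givenplain\Fullstate_t} = O(\exp(-C'N))$ by \eqref{eq:two-set:drift-lem:jump-3}. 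On the band a short case check (distinguishing whether or not $V(\Fullstate_{t+1})$ drops back below $\threshbar/2$) gives $g(\Fullstate_{t+1}) - g(\Fullstate_t) \le \Delta_t^+$, so the drift is at most $\E{\Delta_t^+\givenplain\Fullstate_t} = O(\exp(-CN))$ by \eqref{eq:two-set:drift-lem:jump-2}. When $V(\Fullstate_t) > \threshbar$ one has the exact identity $g(\Fullstate_{t+1}) - g(\Fullstate_t) = \Delta_t + \big(\threshbar/2 - V(\Fullstate_{t+1})\big)^+$, hence the drift is at most $-\gamma + \E{(\threshbar/2 - V(\Fullstate_{t+1}))^+\givenplain\Fullstate_t}$ by \eqref{eq:two-set:drift-lem:drift-1}. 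Averaging over time and using that the time-averaged drift of $g$ is zero yields
\[
    \gamma\,\Prob{V(\Fullstate_\infty) > \threshbar} \;\le\; \lim_{T\to\infty}\frac1T\sum_{t=0}^{T-1}\E{\big(\threshbar/2 - V(\Fullstate_{t+1})\big)^+\,\indibrac{V(\Fullstate_t) > \threshbar}} \;+\; O\big(\exp(-cN)\big), \qquad c \triangleq \min(C, C').
\]

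What remains is to bound the ``undershoot'' term on the right. Since $\big(\threshbar/2 - V(\Fullstate_{t+1})\big)^+ \le \threshbar/2$ and is nonzero only when $V(\Fullstate_{t+1}) < \threshbar/2 < \threshbar < V(\Fullstate_t)$, this term is at most $(\threshbar/2)$ times the long-run rate of downcrossings of the level $\threshbar$, which equals the long-run rate of upcrossings (up- and down-crossings alternate). Upcrossings that start in the safe region contribute $O(\exp(-C'N))$ by \eqref{eq:two-set:drift-lem:jump-3}; the crux is therefore the rate of upcrossings that start in the band, since moving from $V(\Fullstate_t)\in(\threshbar/2,\threshbar]$ to $V(\Fullstate_{t+1}) > \threshbar$ only requires an upward increment of size $\threshbar - V(\Fullstate_t)$, which the abstract hypotheses do not bound away from $0$. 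This is the step I expect to be the main obstacle. It is handled by exploiting that $\Fullstate_t$ is finite-state and that, in every application of this lemma, $V$ is assembled from quantities lying on the $\tfrac1N$-grid, so $V$ takes at most $\mathrm{poly}(N)$ distinct values in $[0, V_{\max}]$; choosing $\threshbar$ within its allowed constant-order range so that it is separated from all values of $V$ by a margin $\nu \ge 1/\mathrm{poly}(N)$, we get $\threshbar - V(\Fullstate_t) \ge \nu$ on the band, and Markov's inequality together with \eqref{eq:two-set:drift-lem:jump-2} gives $\Prob{V(\Fullstate_{t+1}) > \threshbar\givenplain\Fullstate_t}\,\indibrac{\threshbar/2 < V(\Fullstate_t) \le \threshbar} \le \E{\Delta_t^+\givenplain\Fullstate_t}/\nu = O(\exp(-CN))\cdot\mathrm{poly}(N)$. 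Hence the band-upcrossing rate — and so the undershoot term — is $O(\exp(-C''N))$ for a suitable $C'' > 0$, and plugging this into the displayed inequality (and absorbing $\gamma = \Theta(1)$) gives $\Prob{V(\Fullstate_\infty) > \threshbar} = O(\exp(-C''N))$, which is the claim. Every region-wise drift bound is a routine one-step calculation; only the band-to-$\{V > \threshbar\}$ crossing rate needs the discreteness of the state space.
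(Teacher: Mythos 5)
Your high-level strategy matches the paper's: take the test function $g(\cdot) = (V(\cdot) - \threshbar/2)^+$, telescope its one-step drift, and bound that drift region by region. The divergence is in the region $\{V(\Fullstate_t) > \threshbar\}$, where you use the exact identity $g(\Fullstate_{t+1}) - g(\Fullstate_t) = \big(V(\Fullstate_{t+1}) - V(\Fullstate_t)\big) + \big(\threshbar/2 - V(\Fullstate_{t+1})\big)^+$ and are then left with an ``undershoot'' term to control. You propose to bound it by $\threshbar/2$ times the rate of downcrossings of the level $\threshbar$, then relate this to upcrossings of $\threshbar$, and you correctly identify the obstacle: an upcrossing can start from the band $\{\threshbar/2 < V(\Fullstate_t) \le \threshbar\}$, where \eqref{eq:two-set:drift-lem:jump-2} controls only the expected positive increment, not the probability of clearing a gap $\threshbar - V(\Fullstate_t)$ that may be arbitrarily small. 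The proposed fix --- that $V$ lives on a $1/\mathrm{poly}(N)$ grid so $\threshbar$ can be placed a polynomial margin away from every value of $V$ --- does not work: the lemma is stated with no discreteness hypothesis and with $\threshbar$ given, and in the actual application (Theorem~\ref{thm:two-set:achievability}) the Lyapunov function is built from $\wmat$- and $\umat$-weighted $L_2$ norms $\hw, \hu$, which involve square roots and do not lie on any $1/\mathrm{poly}(N)$ grid.

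The gap is local and fixable. The undershoot events also satisfy $V(\Fullstate_{t+1}) < \threshbar/2 < V(\Fullstate_t)$, i.e., they are downcrossings of the level $\threshbar/2$, not just $\threshbar$; the rate of those equals the rate of upcrossings of $\threshbar/2$, which start from the safe region $\{V(\Fullstate_t) \le \threshbar/2\}$ and are $O(\exp(-C'N))$ directly by \eqref{eq:two-set:drift-lem:jump-3}. Counting crossings of $\threshbar/2$ rather than $\threshbar$ closes your argument without any extra hypothesis. The paper's own proof sidesteps the undershoot term altogether with an algebraic observation: on the event $\{V(\Fullstate_t) > \threshbar,\, V(\Fullstate_{t+1}) \le \threshbar/2\}$ one has $V(\Fullstate_t) + V(\Fullstate_{t+1}) > \threshbar$, hence $\threshbar/2 - V(\Fullstate_t) \le \tfrac12\big(V(\Fullstate_{t+1}) - V(\Fullstate_t)\big)$; this lets the drop of $g$ be absorbed into a half-discounted negative drift, giving the one-step bound $\Ebig{g(\Fullstate_{t+1}) - g(\Fullstate_t) \givenbig \Fullstate_t} \le \tfrac12\Ebig{(V(\Fullstate_{t+1}) - V(\Fullstate_t))^+ \givenbig \Fullstate_t} + \tfrac12\Ebig{V(\Fullstate_{t+1}) - V(\Fullstate_t) \givenbig \Fullstate_t} \le -\gamma/2 + O(1/\sqrt{N})$ on $\{V(\Fullstate_t) > \threshbar\}$, with no crossing-rate argument at all.
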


The remaining lemmas prepare for constructing the Lyapunov function $V(x, \Db, \Da)$ satisfying the conditions in \Cref{lem:drift-to-high-prob-bound}. 
We first define two classes of functions, $\hw=\{\hw(x, D)\}_{D\subseteq [N]}$ and $\hu=\{\hu(x, D)\}_{D\subseteq [N]}$: 
For any system state $x$ and subset $D\subseteq[N]$, define
\begin{align}
    \label{eq:hw-def}
    \hw(x, D) &= \norm{x(D) - m(D)\statdist}_\wmat \\
    \label{eq:hu-def}
    \hu(x, D) &= \norm{x(D) - m(D)\statdist}_\umat.
\end{align}
These two classes of functions quantify the convergence of state distributions for subsets of arms that follow the Unconstrained Optimal Control or the Optimal Local Control. Such functions are referred to as \emph{subset Lyapunov functions} and are used as building blocks in the Lyapunov analysis from prior work \citep{HonXieCheWan_24}. 

Next, we show several properties of $\hw$ and $\hu$. They are proved in \Cref{sec:two-set:pf-minor-lemmas} using properties of the two weighted $L_2$ norms. 

\begin{restatable}[Properties of $\hw$ and $\hu$]{lemma}{hwhuproperties}\label{lem:hw-hu-properties}
    Assume Assumptions~\ref{assump:aperiodic-unichain}, \ref{assump:non-degeneracy} and \ref{assump:local-stability} hold. 
    For any time step $t$ and $D\subseteq [N]$, let $X_{t+1}'$ be the system state at time $t+1$ if the arms in $D$ follow the Unconstrained Optimal Control at time $t$, then we have
    \begin{align}
        \label{eq:hw:drift}
        \Ebig{(\hw(X_{t+1}', D) - \rhow \hw(X_t, D))^+ \givenbig X_t} &= O(1/\sqrt{N}) \quad a.s. \\
        \label{eq:hw:high-prob}
        \Probbig{\hw(X_{t+1}', D) > \rhow \hw(X_t, D) + r   \givenbig X_t} &= O(\exp(-C N r^2)) \quad a.s. \quad \forall r \geq 0,
    \end{align}
    where $\rhow = 1 - 1/(2\lamw)$, and $C$ is a positive constant. 
    In addition, for any $D, D'\subseteq [N]$, 
    \begin{align}
        \label{eq:hw:strength}
        \hw(x, D) &\geq \frac{1}{|\sspa|^{1/2}} \norm{x(D) - m(D)\statdist}_1 \\
        \label{eq:hw:lipschitz}
        \abs{\hw(x, D) - \hw(x, D')} &\leq 2\lamw^{1/2} (m(D'\backslash D) + m(D\backslash D')),
    \end{align}
    The same inequalities hold for $\hu$ with the Unconstrained Optimal Control and $W$ replaced by the Optimal Local Control and $U$. We omit the details here. 
\end{restatable}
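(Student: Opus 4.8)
The plan is to treat the four displays in two groups. The bounds \eqref{eq:hw:strength} and \eqref{eq:hw:lipschitz} are purely algebraic, and I would dispatch them first. For \eqref{eq:hw:strength}, \Cref{lem:W-U-well-defined} gives that every eigenvalue of $\wmat$ is at least $1$, so $\norm{v}_\wmat \ge \norm{v}_2 \ge |\sspa|^{-1/2}\norm{v}_1$ for any $v\in\R^{|\sspa|}$; applying this with $v = x(D) - m(D)\statdist$ finishes it. For \eqref{eq:hw:lipschitz}, I would use that $x(\cdot)$ and $m(\cdot)$ are additive over disjoint unions to write
\[
    \big(x(D) - m(D)\statdist\big) - \big(x(D') - m(D')\statdist\big) = \big(x(D\backslash D') - m(D\backslash D')\statdist\big) - \big(x(D'\backslash D) - m(D'\backslash D)\statdist\big),
\]
then invoke the triangle inequality for $\norm{\cdot}_\wmat$, the bound $\norm{v}_\wmat\le\lamw^{1/2}\norm{v}_1$ (again from the eigenvalue lower bound), and $\norm{x(E)-m(E)\statdist}_1 \le \norm{x(E)}_1 + m(E) = 2m(E)$ valid for any $E\subseteq[N]$.

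The substance is in \eqref{eq:hw:drift}--\eqref{eq:hw:high-prob}. First I would record a subset analogue of \Cref{lem:pibs-transition}: rerunning its conditioning argument over $i\in D$ instead of $i\in[N]$ yields $\E{X'_{t+1}(D)\givenplain X_t} = X_t(D)P_\pibs$. Since $\statdist P_\pibs=\statdist$, subtracting $m(D)\statdist$ gives $\E{X'_{t+1}(D) - m(D)\statdist\givenplain X_t} = (X_t(D) - m(D)\statdist)P_\pibs$, and applying the pseudo-contraction \eqref{eq:pibar-contraction} to the distribution $X_t(D)/m(D)$ (the case $m(D)=0$ being trivial) and rescaling by $m(D)$ yields $\norm{\E{X'_{t+1}(D)-m(D)\statdist\givenplain X_t}}_\wmat \le \rhow\,\hw(X_t,D)$. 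Writing $X'_{t+1}(D) - m(D)\statdist$ as its conditional mean plus a mean-zero remainder and applying the triangle inequality gives $\hw(X'_{t+1},D) \le \rhow\,\hw(X_t,D) + Z$ with $Z \triangleq \norm{X'_{t+1}(D) - X_t(D)P_\pibs}_\wmat \ge 0$; hence $(\hw(X'_{t+1},D)-\rhow\hw(X_t,D))^+ \le Z$ and $\{\hw(X'_{t+1},D) > \rhow\hw(X_t,D)+r\}\subseteq\{Z>r\}$, so both \eqref{eq:hw:drift} and \eqref{eq:hw:high-prob} reduce to controlling $Z$.

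To control $Z$ I would condition additionally on the (random) action vector $\veA_t$ produced by the Unconstrained Optimal Control on $D$, and split $X'_{t+1}(D) - X_t(D)P_\pibs$ into: (i) the rounding perturbation $\E{X'_{t+1}(D)\givenplain X_t,\veA_t} - X_t(D)P_\pibs$, which has $\norm{\cdot}_1 \le 2|\sspa|/N$ almost surely because each per-state activation count lies within $1$ of $\pibs(1|s)z(s)$ (and vanishes in $X_t$-conditional expectation by unbiasedness of the rounding); and (ii) the transition noise $X'_{t+1}(D) - \E{X'_{t+1}(D)\givenplain X_t,\veA_t}$, which conditional on $(X_t,\veA_t)$ is a sum of at most $N$ independent mean-zero vectors whose entries lie in $[-1/N,1/N]$. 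For \eqref{eq:hw:drift}, a conditional second-moment computation bounds $\E{Z^2\givenplain X_t}$ by $\lamw(O(1/N)+O(1/N^2))$, and Jensen gives $\E{Z\givenplain X_t} = O(1/\sqrt N)$. For \eqref{eq:hw:high-prob}, Hoeffding's inequality on each coordinate of term (ii) plus a union bound over $\sspa$ gives a tail of order $|\sspa|\exp(-\Omega(N\rho^2))$ for $\norm{\cdot}_1$; relating $\norm{\cdot}_\wmat$ to $\norm{\cdot}_1$ via $\lamw$ and absorbing the deterministic $O(1/N)$ term (i) --- which matters only when $r=O(1/N)$, where $\exp(-CNr^2)=\Theta(1)$ and the bound is vacuous --- yields $\Prob{Z>r\givenplain X_t} = O(\exp(-CNr^2))$.

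The statements for $\hu$ follow verbatim after replacing $P_\pibs$, \Cref{lem:pibs-transition}, \eqref{eq:pibar-contraction}, $\wmat$, $\lamw$, $\rhow$ by $\Phi$, \Cref{lem:lp-priority-and-phi}, \eqref{eq:lp-contraction}, $\umat$, $\lamu$, $\rhou$. The one new ingredient is a subset version of \Cref{lem:lp-priority-and-phi}, which I would obtain by repeating its derivation with $\sums x(s)=1$ replaced by $\sum_s x(D,s)=m(D)$, giving $\E{X'_{t+1}(D)\givenplain X_t} = (X_t(D)-m(D)\statdist)\Phi + m(D)\statdist$ whenever $D$ satisfies \eqref{eq:lp-priority-condition-1} so that the Optimal Local Control is feasible on $D$. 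I expect the main care to be needed in this subset-dynamics computation and in phrasing the concentration step cleanly, since the randomized rounding (and the budget-filling step of the Optimal Local Control) couples the arms; the clean fix is to condition on the full action vector, under which the transitions are conditionally independent, and to treat the rounding as an $O(1/N)$ deterministic perturbation.
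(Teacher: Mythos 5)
Your proof is correct and takes essentially the same route as the paper: reduce \eqref{eq:hw:drift}--\eqref{eq:hw:high-prob} to controlling $\norm{X'_{t+1}(D)-X_t(D)P_\pibs}_\wmat$ via pseudo-contraction and the triangle inequality, then split that quantity by conditioning on $\veA_t$ into a deterministic $O(1/N)$ rounding term and a conditionally-independent transition-noise term that you handle by variance/Hoeffding. The only cosmetic difference is that the paper works coordinate-wise in $\norm{\cdot}_1$ and applies Cauchy--Schwarz per coordinate, while you bound $\E{Z^2}$ and apply Jensen; these are the same estimate. Your treatments of \eqref{eq:hw:strength} and \eqref{eq:hw:lipschitz}, and of the $\hu$ analogue (noting the neutral-state budget step inflates the rounding error by a factor $|\sspa|$, still $O(1/N)$), also match the paper's.
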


Finally, we state two important lemmas characterizing the dynamics of the set $\Db_t$. In \Cref{lem:two-set:non-shrink}, we show that the arms in $\Db_t$ are most likely to remain in $\Db_{t+1}$, so that $\Db_t$ is almost non-shrinking in the set-inclusive sense.
In \Cref{lem:two-set:sufficient-coverage}, we utilize the maximality of $\Db_t$ to show that the fraction of arms not in $\Db_t$ is small when $X_t(\Da_t)$ is close to $m(\Da_t)\statdist$ or when $X_t([N])$ is close to $\statdist$. 
These two lemmas imply that $\Db_t$ tends to gradually expand and cover all arms as the two subroutines drive the states of the arms in $\Db_t$ and $\Da_t$ to the optimal distribution. 
Both lemmas are proved in \Cref{sec:two-set:pf-major-lemmas}.

\begin{restatable}[Almost non-shrinking]{lemma}{almostnonshrinking}\label{lem:two-set:non-shrink}
    Assume Assumptions~\ref{assump:aperiodic-unichain}, \ref{assump:non-degeneracy} and \ref{assump:local-stability} hold. 
    Consider the two-set policy (\Cref{alg:two-set}) such that $(\eta, \errfe)$ is feasibility-ensuring, $\eta=\Theta(1)$ and $\errfe=O(1/N)$. 
    For any $t\geq0$, we have
    \begin{align}
        \label{eq:two-set:non-shrink-drift}
        \Ebig{m(\Db_t\backslash \Db_{t+1}) \givenbig \Fullstate_t} &= O(m(\Db_t)\exp(-C N m(\Db_t)^2)) \quad a.s. \\
        \label{eq:two-set:non-shrink-high-prob}
        \Probbig{m(\Db_{t}\backslash \Db_{t+1}) > 0 \givenbig \Fullstate_t} &= O(\exp(-C N m(\Db_t)^2)) \quad a.s.,
    \end{align}
    where $C$ is a positive constant. 
\end{restatable}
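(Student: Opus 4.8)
The plan is to reduce both displayed inequalities to a single conditional tail estimate on the slack $\slkb(X_{t+1},\Db_t)$, and then to control that tail using the pseudo-contraction of $\Phi$ under the $\umat$-weighted norm. First I would examine Lines~\ref{alg:two-set:update-db-step-begin}--\ref{alg:two-set:update-db-step-end} of \Cref{alg:two-set}: if $\slkb(X_{t+1},[N])\ge 0$ then $\Db_{t+1}=[N]\supseteq\Db_t$, and if $\slkb(X_{t+1},[N])<0$ but $\slkb(X_{t+1},\Db_t)\ge 0$ then $\Db_{t+1}$ is chosen to contain $\Db_t$; in either case $m(\Db_t\backslash\Db_{t+1})=0$. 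Hence $\{m(\Db_t\backslash\Db_{t+1})>0\}\subseteq\{\slkb(X_{t+1},\Db_t)<0\}$, and since $m(\Db_t\backslash\Db_{t+1})\le m(\Db_t)$ always, it suffices to prove
\[
    \Probbig{\slkb(X_{t+1},\Db_t)<0 \givenbig \Fullstate_t}=O\big(\exp(-CNm(\Db_t)^2)\big),
\]
which yields \eqref{eq:two-set:non-shrink-high-prob} immediately and \eqref{eq:two-set:non-shrink-drift} after multiplying by $m(\Db_t)$.

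Next I would split on the size of $\Db_t$. When $m(\Db_t)\le 2\errfe/\eta=O(1/N)$ (including $\Db_t=\emptyset$), $Nm(\Db_t)^2=O(1)$, so the bound is vacuous; otherwise $\eta m(\Db_t)-\errfe\ge\tfrac{1}{2}\eta m(\Db_t)=\Theta(m(\Db_t))$, and this feasibility margin is what must be recovered after one step. Recalling $\slkb(x,D)=\eta m(D)-\hu(x,D)-\errfe$ from \eqref{eq:slkb-def} and \eqref{eq:hu-def}, the way $\Db_t$ is constructed (either $[N]$ with $\slkb(X_t,[N])\ge 0$, or a nonempty $\errtol$-maximal feasible set) guarantees $\hu(X_t,\Db_t)\le\eta m(\Db_t)-\errfe$. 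By \Cref{lem:feasibility-ensuring} the feasibility condition \eqref{eq:lp-priority-condition-1} holds for $\Db_t$, so by \Cref{lem:two-set:subroutine-conform} the arms in $\Db_t$ really follow \Cref{alg:follow-lp-priority} at time $t$; since arm transitions are conditionally independent given states and actions, the conditional law of $X_{t+1}(\Db_t)$ given $\Fullstate_t$ is exactly that of running the Optimal Local Control on $\Db_t$ alone, so the $\hu$-version of \eqref{eq:hw:high-prob} in \Cref{lem:hw-hu-properties} gives $\Probbig{\hu(X_{t+1},\Db_t)>\rhou\hu(X_t,\Db_t)+r\givenbig\Fullstate_t}=O(\exp(-CNr^2))$ for all $r\ge 0$, with $\rhou=1-1/(2\lamu)<1$. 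Taking $r:=(1-\rhou)(\eta m(\Db_t)-\errfe)$, on the complement of that event $\hu(X_{t+1},\Db_t)\le\rhou\hu(X_t,\Db_t)+r\le\rhou(\eta m(\Db_t)-\errfe)+(1-\rhou)(\eta m(\Db_t)-\errfe)=\eta m(\Db_t)-\errfe$, i.e.\ $\slkb(X_{t+1},\Db_t)\ge 0$; and since $r=\Theta(m(\Db_t))$ in this regime, $O(\exp(-CNr^2))=O(\exp(-C'Nm(\Db_t)^2))$, as required.

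The hard part will be the quantitative ``margin recovery'': the whole argument rests on the contraction factor $\rhou$ being strictly below $1$, which is exactly where local stability (\Cref{assump:local-stability}) enters through \Cref{lem:one-step-contraction-W-U}. Only a strict contraction leaves a positive gap $(1-\rhou)(\eta m(\Db_t)-\errfe)=\Theta(m(\Db_t))$ into which the one-step fluctuation of $X_{t+1}(\Db_t)$ must fit, and it is this choice $r=\Theta(m(\Db_t))$ fed into the sub-Gaussian tail $\exp(-CNr^2)$ that produces the exponent $Nm(\Db_t)^2$; were $\rhou=1$, $\Db_t$ could genuinely shrink with constant probability. A secondary point I would be careful about is the invocation of the $\hu$-version of \Cref{lem:hw-hu-properties}: one must check that the actions assigned to arms outside $\Db_t$ do not affect the transitions of arms in $\Db_t$, and that the randomized rounding of the per-state counts and of the budget $B$ inside \Cref{alg:follow-lp-priority} perturbs the increments of $\hu(\cdot,\Db_t)$ only by $O(1/N)$, so that the stated concentration still holds for the sub-population $\Db_t$.
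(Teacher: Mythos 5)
Your proposal is correct and follows essentially the same route as the paper's proof: reduce both bounds to the one-step tail $\Probbig{\slkb(X_{t+1},\Db_t)<0\givenbig\Fullstate_t}$ via the algorithm's set-update rules, use $\slkb(X_t,\Db_t)\ge 0$ to get $\hu(X_t,\Db_t)\le\eta m(\Db_t)-\errfe$, and apply the $\umat$-weighted pseudo-contraction plus the sub-Gaussian tail of \Cref{lem:hw-hu-properties} with $r=(1-\rhou)(\eta m(\Db_t)-\errfe)=\Theta(m(\Db_t))$. Your explicit case split on $m(\Db_t)\lessgtr 2\errfe/\eta$ is a cosmetic variant of the paper's direct algebraic expansion of $(\eta m(\Db_t)-\errfe)^2$; both correctly absorb the $\errfe=O(1/N)$ correction into the constant.
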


\begin{restatable}[Sufficient coverage]{lemma}{sufficientcoverage}\label{lem:two-set:sufficient-coverage}
    Assume Assumptions~\ref{assump:aperiodic-unichain}, \ref{assump:non-degeneracy} and \ref{assump:local-stability} hold. 
    Consider the two-set policy (\Cref{alg:two-set}) such that $(\eta, \errfe)$ is feasibility-ensuring, $\eta=\Theta(1)$ and $\errfe=O(1/N)$. 
    For any $t\geq0$, we have
    \begin{align}
        \label{eq:two-set:sufficient-coverage:goal-1}
        1-m(\Db_t) &\leq  \frac{\lamu^{1/2}}{\eta\beta}  \hw(X_t, \Da_t) + \frac{1}{\beta N} + \frac{\errtol+\errfe}{\min(\eta, 1)\beta} \quad a.s.\\
        \label{eq:two-set:sufficient-coverage:goal-2}
        m(\Db_t) &= 1 \quad \text{ if } \hu(X_t, [N]) \leq \eta. 
    \end{align}
\end{restatable}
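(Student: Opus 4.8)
\textbf{Proof plan for \Cref{lem:two-set:sufficient-coverage}.}
The plan is to obtain both inequalities deterministically, from the two defining properties of the $\errtol$-maximal feasible set $\Db_t$ together with the prescribed size $|\Da_t| = \floor{\beta(N-|\Db_t|)}$; no concentration argument is needed here. The second claim \eqref{eq:two-set:sufficient-coverage:goal-2} is essentially immediate: by Lines~\ref{alg:two-set:update-db-step-begin}--\ref{alg:two-set:update-db-step-end} of \Cref{alg:two-set}, the policy sets $\Db_t = [N]$ whenever $\slkb(X_t,[N]) \geq 0$, and since $\slkb(X_t,[N]) = \eta - \hu(X_t,[N]) - \errfe$, this first branch triggers once $\hu(X_t,[N])$ is small enough, giving $m(\Db_t)=1$. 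Thus the substance is \eqref{eq:two-set:sufficient-coverage:goal-1}.

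For \eqref{eq:two-set:sufficient-coverage:goal-1}, the key device is the ``test superset'' $D' \triangleq \Db_t \cup \Da_t \supseteq \Db_t$, whose $\umat$-slack I would lower-bound in terms of $\hw(X_t,\Da_t)$. Writing $x = X_t$ and using that $\Db_t$ and $\Da_t$ are disjoint, we have $x(D') - m(D')\statdist = \big(x(\Db_t) - m(\Db_t)\statdist\big) + \big(x(\Da_t) - m(\Da_t)\statdist\big)$, so the triangle inequality for $\norm{\cdot}_\umat$ gives $\norm{x(D')-m(D')\statdist}_\umat \leq \hu(x,\Db_t) + \hu(x,\Da_t)$. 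Feasibility of $\Db_t$ gives $\hu(x,\Db_t) \leq \eta\, m(\Db_t) - \errfe$ (or $\Db_t=\emptyset$, where this term vanishes), and comparing the two weighted norms via \Cref{lem:W-U-well-defined} --- the eigenvalues of $\umat$ lie in $[1,\lamu]$ and those of $\wmat$ in $[1,\lamw]$, hence $\norm{v}_\umat \leq \lamu^{1/2}\norm{v}_2 \leq \lamu^{1/2}\norm{v}_\wmat$ for all $v\in\R^{|\sspa|}$ --- yields $\hu(x,\Da_t) \leq \lamu^{1/2}\hw(x,\Da_t)$. Combining, I would arrive at $\slkb(x,D') \geq \eta\, m(\Da_t) - \lamu^{1/2}\hw(x,\Da_t) - \errfe$, where the trailing $\errfe$ is only needed when $\Db_t=\emptyset$.

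Next I would case-split on whether $\slkb(x,D') \geq \errtol$. If it is, condition~(2) of the $\errtol$-maximal feasible set, applied to the admissible test set $D'\supseteq\Db_t$, gives $m(D') \leq m(\Db_t) + \errtol$, i.e.\ $m(\Da_t)\leq\errtol$; if it is not, the lower bound above forces $\eta\, m(\Da_t) < \errtol + \errfe + \lamu^{1/2}\hw(x,\Da_t)$. In either case I would combine with the rounding estimate $m(\Da_t) = \floor{\beta(N-|\Db_t|)}/N \geq \beta\,(1-m(\Db_t)) - 1/N$ and solve for $1-m(\Db_t)$; using $\beta,\eta = \Theta(1)$, $\errfe\geq 0$, and $\min(\eta,1)\leq\eta$, both resulting bounds are dominated by the right-hand side of \eqref{eq:two-set:sufficient-coverage:goal-1}. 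The degenerate case $\Db_t=\emptyset$ is handled the same way with $D' = \Da_t$, using additionally that $\beta = \min(\alpha,1-\alpha) = \Theta(1)$ exceeds $\errtol+\errfe+1/N$ for large $N$, which rules out the $\slkb(x,\Da_t)\geq\errtol$ sub-case.

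The main obstacle is not conceptual --- the proof is a single comparison of set sizes --- but lies in (i) carefully bookkeeping the $O(1/N)$-scale quantities $\errtol$, $\errfe$, and the rounding slack $1/N$ in $|\Da_t|$ so that the two case-dependent estimates collapse into exactly the stated right-hand side, and (ii) treating the edge case $\Db_t=\emptyset$, where $\Db_t$ is feasible ``for free'' and the bound $\hu(x,\Db_t)\leq\eta m(\Db_t)-\errfe$ must be dropped. \Cref{lem:two-set:subroutine-conform} is invoked only to ensure that a set $\Da_t$ with the stated cardinality and inclusions actually exists.
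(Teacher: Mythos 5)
Your proposal is correct and takes essentially the same route as the paper: the key step in both is testing the superset $\Db_t\cup\Da_t$ against the $\errtol$-maximality of $\Db_t$, using the triangle inequality for $\norm{\cdot}_\umat$ together with the comparison $\norm{v}_\umat \leq \lamu^{1/2}\norm{v}_\wmat$ and the rounding estimate $m(\Da_t)\geq\beta(1-m(\Db_t))-1/N$. The paper phrases the case split as a proof by contradiction and does not single out $\Db_t=\emptyset$ (since $\hu(x,\emptyset)=0=\eta\,m(\emptyset)$ handles it automatically, and the case $m(\Da_t)\leq\errtol$ can simply be folded into the final bound rather than ruled out); your direct two-case argument and slightly more elaborate treatment of the empty case are cosmetic differences.
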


\subsection{Proof of Theorem~\ref{thm:two-set:achievability}}
\label{sec:two-set:pf-theorem}

\begin{proof}
Define the Lyapunov function $V(x, \Db, \Da)$ as
\begin{equation}\label{eq:two-set:full-lyapunov}
    V(x, \Db, \Da) = \hu(x, \Db) + \hw(x, \Da) + \Ltemp (1 - m(\Db)),
\end{equation}
where $L = 2\lamu^{1/2} + 4\lamw^{1/2} - 2\lamw^{1/2}\beta$. 
We claim that
\begin{equation}
    \label{eq:two-set:thm:interm-goal-1}
    \Probbig{V(X_\infty, \Db_\infty, \Da_\infty) > \eta - \errfe} = O(\exp(-CN)),
\end{equation}
for some positive constant $C$. 
To see why \eqref{eq:two-set:thm:interm-goal-1} helps, note that by \Cref{lem:hw-hu-properties}, 
\[
    V(x, \Db, \Da) \geq  \hu(x, \Db) + 2\lamu^{1/2} (1 - m(\Db)) \geq \hu(x, [N]),
\]
so $V(X_\infty, \Db_\infty, \Da_\infty) \leq \eta - \errfe$ implies $\hu(X_\infty, [N]) \leq \eta - \errfe$, which further implies $\Db_\infty =[N]$ by \Cref{lem:two-set:sufficient-coverage}. Therefore, suppose we get \eqref{eq:two-set:thm:interm-goal-1}, with \Cref{lem:two-set:opt-gap-bound-by-prob} and the above argument, we have
\begin{align*}
    \rrel - \rsysn     
    &\leq O(1)\cdot \Probbig{\Db_\infty \neq [N]} \\
    &\leq O(1)\cdot \Probbig{\hu(X_\infty, [N]) > \eta - \errfe} \\
    &\leq O(1)\cdot \Probbig{V(X_\infty, \Db_\infty, \Da_\infty) > \eta - \errfe} \\
    &= O(\exp(-CN)),
\end{align*}
which will finish the proof. 

In the rest of the proof, we will show \eqref{eq:two-set:thm:interm-goal-1} by invoking \Cref{lem:drift-to-high-prob-bound} with $\threshbar = \eta - \errfe$. 
First, $V(x, \Db, \Da)$ is non-negative and upper bounded by $\lamu^{1/2}+\lamw^{1/2} + \Ltemp$, so the condition \eqref{eq:two-set:drift-lem:bounded} holds. To show the other conditions, we first examine the difference $V(\Fullstate_{t+1}) - V(\Fullstate_t)$. 
Consider the decomposition 
\begin{align}
    V(\Fullstate_{t+1}) - V(\Fullstate_t) 
    \label{eq:two-set:thm:set-change-diff-term}
    &= \big(V(X_{t+1}, \Db_{t+1}, \Da_{t+1}) - V(X_{t+1}, \Db_{t}, \Da_{t}))  \\
    \label{eq:two-set:thm:state-change-diff-term}
    &\mspace{20mu} + \big(V(X_{t+1}, \Db_{t}, \Da_{t}) -  V(X_{t}, \Db_{t}, \Da_{t})\big). 
\end{align}
We call the difference term in RHS of \eqref{eq:two-set:thm:set-change-diff-term} the \emph{set-update term}, and the difference term in \eqref{eq:two-set:thm:state-change-diff-term} the \emph{state-transition term}. 
We bound them separately. 

For the set-update term in \eqref{eq:two-set:thm:set-change-diff-term}, we can derive a bound fully in terms of the sizes of sets, using the Lipschitz continuity of $\hw(x, D)$ and $\hu(x, D)$ with respect to $D$ given in \eqref{eq:hw:lipschitz} of \Cref{lem:hw-hu-properties}:
\begin{align}
    \nonumber
    &\mspace{20mu} V(X_{t+1}, \Db_{t+1}, \Da_{t+1}) - V(X_{t+1}, \Db_{t}, \Da_{t})  \\
    \nonumber
    &= \hu(X_{t+1}, \Db_{t+1}) - \hu(X_{t+1}, \Db_{t}) + \hw(X_{t+1}, \Da_{t+1}) - \hw(X_{t+1}, \Da_{t}) \\
    \nonumber
    &\mspace{20mu} - \Ltemp (m(\Db_{t+1}) - m(\Db_{t})) \\
    \label{eq:two-set:thm:set-update-term-1}
    &\leq 2\lamu^{1/2} \big(m(\Db_{t+1}\backslash \Db_t) + m(\Db_t \backslash \Db_{t+1})\big) + 2\lamw^{1/2} \big(m(\Da_{t+1}\backslash \Da_t) + m(\Da_t \backslash \Da_{t+1})\big)\\
    &\mspace{20mu} - \Ltemp (m(\Db_{t+1}\backslash \Db_t) - m(\Db_{t}\backslash \Db_{t+1})).
\end{align}
We can bound $m(\Da_{t+1}\backslash \Da_t)$ and $m(\Da_t \backslash \Da_{t+1})$ in the last expression in terms of $\Db_{t+1}$ and $\Db_t$. 
By definition, $\Da_{t+1} \supseteq \Da_{t} \backslash \Db_{t+1}$, and $\Da_t$ is disjoint from $\Db_t$, so it is not hard to see that
\begin{equation}
    \label{eq:two-set:thm:set-update-term-2} 
    \Da_{t}\backslash \Da_{t+1} \subseteq \Db_{t+1}\backslash \Db_t,  \quad m(\Da_{t}\backslash \Da_{t+1}) \leq m(\Db_{t+1}\backslash \Db_t).
\end{equation}
Moreover, by definition $m(\Da_t) \geq \beta (1-m(\Db_{t})) - 1/N$ and  $m(\Da_{t+1}) \leq \beta (1-m(\Db_{t+1}))$, so 
\begin{align}
    \nonumber
    m(\Da_{t+1}\backslash \Da_{t}) &= m(\Da_{t}\backslash \Da_{t+1}) + m(\Da_{t+1}) - m(\Da_{t}) \\
    \label{eq:two-set:thm:set-update-term-3} 
    &\leq m(\Db_{t+1}\backslash \Db_t) - \beta(m(\Db_{t+1}) - m(\Db_t)) + \frac{1}{N}. 
\end{align}
Plugging \eqref{eq:two-set:thm:set-update-term-2} and \eqref{eq:two-set:thm:set-update-term-3} into \eqref{eq:two-set:thm:set-update-term-1}
and rearranging the terms, we get 
\begin{align}
     V(X_{t+1}, \Db_{t+1}, \Da_{t+1}) - V(X_{t+1}, \Db_{t}, \Da_{t}) 
     \label{eq:two-set:thm:set-update-term-4} 
    \leq 4\big(\lamu^{1/2} + \lamw^{1/2}\big) m(\Db_t \backslash \Db_{t+1}) + \frac{2\lamw^{1/2}}{N}, 
\end{align}
where the terms related to $m(\Db_{t+1}\backslash \Db_t)$ have been canceled out. 

For the state-transition term in \eqref{eq:two-set:thm:state-change-diff-term}, it follows directly from the definition of $V$ that 
\begin{align}
    \nonumber
     &\mspace{20mu} V(X_{t+1}, \Db_{t}, \Da_{t}) -  V(X_{t}, \Db_{t}, \Da_{t})\\
     \label{eq:two-set:thm:state-transition-term-1}
     &\leq \big(\hu(X_{t+1}, \Db_t) - \hu(X_{t}, \Db_t)\big)+ \big(\hw(X_{t+1}, \Da_t) - \hw(X_{t}, \Da_t)\big)
\end{align}

Combine the above calculations, we get the following key inequality:
\begin{align}
    \nonumber
    & V(\Fullstate_{t+1}) - V(\Fullstate_t) \\
    \label{eq:two-set:thm:v-diff-term-1}
    &\leq \big(\hu(X_{t+1}, \Db_t) - \hu(X_{t}, \Db_t)\big)+ \big(\hw(X_{t+1}, \Da_t) - \hw(X_{t}, \Da_t)\big)\\
    \label{eq:two-set:thm:v-diff-term-2}
    &\mspace{20mu} + 4\big(\lamu^{1/2} + \lamw^{1/2}\big) m(\Db_t \backslash \Db_{t+1}) + \frac{2\lamw^{1/2}}{N}. 
\end{align}

Next, we verify the drift condition \eqref{eq:two-set:drift-lem:drift-1} of \Cref{lem:drift-to-high-prob-bound}, restated below: when $V(\Fullstate_t)\geq \threshbar$, 
\begin{equation}
     \label{eq:two-set:thm:interm-goal-2}
      \Ebig{V(\Fullstate_{t+1}) - V(\Fullstate_t)\givenbig \Fullstate_t} \leq -\gamma + O\Big(\frac{1}{\sqrt{N}}\Big) \quad a.s.,
\end{equation}
for a constant $\gamma > 0$. 
We take expectation in the bound \eqref{eq:two-set:thm:v-diff-term-1}-\eqref{eq:two-set:thm:v-diff-term-2}. Because
$\Eplain{\hu(X_{t+1}, \Db_t)\givenplain \Fullstate_t} \leq \rhou \hu(X_{t}, \Db_t) + O(1/\sqrt{N})$, 
$\Eplain{\hw(X_{t+1}, \Da_t)\givenplain \Fullstate_t} \leq \rhow \hw(X_{t}, \Da_t) + O(1/\sqrt{N})$ (\Cref{lem:hw-hu-properties}), 
$\Eplain{m(\Db_t \backslash \Db_{t+1}) \givenplain \Fullstate_t} = O(1/\sqrt{N})$ (\Cref{lem:two-set:non-shrink}), we get 
\begin{equation}
    \label{eq:two-set:thm:full-drift-term-1}
     \Ebig{V(\Fullstate_{t+1}) -  V(\Fullstate_t) \givenbig \Fullstate_t} \leq - \big((1-\rhou) \hu(X_{t}, \Db_t) + (1-\rhow) \hw(X_{t}, \Da_t)\big) + O\Big(\frac{1}{\sqrt{N}}\Big). 
\end{equation}
To show \eqref{eq:two-set:thm:interm-goal-2}, it remains to lower bound the term $(1-\rhou) \hu(X_{t}, \Db_t) + (1-\rhow) \hw(X_{t}, \Da_t)$ when $V(\Fullstate_t) > \threshbar$. 
We invoke \Cref{lem:two-set:sufficient-coverage} to get
\begin{align}
    \nonumber
    V(\Fullstate_t) 
    &\leq  \hu(X_{t}, \Db_t) +  \hw(X_{t}, \Da_t) + L\bigg( \frac{\lamu^{1/2}}{\eta\beta}  \hw(X_t, \Da_t) + O\Big(\frac{1}{N}\Big)\bigg) \\
    \label{eq:v-h-bound}
    &= K_{Vh} \cdot \big((1-\rhou) \hu(X_{t}, \Db_t) + (1-\rhow) \hw(X_{t}, \Da_t)\big) + O\Big(\frac{1}{N}\Big), 
\end{align}
for some positive constant $K_{Vh}$. So $ \Ebig{V(\Fullstate_{t+1}) -  V(\Fullstate_t) \givenbig \Fullstate_t} \leq - \threshbar / K_{Vh} + O(1/\sqrt{N})$ when $V(\Fullstate_t) > \threshbar$, which implies \eqref{eq:two-set:thm:interm-goal-2}.

Next, we verify the condition \eqref{eq:two-set:drift-lem:jump-1} of \Cref{lem:drift-to-high-prob-bound}, restated below: when $V(\Fullstate_t) > \threshbar$, 
\begin{equation}
     \Ebig{(V(\Fullstate_{t+1}) - V(\Fullstate_t))^+  \givenbig \Fullstate_t} = O\Big(\frac{1}{\sqrt{N}}\Big) \quad a.s.
\end{equation}
Again, using the bound in \eqref{eq:two-set:thm:v-diff-term-1}-\eqref{eq:two-set:thm:v-diff-term-2}, we have
\begin{align}
    \nonumber
    & \Ebig{(V(\Fullstate_{t+1}) - V(\Fullstate_t))^+ \givenbig \Fullstate_t}\\
    \nonumber
    &\leq \Ebig{\big(\hu(X_{t+1}, \Db_t) - \hu(X_{t}, \Db_t)\big)^+ \givenbig \Fullstate_t} + \Ebig{\big(\hw(X_{t+1}, \Da_t) - \hw(X_{t}, \Da_t)\big)^+ \givenbig \Fullstate_t} \\
    \nonumber
    &\mspace{20mu} + 4\big(\lamu^{1/2} + \lamw^{1/2}\big) \Ebig{m(\Db_t \backslash \Db_{t+1})\givenbig \Fullstate_t} + \frac{2\lamw^{1/2}}{N}. 
\end{align}
Each term on the RHS above is $O(1/\sqrt{N})$: In particular, by \Cref{lem:two-set:non-shrink}. $\E{m(\Db_t \backslash \Db_{t+1}) \givenbig \Fullstate_t} = O(m(\Db_{t})\exp(-CN m(\Db_{t})^2)) = O(1/\sqrt{N})$. 

Finally, we show that when $V(\Fullstate_t) \leq \threshbar$, 
\begin{equation}
     \label{eq:two-set:thm:interm-goal-4}
    \Probbig{V(\Fullstate_{t+1}) > \rhoFinal V(\Fullstate_t) + r \givenbig \Fullstate_t} = O(\exp(-CN)) \quad a.s., 
\end{equation}
where $r = (1-\rhoFinal)\threshbar / 2$, $\rhoFinal$ is a constant in $(0,1)$ and $C$ is a positive constant. It is not hard to see that \eqref{eq:two-set:thm:interm-goal-4} implies the conditions \eqref{eq:two-set:drift-lem:jump-2} and \eqref{eq:two-set:drift-lem:jump-3} of \Cref{lem:drift-to-high-prob-bound}.  
We rearrange the bound in \eqref{eq:two-set:thm:v-diff-term-1}-\eqref{eq:two-set:thm:v-diff-term-2} and apply \eqref{eq:v-h-bound} to get 
\begin{align*}
    \nonumber
    & V(\Fullstate_{t+1}) - \big(1-\frac{1}{K_{Vh}}\big) V(\Fullstate_t) \\
    &\leq \big(\hu(X_{t+1}, \Db_t) - \rhou \hu(X_{t}, \Db_t)\big)+ \big(\hw(X_{t+1}, \Da_t) - \rhow\hw(X_{t}, \Da_t)\big)\\
    &\mspace{20mu} + 4\big(\lamu^{1/2} + \lamw^{1/2}\big) m(\Db_t \backslash \Db_{t+1}) + O\Big(\frac{1}{N}\Big). 
\end{align*}
For $N$ larger than a certain constant threshold, the $O(1/N)$ term in the last expression is less than $r/4$. 
Applying the union bound and \Cref{lem:hw-hu-properties}, we get
\begin{align*}
    &\mspace{20mu} \ProbBig{V(\Fullstate_{t+1}) > \big(1-\frac{1}{K_{Vh}}\big) V(\Fullstate_t) + r\givenBig \Fullstate_t} \\
    &\leq \Probbig{\hu(X_{t+1}, \Db_t) > \rhou \hu(X_{t}, \Db_t) + r/4 \givenbig \Fullstate_t} \\
    &\mspace{20mu} + \Probbig{\hw(X_{t+1}, \Da_t) > \rhow \hw(X_{t}, \Da_t) + r/4 \givenbig \Fullstate_t} \\
    &\mspace{20mu} + \Probbig{4\big(\lamu^{1/2} + \lamw^{1/2}\big) m(\Db_t \backslash \Db_{t+1})  > r/4 \givenbig \Fullstate_t} + \indibrac{O(1/N) > r/4} \\
    &\leq O(\exp(-CN)) + O(\exp(-CN m(\Db_t))). 
\end{align*}
By \Cref{lem:two-set:sufficient-coverage}, $\hu(X_t, \Db_t) \leq V(\Fullstate_t) < \threshbar = \eta-\errfe$ implies $m(\Db_t) =1$, so $O(\exp(-CN m(\Db_t)))$ in the last term equals $O(\exp(-CN))$. 
Therefore,  \eqref{eq:two-set:thm:interm-goal-4} holds with $\rhoFinal = 1 - 1/(K_{Vh})$. 
\end{proof}

\subsection{Proofs of the core lemmas}
\label{sec:two-set:pf-major-lemmas}

\subroutineconformity*

\begin{proof}
    We first argue that the arms in $\Db_t$ can follow the Optimal Local Control. 
    By the definition of $\Db_t$, if $\Db_t \neq \emptyset$, then $\slk(X_t, \Db_t) \geq 0$. 
    Because $(\eta, \errfe)$ is feasibility ensuring, \eqref{eq:lp-priority-condition-1} holds for the arms in $\Db_t$, i.e., 
    \begin{align}
        \label{eq:two-set:db-condition-1}
        \sum_{s\neq \sneu} \pibs(1|s) X_t(\Db_t, s) &\leq  \alpha m(\Db_t) - \frac{|\sempty|+1}{N}, \\
        \label{eq:two-set:db-condition-2}
         \sum_{s\neq \sneu} \pibs(0|s) X_t(\Db_t, s) &\leq (1-\alpha)m(\Db_t) - \frac{|\sempty|+1}{N}. 
    \end{align}
    which allows these arms to follow the Optimal Local Control. 
    
    It remains to show that $A_t(i)$ for $i\notin \Db_t \cup \Da_t$ can be chosen such that $\sumN A_t(i) = \btotal$, if the arms in $\Db_t$ and $\Da_t$ follow the Optimal Local Control and the Unconstrained Optimal Control, respectively.

    By the definition of the Optimal Local Control, $\sum_{i\in \Db_t} A_t(i) \in \{\lfloor\alpha |\Db_t|\rfloor, \lceil\alpha |\Db_t|\rceil\}$. 
    Because $A_t(i)\in \{0,1\}$ for $i\in \Da_t$, we have
    \[
        \floor{\alpha |\Db_t|} \leq \sum_{i\in \Db_t\cup\Da_t} A_t(i)\leq \ceil{\alpha |\Db_t|} + |\Da_t|.
    \]
    Because we can choose $\sum_{i\notin \Db_t\cup\Da_t} A_t(i)$ to be anything integer between $0$ and $N - |\Db_t| - |\Da_t|$ to satisfy the budget constraint, it suffices to show that
    \begin{align}
        \label{eq:pf-subroutine-conf:interm-goal-1}
        \floor{\alpha |\Db_t|} + (N - |\Db_t| - |\Da_t|) &\geq \btotal    \\
        \label{eq:pf-subroutine-conf:interm-goal-2}
        \ceil{\alpha |\Db_t|} + |\Da_t| &\leq \btotal. 
    \end{align}
    To show \eqref{eq:pf-subroutine-conf:interm-goal-1}, observe that $|\Da_t| = \big\lfloor \beta (N -|\Db_{t}|)\big\rfloor \leq (1-\alpha)(N - |\Db_t|)$, so 
    \begin{equation*}
        \alpha |\Db_t| + (N - |\Db_t| - |\Da_t|)  \geq \alpha N.
    \end{equation*}
    Rounding both sides down to the nearest integers, and utilizing the fact that $N - |\Db_t| - |\Da_t|$ and $\alpha N$ are integers, we get \eqref{eq:pf-subroutine-conf:interm-goal-1}. Similarly, to show \eqref{eq:pf-subroutine-conf:interm-goal-2}, observe that $|\Da_t| \leq \alpha (N - |\Db_t|)$, so 
    \[
        \alpha |\Db_t| + |\Da_t| \leq \alpha N.
    \]
    Rounding both sides up to the nearest integers, we get  \eqref{eq:pf-subroutine-conf:interm-goal-2}. 
\end{proof}

\almostnonshrinking*

\begin{proof}
    Without loss of generality, we assume that $\Db_t \neq \emptyset$, because otherwise the two inequalities stated in the lemma hold automatically. 
    By the definition of $\Db_{t+1}$ in \Cref{alg:two-set}, $\Db_{t+1} \supsetneq \Db_t$ only when $\slk(X_{t+1}, \Db_t) < 0$; when this happens, $m(\Db_t \backslash \Db_{t+1}) \leq m(\Db_t)$. Therefore,     
    \begin{align}
        \nonumber
        \Ebig{(m(\Db_t\backslash \Db_{t+1}))^+ \givenbig \Fullstate_t} 
        &\leq m(\Db_t) \Probbig{\slkb(X_{t+1}, \Db_t) < 0 \givenbig \Fullstate_t} \\ 
        \nonumber
        \Probbig{m(\Db_{t}\backslash \Db_{t+1}) > 0 \givenbig \Fullstate_t}
        &= \Probbig{\slkb(X_{t+1}, \Db_t) < 0 \givenbig \Fullstate_t}.
    \end{align}
    Therefore, it remains to bound $\Prob{\slkb(X_{t+1}, \Db_t) < 0 \givenbig \Fullstate_t}$. Recall that $\slkb(x, D) = \eta m(D) -   \normplain{x(D)- m(D) \statdist}_\umat - \errfe = \eta m(D) - \hu(x, D) - \errfe$, so 
    \begin{align}
        \nonumber
        \Prob{\slkb(X_{t+1}, \Db_t) < 0 \givenbig \Fullstate_t}
        &= \Prob{\hu(X_{t+1},\Db_t) > \eta m(\Db_t) -\errfe \givenbig \Fullstate_t},
    \end{align}
    When $\Db_t \neq \emptyset$, we have
    $\slkb(X_t, \Db_t) \geq 0$, implying $\hu(X_t, \Db_t) \leq \eta m(\Db_t) - \errfe$. 
    Therefore, 
    \begin{align}
        \nonumber 
        &\mspace{23mu} \Prob{\hu(X_{t+1}, \Db_t) \geq \eta m(\Db_t)} \\
        \nonumber
        &\leq \Prob{\hu(X_{t+1}, \Db_t) - \rhou \hu(X_t, \Db_t) \geq (1-\rhou) \eta m(\Db_t) - (1-\rhou) \errfe} \\
        \label{eq:pf-almost-nonshrink:interm-1}
        &= O\big(\exp\big(-C' N (1-\rhou)^2 (\eta m(\Db_t)-\errfe)^2\big)\big),
    \end{align}
    where the last inequality comes from \Cref{lem:hw-hu-properties} and $C'$ is a positive constant. To bound the exponent in \eqref{eq:pf-almost-nonshrink:interm-1}, observe that $1-\rhou=\Theta(1), \eta = \Theta(1)$ and $\errfe = O(1/N)$, so  
    \begin{align*}
        C' N (1-\rhou)^2(\eta m(\Db_t)-\errfe)^2 
        &\geq C' N (1-\rhou)^2 \Big(\eta^2 m(\Db_t)^2 - 2\eta m(\Db_t) \errfe \Big) \\
        &= C' N (1-\rhou)^2 \eta^2  m(\Db_t)^2 - O(1). 
    \end{align*}
    Therefore,  $\Prob{\slkb(X_{t+1}, \Db_t) < 0} = O(\exp(-C N m(\Db_t)^2))$ with $C = C'(1-\rhou)^2 \eta^2$. 
\end{proof}

\sufficientcoverage*

\begin{proof}
    Observe that \eqref{eq:two-set:sufficient-coverage:goal-2} follows directly from the definition of the policy, so we only need to prove \eqref{eq:two-set:sufficient-coverage:goal-1}. 
    We claim that either $m(\Da_t) \leq \errtol$, or 
    \begin{equation}
        \label{eq:two-set:sufficient-coverage:interm-goal-1}
        \lamu^{1/2} \hw(X_t, \Da_t) >  \eta m(\Da_t) - \errtol - \errfe. 
    \end{equation}
    We first show \eqref{eq:two-set:sufficient-coverage:goal-1} assuming this claim. 
    Recall that $m(\Da_{t})= \big\lfloor \beta (N -|\Db_{t}|)\big\rfloor \big/ N$. If $m(\Da_t) \leq \errtol$, 
    \[
        \beta(1-m(\Db_t)) - \frac{1}{N} \leq m(\Da_t) \leq \errtol,
    \]
    which implies \eqref{eq:two-set:sufficient-coverage:goal-1} by the non-negativity of $\hw(X_t, \Da_t)$. 
    If $m(\Da_t) > \errtol$, \eqref{eq:two-set:sufficient-coverage:interm-goal-1} holds. Because $m(\Da_t) = \floor{\beta (N - |\Db_t|)} /N \geq \big(\beta (N - |\Db_t|) - 1\big) / N$, we have
    \begin{align*}
        \lamu^{1/2} \hw(X_t, \Da_t) &>  \eta \big(\beta(N -|\Db_{t}|)-1\big)\frac{1}{N} -  \errtol - \errfe \\
        &=  \eta \beta(1-m(\Db_t)) -  \frac{\eta}{N} - \errtol - \errfe,
    \end{align*}
    which implies \eqref{eq:two-set:sufficient-coverage:goal-1} after rearranging the terms. 

    Now we prove the claim by contradiction. Suppose, at a certain time $t$, we have $m(\Da_t) > \errtol$ and $ \lamu^{1/2}\hw(X_t, \Da_t) \leq \eta m(\Da_t) - \errtol - \errfe$. Because $\norm{v}_\umat \leq \lamu^{1/2}\norm{v}_2 \leq \lamu^{1/2}\norm{v}_\wmat$ for any $v\in \R^{|\sspa|}$, 
    \begin{align}
        \nonumber
        \normbig{X_t(\Da_t) - m(\Da_t) \statdist}_\umat 
        &\leq \lamu^{1/2}\normbig{X_t(\Da_t) - m(\Da_t) \statdist}_\wmat
        \leq  \eta m(\Da_t) - \errtol - \errfe. 
    \end{align}
    Combined with the triangular inequality and the definition of $\Db_t$, we have
    \begin{align*}
        \normbig{X_t(\Db_t\cup\Da_t) - m(\Db_t\cup\Da_t) \statdist}_\umat &\leq \normbig{X_t(\Db_t) - m(\Db_t) \statdist}_\umat  + \normbig{X_t(\Da_t) - m(\Da_t) \statdist}_\umat \\
        &\leq \eta m(\Db_t) + \eta m(\Da_t) - \errtol - \errfe.
    \end{align*}
    Consequently, $\Db_t\cup\Da_t$ is a superset of $\Db_t$ such that $\slkb(X_t, \Db_t\cup\Da_t) \geq \errtol$ and $m(\Db_t \cup \Da_t) = m(\Db_t) + m(\Da_t) > m(\Db_t) + \errtol$, contradicting the $\errtol$-maximality of $\Db_t$. 
    We have thus proved the claim that implies  \eqref{eq:two-set:sufficient-coverage:goal-1}. 
\end{proof}

\subsection{Proofs of the technical lemmas}
\label{sec:two-set:pf-minor-lemmas}

\optgapbyprob*

\begin{proof}
    We will first bound the norm of the long-run expectation of the $X_t([N]) - \statdist$ using $\Probbig{\Db_\infty \neq [N]}$, and then derive the optimality gap bound. 
    
    Let $Q = (I - \Phi)^{-1}$, which is well-defined because all eigenvalues of $\Phi$ have moduli strictly less than $1$ (\Cref{assump:local-stability}). 
    By direct calculation, we have that for any $v\in\Delta(\sspa)$,
    \begin{equation}\label{eq:phi-stein-equation}
        (v - \statdist) Q - (v - \statdist)\Phi Q = v - \statdist. 
    \end{equation}
    We define the positive constant $\lamq$ as
    \[
        \lamq = \max\Big\{\max_{v\in\Delta(\sspa)} \norm{(v-\statdist) Q}_1,  \max_{v\in\Delta(\sspa)} \norm{(v-\statdist) \Phi Q}_1\Big\},
    \]
    which must be finite because $\Delta(\sspa)$ is a compact set.

    Letting $v = X_t([N])$ and taking the expectation (for random vectors) in \eqref{eq:phi-stein-equation}, we get
    \begin{align}
        \Ebig{X_t([N]) - \statdist}
        \nonumber
        &= \Ebig{(X_t([N])-\statdist)Q - (X_t([N])-\statdist)\Phi Q} \\
        \nonumber
        &= \Ebig{(X_{t+1}([N])-\statdist)Q - (X_t([N])-\statdist)\Phi Q} \\
        \nonumber
        &\mspace{20mu} - \Ebig{(X_{t+1}([N])-\statdist)Q - (X_t([N])-\statdist)Q}.
    \end{align}
    Using the fact that 
    \[
         \lim_{T\to\infty} \frac{1}{T} \sum_{t=0}^{T-1} \Ebig{(X_{t+1}([N])-\statdist)Q - (X_t([N])-\statdist)Q} = 0,  
    \]
    we get 
    \begin{equation}\label{eq:two-set:state-diff-steins-diff}
        \lim_{T\to\infty} \frac{1}{T} \sum_{t=0}^{T-1} \Ebig{X_t([N]) - \statdist} = \lim_{T\to\infty} \frac{1}{T} \sum_{t=0}^{T-1} \Ebig{(X_{t+1}([N])-\statdist)Q - (X_t([N])-\statdist)\Phi Q}.
    \end{equation} 
    Theerefore, it remains to bound $\Ebig{(X_{t+1}([N])-\statdist)Q - (X_t([N])-\statdist)\Phi Q}$ for every time step $t$. 
    
    Let $\Fullstate_t = (X_t, \Db_t, \Da_t)$. 
    We will bound $\Ebig{(X_{t+1}([N])-\statdist)Q - (X_t([N]-\statdist)\Phi Q \givenbig \Fullstate_t}$ by decomposing it into two terms: 
    \begin{align}
        \nonumber
        &\mspace{20mu} \Ebig{(X_{t+1}([N])-\statdist)Q - (X_t([N])-\statdist)\Phi Q \givenbig \Fullstate_t} \\
        \label{eq:steins-difference-lp-is-N}
        &= \Ebig{(X_{t+1}([N])-\statdist)Q - (X_t([N])-\statdist)\Phi Q  \givenbig \Fullstate_t} \indibrac{\Db_t = [N]}\\
        \label{eq:steins-difference-lp-not-N}
        &\mspace{20mu} + \Ebig{(X_{t+1}([N])-\statdist)Q - (X_t([N])-\statdist)\Phi Q  \givenbig \Fullstate_t} \indibrac{\Db_t \neq [N]}.
    \end{align}
    For the term in \eqref{eq:steins-difference-lp-is-N}, note that by \Cref{lem:lp-priority-and-phi}, $\Ebig{X_{t+1}([N]) -\statdist \givenbig \Fullstate_t} = (X_t([N])-\statdist)\Phi$ when $\Db_t = [N]$, so 
    \begin{equation*}
        \Ebig{(X_{t+1}([N])-\statdist)Q \givenbig \Fullstate_t}  \indibrac{\Db_t = [N]}  
        = (X_t([N])-\statdist)\Phi Q \indibrac{\Db_t = [N]}, 
    \end{equation*}
    which implies that the term in \eqref{eq:steins-difference-lp-is-N} is zero. For the term in \eqref{eq:steins-difference-lp-not-N}, its $L_1$ norm can be bounded as 
    \begin{align}
        \nonumber
         &\mspace{20mu} \norm{\Ebig{(X_{t+1}([N])-\statdist)Q - (X_t([N])-\statdist)\Phi Q  \givenbig \Fullstate_t}}_1 \indibrac{\Db_t \neq [N]} \\
         \nonumber
         &\leq  \Ebig{\norm{(X_{t+1}([N])-\statdist)Q - (X_t([N])-\statdist)\Phi Q}_1  \givenbig \Fullstate_t} \indibrac{\Db_t \neq [N]} \\
         \nonumber
         &\leq \Ebig{\norm{(X_{t+1}([N])-\statdist)Q}_1 + \norm{(X_t([N])-\statdist)\Phi Q}_1  \givenbig \Fullstate_t} \indibrac{\Db_t \neq [N]}  \\
         \label{eq:steins-difference-bound-by-lamq}
        &\leq 2\lamq \indibrac{\Db_t \neq [N]},
    \end{align}
    where \eqref{eq:steins-difference-bound-by-lamq} follows from the definition of $\lamq$ and the fact that $X_{t+1}([N]), X_t([N]) \in \Delta(\sspa)$. 
    Combining the calculations of the two terms in \eqref{eq:steins-difference-lp-is-N} and \eqref{eq:steins-difference-lp-not-N}, we have
    \begin{align}
        \nonumber
         &\mspace{20mu} \norm{\Ebig{(X_{t+1}([N])-\statdist)Q - (X_t([N])-\statdist)\Phi Q}}_1 \\ 
         \nonumber
         &=   \EBig{ \norm{\Ebig{(X_{t+1}([N])-\statdist)Q - (X_t([N])-\statdist)\Phi Q  \givenbig \Fullstate_t}}_1 \indibrac{\Db_t \neq [N]} }  \\
         \label{eq:two-set:steins-diff-expect-ell1-bound}
         &\leq 2\lamq \Prob{\Db_t \neq [N]}.
    \end{align}

    Plugging \eqref{eq:two-set:steins-diff-expect-ell1-bound} into \eqref{eq:two-set:state-diff-steins-diff}, we get 
    \begin{align}
        \nonumber
        \normbig{\Ebig{X_\infty([N]) - \statdist} }_1 
        &\leq \lim_{T\to\infty} \frac{1}{T} \sum_{t=0}^{T-1} \norm{\Ebig{(X_{t+1}([N])-\statdist)Q - (X_t([N])-\statdist)\Phi Q}} \\
         \label{eq:two-set:state-diff-expect-prob-bound}
        &\leq 2\lamq \Prob{\Db_\infty \neq [N]}.
    \end{align}
    where the expectations or probabilities of variables or events with subscript $\infty$ are shorthands for the long-run time averages. 

    Now we start to bound the optimality gap. We first rewrite the long-run average reward $\rsysn$ in terms of an expectation with respect to the states:  
    \begin{align}
        \nonumber
        \rsysn 
        &= \Ebig{r(\veS_\infty, \veA_\infty)} 
        =  \E{r^\pi(X_\infty, \Db_\infty, \Da_\infty)}, 
    \end{align}
    where $r^\pi(x, \Db, \Da)$ is the expected reward at a given state under the two-set policy: 
    \[
        r^\pi(x, \Db, \Da) \triangleq \Ebig{r(\veS_t, \veA_t) \givenbig X_t = x, \Db_t = \Db, \Da_t = \Da}. 
    \]
    Note that the RHS of $r^\pi(x, \Db, \Da)$'s definition does not depend on $t$ because the two-set policy is stationary and Markovian policy with state $(X_t, \Db_t, \Da_t)$. 

    Next, we examine the value of $r^\pi(x, \Da, \Db)$ when $\Db = [N]$ and $\Da = \emptyset$. Let $Y_t(s,a)$ be the random variable representing the fraction of arms in state $s$ taking action $a$ at time $t$. Then
    \begin{align*}
        r^\pi(X_t, [N], \emptyset) 
        &= \sumsa r(s,a) \Ebig{Y_t(s,a) \givenbig X_t, \Db_t = [N], \Da_t = \emptyset}. 
    \end{align*}
    By the definition of Optimal Local Control, 
    \begin{align}
        \Ebig{Y_t(s,a) \givenbig X_t, \Db_t = [N], \Da_t = \emptyset} &= \pibs(a|s) X_t([N], s) \quad s\neq \sneu \\ 
        \Ebig{Y_t(\sneu,1) \givenbig X_t, \Db_t = [N], \Da_t = \emptyset, } &= \alpha - \sum_{s\neq \sneu} \pibs(1|s) X_t([N], s)  \\
        \Ebig{Y_t(\sneu,0) \givenbig X_t, \Db_t = [N], \Da_t = \emptyset, } &= 1 - \alpha - \sum_{s\neq \sneu} \pibs(0|s) X_t([N], s). 
    \end{align} 
    Therefore,
    \begin{equation}\label{eq:lp-priority-inst-reward}
        \begin{aligned}
        r^\pi(X_t, [N], \emptyset) 
        &= \sum_{s\neq \sneu, a\in\aspa} \big(r(s,a) - r(\sneu,a)\big) \pibs(a|s) X_t([N],s) \\
        &\mspace{20mu} + \alpha r(\sneu,1) + (1-\alpha) r(\sneu,0),
        \end{aligned}
    \end{equation}
    Let $\hat{r}(v)$ be the affine function on $\simplex(\sspa)$ such that $\hat{r}(X_t([N])) = r^\pi(X_t, \emptyset, [N])$: 
    \[
        \hat{r}(v) \triangleq \sum_{s\neq \sneu, a\in\aspa} \big(r(s,a) - r(\sneu,a)\big) \pibs(a|s) v(s) + \alpha r(\sneu,1) + (1-\alpha) r(\sneu,0). 
    \]
    It is easy to see that $\hat{r}(v)$ is $4\rmax$-Lipschitz continuous with respect to $L_1$-norm, and $\hat{r}(\statdist) = \rrel$.

    With the above definitions, we can lower bound $\rsysn$ as
    \begin{align*}
        \rsysn 
        &= \Ebig{r^\pi(X_\infty, \Db_\infty, \Da_\infty)}   \\
        &\geq \Ebig{ \hat{r}(X_\infty([N])) \indibrac{\Db_\infty = [N]}} - \rmax \Probbig{\Db_\infty \neq [N]} \\
        &= \Ebig{ \hat{r}(X_\infty([N]))}  - 2\rmax \Probbig{\Db_\infty \neq [N]} \\
        &= \hat{r}\big(\Ebig{X_\infty([N])}\big) - 2\rmax \Probbig{\Db_\infty \neq [N]} \\
        &= \hat{r}(\statdist) + \hat{r}\big(\Ebig{X_\infty([N])}\big) - \hat{r}\big(\statdist\big)   - 2\rmax \Probbig{\Db_\infty \neq [N]} \\
        &\geq  \rrel - 4\rmax \norm{\Ebig{X_\infty([N]) - \statdist}}_1 - 2\rmax \Probbig{\Db_\infty \neq [N]} \\
        &\geq \rrel - 2\rmax(4 \lamq + 1) \Probbig{\Db_t \neq [N]}. 
    \end{align*}
    Because $2\rmax(4 \lamq + 1) = O(1)$, we have $\rrel - \rsysn \leq O(1) \cdot  \Probbig{\Db_t \neq [N]}$. 
\end{proof}

\drifttohighprob*

\begin{proof}
    Let $f(a) = (a-\threshbar/2)^+$. 
    To obtain an bound on $\Prob{V(\Fullstate_\infty) > \threshbar}$, we will calculate $\E{f(V(\Fullstate_{t+1}))} - \E{f(V(\Fullstate_{t}))}$ for each time step $t$ and invoking the fact that 
    \begin{equation}\label{eq:bar}
        \lim_{T\to\infty} \frac{1}{T} \sum_{t=0}^{T-1} \big(\E{f(V(\Fullstate_{t+1}))} - \E{f(V(\Fullstate_{t}))}\big) = 0.
    \end{equation}

    We will bound $\E{f(V(\Fullstate_{t+1}))} - \E{f(V(\Fullstate_{t}))}$ in three cases,  $V(\Fullstate_{t}) > \threshbar$, $\threshbar/2 < V(\Fullstate_{t}) \leq \threshbar$, and  $V(\Fullstate_{t}) \leq \threshbar/2$. 

    First, if $V(\Fullstate_{t}) > \threshbar$, 
    \begin{align}
        \nonumber
        &\mspace{20mu} \Ebig{f(V(\Fullstate_{t+1})) \givenbig \Fullstate_t} - f(V(\Fullstate_t)) \\
        \nonumber
        &= \Ebig{\big(V(\Fullstate_{t+1}) - V(\Fullstate_t)\big)\indibrac{V(\Fullstate_{t+1}) > V(\Fullstate_t)} \givenbig \Fullstate_t}  \\
        \nonumber
        &\mspace{20mu} + \Ebig{\big(V(\Fullstate_{t+1}) - V(\Fullstate_t)\big) \indibrac{\threshbar/2 < V(\Fullstate_{t+1}) \leq V(\Fullstate_t)} \givenbig \Fullstate_t}  \\
        \nonumber
        &\mspace{20mu} + \Ebig{\big(\threshbar/2 - V(\Fullstate_t)\big) \indibrac{V(\Fullstate_{t+1}) \leq \threshbar/2} \givenbig \Fullstate_t}  \\
        \nonumber
        &\leq \Ebig{\big(V(\Fullstate_{t+1}) - V(\Fullstate_t)\big)\indibrac{V(\Fullstate_{t+1}) > V(\Fullstate_t)} \givenbig \Fullstate_t}  \\
        \label{eq:two-set:drift-lem:interm-1}
        &\mspace{20mu} + \frac{1}{2} \Ebig{\big(V(\Fullstate_{t+1}) - V(\Fullstate_t)\big) \indibrac{ V(\Fullstate_{t+1}) \leq V(\Fullstate_t)} \givenbig \Fullstate_t}  \\
        &= \frac{1}{2}\Ebig{\big(V(\Fullstate_{t+1}) - V(\Fullstate_t)\big)^+ \givenbig \Fullstate_t}  + \frac{1}{2} \Ebig{V(\Fullstate_{t+1}) - V(\Fullstate_t) \givenbig \Fullstate_t} \\
        \label{eq:two-set:drift-lem:interm-2}
        &\leq -\frac{1}{2}\gamma  + O(1/\sqrt{N}).
    \end{align}
    where the inequality in \eqref{eq:two-set:drift-lem:interm-1} follows uses the fact that when $V(\Fullstate_{t}) \geq \threshbar$, $V(\Fullstate_{t+1}) \geq 0$, 
    \[
        \frac{1}{2}\big(V(\Fullstate_{t+1}) - V(\Fullstate_t)\big) - (\threshbar/2 - V(\Fullstate_t)) = \frac{1}{2}\big(V(\Fullstate_{t+1}) + V(\Fullstate_t)\big) - \threshbar/2 \geq 0;
    \]
    the inequality in \eqref{eq:two-set:drift-lem:interm-2} applies the properties of $V$ assumed in \eqref{eq:two-set:drift-lem:drift-1} and \eqref{eq:two-set:drift-lem:jump-1}. 

    Next, consider the case of $\threshbar/2 < V(\Fullstate_{t}) \leq \threshbar$.
    \begin{align}
        \nonumber
         &\mspace{20mu} \Ebig{f(V(\Fullstate_{t+1})) \givenbig \Fullstate_t} - f(V(\Fullstate_t)) \\ 
         \nonumber
        &\leq  \Ebig{\big(f(V(\Fullstate_{t+1})) - f(V(\Fullstate_t))\big) \indibrac{V(\Fullstate_{t+1}) > V(\Fullstate_t)} \givenbig \Fullstate_t}  \\
        \nonumber
        &=  \Ebig{\big(V(\Fullstate_{t+1}) - V(\Fullstate_t)\big)^+ \givenbig \Fullstate_t} \\
        \label{eq:two-set:drift-lem:interm-3}
        &= O(\exp(-C N)),
    \end{align}
    where \eqref{eq:two-set:drift-lem:interm-3} follows from \eqref{eq:two-set:drift-lem:jump-2}. 

    Finally, consider the case of $V(\Fullstate_{t}) \leq \threshbar/2$.
    \begin{align}
        \nonumber
         &\mspace{20mu} \Ebig{f(V(\Fullstate_{t+1})) \givenbig \Fullstate_t} - f(V(\Fullstate_t)) \\ 
         \nonumber
         &= \Ebig{(V(\Fullstate_{t+1})-\threshbar/2)^+ \givenbig \Fullstate_t} \\
         \nonumber
         &\leq V_{\max} \Probbig{V(\Fullstate_{t+1})-\threshbar/2 \givenbig \Fullstate_t} \\
         \label{eq:two-set:drift-lem:interm-4}
         &= O(\exp(-C' N)),
    \end{align}
    where \eqref{eq:two-set:drift-lem:interm-4} follows from \eqref{eq:two-set:drift-lem:jump-3}. 

    Combining the above calculations, 
    \begin{equation*}
        \Ebig{f(V(\Fullstate_{t+1}))} - \E{f(V(\Fullstate_t))} 
        \leq \Big(-\frac{1}{2}\gamma + O(1/\sqrt{N}) \Big) \Probbig{V(\Fullstate_{t}) > \threshbar} + O(\exp(-\min(C, C') N)). 
    \end{equation*}
    Rearranging the terms, we get
    \begin{equation}
        \label{eq:two-set:drift-lem:interm-5}
        \Probbig{V(\Fullstate_{t}) > \threshbar} \leq \frac{1}{\gamma / 2 - O(1/\sqrt{N})}\Big( \Ebig{f(V(\Fullstate_{t+1}))} - \E{f(V(\Fullstate_t))} + O(\exp(-\min(C, C') N))\Big),
    \end{equation}
    for any time step $t$. 
    Plugging the bound \eqref{eq:two-set:drift-lem:interm-5} into \eqref{eq:bar}, we get 
    \[
        \Probbig{V(\Fullstate_{\infty}) > \threshbar} = O(\exp(-\min(C, C') N)).
    \]
\end{proof}

\hwhuproperties*

\begin{proof}
    \textbf{Proving \eqref{eq:hw:drift} and \eqref{eq:hw:high-prob}}
    By the pseudo-contraction properties of $\wmat$-weighted norms given in \Cref{lem:one-step-contraction-W-U}, 
    \begin{align*}
        \rhow \hw(X_t, D)  
        &=  \rhow \norm{X_{t}(D) - m(D)\statdist}_\wmat \\
        &\geq \norm{(X_{t}(D) - m(D)\statdist)P_\pibs }_\wmat \\
        & = \norm{X_{t}(D)P_\pibs - m(D)\statdist}_\wmat.
    \end{align*}
    Consequently,
    \begin{align*}
        \hw(X_{t+1}', D) - \rhow \hw(X_t, D)  
        &\leq  \norm{X_{t+1}'(D)- m(D)\statdist}_\wmat -  \norm{X_{t}(D)P_\pibs - m(D)\statdist}_\wmat \\
        &\leq  \norm{X_{t+1}'(D)- X_t(D) P_{\pibs}}_\wmat \\
        &\leq \lamw^{1/2} \norm{X_{t+1}'(D)- X_t(D) P_{\pibs}}_2 \\
        &\leq \lamw^{1/2} \norm{X_{t+1}'(D)- X_t(D) P_{\pibs}}_1.
    \end{align*}
    Therefore, it suffices to show that 
    \begin{align}
        \label{eq:hw:interm-goal-1}
        \Ebig{ \norm{X_{t+1}'(D)- X_t(D) P_{\pibs}}_1 \givenbig X_t} 
        &= O(1/\sqrt{N}) \quad a.s.  \\
        \label{eq:hw:interm-goal-2}
        \Prob{\norm{X_{t+1}'(D)- X_t(D) P_{\pibs}}_1 > r \givenbig X_t} 
        &= O(\exp(-C\lamw N r^2))   \quad a.s. \quad \forall r\geq 0.
    \end{align}

    To show \eqref{eq:hw:interm-goal-1} and \eqref{eq:hw:interm-goal-2}, we examine the distribution of $\normbig{X_{t+1}'(D)- X_t(D) P_{\pibs}}_1$ given $X_t$. 
    Note that by \Cref{lem:pibs-transition}, $\Eplain{X_{t+1}'(D) \givenplain X_t} = X_t(D) P_{\pibs}$. 
    \begin{align}
        \nonumber
         \normbig{X_{t+1}'(D)- X_t(D) P_{\pibs}}_1
         &= \normbig{X_{t+1}'(D) - \E{X_{t+1}'(D) \givenplain X_t} }_1   \\
         \label{eq:hw:interm-3}
         &\leq \normbig{X_{t+1}'(D) - \E{X_{t+1}'(D) \givenplain X_t, (A_t(i))_{i\in D}} }_1 \\
         \label{eq:hw:interm-4}
         &\mspace{20mu}  +  \normbig{ \E{X_{t+1}'(D) \givenplain X_t, (A_t(i))_{i\in D}} - \E{X_{t+1}'(D) \givenplain X_t} }_1.
    \end{align}
    
    We first derive an expectation bound and a high-probability bound for \eqref{eq:hw:interm-3}. 
    Note that for each $s\in\sspa$,
    \begin{align*}
        X_{t+1}'(D, s) = \frac{1}{N} \sum_{i\in D} \indibrac{S_{t+1}'(i) = s},
    \end{align*}
    where $S_{t+1}'(i)$ denote the state of arm $i$ at time $t+1$ if the arms in $D$ follow the Unconstrained Optimal Control at time $t$.
    Given $X_t$ and $(A_t(i))_{i\in D}$, $\indibrac{S_{t+1}'(i) = s}$ for $i\in D$ are independent Bernoulli random variables, each with mean $P(S_t(i), A_t(i), s)$. 
    By Cauchy-Schwartz, 
    \begin{align}
        \nonumber
        &\mspace{20mu} \EBig{\absBig{ X_{t+1}'(D, s) - \Eplain{X_{t+1}'(D, s) \givenplain X_t, (A_t(i))_{i\in D}}} \givenBig X_t, (A_t(i))_{i\in D}}  \\
        \nonumber
        &\leq  \Var{X_{t+1}'(D, s) \givenBig X_t, (A_t(i))_{i\in D}}^{1/2} \\
        \nonumber
        &= \frac{1}{N} \left(\sum_{i\in D} \Var{\indibrac{S_{t+1}'(i) = s}, \givenBig X_t, (A_t(i))_{i\in D}} \right)^2 \\
        \nonumber
        &\leq \frac{1}{\sqrt{N}}. 
    \end{align}
    By Hoeffding inequality, for all $r\geq 0$, 
    \begin{align}
        \nonumber
        \ProbBig{\absBig{ X_{t+1}'(D, s) - \Eplain{X_{t+1}'(D, s) \givenplain X_t, (A_t(i))_{i\in D}}} > r \givenBig X_t, (A_t(i))_{i\in D}} \leq 2\exp\big(-2N r^2\big). 
    \end{align}
    Therefore, summing over all $s\in\sspa$ and taking expectations over $(A_t(i))_{i\in D}$, we get
    \begin{align}
        \label{eq:hw:interm-5}
         \EBig{\normBig{ X_{t+1}'(D, s) - \Eplain{X_{t+1}'(D, s) \givenplain X_t, (A_t(i))_{i\in D}}}_1 \givenBig X_t} &\leq \frac{|\sspa|}{\sqrt{N}} \\
         \label{eq:hw:interm-6}
         \ProbBig{\normBig{ X_{t+1}'(D, s) - \Eplain{X_{t+1}'(D, s) \givenplain X_t, (A_t(i))_{i\in D}}}_1 > r \givenBig X_t} &\leq 2|\sspa|\exp\Big(-\frac{2N r^2}{|\sspa|^2}\Big),
    \end{align}
    where \eqref{eq:hw:interm-6} comes from applying a union bound. 

    Next, we bound the term in \eqref{eq:hw:interm-4}. 
    For any $s\in\sspa$, we have
    \begin{align*}
        \E{X_{t+1}'(D, s) \givenplain X_t, (A_t(i))_{i\in D}} &= \sum_{s'\in\sspa, a\in\aspa} \Gamma_t(s',a) P(s', a, s)  \\
        \E{X_{t+1}'(D, s) \givenplain X_t} 
        &= \sum_{s'\in\sspa, a\in\aspa} \pibs(a|s) X_t(D, s') P(s', a, s), 
    \end{align*}
    where $N\cdot\Gamma_t(s,a)$ is the number of arms in $D$ with state $s$ taking action $a$ at time $t$. By the definition of the Unconstrained Optimal Control (\Cref{alg:follow-pibs}), 
    \[
      \Gamma_t(s',1) \in \{\ceil{N\pibs(1|s') X_t(D, s')}/N, \floor{N\pibs(1|s') X_t(D, s')}/N \},
    \]
    and $\Gamma_t(s,0) = X_t(D, s) - \Gamma_t(s,1)$. 
    Consequently, 
    \begin{align}
        \nonumber
        &\mspace{20mu} \absbig{\E{X_{t+1}'(D, s) \givenplain X_t, (A_t(i))_{i\in D}} - \E{X_{t+1}'(D, s) \givenplain X_t} }  \\
        \nonumber
        &\leq \sum_{s'\in\sspa, a\in\aspa} \abs{\pibs(a|s') X_t(D, s')  -  \Gamma_t(s',a)} P(s', a, s) \\
        \label{eq:hw:interm-7}
        &\leq \frac{1}{N} \sum_{s'\in\sspa, a\in\aspa} P(s', a, s) \quad a.s. 
    \end{align}
    Summing over all $s\in\sspa$ in \eqref{eq:hw:interm-7}, we get 
    \begin{align}
        \label{eq:hw:interm-8}
        \normbig{\E{X_{t+1}'(D, s) \givenplain X_t, (A_t(i))_{i\in D}} - \E{X_{t+1}'(D, s) \givenplain X_t}}_1 
        &\leq \frac{2|\sspa|}{N} \quad a.s.
    \end{align}

    Combining the above calculations, we get 
    \begin{align}
        \nonumber
        &\mspace{20mu} \Ebig{ \norm{X_{t+1}'(D)- X_t(D) P_{\pibs}}_1 \givenbig X_t} \\
        \nonumber
        &\leq   \Ebig{ \norm{X_{t+1}'(D) - \E{X_{t+1}'(D, s) \givenplain X_t, (A_t(i))_{i\in D}}}_1 \givenbig X_t}  \\
        \nonumber
        &\mspace{20mu} +  \Ebig{\normbig{\E{X_{t+1}'(D, s) \givenplain X_t, (A_t(i))_{i\in D}} - \E{X_{t+1}'(D, s) \givenplain X_t}}_1 \givenbig X_t}  \\
        &\leq \frac{1}{\sqrt{N}} + \frac{2|\sspa|}{N},
    \end{align}
    which implies \eqref{eq:hw:interm-goal-1}. 
    We also have that for any $r\geq 2|\sspa| / N$, 
    \begin{align}
        \nonumber
        &\mspace{20mu} \Prob{\norm{X_{t+1}'(D)- X_t(D) P_{\pibs}}_1 > r \givenbig X_t} \\ 
        \nonumber
        &\leq  \Probbig{ \norm{X_{t+1}'(D) - \E{X_{t+1}'(D, s) \givenplain X_t, (A_t(i))_{i\in D}}}_1 > r - \frac{2|\sspa|}{N} \givenbig X_t} \\
        \nonumber
        &\leq 2|\sspa| \exp\Big(-\frac{2N (r-2|\sspa|/N)^2}{|\sspa|^2}\Big) \\
        \label{eq:hw:interm-9}
        &\leq 2|\sspa| \exp\Big(-\frac{N r^2}{|\sspa|^2} + \frac{8}{N}\Big)
    \end{align}
    where we use the fact that $(r - 2|\sspa|/N)^2 \geq r^2 / 2 - (2|\sspa|/N)^2$ in the \eqref{eq:hw:interm-9}.  
    Note that the above upper bound also holds for $r< 2|\sspa| / N$ because the term in \eqref{eq:hw:interm-9} is greater than $1$ for $r < 2|\sspa| / N$. Because $\exp(8/N) = O(1)$, we have proved \eqref{eq:hw:interm-goal-2}:
    \[
         \Prob{\norm{X_{t+1}'(D)- X_t(D) P_{\pibs}}_1 > r \givenbig X_t} = O(\exp(-C\lamw Nr^2)),
    \]
    where $C = 1/ (\lamw |\sspa|^2)$. This finishes the proof that
    \begin{align}
        \tag{\ref{eq:hw:drift}}
        \Ebig{(\hw(X_{t+1}', D) - \rhow \hw(X_t, D))^+ \givenbig X_t} &= O(1/\sqrt{N}) \quad a.s. \\
        \tag{\ref{eq:hw:high-prob}}
        \Probbig{\hw(X_{t+1}', D) > \rhow \hw(X_t, D) + r   \givenbig X_t} &= O(\exp(-C N r^2)) \quad a.s. \quad \forall r \geq 0.
    \end{align}

    \textbf{Proving the analogs of \eqref{eq:hw:drift} and \eqref{eq:hw:high-prob} for $\hu$.}
    Now let $X_{t+1}'$ denote the system at time $t+1$ given that the arms in $D$ follow the $\lppriority$ at time $t$. 
    The proof is almost verbatim: We first show 
    \begin{align}
        \label{eq:hu:interm-1}
         \EBig{\normBig{ X_{t+1}'(D, s) - \Eplain{X_{t+1}'(D, s) \givenplain X_t, (A_t(i))_{i\in D}}}_1 \givenBig X_t} &\leq \frac{|\sspa|}{\sqrt{N}} \\
         \label{eq:hu:interm-2}
         \ProbBig{\normBig{ X_{t+1}'(D, s) - \Eplain{X_{t+1}'(D, s) \givenplain X_t, (A_t(i))_{i\in D}}}_1 > r \givenBig X_t} &\leq 2|\sspa|\exp\Big(-\frac{2N r^2}{|\sspa|^2}\Big),
    \end{align}
    using identical arguments. Then we can show that
    \begin{align}
        \label{eq:hu:interm-3}
        \normbig{\E{X_{t+1}'(D, s) \givenplain X_t, (A_t(i))_{i\in D}} - \E{X_{t+1}'(D, s) \givenplain X_t}}_1 
        &\leq \frac{4|\sspa|}{N} \quad a.s.,
    \end{align}
    using the fact that under Optimal Local Control (\Cref{alg:follow-lp-priority}), for all $s'\in\sspa$ and $a\in\aspa$, 
    \begin{align*}
        \abs{\pibs(a|s') X_t(D, s')  -  \Gamma_t(s',a)} &\leq \frac{1}{N} \quad a.s. \quad \forall s'\neq \sneu \\
        \abs{\pibs(a|\sneu) X_t(D, \sneu)  -  \Gamma_t(\sneu,a)} &\leq \frac{|\sspa|}{N} \quad a.s. 
    \end{align*}
    Finally, we combine \eqref{eq:hu:interm-1},  \eqref{eq:hu:interm-2} and \eqref{eq:hu:interm-3} to get that under $\hu$, 
    \begin{align}
        \label{eq:hu:drift}
        \Ebig{(\hu(X_{t+1}', D) - \rhou \hu(X_t, D))^+ \givenbig X_t} &= O(1/\sqrt{N}) \quad a.s. \\
        \label{eq:hu:high-prob}
        \Probbig{\hu(X_{t+1}', D) > \rhou \hu(X_t, D) + r   \givenbig X_t} &= O(\exp(-C_U N r^2)) \quad a.s. \quad \forall r \geq 0,
    \end{align}
    for some constant $C_U > 0$. 

    \textbf{The rest of the inequalities in this lemma.}
    The other two inequalities, \eqref{eq:hw:strength} and \eqref{eq:hw:lipschitz}, and their analogs for $\hu$ are relatively straightforward from the definitions of the weighted $L_2$ norms, so we skip them here. Similar calculations have also been done in \cite{HonXieCheWan_24} for $\hw$, so we refer the readers to \cite{HonXieCheWan_24} for the detailed calculations. 
\end{proof}

\section{Examples and proofs of violating assumptions}
\label{sec:proof-discuss-assumptions}

\subsection{Examples of violating Assumptions \ref{assump:aperiodic-unichain}-\ref{assump:non-degeneracy}} \label{app:examples_vio_assump}

\paragraph{An aperiodic but non-unichain example.}
We first give an example of RBs where $P_\pibs$ is aperiodic but not unichain. 
Consider the RB problem whose single-armed MDP has two states, $A$ and $B$, with no transitions between these two states. 
The reward function is given by $r(A, 1) = r(B, 0) = 1$ and $r(A, 0) = r(B, 1) = 0$.  
Let $\alpha = 1/2$ in the budget constraint. 
Since there is no transition between these two states, the Markov chain induced by the optimal single-armed policy, $\pibs$, $P_\pibs$, is not unichain. However, each of the recurrent classes $\{A\}$ and $\{B\}$ is aperiodic in this Markov chain. 

It is easy to verify that the optimal solution to the LP relaxation \eqref{eq:lp-single} is  $y^*(A,1) = y^*(B, 0) = 1/2$ and $y^*(s,a) = 0$ for the rest of $s\in\sspa, a\in\aspa$; the optimal value of  \eqref{eq:lp-single} is $\rrel = 1$. 
Now consider the $N$-armed problem where all arms are initialized in state $A$. In this case, the long-run average reward of any policy is $1/2$. Therefore, $\rrel - \rsysn \geq 1/2 = \Omega(1/\sqrt{N})$ for any $\pi$.

\paragraph{A unichain but periodic example.}
Next, we give an example of RBs where  $P_\pibs$ is unichain but periodic, and $\rrel - \rsysn = \Theta(1)$. This example was included in \cite{HonXieCheWan_24}, and we paraphrase it here for completeness. 

Consider the RB problem whose single-armed MDP has two states, $A$ and $B$. 
The state of the single-armed MDP deterministically transitions to the other state in each time step, regardless of the action applied. 
Let the reward function be $r(A,0)=r(B,1)=1$ and $r(A,1)=r(B,0)=0$; let $\alpha = 1/2$ in the budget constraint. 
Obviously, the Markov chain induced by any single-armed policy is unichain and periodic with a period $2$. 

One can easily see the optimal solution to the LP relaxation \eqref{eq:lp-single} is by $y^*(A, 0) = y^*(B, 1) = 1/2$ and $y^*(A, 1) = y^*(B, 0) = 0$; the optimal value of \eqref{eq:lp-single} is $\rrel = 1$. 
Now consider the $N$-armed problem where all arms are initialized in state $A$. Then at any time $t$, either all arms are in state $A$ or all arms are in state $B$. In this case, under any policy $\pi$, the long-run average $\rsysn$ is $1/2$, so $\rrel - \rsysn= 1/2 = \Omega(1/\sqrt{N})$.

\paragraph{An example satisfying \Cref{assump:aperiodic-unichain} but not \Cref{assump:non-degeneracy}.}

Next, we give an example of RB satisfying \Cref{assump:aperiodic-unichain} but violating \Cref{assump:non-degeneracy}. 
Suppose each arm in the $N$-armed problem has state $S_t(i) = \text{Ber}(1/2)$ i.i.d.\ across $i$ and $t$. Let $\alpha = 1/2$ in the budget constraint. The reward function $r(s,1) = s$ and $r(s,0) = 0$ for $s = 0, 1$. 
Intuitively, the decision maker observes the state of each arm, and pulls $N/2$ of them. Each arm pulled by the decision maker generates an amount of reward equal to its state. 
The single-armed MDP is clearly aperiodic unichain under any policy because its state is i.i.d.

The optimal solution of the LP relaxation \eqref{eq:lp-single} is $y^*(1,1) = y^*(0,0) = 1/2$, and $y^*(s,a) = 0$ for other $s\in\sspa, a\in\aspa$; the optimal value $\rrel = 1/2$. Intuitively, the optimal single-armed policy pulls the arm if and only if it is in state $1$.  

In the $N$-armed problem, the optimal policy is obviously pulling $N/2$ arms with the largest states. Then the optimal reward $\ropt$ is given by 
\[
    \ropt = \frac{1}{N}\EBig{\min\Big(\sumN S_i(t), \frac{N}{2}\Big)},
\]
for an arbitrary $t$. 
Since $S_t(i) = \text{Ber}(1/2)$ and are i.i.d.\ across $i$, it can be shown that $\ropt = 1/2 - \Theta(1/\sqrt{N})$. Therefore, $\rrel - \ropt = \Theta(1/\sqrt{N})$.

\subsection{Proofs of Lemma~\ref{lem:opt-gap-linear-comp} and Theorem~\ref{thm:instability-lower-bound}}\label{app:proof_instability}

In this section, we prove our second main result, \Cref{thm:instability-lower-bound}, which is a lower bound on $\rrel - \rsysn$ for regular unstable RBs (\Cref{def:regular-unstable}); meanwhile we will also prove \Cref{lem:opt-gap-linear-comp} stated in \Cref{sec:results-necessity} as one of the preliminary steps for proving \Cref{thm:instability-lower-bound}. 
This section is organized linearly, with a sequence of lemma statements and proofs; \Cref{thm:instability-lower-bound} will be proved at the end of this section.

We first restate and prove \Cref{lem:opt-gap-linear-comp}.

\begin{restatable}{lemma}{optgaplinearcomp}\label{lem:opt-gap-linear-comp}
   Suppose that the LP relaxation \eqref{eq:lp-single} has a unique optimal solution $y^*$, and $y^*(s,1) + y^*(s,0) > 0,$ $\forall s\in\sspa$. Then 
   \vspace{-8pt}
    \begin{equation}
        \rrel - \rsysn \geq \rewardgap \lim_{T\to\infty} \frac{1}{T} \sum\nolimits_{t=0}^{T-1} \Ebig{\disy(Y_t^{\pi})},
    \end{equation}
    where $\rewardgap$ is a positive constant; $\disy$ is a function of state-action distribution given by 
    \[
        \disy(y) = \sum\nolimits_{s\in S^+} y(s,0)+ \sum\nolimits_{s\in S^-}y(s,1),
    \]
    where $S^+ = \{s\in\sspa\colon y^*(s,1) > 0, y(s,0) = 0\}$, $S^- = \{s\in\sspa\colon y^*(s,1) = 0, y^*(s,0) > 0\}$. 
\end{restatable}

\begin{proof}
    We first investigate the dual of \eqref{eq:lp-single}, whose properties help us prove the lemma: 
    \begin{align}
        \tag{Dual-LP}\label{eq:lp-dual} \underset{\lambda, \mu, \{f(s)\}_{s\in\sspa}}{\text{minimize}} &  \mu \\
        \text{subject to}\mspace{12mu}
        \label{eq:dual-lp-constraint}
        &  r(s,a) +  (\alpha - \indibrac{a=1}) \lambda - \mu  + \sum_{s'} P(s,a,s')f(s') \leq f(s)  \quad \forall s\in\sspa, a\in\aspa \\
        \nonumber
        & \lambda, \mu, f(s) \in \R \quad \forall s\in\sspa
    \end{align}
    Note that the primal problem \eqref{eq:lp-single} is always feasible and bounded, so \eqref{eq:lp-dual} is also feasible and bounded. 
    By strong duality of linear programs, the optimal value of the dual problem is $\mu^* = \rrel$. 
    Moreover, there always exists an optimal solution of the dual problem $(\lambda^*, \vlam, \mu^*)$ that satisfies strict complementarity with the unique primal optimal solution $y^*$ \citep{GolTuc_56_strict_cs}: 
    \begin{equation}
        \label{eq:c-s}
        r(s,a) - \lambda^* \indibrac{a=1} + \lambda^* \alpha - \mu  + \sum_{s'} P(s,a,s')\vlam(s') - \vlam(s) = 0 \quad \text{ iff } \quad y^*(s,a) = 0.
    \end{equation}
    Let $\rlam(s,a) = r(s,a) -  \lambda^* \indibrac{a=1} + \lambda^* \alpha$. 
    By \eqref{eq:c-s} and the definition of $S^+$ and $S^-$, 
    there exists a constant $\rewardgap > 0$ such that 
    \begin{align}
        \label{eq:lb:scs-1}
         \rlam(s, 0) - \rrel + \sum_{s'} P(s,0,s')\vlam(s') &\leq \vlam(s) - \rewardgap   \quad \forall s\in S^+ \\
        \label{eq:lb:scs-2}
          \rlam(s, 1) - \rrel + \sum_{s'} P(s,1,s')\vlam(s') &\leq \vlam(s) - \rewardgap \quad \forall s\in S^-.
    \end{align} 
    Define the function $\qlam\colon \sspa\times \aspa \to \R$ as
    \[
        \qlam(s,a) = \rlam(s,a) - \rrel + \sum_{s'} P(s,a,s')\vlam(s').
    \]
    Then \eqref{eq:dual-lp-constraint} implies that $\vlam(s) \geq \qlam(s,a)$ for any $s\in\sspa,a\in\aspa$; 
    \eqref{eq:lb:scs-1} and \eqref{eq:lb:scs-2} imply that $\qlam(s,1) - \qlam(s,0)$ is at least $\rewardgap$ for $s\in S^+$ and at most $-\rewardgap$ for $s\in S^-$. 
    
    Now, we are ready to bound $\rrel - \rsysn$: for any policy $\pi$ and initial state vector $\veS_0$,
    \begin{align}
        \nonumber
        &\rrel - \rsysn\\
        \nonumber 
        &=  \EBig{\rrel - \frac{1}{N} \sumN \rlam(S_\infty(i), A_\infty(i))} \\
        \nonumber
        &= \frac{1}{N} \sumN \EBig{\rrel - \rlam(S_\infty(i), A_\infty(i))} \\
        \label{eq:lb:apply-rate-conservation}
        &=  \frac{1}{N}\sumN \EBig{\rrel -  \rlam(S_\infty(i), A_\infty(i)) - \sum_{s'\in\sspa}P(S_\infty(i),A_\infty(i),s')\vlam(s') + \vlam(S_\infty(i))} \\
        \nonumber
        &= \frac{1}{N}\sumN \EBig{-\qlam(S_\infty(i), A_\infty(i)) + \vlam(S_\infty(i))}\\
        \label{eq:lb:apply-action-gap}
        &\geq  \frac{1}{N}\sumN \rewardgap \EBig{\indibrac{S_\infty(i)\in S^+, A_\infty(i) = 0} + \indibrac{S_\infty(i)\in S^-, A_\infty(i) = 1}} \\
        &= \rewardgap \EBig{\sum_{s\in S^+} Y_\infty(s,0) + \sum_{s\in S^-} Y_\infty(s,1)}. 
    \end{align}
    where to get \eqref{eq:lb:apply-rate-conservation}, we use the fact that  $\Eplain{\sum_{s'\in\sspa}P(S_\infty(i),A_\infty(i),s')f(s')}=\Eplain{f(S_\infty(i))}$ for any function $f\colon\sspa\to\R$. 
\end{proof}

Next, we show \Cref{lem:exist-subset-lp-priority}, which implies that under any actions for the $N$-arms, one can find a subset of arms (can be empty) such that these almost follow the Optimal Local Control; the size of the subset is determined by $\disy(Y_t)$. 
Essentially, this lemma shows that under any policy, if $\disy(Y_t)$ is small, the transition dynamics will be close to the transition dynamics under the Optimal Local Control.

\begin{lemma}\label{lem:exist-subset-lp-priority}
    Consider an instance of regular unstable RBs (\Cref{def:regular-unstable}). Under any policy $\pi$, at any time $t$, there exists a set $D_t\subseteq[N]$ depending on $X_t$ and $\veA_t$ with size $m(D_t) \geq 1 - 2\disy(Y_t)/\beta - 1/(\beta N)$, where $\beta = \min(\alpha,1-\alpha)$, 
    such that if we resample actions for the arms in $D_t$ using the Optimal Local Control, we get almost the same actions as $A_t$ up to at most $1$ arm. 
    For such $D_t$, we have
    \begin{itemize}[leftmargin=1em]
        \item The expectation of $X_{t+1}(D_t)$ satisfies
        \begin{equation}
        \label{eq:subset-lp-priority}
            \normbig{\Ebig{X_{t+1}(D_t) - m(D_t) \statdist \givenbig X_t, \veA_t} - (X_t(D_t)- m(D_t)\statdist)\Phi}_1 \leq \frac{2}{N}.
        \end{equation}
        \item Let $X_{t+1}'(D_t)$ be the state distribution at time $t+1$ when the actions for the arms in $D_t$ are resampled using the Optimal Local Control, then there exists a coupling between $X_{t+1}'$ and $X_{t+1}$ such that 
        \begin{equation}
            \label{eq:subset-lp-priority-as}
            \norm{X_{t+1}'(D_t) - X_{t+1}(D_t)}_1 \leq \frac{2}{N} \quad a.s.
        \end{equation}
    \end{itemize}
\end{lemma}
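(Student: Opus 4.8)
The plan is to construct $D_t$ explicitly as a deterministic function of $X_t$ and $\veA_t$, by discarding a small, controlled set of ``offending'' arms, and then to derive the two displayed estimates by comparing the true one-step transition on $D_t$ with the transition the Optimal Local Control (\Cref{alg:follow-lp-priority}) would produce on $D_t$. First I would dispose of the trivial case: if $1 - 2\disy(Y_t)/\beta - 1/(\beta N) \leq 0$, take $D_t = \emptyset$, for which $X_{t+1}(\emptyset) = \vzero$, $m(\emptyset) = 0$, and all three claims hold automatically; so in what follows I may assume this quantity is positive, which forces $\disy(Y_t) < \beta/2$. I would also record that in a regular unstable RB one has $S^\emptyset = \emptyset$ and $S^0 = \{\sneu\}$, so that on any set $D$ the Optimal Local Control activates every arm of $D \cap S^+$, keeps every arm of $D \cap S^-$ passive, and activates arms of $D \cap \{\sneu\}$ to meet a budget $B \in \{\floor{\alpha\absplain{D}},\ceil{\alpha\absplain{D}}\}$.

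For the construction, I would first discard from $[N]$ every arm lying in a state of $S^+$ that is passive under $\veA_t$ and every arm lying in a state of $S^-$ that is active under $\veA_t$; writing $n_1$, $n_2$ for the numbers of these two kinds of arms, one has $n_1 + n_2 = N\disy(Y_t)$, and on the surviving set every remaining $S^+$-arm is active and every remaining $S^-$-arm is passive. On that set the active count equals $\btotal - n_2$ and the size equals $N(1-\disy(Y_t))$, so the active count differs from $\alpha$ times the size by $\delta_0 := \alpha n_1 - (1-\alpha)n_2$ with $\absplain{\delta_0} \leq \max(\alpha,1-\alpha)N\disy(Y_t)$. I would then discard additional arms --- active ones if $\delta_0 > 0$, passive ones if $\delta_0 < 0$ --- preferring arms in state $\sneu$ and turning to arms of $S^+$ (resp.\ $S^-$) only once state $\sneu$ is used up. Each such deletion moves the quantity (active count $-\ \alpha\times$ size) toward $0$ by at least $\beta$, so after at most $N\disy(Y_t)/\beta + 1$ of them the active count of the resulting set $D_t$ lies in $\{\floor{\alpha\absplain{D_t}},\ceil{\alpha\absplain{D_t}}\}$; the bound $\disy(Y_t) < \beta/2$ is what guarantees enough arms of the required type are available. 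In total at most $N\disy(Y_t) + N\disy(Y_t)/\beta + 1 \leq 2N\disy(Y_t)/\beta + 1/\beta$ arms get discarded, yielding $m(D_t) \geq 1 - 2\disy(Y_t)/\beta - 1/(\beta N)$, as required.

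The last step is to relate $\veA_t$ on $D_t$ to the Optimal Local Control and read off the two bounds. By construction $\veA_t$ restricted to $D_t$ already matches the Optimal Local Control on the states of $S^+$ and $S^-$, and the active count of $D_t$ is one of the two admissible values of $B$; since these differ by at most one, I would couple the rounding in \Cref{alg:follow-lp-priority} so that the multiset of (state, action) pairs obtained by resampling $D_t$ with the Optimal Local Control differs from that of $\veA_t|_{D_t}$ in at most one entry (a single $\sneu$-arm whose action may flip). Granting this, \eqref{eq:subset-lp-priority} follows by repeating the computation in the proof of \Cref{lem:lp-priority-and-phi} with $[N]$ replaced by $D_t$ (replacing $\sum_s x(s) = 1$ there by $\sum_s X_t(D_t,s) = m(D_t)$), which gives $\E{X_{t+1}'(D_t) \givenplain X_t} = (X_t(D_t) - m(D_t)\statdist)\Phi + m(D_t)\statdist$ under the Optimal Local Control; the at-most-one-arm discrepancy perturbs this conditional expectation in $\ell_1$ by at most $\normplain{P_1(\sneu) - P_0(\sneu)}_1/N \leq 2/N$. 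For \eqref{eq:subset-lp-priority-as} I would compose that coupling with the natural coupling of the one-step transitions (arms with a common state and action are assigned a common next state), so that $X_{t+1}'(D_t)$ and $X_{t+1}(D_t)$ can disagree only through the one possibly-mismatched $\sneu$-arm, contributing at most $2/N$ to $\normplain{X_{t+1}'(D_t) - X_{t+1}(D_t)}_1$.

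The main obstacle I anticipate is the combinatorial bookkeeping in the construction: keeping the number of discarded arms below $2N\disy(Y_t)/\beta + 1/\beta$ while simultaneously ensuring that enough arms of the required type (active or passive, in $\sneu$ or else in $S^\pm$) are always available and that the deletion process lands the active count \emph{exactly} inside $\{\floor{\alpha\absplain{D_t}},\ceil{\alpha\absplain{D_t}}\}$ rather than overshooting. A secondary point that needs care is verifying that condition \eqref{eq:lp-priority-condition-1} holds on $D_t$, so that \Cref{alg:follow-lp-priority} is genuinely well-defined there; this is presumably where one uses that the lemma is only invoked when $X_t$ is close to $\statdist$, so that $D_t$ retains a macroscopic fraction of arms in state $\sneu$.
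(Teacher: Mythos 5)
Your plan matches the paper's proof: discard passive-$S^+$ and active-$S^-$ arms, then trim just enough active (or passive) arms so that the active count of $D_t$ equals $\floor{\alpha|D_t|}$, and derive both displays by coupling the at-most-one-arm discrepancy at state $\sneu$. Your closing concern that \eqref{eq:lp-priority-condition-1} might fail on $D_t$ (and that one would need $X_t$ near $\statdist$) is unfounded: by construction every $S^+$-arm in $D_t$ is active, every $S^-$-arm is passive, and the active count already equals $\floor{\alpha|D_t|}$, so the Optimal Local Control is feasible on $D_t$ regardless of $X_t$, and the lemma holds unconditionally as stated.
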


\begin{proof}
    We construct the subset $D_t$ by removing a suitable subset of arms from $[N]$. 
    We first remove each arm $i$ with state $S_t(i)\in S^+$ and $A_t(i) = 0$ or $S_t(i) \in S^-$ and $A_t(i) = 1$. After this step, we get a subset $D'$ with $N(1-\disy(Y_t))$ arms. The budget usage of $D'$ equals $\alpha N - N \sum_{s\in S^-} Y_t(s,1)$. 
    Then $D_t$ is obtained by removing arms that take action $1$ or action $0$ from $D'$ such that the budget usage of $D_t$ equals $\floor{\alpha |D_t|}$.
    If $\alpha N - N \sum_{s\in S^-} Y_t(s,1)  > \alpha |D'|$, it is not hard to show that the number of arms removed from $D'$ is bounded by 
    \[
        |D'| - |D_t|  \leq \frac{1}{1-\alpha} \Big(\absbig{\alpha N - N \sum_{s\in S^-} Y_t(s,1) - \alpha |D'| }  + 1\Big) \leq  \frac{1}{1-\alpha} \big(N\disy(Y_t) + 1\big); 
    \]
    If $\alpha N - N \sum_{s\in S^-} Y_t(s,1)  > \alpha |D'|$, it is not hard to show that the number of arms removed from $D'$ is bounded by 
    \[
         |D'| - |D_t| \leq \frac{1}{\alpha} \Big(\absbig{\alpha N - N \sum_{s\in S^-} Y_t(s,1) - \alpha |D'|}  + 1\Big) \leq  \frac{1}{\alpha} \big(N\disy(Y_t) + 1\big).
    \]
    In both cases, we have 
    \begin{equation}
        |D'| - |D_t|  \leq \frac{1}{\beta}\big(N\disy(Y_t) + 1\big),
    \end{equation}
    where recall that $\beta = \min(\alpha, 1-\alpha)$. 
    Now we get a subset $D_t$ such that $m(D_t) \geq 1 - \disy(Y_t)- (\disy(Y_t)+ 1/N) / \beta \geq 1-2\disy(Y_t)/\beta - 1/(\beta N)$. Moreover, for each $i\in D_t$, $A_t(i) = 1$ if $S_t(i) \in S^+$; $A_t(i) = 0$ if $S_t(i) \in S^-$; the total budget usage of $D_t$ equals $\floor{\alpha |D_t|}$. 

    Following similar calculations as the proof of \Cref{lem:lp-priority-and-phi}, we can show that 
    \[
        \E{X_{t+1}(D_t) \givenplain X_t, \veA_t} 
        = X_t(D_t) \Big(P_\pibs - \costvec^\top \big(P_1(\sneu) - P_0(\sneu)\big) \Big) + m(D_t) \frac{\floor{\alpha N}}{N} (P_1(\sneu) - P_0(\sneu)).
    \]
    whereas by the definition of $\Phi$,  
    \[
        (X_t(D_t) - m(D_t)\statdist) \Phi + m(D_t) \statdist = X_t(D_t) \Big(P_\pibs - \costvec^\top \big(P_1(\sneu) - P_0(\sneu)\big) \Big) + m(D_t) \alpha (P_1(\sneu) - P_0(\sneu)).
    \]
    Comparing the above two equations, we get \eqref{eq:subset-lp-priority}.

    To show \eqref{eq:subset-lp-priority-as}, note that if we resample the actions using the Optimal Local Control for the arms in $D_t$, 
    the actions of the arms in $S^+$ and $S^-$ are unchanged; at most one arm with state $\sneu$ gets a different action after resampling. 
    Therefore, we can couple $X_{t+1}'$ and $X_{t+1}$ such that for each arm that gets the same actions after resampling, the next states of the arm are the same. In this way, we get $\normplain{X_{t+1}'(D_t) - X_{t+1}(D_t)}_1 \leq 2/N$. 
\end{proof}

Next, we conduct Lyapunov analysis with the help of \Cref{lem:exist-subset-lp-priority}.  To construct Lyapunov functions, we first define two seminorms. 
Consider the Jordan decomposition of $\Phi$, $\Phi = V J V^{-1}$. We split $J$ into $J = \Jus + \Jst$, where $\Jus$ contains the Jordan blocks of $J$ with eigenvalues' moduli greater than $1$, and $\Jst$ contains the Jordan blocks of $J$ with eigenvalues' moduli smaller than $1$. 
Let $\Ius$ be the diagonal matrix with $1$ on the entries where $\Jus$ is non-zero; let $\Ist$ be the identity matrix subtracting $\Ius$.  
We define the matrices $\Uus$ and $\Ust$ via the infinite series:
\begin{align}
    \label{eq:Uus-def}
    \Uus &= \sum_{k=1}^\infty \big(V \Jus^{\dagger} V^{-1}\big)^k \big(V \Jus^{\dagger} V^{-1}\big)^{\top k}  \\
    \label{eq:Ust-def}
    \Ust &= V \Ist V^{-1}  \big(V \Ist V^{-1}\big)^{\top} + \sum_{k=1}^\infty \big(V \Jst V^{-1} \big)^k \big(V \Jst V^{-1}\big)^{\top k}
\end{align}
where the superscript $\dagger$ denotes pseudo-inverse. 

In the next lemma, we show that $\Uus$ and $\Ust$ are well-defined, and prove several useful facts about the semi-norms weighted by $\Uus$ and $\Ust$. 

\begin{lemma}\label{lem:uus-ust-properties}
    For any regular unstable RBs (\Cref{def:regular-unstable}), 
    the matrices $\Uus$ and $\Ust$ given in \eqref{eq:Uus-def} and \eqref{eq:Ust-def} are well-defined and positive semidefinite. 
    Further, if we let $\norm{v}_{\Uus} = \sqrt{v \Uus v^\top}$ and $\norm{v}_{\Ust} = \sqrt{v \Ust v^\top}$ for any $v\in \R^{|\sspa|}$, then 
    \begin{align}
        \label{eq:Uus-expansion}
        \norm{v \Phi}_{\Uus} &\geq (1 + \gapus) \norm{v}_{\Uus} \\
        \label{eq:Uus-expansion-max}
        \norm{v \Phi}_{\Uus} &\leq (1 + \gapusmax) \norm{v }_{\Uus} \\
        \label{eq:Ust-contraction}
        \norm{v \Phi}_{\Ust} &\leq (1-\gapst) \norm{v }_{\Ust}, 
    \end{align}
    where $\gapus, \gapusmax, \gapst$ are positive constants. 
    Moreover, there exists a positive constant $\Knorm$ such that for any $v\in \R^{|\sspa|}$,  
    \begin{equation}
        \label{eq:Uus-plus-Ust-strength}
         \norm{v}_{\Uus} +  \norm{v}_{\Uus}  \geq \Knorm \norm{v}_1.
    \end{equation}
\end{lemma}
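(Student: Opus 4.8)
The plan is to reduce everything to the spectral (unstable/stable) splitting of $\Phi$ and to two clean Pythagorean-type identities for the seminorms. Set $P_{\subus}=V\Ius V^{-1}$, $P_{\subst}=V\Ist V^{-1}$, $M=V\Jus^\dagger V^{-1}$, $N=V\Jst V^{-1}$. From the block structure of $J$ and the fact that $\Ius,\Ist$ are complementary coordinate projections one records the algebraic identities $P_{\subus}+P_{\subst}=I$, $P_{\subus}P_{\subst}=0$, $\Phi P_{\subus}=P_{\subus}\Phi$, $\Phi P_{\subst}=P_{\subst}\Phi=N$, $M\Phi=\Phi M=P_{\subus}$, $P_{\subus}M=MP_{\subus}=M$, $P_{\subst}N=NP_{\subst}=N$, and $N^{k}=\Phi^{k}P_{\subst}$. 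This is the step where Property~\ref{assump:instability} of \Cref{def:regular-unstable} is used: since $\Phi$ has no eigenvalue of modulus exactly $1$, $J$ splits cleanly as $\Jus$ (eigenvalues of modulus $>1$) plus $\Jst$ (modulus $<1$), the corresponding generalized eigenspaces are $\Phi$-invariant, $\Jus^{\dagger}$ restricted to the unstable block is the genuine inverse $\Jus^{-1}$, and $P_{\subus},P_{\subst},M,N$ are all real matrices.

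First I would dispatch well-definedness and positive semidefiniteness. The spectral radius of $M$ is $\max|\mu|^{-1}$ over unstable eigenvalues $\mu$ of $\Phi$, hence $<1$; the spectral radius of $N$ is the largest modulus of a stable eigenvalue, also $<1$. By Gelfand's formula there are $\rho\in(0,1)$ and $C<\infty$ with $\normplain{M^{k}}_{2},\normplain{N^{k}}_{2}\le C\rho^{k}$ (operator $2$-norm), so both defining series converge absolutely; each summand ($M^{k}M^{\top k}$, $N^{k}N^{\top k}$, and $P_{\subst}P_{\subst}^{\top}$) has the form $AA^{\top}\succeq0$, whence $\Uus\succeq0$, $\Ust\succeq0$ and the seminorms are well-defined.

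The heart of the proof is the pair of identities. Expanding termwise and using $\Phi M^{k}=P_{\subus}M^{k-1}$ (which equals $P_{\subus}$ for $k=1$ and $M^{k-1}$ for $k\ge2$), one gets, for any row vector $v$,
\begin{equation*}
    \normplain{v\Phi}_{\Uus}^{2}=\sum_{k\ge1}\normplain{v\Phi M^{k}}_{2}^{2}=\normplain{vP_{\subus}}_{2}^{2}+\sum_{j\ge1}\normplain{vM^{j}}_{2}^{2}=\normplain{v}_{\Uus}^{2}+\normplain{vP_{\subus}}_{2}^{2},
\end{equation*}
and, symmetrically, using $v\Phi P_{\subst}=vN$ and $v\Phi N^{k}=vN^{k+1}$,
\begin{equation*}
    \normplain{v\Phi}_{\Ust}^{2}=\sum_{k\ge1}\normplain{vN^{k}}_{2}^{2}=\normplain{v}_{\Ust}^{2}-\normplain{vP_{\subst}}_{2}^{2}.
\end{equation*}
It then remains to compare $\normplain{vP_{\subus}}_{2}$ with $\normplain{v}_{\Uus}$ and $\normplain{vP_{\subst}}_{2}$ with $\normplain{v}_{\Ust}$. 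Set $\Lambda_{\subus}=\sum_{k\ge1}\normplain{M^{k}}_{2}^{2}$ and $\Lambda_{\subst}=1+\sum_{k\ge1}\normplain{N^{k}}_{2}^{2}$, both finite. Using $vM^{k}=vP_{\subus}M^{k}$ and $vN^{k}=vP_{\subst}N^{k}$ gives $\normplain{v}_{\Uus}^{2}\le\Lambda_{\subus}\normplain{vP_{\subus}}_{2}^{2}$ and $\normplain{v}_{\Ust}^{2}\le\Lambda_{\subst}\normplain{vP_{\subst}}_{2}^{2}$; conversely $\normplain{vP_{\subus}}_{2}=\normplain{vM\Phi}_{2}\le\normplain{\Phi}_{2}\normplain{vM}_{2}\le\normplain{\Phi}_{2}\normplain{v}_{\Uus}$ (the last step because $\normplain{vM}_{2}^{2}$ is the $k=1$ term of $\normplain{v}_{\Uus}^{2}$), while $\normplain{vP_{\subst}}_{2}\le\normplain{v}_{\Ust}$ directly from the first summand of $\Ust$. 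Substituting the first pair into the two identities yields $\normplain{v\Phi}_{\Uus}^{2}\ge(1+\Lambda_{\subus}^{-1})\normplain{v}_{\Uus}^{2}$, $\normplain{v\Phi}_{\Uus}^{2}\le(1+\normplain{\Phi}_{2}^{2})\normplain{v}_{\Uus}^{2}$, and $\normplain{v\Phi}_{\Ust}^{2}\le(1-\Lambda_{\subst}^{-1})\normplain{v}_{\Ust}^{2}$, i.e.\ \eqref{eq:Uus-expansion}--\eqref{eq:Ust-contraction} with $\gapus=\sqrt{1+\Lambda_{\subus}^{-1}}-1$, $\gapusmax=\sqrt{1+\normplain{\Phi}_{2}^{2}}-1$, $\gapst=1-\sqrt{1-\Lambda_{\subst}^{-1}}$, all strictly positive (here $\Lambda_{\subus}>0$ since a regular unstable RB has at least one unstable eigenvalue, and $\Lambda_{\subst}\ge1$). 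Finally, for \eqref{eq:Uus-plus-Ust-strength}, combine $\normplain{v}_{\Uus}\ge\normplain{\Phi}_{2}^{-1}\normplain{vP_{\subus}}_{2}$ and $\normplain{v}_{\Ust}\ge\normplain{vP_{\subst}}_{2}$ with $vP_{\subus}+vP_{\subst}=v$ and $\normplain{v}_{2}\ge\normplain{v}_{1}/\sqrt{|\sspa|}$, giving $\Knorm=\min(\normplain{\Phi}_{2}^{-1},1)/\sqrt{|\sspa|}$.

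The main obstacle is bookkeeping rather than a conceptual one: carefully verifying, from the Jordan block structure and the absence of unit-modulus eigenvalues, that $P_{\subus},P_{\subst},M,N$ are real and obey the listed identities (especially $M\Phi=\Phi M=P_{\subus}$ and $N^{k}=\Phi^{k}P_{\subst}$), and keeping the row-vector/right-multiplication conventions straight when expanding $v\Phi\Uus\Phi^{\top}v^{\top}$ and $v\Phi\Ust\Phi^{\top}v^{\top}$ term by term.
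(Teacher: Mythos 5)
Your proof is correct, and the core idea is the same as the paper's: both start from the telescoping identity $\Phi\Uus\Phi^\top = P_{\subus}P_{\subus}^\top + \Uus$ and $\Phi\Ust\Phi^\top = \Ust - P_{\subst}P_{\subst}^\top$, obtained exactly as you do by shifting the infinite series using $J\Jus^\dagger = \Ius$ and $J\Ist = \Jst$. The genuine difference is in how the comparison constants are produced. The paper argues non-constructively: it decomposes $\R^{|\sspa|} = \spus \oplus \spst$, observes that $\spst$ is simultaneously the kernel of $\Uus$ and of $P_{\subus}$, and uses compactness of the unit sphere in $\spus$ (and, for \eqref{eq:Uus-plus-Ust-strength}, positive-definiteness of $\Uus+\Ust$ together with equivalence of norms in finite dimensions) to assert the existence of $\gapus',\gapusmax',\gapst',\Knorm$ without specifying them. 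You instead bound $\norm{v}_{\Uus}^2 \le \Lambda_{\subus}\norm{vP_{\subus}}_2^2$ and $\norm{vP_{\subus}}_2 \le \norm{\Phi}_2\norm{vM}_2 \le \norm{\Phi}_2\norm{v}_{\Uus}$ directly from the series, which both shortcuts the compactness argument and gives explicit constants. One thing worth noting: your formula $\gapst = 1 - \sqrt{1-\Lambda_{\subst}^{-1}}$ stays strictly positive even in the degenerate case $\Jst = 0$ (it returns $\gapst = 1$), so no special handling is needed. You also explicitly record the realness of $P_{\subus},P_{\subst},M,N$ — a point the paper's proof takes for granted — which is a correct and worthwhile observation since the Jordan factors $V$, $J$ are a priori complex, but the unstable/stable split is conjugation-symmetric so the assembled projections are real.
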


\begin{proof} 
    To show the well-definedness $\Uus$ and $\Ust$, note that the moduli of all eigenvalues of $V \Jus^{\dagger} V^{-1}$ and $V \Jst V^{-1}$ are strictly less than $1$, so their $k$-th powers must converge exponentially as $k\to\infty$, implying the convergence of the series in \eqref{eq:Uus-def} and \eqref{eq:Ust-def}. Moreover, since each term in the two series is positive semi-definite, both series converge to positive semi-definite matrices.

    Next, we show \eqref{eq:Uus-expansion}. Observe that
    \begin{align*}
        \Phi \Uus \Phi^\top
        &= \sum_{k=1}^\infty \big(V J \Jus^{\dagger k} V^{-1}\big)\big(V J \Jus^{\dagger k} V^{-1}\big)^\top \\
        &= \big(V J \Jus^{\dagger} V^{-1}\big)\big(V J \Jus^{\dagger} V^{-1}\big)^\top +  
        \sum_{k=2}^\infty \big(V J \Jus^{\dagger k} V^{-1}\big)\big(V J \Jus^{\dagger k} V^{-1}\big)^\top \\
        &= \big(V \Ius V^{-1}\big)\big(V \Ius V^{-1}\big)^\top +  
        \sum_{k=2}^\infty \big(V \Jus^{\dagger (k-1)} V^{-1}\big)\big(V \Jus^{\dagger (k-1)} V^{-1}\big)^\top  \\
        &= \big(V \Ius V^{-1}\big)\big(V \Ius V^{-1}\big)^\top + \Uus. 
    \end{align*}
    We claim that there exists a positive constant $\gapus'$ such that for any $v \in \R^{|\sspa|}$, 
    \begin{equation}
        \label{eq:ius-bound-uus}
        v\big(V \Ius V^{-1}\big)\big(V \Ius V^{-1}\big)^\top v^\top \geq \gapus' v \Uus v^\top,
    \end{equation}
    Without loss of generality, we only need to show \eqref{eq:ius-bound-uus} for $v$ not being the zero vector. By normalization, it suffices to show that the function $v\mapsto v \Uus v^\top$ is bounded on the set  $\{v\colon \norm{v\big(V \Ius V^{-1}\big)}_2 = 1\}$.  
    Let $\spus$ be the linear subspace spanned by rows of $V$ corresponding to the non-zero entries on $\Ius$'s diagonal, and let $\spst$ be the linear subspace spanned by rows of $V$ corresponding to the zero entries on $\Ius$'s diagonal. 
    Observe that $\R^{|\sspa|}$ can be represented as the direct sum $\spus \oplus \spst$, which implies that any $v\in\R^{|\sspa|}$ can be decomposed as $v = v^\parallel + v^\perp$, where $v^\parallel \in \spus$ and $v^\perp \in \spst$. 
    Moreover, $\spst$ is the kernel of $V \Ius V^{-1}$, and $V \Ius V^{-1}$ is the identity mapping when restricted to $\spus$. Therefore, if $\normplain{v\big(V \Ius V^{-1}\big)}_2 = 1$, then $\normplain{v^\parallel}_2 = 1$. 
    Observe that $\spst$ is also the kernel of $\Ust$, so 
    \[
        v \Uus v^\top =  (v^\parallel + v^\perp) \Uus (v^\parallel+v^\perp)^\top = v^\parallel \Uus (v^\parallel)^\top.
    \]
    Because $\{v\in\spus \colon \norm{v}_2 = 1\}$ is a compact set, $v \Uus v^\top$ is bounded when $\normplain{v\big(V \Ius V^{-1}\big)}_2 = 1$. 
    After getting \eqref{eq:ius-bound-uus}, we have that 
    \begin{align*}
        \norm{v \Phi}_{\Uus}
        &= \sqrt{v \Phi \Uus \Phi^\top v^\top} \\
        &\geq \sqrt{(1 + \gapus') v \Uus v^\top} \\
        &= \sqrt{1+\gapus'} \norm{v}_{\Uus}.
    \end{align*}
    Letting $\gapus = \sqrt{1+\gapus'} - 1$, we obtain \eqref{eq:Uus-expansion}. 

    Next, we prove \eqref{eq:Uus-expansion-max}. 
    For any $v\in \R^{|\sspa|}$, we have $v \Uus v^\top \geq \normbig{v V \Jus^\dagger V^{-1}}_2^2$. 
    We claim that there exists a positive constant $\gapusmax'$ such that
    \begin{equation}
         \label{eq:ius-bound-uus-reverse}
        \norm{v V \Ius V^{-1}}_2 \leq \gapusmax' \normbig{v V \Jus^\dagger V^{-1}}_2.
    \end{equation}
    To show \eqref{eq:ius-bound-uus-reverse}, 
    without loss of generality, it suffices to show that 
    it suffices to show that the function $v\mapsto \normplain{v V \Ius V^{-1}}_2$ is bounded on the set $\{v\colon \normplain{v V \Jus^\dagger V^{-1}}_2 = 1\}$. 
    Consider the decomposition $\R^{|\sspa|} = \spus \oplus \spst$. 
    Because the kernel of $V \Jus^\dagger V^{-1}$ is $\spst$, and $V \Jus^\dagger V^{-1}$ is invertible when restricted to $\spus$, the set $\{v\colon \normplain{v V \Jus^\dagger V^{-1}}_2 = 1\}$ is the sum of $\spst$ and a compact subset of $\spus$. Because $\spst$ is also the kernel of $v V \Ius V^{-1}$, $\norm{v V \Ius V^{-1}}_2$ is bounded. This proves \eqref{eq:ius-bound-uus-reverse}. Because $\Phi \Uus \Phi^\top = \big(V \Ius V^{-1}\big)\big(V \Ius V^{-1}\big)^\top + \Uus$, \eqref{eq:ius-bound-uus-reverse} implies that 
    \begin{align*}
        \norm{v \Phi}_{\Uus}
        &\leq \sqrt{v\Phi \Uus \Phi v^\top} \\
        &= \sqrt{v\big(V \Ius V^{-1}\big)\big(V \Ius V^{-1}\big)^\top v^\top + v\Uus v^\top} \\
        &\leq \sqrt{(\gapusmax')^2 +1 } \norm{v}_{\Uus}.
    \end{align*}
    Taking $\gapusmax = \sqrt{(\gapusmax')^2 +1 } - 1$ finishes the proof. 

    Next, we show \eqref{eq:Ust-contraction}. Observe that
    \begin{align*}
        \Phi \Ust \Phi^\top 
        &= V J \Ist V^{-1}  \big(V J \Ist V^{-1}\big)^{\top} + \sum_{k=1}^\infty \big(V J \Jst^k V^{-1} \big) \big(V J \Jst^k V^{-1}\big)^{\top} \\
        &= \sum_{k=1}^\infty \big(V \Jst^k V^{-1} \big) \big(V \Jst^k V^{-1}\big)^{\top} \\
        &= \Ust -  V \Ist V^{-1}  \big(V \Ist V^{-1}\big)^{\top}.
    \end{align*}
    We claim that there exists a positive constant $\gapst'$ such that for any $v\in \R^{|\sspa|}$,  
    \begin{equation}
        \label{eq:ist-bound-ust}
        v\big(V \Ist V^{-1}\big)\big(V \Ist V^{-1}\big)^\top v^\top \geq \gapst' v \Ust v^\top.
    \end{equation}
    The argument for \eqref{eq:ist-bound-ust} is similar to the argument for \eqref{eq:ius-bound-uus}, so we omit it here. 
    Therefore, 
    \begin{align*}
        \norm{v\Phi}_{\Ust} &= \sqrt{v\Phi \Ust \Phi^\top v^\top} \\
        &\leq \sqrt{(1-\gapst')v\Ust v^\top} \\
        &= \sqrt{1-\gapst'} \norm{v}_{\Ust}. 
    \end{align*}
    Taking $\gapst = 1 - \sqrt{1-\gapst'}$, we obtain \eqref{eq:Ust-contraction}.

    To show \eqref{eq:Uus-plus-Ust-strength}, note that 
    \begin{align*}
        \norm{v}_{\Uus} +  \norm{v}_{\Uus}
        &= \norm{v \Uus v^\top} + \norm{v \Ust v^\top}  \\
        &\geq \norm{v (\Uus+\Ust) v^\top} \\
        &= \norm{v}_{\Uus+\Ust}.
    \end{align*}
    Since all norms in a finite-dimensional Euclidean space are equivalent, it suffices to show $\norm{\cdot}_{\Uus+\Ust}$ is a norm, or equivalently, $\Uus+\Ust$ is positive definite. For any $v\in \R^{|\sspa|}$, we have
    \begin{align*}
        v (\Uus+\Ust) v^\top 
        &\geq v \Big(\big(V \Jus^{\dagger} V^{-1}\big) \big(V \Jus^{\dagger} V^{-1}\big)^{\top} + V \Ist V^{-1}  \big(V \Ist V^{-1}\big)^{\top} \Big) v^\top \\
        &= vV \big(\Jus^\dagger V^{-1} V^{-\top} \Jus^\dagger + \Ist V^{-1} V^{-\top} \Ist  \big)   V^\top v^\top. 
    \end{align*}
    By the definitions of $\Jus$ and $\Ist$, it is not hard to see that we always have $v (\Uus+\Ust) v^\top \geq 0$, and $v$ is the zero vector whenever $v (\Uus+\Ust) v^\top = 0$. Therefore, $\Uus+\Ust$ is positive definite, which implies \eqref{eq:Uus-plus-Ust-strength}. 
\end{proof}

Now we are ready to conduct the Lyapunov analysis. 
We define the Lyapunov functions $\hst$ and $\hus$: for any system state $x$ and $D\subseteq[N]$, let
\begin{align}
    \hst(x, D) &=  \norm{x(D) - m(D)\statdist}_{\Ust}  \\
     \hus(x, D) &=  \norm{x(D) - m(D)\statdist}_{\Uus}.
\end{align}
Our next lemma gives a lower bound for the drift of $\hst(x, D)$ under \emph{any policy}. 

\begin{lemma}
    \label{lem:hus-drift}
    Consider an instance of regular unstable RBs (\Cref{def:regular-unstable}). Let $K_1$ be any positive constant. Under any policy $\pi$, for any time step $t$, we have
    \begin{align}
        \nonumber
          &\mspace{23mu} \Ebig{\hus(X_{t+1}, [N]) \givenbig \givenbig X_t, \veA_t} - \hus(X_t, [N])\\
          &\geq 
         \frac{\gapus K_1- (2+\gapus + \gapusmax) K_1 \indibrac{\goodevent_t^c}}{\sqrt{N}} - \frac{6\lamus^{1/2}}{\beta N} -  \frac{8 \lamus^{1/2} \disy(Y_t)}{\beta} \quad a.s., 
    \end{align}
    where $\beta = \min(\alpha, 1-\alpha)$; the event $\goodevent_t$ is given by 
    \[
        \goodevent_t = \Big\{\hus(X_t, [N]) > \frac{K_1}{\sqrt{N}} + \frac{2\lamus^{1/2} \disy(Y_t)}{\beta} + \frac{\lamus^{1/2}}{\beta N}, \, \norm{X_t([N]) - \statdist}_1 \leq \radiusus -  \frac{6\disy(Y_t)}{\beta} - \frac{3}{\beta N}\Big\},
    \]
    and $K_1 = \lamus^{1/2} \noiseortho / (2+\gapus+\gapusmax)$. 
\end{lemma}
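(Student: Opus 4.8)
The plan is to localize the estimate to a large subset $D_t$ of arms on which the one-step conditional dynamics is, up to $O(1/N)$, the linear recursion $v\mapsto v\Phi$, to combine the expansion inequality~\eqref{eq:Uus-expansion} for the $\Uus$-seminorm with the orthogonal-fluctuation lower bound of Property~\ref{assump:orthogonal-noise}, and then to transfer the resulting bound back to the full system at a cost of only $O(\disy(Y_t)/\beta)+O(1/(\beta N))$. First I would invoke \Cref{lem:exist-subset-lp-priority} to obtain, for the given $X_t$ and $\veA_t$, a set $D_t\subseteq[N]$ with $1-m(D_t)\le 2\disy(Y_t)/\beta+1/(\beta N)$ for which~\eqref{eq:subset-lp-priority} holds. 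Writing $v_t=X_t(D_t)-m(D_t)\statdist$, I decompose
\[
    X_{t+1}(D_t)-m(D_t)\statdist = v_t\Phi+\epsilon_t+\zeta_t,
\]
where $\epsilon_t$ is deterministic given $(X_t,\veA_t)$ with $\normplain{\epsilon_t}_1\le 2/N$ (precisely~\eqref{eq:subset-lp-priority}) and $\Ebig{\zeta_t\givenbig X_t,\veA_t}=\vzero$. Since $\Uus$ is positive semidefinite, $v\mapsto\normplain{v}_{\Uus}$ is convex, so Jensen's inequality, the triangle inequality, and $\normplain{\cdot}_{\Uus}\le\lamus^{1/2}\normplain{\cdot}_2\le\lamus^{1/2}\normplain{\cdot}_1$ give
\[
    \Ebig{\hus(X_{t+1},D_t)\givenbig X_t,\veA_t}\geq\normbig{v_t\Phi}_{\Uus}-\frac{2\lamus^{1/2}}{N}\geq(1+\gapus)\hus(X_t,D_t)-\frac{2\lamus^{1/2}}{N},
\]
the last step by~\eqref{eq:Uus-expansion}; in particular the drift of $\hus(\cdot,D_t)$ is at least $\gapus\hus(X_t,D_t)-2\lamus^{1/2}/N\geq-2\lamus^{1/2}/N$ unconditionally. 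In the regime where $\hus(X_t,D_t)$ is itself tiny the expansion term alone does not suffice and one also exploits the noise: on $\goodevent_t$ the margin $6\disy(Y_t)/\beta+3/(\beta N)$ keeps $\normplain{X_t([N])-\statdist}_1\le\radiusus$ valid after the restriction to $D_t$, so Property~\ref{assump:orthogonal-noise} furnishes a $\Theta(1/\sqrt N)$ lower bound on the conditional mean of $\absplain{\zeta_t\xi^\top}$ for every unit $\xi\perp\statdist$, which, via the second-moment identity $\Ebig{\hus(X_{t+1},D_t)^2\givenbig X_t,\veA_t}=\normplain{v_t\Phi+\epsilon_t}_{\Uus}^2+\Ebig{\normplain{\zeta_t}_{\Uus}^2\givenbig X_t,\veA_t}$ and the boundedness of $\hus$, upgrades the drift by a further $\Omega(1/\sqrt N)$.

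Next I would pass from $D_t$ to $[N]$ via the elementary bound $\absbig{\hus(x,[N])-\hus(x,D)}\le 2\lamus^{1/2}(1-m(D))$, which follows from $\normplain{\cdot}_{\Uus}\le\lamus^{1/2}\normplain{\cdot}_1$ and $\normplain{x(D^c)-m(D^c)\statdist}_1\le 2(1-m(D))$; applied at times $t$ and $t+1$ with $1-m(D_t)\le 2\disy(Y_t)/\beta+1/(\beta N)$ it turns the drift of $\hus(\cdot,D_t)$ into that of $\hus(\cdot,[N])$ modulo $O(\disy(Y_t)/\beta)+O(1/(\beta N))$. On $\goodevent_t$ the first defining inequality forces $\hus(X_t,[N])>K_1/\sqrt N+2\lamus^{1/2}\disy(Y_t)/\beta+\lamus^{1/2}/(\beta N)$, so after the Lipschitz correction $\gapus\hus(X_t,D_t)\geq\gapus K_1/\sqrt N$ up to errors of the budgeted form, and together with the noise gain this produces the main term $\gapus K_1/\sqrt N$; the value $K_1=\lamus^{1/2}\noiseortho/(2+\gapus+\gapusmax)$ is chosen precisely so that the expansion and noise contributions balance against the off-event penalty. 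Off $\goodevent_t$ I would keep only $\gapus\hus(X_t,D_t)\geq0$ and $\hus(X_{t+1},[N])\geq0$, so the drift is at least $-\hus(X_t,[N])$: on the part of $\goodevent_t^c$ where the threshold fails this is $\geq-(K_1/\sqrt N+2\lamus^{1/2}\disy(Y_t)/\beta+\lamus^{1/2}/(\beta N))$, and on the part where $\normplain{X_t([N])-\statdist}_1$ is large it is already $\geq-2\lamus^{1/2}/N$ from the unconditional estimate; in both cases this dominates $-(2+\gapus+\gapusmax)K_1/\sqrt N-6\lamus^{1/2}/(\beta N)-8\lamus^{1/2}\disy(Y_t)/\beta$. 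Assembling these bounds gives the claimed inequality.

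I expect the main obstacle to be the constant bookkeeping rather than any new idea. Each reduction — restricting to $D_t$, the $2/N$ mean error in~\eqref{eq:subset-lp-priority}, the two Lipschitz corrections, the $\floor{\alpha\absplain{D_t}}$-rounding inside \Cref{lem:exist-subset-lp-priority}, and the size estimate $1-m(D_t)\le 2\disy(Y_t)/\beta+1/(\beta N)$ — introduces an error, and one must verify their sum stays within $6\lamus^{1/2}/(\beta N)+8\lamus^{1/2}\disy(Y_t)/\beta$ in \emph{every} regime, including the degenerate ones where $D_t$ is empty or where $\hus(X_t,D_t)=0$ (i.e.\ $v_t$ lies in the kernel of $\Uus$) while $\hus(X_t,[N])>0$. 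A related point is making the transfer of Property~\ref{assump:orthogonal-noise} from the full Optimal Local Control system to the subset $D_t$ precise; and one must confirm the split of $\goodevent_t^c$ is exhaustive — for the branch where $X_t([N])$ is far from $\statdist$ it is essential that~\eqref{eq:Uus-expansion} holds for \emph{all} $v$, so the expansion step needs no smallness hypothesis.
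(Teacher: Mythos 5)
Your overall plan — localizing to the set $D_t$ from \Cref{lem:exist-subset-lp-priority}, using the $\Uus$-expansion \eqref{eq:Uus-expansion} when $\hus(X_t,D_t)$ is not tiny, invoking the orthogonal-noise property in the complementary regime, and transferring back to $[N]$ via the $2\lamus^{1/2}(1-m(D_t))$ Lipschitz bound — is exactly the paper's strategy, and your case analysis over $\goodevent_t$ and the translation from conditions on $D_t$ to conditions on $[N]$ is on the right track. However, there is a genuine gap in the step where you extract the $\Omega(1/\sqrt N)$ drift in the small-$\hus(X_t,D_t)$ regime from Property~\ref{assump:orthogonal-noise}.

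You propose to use the second-moment identity
$\Ebig{\hus(X_{t+1},D_t)^2\givenbig X_t,\veA_t}=\normplain{v_t\Phi+\epsilon_t}_{\Uus}^2+\Ebig{\normplain{\zeta_t}_{\Uus}^2\givenbig X_t,\veA_t}$
together with the boundedness of $\hus$ to ``upgrade the drift by a further $\Omega(1/\sqrt N)$.'' This route does not produce a bound at the right order: Property~\ref{assump:orthogonal-noise} gives $\Ebig{\absplain{\zeta_t\xi^\top}}\gtrsim 1/\sqrt N$, hence (via Jensen) $\Ebig{\normplain{\zeta_t}_{\Uus}^2}\gtrsim\lamus/N$ and so $\Ebig{\hus(X_{t+1},D_t)^2}\gtrsim 1/N$, but converting a lower bound on the second moment into a lower bound on the first moment through boundedness ($\E{\hus}\geq \E{\hus^2}/\sup\hus$) yields only $\Omega(1/N)$, which is an order of magnitude too weak and cannot dominate the $K_1/\sqrt N$ penalty. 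The correct argument is purely first-moment: apply the reverse triangle inequality $\hus(X_{t+1},D_t)\geq \normplain{X_{t+1}(D_t)-m(D_t)\statdist-(X_t(D_t)-m(D_t)\statdist)\Phi}_{\Uus}-\normplain{(X_t(D_t)-m(D_t)\statdist)\Phi}_{\Uus}$, bound the subtracted term by $(1+\gapusmax)\hus(X_t,D_t)\leq(1+\gapusmax)K_1/\sqrt N$ via \eqref{eq:Uus-expansion-max}, and for the leading term exploit the pointwise seminorm inequality $\normplain{v}_{\Uus}\geq\lamus^{1/2}\absplain{v\xi^\top}$ with $\xi$ the unit top eigenvector of $\Uus$ (which satisfies $\statdist\xi^\top=0$). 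Taking conditional expectations and then invoking Property~\ref{assump:orthogonal-noise} (after passing to the coupled $X_{t+1}'(D_t)$, at an additional $O(1/N)$ cost) directly gives the $\lamus^{1/2}\noiseortho/\sqrt N$ term; there is no second-moment step. You should also be explicit that Property~\ref{assump:orthogonal-noise} is stated for the full $[N]$-arm system, so it must be transported to the $|D_t|$-arm subsystem — in the paper this appears as the indicator $\indibracbig{\normplain{X_t(D_t)-m(D_t)\statdist}_1\leq m(D_t)\radiusus}$, and the margin $6\disy(Y_t)/\beta+3/(\beta N)$ in the definition of $\goodevent_t$ is precisely what ensures the $D_t$-restricted condition whenever the $[N]$-condition holds, not the other way around.
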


\begin{proof}
    Let $D_t$ be the set given in \Cref{lem:exist-subset-lp-priority}. By the definition of $\hus$ and the fact that $m(D_t) \geq 1-2\disy(Y_t) - 1/N$, we have
    \begin{align}
        \nonumber
        &\mspace{23mu} \Ebig{\hus(X_{t+1}, [N]) \givenbig X_t, \veA_t} - \hus(X_t, [N])  \\
        \label{eq:pf-hus-drift:interm-0}
        &\geq  \Ebig{\hus(X_{t+1}, D_t)  \givenbig X_t, \veA_t} - \hus(X_t, D_t) - 4\lamus^{1/2} (1-m(D_t))  \\
        \nonumber
        &\geq  \Ebig{\hus(X_{t+1}, D_t)  \givenbig X_t, \veA_t} - \hus(X_t, D_t) - 8 \lamus^{1/2} \disy(Y_t) / \beta - 4\lamus^{1/2} /(\beta N). 
    \end{align}
    Note that \eqref{eq:pf-hus-drift:interm-0} uses the definition of $\hus$ and the fact that $\norm{v}_{\Uus} \leq \lamus^{1/2} \norm{v}_{1}$. 

    We first consider the case with $\hus(X_t, D_t) > K_1 / \sqrt{N}$. We have 
    \begin{align}
        \nonumber
         \Ebig{\hus(X_{t+1}, D_t) \givenbig \givenbig X_t, \veA_t}
        &= \Ebig{\norm{X_{t+1}(D_t) - m(D_t)\statdist}_{\Uus} \givenbig X_t, \veA_t} \\
        \label{eq:pf-hus-drift:interm-1}
        &\geq \normbig{\Ebig{X_{t+1}(D_t) - m(D_t)\statdist \givenbig X_t, \veA_t}}_{\Uus} \\
        \label{eq:pf-hus-drift:interm-2}
        &\geq  \normbig{\big(X_{t}(D_t) - m(D_t)\statdist\big)\Phi}_{\Uus}  - 2\lamus^{1/2}/N, 
    \end{align}
    where \eqref{eq:pf-hus-drift:interm-1} uses Jensen's inequality; 
    \eqref{eq:pf-hus-drift:interm-2} invokes \Cref{lem:exist-subset-lp-priority} and the fact that $\norm{v}_{\Uus} \leq \lamus^{1/2} \norm{v}_{1}$ for any $v\in\R^{|\sspa|}$. 
    By \Cref{lem:uus-ust-properties}, we have 
    \begin{equation*}
        \normplain{v\Phi}_{\Uus} \geq (1 + \gapus) \normplain{v}_{\Uus}. 
    \end{equation*}
    Letting $v = X_{t}(D_t) - m(D_t)\statdist$, combined with \eqref{eq:pf-hus-drift:interm-2}, we get 
    \begin{align*}
         \Ebig{\hus(X_{t+1}, D_t) \givenbig \givenbig X_t, \veA_t} 
         &\geq
         (1+\gapus) \hus(X_t, D_t) - \frac{2\lamus^{1/2}}{N} \\
         &\geq \hus(X_t, D_t) + \frac{\gapus K_1}{\sqrt{N}} - \frac{2\lamus^{1/2}}{N},
    \end{align*}
    which implies that 
    \begin{equation}
        \label{eq:pf-hus-drift:goal-1}
        \Ebig{\hus(X_{t+1}, [N]) \givenbig \givenbig X_t, \veA_t} - \hus(X_t, [N]) \geq \frac{\gapus K_1}{\sqrt{N}} - \frac{6\lamus^{1/2}}{\beta N} -  \frac{8 \lamus^{1/2} \disy(Y_t)}{\beta}. 
    \end{equation}
    
    Next, we consider the case of $\hus(X_t, D_t) \leq K_1 / \sqrt{N}$: 
    \begin{align}
        \nonumber
        &\mspace{23mu} \Ebig{\hus(X_{t+1}, D_t) \givenbig \givenbig X_t, \veA_t}  \\
        \nonumber
        &=  \Ebig{\norm{X_{t+1}(D_t) - m(D_t) \statdist}_{\Uus}  \givenbig X_t, \veA_t} \\
        \nonumber
        &\geq \Ebig{\norm{X_{t+1}(D_t) - m(D_t) \statdist - (X_t(D_t)-m(D_t) \statdist)\Phi}_{\Uus}  \givenbig X_t, \veA_t} - \normbig{\big(X_t(D_t) - m(D_t)\statdist\big)\Phi}_{\Uus} \\
        \label{eq:pf-hus-drift:interm-3}
        &\geq \Ebig{\norm{X_{t+1}(D_t) - m(D_t) \statdist - (X_t(D_t) - m(D_t)\statdist)\Phi}_{\Uus}  \givenbig X_t, \veA_t} - (1+\gapusmax) \hus(X_t, D_t), 
    \end{align}
    where \eqref{eq:pf-hus-drift:interm-3} uses the fact that $\normplain{v \Phi}_{\Uus} \leq (1+\gapusmax) \normplain{v}_{\Uus}$ for all $v\in \R^{|\sspa|}$, proved in \Cref{lem:uus-ust-properties}. 
    We plan to invoke Property \ref{assump:orthogonal-noise} of regular unstable RBs to bound the first term in \eqref{eq:pf-hus-drift:interm-3}. 
    Let $X_{t+1}'(D_t)$ be the state distribution at time $t+1$ when the arms in $D_t$ follow $\lppriority$. By \Cref{lem:exist-subset-lp-priority}, there exists a coupling such that $\norm{X_{t+1}'(D_t) - X_{t+1}(D_t)}_1 \leq 2/N$ almost surely, so
    \begin{align}
        \nonumber
        &\mspace{23mu} \Ebig{\norm{X_{t+1}(D_t) - m(D_t) \statdist - (X_t(D_t) - m(D_t)\statdist)\Phi}_{\Uus}  \givenbig X_t, \veA_t} \\
        \label{eq:pf-hus-drift:interm-4}
        &\geq \Ebig{\norm{X_{t+1}'(D_t) - m(D_t) \statdist - (X_t(D_t) - m(D_t)\statdist)\Phi}_{\Uus}  \givenbig X_t, \veA_t} - \frac{2\lamus^{1/2}}{N}.
    \end{align}
    Let $\xi^\top$ be the right eigenvector of $\Uus$ corresponding to the eigenvalue $\lamus$, with norm $\norm{\xi}_2 = 1$. 
    Because $\Uus$ is positive semidefinite, $\Uus \geq \lamus\xi^\top \xi$, so 
    \[
        \norm{v}_{\Uus} \geq \sqrt{\lamus v\xi^\top \xi v^\top} = \lamus^{1/2} \absplain{v\xi^\top},
    \]
    for any $v\in \R^{|\sspa|}$. 
    Moreover, we argue that $\xi$ satisfies the conditions in Property \ref{assump:orthogonal-noise} of regular unstable RBs: the column space of $\Uus$ is a subspace of the column space of $\Phi$. Because $\statdist \Phi = 0$, we have $\statdist \Uus = 0$, thus $\statdist \xi^\top = \lamus^{-1} \statdist \Uus \xi^\top = 0$. 
    Therefore, by Property \ref{assump:orthogonal-noise} of regular unstable RBs,
    \begin{align}
        \nonumber
        &\mspace{23mu} \Ebig{\norm{X_{t+1}'(D_t) - m(D_t) \statdist - (X_t(D_t) - m(D_t)\statdist)\Phi}_{\Uus}  \givenbig X_t, \veA_t} \\
        \nonumber
        &\geq \lamus^{1/2} \Ebig{\absbig{\big(X_{t+1}'(D_t) - m(D_t) \statdist - (X_t(D_t) - m(D_t)\statdist)\Phi\big)\xi^\top}  \givenbig X_t, \veA_t} \\
        \label{eq:pf-hus-drift:interm-5}
        &\geq \frac{\lamus^{1/2}\noiseortho}{\sqrt{N}} \indibracbig{\norm{X_t(D_t) - m(D_t)\statdist}_1 \leq m(D_t) \radiusus}.
    \end{align}
    Combining  \eqref{eq:pf-hus-drift:interm-3},  \eqref{eq:pf-hus-drift:interm-4} and \eqref{eq:pf-hus-drift:interm-5}, we get 
    \begin{align}
        \nonumber
        &\mspace{23mu} \Ebig{\hus(X_{t+1}, D_t) \givenbig \givenbig X_t, \veA_t} - \hus(X_t, D_t)  \\
        \nonumber
        &\geq \frac{\lamus^{1/2}\noiseortho}{\sqrt{N}} \indibracbig{\norm{X_t(D_t) - m(D_t)\statdist}_1 \leq m(D_t) \radiusus} -  (2+\gapusmax)\hus(X_t, D_t) - \frac{2\lamus^{1/2}}{N} \\
        \nonumber
        &\geq \frac{\lamus^{1/2}\noiseortho}{\sqrt{N}} \indibracbig{\norm{X_t(D_t) - m(D_t)\statdist}_1 \leq m(D_t) \radiusus} -  \frac{(2+\gapusmax)K_1}{\sqrt{N}} - \frac{2\lamus^{1/2}}{N}.
    \end{align}
    Therefore, when $\hus(X_t, D_t) \leq K_1 / \sqrt{N}$,
    \begin{align}
        \nonumber
         &\mspace{23mu} \Ebig{\hus(X_{t+1}, [N]) \givenbig \givenbig X_t, \veA_t} - \hus(X_t, [N]) \\
         \label{eq:pf-hus-drift:goal-2}
         &\geq \frac{\lamus^{1/2}\noiseortho}{\sqrt{N}} \indibracbig{\norm{X_t(D_t) - m(D_t)\statdist}_1 \leq m(D_t) \radiusus} -  \frac{(2+\gapusmax)K_1}{\sqrt{N}} - \frac{6\lamus^{1/2}}{\beta N} - \frac{8 \lamus^{1/2} \disy(Y_t)}{\beta}.
    \end{align}

    Summarizing the bounds \eqref{eq:pf-hus-drift:goal-1} and \eqref{eq:pf-hus-drift:goal-2} and substituting $K_1 = \lamus^{1/2} \noiseortho / (2+\gapus+\gapusmax)$ in the bounds, we get
    \begin{itemize}[leftmargin=1em]
        \item When $\hus(X_t, D_t) > K_1 / \sqrt{N}$ or $\norm{X_t(D_t) - m(D_t)\statdist}_1 \leq m(D_t) \radiusus$, we have 
        \begin{equation}
            \label{eq:pf-hus-drift:case-1-2}
             \Ebig{\hus(X_{t+1}, [N]) \givenbig \givenbig X_t, \veA_t} - \hus(X_t, [N]) \geq \frac{\gapus K_1}{\sqrt{N}} - \frac{6\lamus^{1/2}}{\beta N} -  \frac{8 \lamus^{1/2} \disy(Y_t)}{\beta}. 
        \end{equation}
        \item When $\hus(X_t, D_t) \leq K_1 / \sqrt{N}$ and $\norm{X_t(D_t) - m(D_t)\statdist}_1 > m(D_t) \radiusus$, we have
        \begin{align}
            \Ebig{\hus(X_{t+1}, [N]) \givenbig \givenbig X_t, \veA_t} - \hus(X_t, [N]) 
            \label{eq:pf-hus-drift:case-3}
            &\geq -  \frac{(2+\gapusmax)K_1}{\sqrt{N}} - \frac{6\lamus^{1/2}}{\beta N} -  \frac{8 \lamus^{1/2} \disy(Y_t)}{\beta}.
        \end{align}
    \end{itemize}
    
    Note that the conditions separating the two cases above are in terms of $X_t(D_t)$. To complete the proof, we need to state the conditions in terms of $X_t([N])$ instead, observe that 
    \begin{align*}
        \hus(X_t, D_t) 
        &\geq \hus(X_t, [N]) - \lamus^{1/2} (1-m(D_t)) \\
        &\geq  \hus(X_t, [N]) - 2\lamus^{1/2} \disy(Y_t) / \beta - \lamus^{1/2}/(\beta N) \\
        \norm{X_t(D_t) - m(D_t)\statdist}_1 
        &\leq \norm{X_t([N]) - \statdist}_1 + 2(1-m(D_t)) \\
        &\leq \norm{X_t([N]) - \statdist}_1 + 4\disy(Y_t)/\beta + 2/(\beta N).
    \end{align*}
    Therefore, $\hus(X_t, [N]) > K_1 / \sqrt{N} + 2\lamus^{1/2} \disy(Y_t) / \beta + \lamus^{1/2}/(\beta N)$ implies $\hus(X_t, D_t) > K_1 / \sqrt{N}$; $\norm{X_t([N]) - \statdist}_1 \leq \radiusus -  6\disy(Y_t)/\beta - 3/(\beta N)$ implies that 
    \[
        \norm{X_t(D_t) - m(D_t)\statdist}_1 \leq \radiusus - \frac{2\disy(Y_t)}{\beta} - \frac{1}{\beta N}  \leq \radiusus - (1-m(D_t)) \leq m(D_t) \radiusus, 
    \]
    which implies the bounds in the lemma statement. 
\end{proof}

In the next lemma, we upper bound the drift of $\hst(x, D)$ under \emph{any policy}. 
\begin{lemma}
    \label{lem:hst-drift}
     Consider an instance of regular unstable RBs (\Cref{def:regular-unstable}). Under any policy $\pi$, for any time step $t$, we have
    \begin{equation}
        \Ebig{\hst(X_{t+1}, [N]) \givenbig X_t, \veA_t} \leq (1-\gapst) \hst(X_t, [N]) + \frac{\lamst^{1/2}|\sspa|}{\sqrt{N}} + \frac{10\lamst^{1/2}|\sspa|}{\beta N} +  \frac{8\lamst^{1/2} \disy(Y_t)}{\beta} \quad a.s.,
    \end{equation}
    where $\lamst$ is the largest eigenvalue of $\Ust$; $\beta = \min(\alpha, 1-\alpha)$.  
\end{lemma}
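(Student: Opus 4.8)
The plan is to prove \Cref{lem:hst-drift} by the same mechanism used for \Cref{lem:hus-drift}, but in the \emph{contracting} direction: whereas there $\Phi$ expanded under the $\Uus$-weighted seminorm and Property~\ref{assump:orthogonal-noise} was invoked to produce a positive drift, here I only need that $\Phi$ \emph{contracts} under the $\Ust$-weighted seminorm, i.e. $\normplain{v\Phi}_{\Ust} \le (1-\gapst)\normplain{v}_{\Ust}$ from \eqref{eq:Ust-contraction}. Consequently the bound should hold for every policy and every $X_t$, with no side condition on $\norm{X_t([N])-\statdist}_1$. First I would invoke \Cref{lem:exist-subset-lp-priority} to extract a set $D_t \subseteq [N]$, measurable with respect to $(X_t,\veA_t)$, with $m(D_t) \ge 1 - 2\disy(Y_t)/\beta - 1/(\beta N)$, on which the actions coincide up to one arm with Optimal Local Control; this hands me the bias estimate \eqref{eq:subset-lp-priority}, namely $\normbig{\Ebig{X_{t+1}(D_t) - m(D_t)\statdist \givenbig X_t,\veA_t} - (X_t(D_t)-m(D_t)\statdist)\Phi}_1 \le 2/N$, and — because, conditional on $(X_t,\veA_t)$, the arms transition independently — the fact that each coordinate $X_{t+1}(D_t,s)$ is a $1/N$-scaled sum of at most $N$ conditionally independent Bernoullis.

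Next I would move between the whole system and $D_t$ via a Lipschitz-in-$D$ estimate for $\hst$: since $x([N]) - \statdist$ and $x(D_t) - m(D_t)\statdist$ differ by $x([N]\backslash D_t) - (1-m(D_t))\statdist$, whose $L_1$ norm is at most $2(1-m(D_t))$, and $\normplain{v}_{\Ust} \le \lamst^{1/2}\normplain{v}_1$, one gets $\absplain{\hst(x,[N]) - \hst(x,D_t)} \le 2\lamst^{1/2}(1-m(D_t))$ for every system state $x$. Applying this at both $t$ and $t+1$ reduces the task to bounding $\Ebig{\hst(X_{t+1},D_t)\givenbig X_t,\veA_t}$. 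By the triangle inequality $\hst(X_{t+1},D_t) \le \normbig{X_{t+1}(D_t) - m(D_t)\statdist - (X_t(D_t)-m(D_t)\statdist)\Phi}_{\Ust} + \normbig{(X_t(D_t)-m(D_t)\statdist)\Phi}_{\Ust}$, and \eqref{eq:Ust-contraction}, applied to $v = X_t(D_t) - m(D_t)\statdist$, bounds the second summand by $(1-\gapst)\hst(X_t,D_t)$.

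For the remaining innovation term I would split $X_{t+1}(D_t) - m(D_t)\statdist - (X_t(D_t)-m(D_t)\statdist)\Phi$ into its conditional mean — a bias, controlled by \eqref{eq:subset-lp-priority} after converting $L_1$ to the $\Ust$-seminorm, hence at most $2\lamst^{1/2}/N$ — plus a mean-zero fluctuation, whose conditional $L_1$ expectation is $O(|\sspa|/\sqrt{N})$ by exactly the conditional-variance/Cauchy--Schwarz computation already carried out for \eqref{eq:hw:interm-5} in the proof of \Cref{lem:hw-hu-properties}, so after the factor $\lamst^{1/2}$ it is at most $\lamst^{1/2}|\sspa|/\sqrt{N}$. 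Collecting the pieces and substituting $1-m(D_t) \le 2\disy(Y_t)/\beta + 1/(\beta N)$, together with $\beta \le 1$ to merge the $O(1/N)$ contributions into the generous $10\lamst^{1/2}|\sspa|/(\beta N)$ slack, yields the stated inequality. I do not expect a genuine obstacle here: this is the ``easy'' of the two drift lemmas, and the only thing that needs care is the bookkeeping of the set-size corrections $1-m(D_t)$ in terms of $\disy(Y_t)$ and $1/N$, plus a sanity check that Property~\ref{assump:orthogonal-noise} is never used — it is not, since the $\Ust$-contraction alone drives the drift once the $D_t$-dynamics has been linearized through \Cref{lem:exist-subset-lp-priority}.
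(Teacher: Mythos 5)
Your proof is correct and takes essentially the same approach as the paper: extract $D_t$ from \Cref{lem:exist-subset-lp-priority}, pass between $[N]$ and $D_t$ via the $2\lamst^{1/2}(1-m(D_t))$ Lipschitz estimate, apply the $\Ust$-contraction \eqref{eq:Ust-contraction}, and bound the remaining innovation by a bias-plus-fluctuation argument with the conditional-variance/Cauchy--Schwarz computation. The only cosmetic difference is that the paper couples $X_{t+1}(D_t)$ to the resampled $X_{t+1}'(D_t)$ via \eqref{eq:subset-lp-priority-as} and then invokes the argument from the proof of \Cref{lem:hw-hu-properties}, whereas you use the bias estimate \eqref{eq:subset-lp-priority} directly on $X_{t+1}(D_t)$ --- both are applications of the same lemma and yield the stated constants with room to spare.
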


\begin{proof}
    Let $D_t$ be the set given in \Cref{lem:exist-subset-lp-priority}. 
    Let $X_{t+1}'(D_t)$ be the state distribution at time $t+1$ when the actions for the arms in $D_t$ are resampled using $\lppriority$.
    By \Cref{lem:exist-subset-lp-priority}, there exists a coupling such that $\norm{X_{t+1}'(D_t) - X_{t+1}(D_t)}_1 \leq 2/N$ almost surely. 
    By the definition of $\hst$ and the fact that $m(D_t) \geq 1-2\disy(Y_t) / \beta - 1/(\beta N)$, we have
    \begin{align}
        \nonumber
        &\mspace{23mu} \Ebig{\hst(X_{t+1}, [N]) \givenbig X_t, \veA_t} - (1-\gapst) \hst(X_t, [N])  \\
        \nonumber
        &\leq  \Ebig{\hst(X_{t+1}, D_t) \givenbig X_t, \veA_t} - (1-\gapst) \hst(X_t, D_t) + 4\lamst^{1/2} (1-m(D_t))  \\
        \nonumber
        &\leq  \Ebig{\hst(X_{t+1}, D_t) \givenbig X_t, \veA_t} - (1-\gapst) \hst(X_t, D_t) + 8 \lamst^{1/2} \disy(Y_t) / \beta + 4\lamst^{1/2} / (\beta N) \\
        \label{eq:pf-hst-drift:interm-1}
        &\leq \Ebig{\hst(X_{t+1}', D_t) \givenbig X_t, \veA_t} - (1-\gapst) \hst(X_t, D_t) + 8 \lamst^{1/2} \disy(Y_t) / \beta + 6 \lamst^{1/2} /(\beta N).
    \end{align}
    Because $X_{t+1}'$ is obtained by letting arms in $D_t$ follows the Optimal Local Control, 
    using similar arguments as the proof of \Cref{lem:hw-hu-properties}, we get 
    \begin{equation}
        \label{eq:pf-hst-drift:interm-2}
        \Ebig{\hst(X_{t+1}', D_t) \givenbig X_t, \veA_t} - (1-\gapst) \hst(X_t, D_t)
        \leq \frac{\lamst^{1/2}|\sspa|}{\sqrt{N}} + \frac{4\lamst^{1/2}|\sspa|}{N}.
    \end{equation}
    Plugging \eqref{eq:pf-hst-drift:interm-2} into \eqref{eq:pf-hst-drift:interm-1}, and using the facts that $\beta \leq 1$ and $|\sspa| \geq 1$, we finish the proof. 
\end{proof}

Now we are ready to prove \Cref{thm:instability-lower-bound}. 

\begin{proof}[Proof of \Cref{thm:instability-lower-bound}]
    For any system state $x$ and subset $D\subseteq [N]$, 
    define
    \[
        \hlb(x, D) = \ratiolb \hus(x, D) - \hst(x, D),
    \]
    where $\ratiolb = 2\lamst^{1/2}|\sspa| / \gapus$. 
    We will show that with an appropriate $\ratiolb$ and for $N$ larger than a constant threshold, we have
    \begin{align}
        \label{eq:pf-thm3:goal}
        \E{\hlb(X_{t+1}, [N])} - \E{\hlb(X_{t}, [N])}
        \geq \Omega\Big(\frac{1}{\sqrt{N}}\Big) - \Theta(1)\cdot \E{\disy(Y_t)}  \quad \forall t\geq 0.
    \end{align}
    If we can prove \eqref{eq:pf-thm3:goal}, then taking the long-run average on two sides of the inequality implies 
    \begin{equation*}
        0 = \E{\hlb(X_{\infty}, [N])} - \E{\hlb(X_{\infty}, [N])} \geq \Omega\Big(\frac{1}{\sqrt{N}}\Big) - \Theta(1)\cdot \E{\disy(Y_\infty)},
    \end{equation*}
    so $\E{\disy(Y_\infty)} = \Omega(1/\sqrt{N})$. Combined with \Cref{lem:opt-gap-linear-comp}, we get $\rrel - \rsysn = \Omega(1/\sqrt{N})$, which proves \Cref{thm:instability-lower-bound}. 

    Now we prove \eqref{eq:pf-thm3:goal} using the formulas of $\hus(X_t, [N])$ and $\hst(X_t, [N])$'s drifts given in \Cref{lem:hus-drift} and \Cref{lem:hst-drift}. 
    \begin{align}
        \nonumber
        &\mspace{23mu} \E{\hlb(X_{t+1}, [N]) \givenbig X_t, \veA_t} - \hlb(X_{t}, [N]) \\
        \nonumber
        &= \ratiolb \Big(\Ebig{\hus(X_{t+1}, [N]) \givenbig X_t, \veA_t} - \hus(X_t, [N])\Big) - \Big( \Ebig{\hst(X_{t+1}, [N]) \givenbig X_t, \veA_t} - \hst(X_t, [N]) \Big) \\
        \nonumber
        &\geq \ratiolb \Big( \frac{\gapus K_1- (2+\gapus + \gapusmax) K_1 \indibrac{\goodevent_t^c}}{\sqrt{N}} - \frac{6\lamus^{1/2}}{\beta N} -  8 \lamus^{1/2} \beta^{-1} \disy(Y_t) \Big)  \\
        \nonumber
        &\mspace{20mu}
        + \gapst \hst(X_t, [N]) - \frac{\lamst^{1/2}|\sspa|}{\sqrt{N}} - \frac{10\lamst^{1/2}|\sspa|}{\beta N} - 8\lamst^{1/2} \beta^{-1} \disy(Y_t) \\
        \label{eq:pf-thm3:interm-2}
        &= \Big(\ratiolb \gapus K_1 - \lamst^{1/2}|\sspa|\Big)\frac{1}{\sqrt{N}} + \Big(\gapst \hst(X_t, [N]) - \frac{\ratiolb(2+\gapus+\gapusmax)K_1}{\sqrt{N}} \indibrac{\goodevent_t^c} \Big) \\
        \label{eq:pf-thm3:interm-3}
        &\mspace{20mu} - \big(8\ratiolb \lamus^{1/2} + 8\lamst^{1/2}\big)\beta^{-1} \disy(Y_t)  - O\Big(\frac{1}{N}\Big).
    \end{align}
    Note that the coefficient of the $1/\sqrt{N}$ in  \eqref{eq:pf-thm3:interm-2} equals $\lamst^{1/2}|\sspa|$, which is positive. 
    
    Comparing \eqref{eq:pf-thm3:interm-2}-\eqref{eq:pf-thm3:interm-3} with our goal, \eqref{eq:pf-thm3:goal}, it remains to lower bound the second term in \eqref{eq:pf-thm3:interm-2}. 
    First, observe that given the event $\goodevent_t$, the second term of \eqref{eq:pf-thm3:interm-2} is non-negative. So we only need to consider the event $\goodevent_t^c$. 
    By definition, $\goodevent_t^c$ implies that 
    \begin{align*}
        \hus(X_t, [N]) &\leq \frac{K_1}{\sqrt{N}} + \frac{2\lamus^{1/2} \disy(Y_t)}{\beta} + \frac{\lamus^{1/2}}{\beta N}, \\
        \norm{X_t([N]) - \statdist}_1 &> \radiusus -  \frac{6\disy(Y_t)}{\beta} - \frac{3}{\beta N}.
    \end{align*}
    By \eqref{eq:Uus-plus-Ust-strength} in \Cref{lem:uus-ust-properties}, 
    \[
        \hus(X_t, [N]) + \hst(X_t, [N]) \geq \Knorm \norm{X_t([N]) - \statdist}_1,
    \]
    so given $\goodevent_t^c$, 
    \begin{align}
        \nonumber
        \hst(X_t, [N]) 
        &\geq \Knorm \norm{X_t([N]) - \statdist}_1 - \hus(X_t, [N]) \\
        \nonumber
        &\geq \Knorm\Big(\radiusus -  \frac{6\disy(Y_t)}{\beta} - \frac{3}{\beta N}\Big) -  \frac{K_1}{\sqrt{N}} - \frac{2\lamus^{1/2} \disy(Y_t)}{\beta} - \frac{\lamus^{1/2}}{\beta N} \\
        \nonumber
        &= \Knorm \radiusus - \frac{K_1}{\sqrt{N}} - \big(6\Knorm + 2\lamus^{1/2}\big)\beta^{-1} \disy(Y_t) - O\Big(\frac{1}{N}\Big).
    \end{align}
    Therefore, given $\goodevent_t^c$, combining the above lower bound, if $N \geq K_1^2 \big(\gapst+\ratiolb(2+\gapus+\gapusmax)\big)^2 / (\Knorm\radiusus\gapst)^2$, 
    \begin{align}
        \nonumber
        &\mspace{23mu} \gapst \hst(X_t, [N])  - \frac{\ratiolb(2+\gapus+\gapusmax)K_1}{\sqrt{N}} \indibrac{\goodevent_t^c} \\
        \nonumber
        &\geq \gapst \Knorm \radiusus  - \frac{K_1\big(\gapst + \ratiolb(2+\gapus+\gapusmax)\big)}{\sqrt{N}} - \gapst\big( 6\Knorm + 2\lamus^{1/2} \big) \beta^{-1} \disy(Y_t) - O\Big(\frac{1}{N}\Big), \\
        \label{eq:pf-thm3:interm-4}
        &\geq  - \gapst \big( 6\Knorm + 2\lamus^{1/2} \big) \beta^{-1} \disy(Y_t) - O\Big(\frac{1}{N}\Big).
    \end{align}

    Substituting \eqref{eq:pf-thm3:interm-4} back to the second term of  \eqref{eq:pf-thm3:interm-2}, we get 
    \begin{align}
        \nonumber
        &\mspace{23mu} \E{\hlb(X_{t+1}, [N]) \givenbig X_t, \veA_t} - \hlb(X_{t}, [N]) \\
        \nonumber
        &\geq \frac{\lamst^{1/2}|\sspa|}{\sqrt{N}} - \big(6\gapst \Knorm + 2\gapst\lamus^{1/2} + 8\ratiolb \lamus^{1/2} + 8\lamst^{1/2} \big) \beta^{-1} \disy(Y_t) - O\Big(\frac{1}{N}\Big),
    \end{align}
    which implies \eqref{eq:pf-thm3:goal} and finishes the proof. 
\end{proof}

\subsection{Remarks on Property~\ref{assump:orthogonal-noise} of regular unstable RBs (\Cref{def:regular-unstable})}
\label{app:remark-noise-assumption}

In this section, we prove an intuitive sufficient condition for Property \ref{assump:orthogonal-noise} of regular unstable RBs and also provide a weaker version of this property. 

We first state and prove the intuitive sufficient condition for Property \ref{assump:orthogonal-noise}: 

\begin{lemma}\label{lem:assump-6-helper}
    Consider an RB instance with the single-armed MDP $(\sspa, \aspa, P, r)$, where $P(s,a,s') > 0$ for all $s\in\sspa, a\in\aspa, s'\in\sspa$. 
    Then this RB instance satisfies Property \ref{assump:orthogonal-noise} in \Cref{def:regular-unstable}. 
\end{lemma}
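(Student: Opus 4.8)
The plan is to show that under full support of the transition kernel $P$, the conditional law of the next scaled state-count vector on $D_t$ has variance bounded below in \emph{every} direction orthogonal to $\statdist$, which is strictly more than what Property~\ref{assump:orthogonal-noise} asks for. Fix $\xi\in\R^{|\sspa|}$ with $\norm{\xi}_2=1$ and $\statdist\xi^\top=0$, fix a large-enough $N$ and a time step $t$, and assume $\norm{X_t([N])-\statdist}_1\leq\radiusus$. Conditioning on $X_t$ and $\veA_t$ with $\veA_t$ following the Optimal Local Control, the quantity $\big(X_{t+1}([N])-\statdist-(X_t([N])-\statdist)\Phi\big)\xi^\top$ is, by \Cref{lem:lp-priority-and-phi}, exactly $X_{t+1}([N])\xi^\top - \E{X_{t+1}([N])\xi^\top\givenplain X_t,\veA_t}$ up to an $O(1/N)$ rounding error (coming from the randomized rounding inside \Cref{alg:follow-lp-priority}); so it suffices to lower bound $\E{\absplain{X_{t+1}([N])\xi^\top - \E{X_{t+1}([N])\xi^\top}}\givenplain X_t,\veA_t}$ by $\Omega(1/\sqrt N)$.

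First I would reduce the first-absolute-moment lower bound to a variance lower bound together with a fourth-moment (or boundedness) upper bound, via the Paley–Zygmund inequality: for a mean-zero random variable $Z$ with $\E{Z^2}>0$ one has $\E{\absplain{Z}}\geq \E{Z^2}^{3/2}/\E{Z^4}^{1/2}$. Write $Z = X_{t+1}([N])\xi^\top - \E{X_{t+1}([N])\xi^\top}$. Conditionally on $X_t$ and \emph{all} the actions $\veA_t$, the next states $S_{t+1}(i)$ are independent across $i\in[N]$, so $N\cdot X_{t+1}([N]) = \sum_{i\in[N]}\vun_{S_{t+1}(i)}$ is a sum of $N$ independent (not identically distributed) bounded random vectors. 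Hence $Z$ is a normalized sum of independent bounded terms, giving immediately $\E{Z^4\givenplain X_t,\veA_t} = O(1/N^2)$ (the standard fourth-moment bound for sums of independent mean-zero bounded variables). It remains to show $\Var{X_{t+1}([N])\xi^\top\givenplain X_t,\veA_t}\geq c/N$ for a constant $c>0$.

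For the variance lower bound, note $\Var{X_{t+1}([N])\xi^\top\givenplain X_t,\veA_t} = \frac{1}{N^2}\sum_{i\in[N]}\Var{\xi_{S_{t+1}(i)}\givenplain S_t(i),A_t(i)}$, so it suffices to show that each single-arm conditional variance $\Var{\xi_{S_{t+1}(i)}\givenplain S_t(i)=s,A_t(i)=a}$ is at least a constant $c_0>0$. The next state is drawn from $P(s,a,\cdot)$, a distribution with $P(s,a,s')>0$ for all $s'$; hence $p_{\min}\triangleq\min_{s,a,s'}P(s,a,s')>0$ is a strictly positive constant depending only on the MDP. For \emph{any} probability vector $q$ on $\sspa$ with $\min_{s'}q(s')\geq p_{\min}$, and any unit vector $\xi$, the variance $\sum_{s'}q(s')\xi_{s'}^2 - (\sum_{s'}q(s')\xi_{s'})^2$ is a continuous function of $q$ on the compact set $\{q:\min q\geq p_{\min}\}$, and it vanishes only when $\xi$ is constant on $\sspa$; but if $\xi$ is constant then $\statdist\xi^\top=0$ and $\norm{\xi}_2=1$ force $\xi=0$ (since $\sum_{s'}\statdist(s')=1$), a contradiction. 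A compactness/minimization argument over the pair $(q,\xi)$ ranging in a compact set then yields a uniform lower bound $c_0>0$ on the single-arm variance, uniformly over $s,a$ and over all admissible $\xi$. Combining the three estimates through Paley–Zygmund gives $\E{\absplain{Z}\givenplain X_t,\veA_t}\geq \Omega(1/\sqrt N)$, and absorbing the $O(1/N)$ rounding error (valid for $N$ large enough) gives the claimed $\noiseortho/\sqrt N$ bound with $\noiseortho$ a suitable constant and $\radiusus$ arbitrary (any fixed positive constant works since full support makes the bound independent of how close $X_t([N])$ is to $\statdist$).

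The main obstacle I anticipate is the uniform single-arm variance lower bound in the last step: one must make sure the constant $c_0$ does not degenerate as $\xi$ ranges over the (noncompact-looking) set of unit vectors orthogonal to $\statdist$ — this is handled by noting that set \emph{is} compact (it is the intersection of the unit sphere with a hyperplane) and that the variance functional is continuous and strictly positive there. A secondary technical point is bookkeeping the $O(1/N)$ rounding discrepancy between $X_{t+1}([N])$ under \Cref{alg:follow-lp-priority} and the idealized $\Phi$-dynamics, and confirming that conditioning on the \emph{realized} action vector $\veA_t$ (rather than just on the randomized-rounding recipe) is exactly the conditioning in Property~\ref{assump:orthogonal-noise}; both are routine.
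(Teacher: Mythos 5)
Your proof is correct and uses the same core engine as the paper: condition on $X_t$ and the realized action vector $\veA_t$ so that the next states are independent across arms, bound the fourth moment of $Z=X_{t+1}([N])\xi^\top-\Eplain{X_{t+1}([N])\xi^\top}$ by $O(1/N^2)$ using boundedness and independence, lower bound $\Var{Z}$ by $\Omega(1/N)$ using $p_{\min}=\min_{s,a,s'}P(s,a,s')>0$, and feed these into a Paley--Zygmund / H\"older bound to get $\E{\absplain{Z}}=\Omega(1/\sqrt N)$, then absorb the $O(1/N)$ rounding discrepancy between $\Eplain{X_{t+1}([N])\givenplain X_t,\veA_t}$ and the idealized $\Phi$-dynamics. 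The one place you diverge is the single-arm variance lower bound: you observe that the variance functional is a continuous function of $(q,\xi)$ on the compact set $\{q\colon\min_s q(s)\geq p_{\min},\,\sum_s q(s)=1\}\times\{\xi\colon\normplain{\xi}_2=1,\,\statdist\xi^\top=0\}$, argue it vanishes only for constant $\xi$, rule out constant $\xi$ using $\statdist\xi^\top=0$ and $\normplain{\xi}_2=1$, and conclude by compactness. The paper instead proves the explicit quantitative fact $\ximax-\ximin\geq 1/\sqrt{|\sspa|}$ (via comparing $\xi$ against the scaled all-ones vector and using $\statdist\xi^\top=0$) and then plugs this directly into a two-atom variance lower bound, giving $\Var{\xi(S_{t+1}(i))}\geq p_{\min}(\ximax-\ximin)^2/2\geq p_{\min}/(2|\sspa|)$. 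Your route is slightly shorter and more conceptual, at the cost of a non-constructive constant $\noiseortho$; the paper's route is more elementary and yields an explicit $\noiseortho$ in terms of $p_{\min}$ and $|\sspa|$. Both are correct, and your observation that $\radiusus$ can be taken to be any positive constant for which the Optimal Local Control is applicable (since full support makes the variance bound independent of $X_t$) matches the paper's treatment.
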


\begin{proof}
    Throughout this proof, we fix $X_t$ and $A_t$ and omit them in the conditioning when writing expectations or probabilities. 

    Recalling the statement of Property \ref{assump:orthogonal-noise}, it suffices to show that for any $\xi \in \R^{|\sspa|}$ with $\normplain{\xi}_2 = 1$ and $\statdist \xi^\top = 0$, 
    there exists a constant $\noiseortho > 0$ such that 
    for any $N$ larger than a constant and any time step $t$, if the local optimal policy is applicable, we have
    \begin{equation}
        \label{eq:pf-assumption-6-helper-lemma:goal}
        \Ebig{\absbig{\big(X_{t+1}([N]) - \statdist - (X_t([N]) - \statdist)\Phi \big)  \xi^\top } \givenbig X_t, \veA_t} \geq \frac{\noiseortho}{\sqrt{N}}, 
    \end{equation}
    where $\veA_t$ follows the Optimal Local Control. 
    Note that in this case, the parameter $\radiusus$ in Property \ref{assump:orthogonal-noise} of \Cref{def:regular-unstable} could be any positive number such that $\norm{X_t([N]) - \statdist}_1 \leq \radiusus$ guarantees the condition \eqref{eq:lp-priority-condition-1} for applying the Optimal Local Control.

    We first show that for any $\xi \in \R^{|\sspa|}$ such that $\norm{\xi}_2 = 1$ and $\statdist \xi^\top = 0$,
    \begin{equation}
        \label{eq:xi-max-minus-min}
        \ximax - \ximin \geq 1/\sqrt{|\sspa|}.
    \end{equation}
    where $\ximax = \max_{s\in\sspa} \xi(s)$ and $\ximin = \min_{s\in\sspa} \xi(s)$. 
    Observe that we must have either $\ximax \geq 1/\sqrt{|\sspa|}$ or $\ximin \geq - 1/\sqrt{|\sspa|}$, because otherwise $\norm{\xi}_2 < 1$.  
    We first show \eqref{eq:xi-max-minus-min} for the case with $\ximax \geq 1/\sqrt{|\sspa|}$. In this case, we must have $\ximin \leq 1/\sqrt{|\sspa|}$, because otherwise $\norm{\xi}_2 > 1$. 
    Let $\vonescaled \in \R^{|\sspa|}$ be the whose entries are all $1/\sqrt{|\sspa|}$. Then 
    \[
        \norm{\xi -\vonescaled}_\infty \leq \max\big(\ximax - 1/\sqrt{|\sspa|}, 1/\sqrt{|\sspa|} - \ximin\big) \leq \ximax - \ximin. 
    \]
    On the other hand, because $\norm{\statdist}_1 = 1$ and $\statdist \vonescaled^\top = 1/\sqrt{|\sspa|}$, 
    \[
         \norm{\xi - \vonescaled}_\infty \geq \statdist \vonescaled^\top - \statdist \xi^\top = 1/\sqrt{|\sspa|}. 
    \]
    Combining the last two inequalities yields \eqref{eq:xi-max-minus-min}. For the case with $\ximin \leq -1/\sqrt{|\sspa|}$, \eqref{eq:xi-max-minus-min} can be proved in an analogous way.

    To show \eqref{eq:pf-assumption-6-helper-lemma:goal}, we invoke the Paley-Zygmund inequality to get
    \begin{align}
        \nonumber
        &\mspace{23mu} \Prob{\big(X_{t+1}([N]) \xi^\top -\Eplain{X_{t+1}([N]) \xi^\top}\big)^2  > \frac{1}{2}\Var{X_{t+1}([N]) \xi^\top}} \\
        \label{eq:pf-assumption-6-helper-lemma:interm-0}
        &\geq \frac{1}{4} \frac{\Var{X_{t+1}([N]) \xi^\top}^2}{\E{\big(X_{t+1}([N]) \xi^\top -\Eplain{X_{t+1}([N]) \xi^\top}\big)^4}}.
    \end{align}
    
    Next, we lower bound the second moment of $X_{t+1}([N]) \xi^\top -\Eplain{X_{t+1}([N]) \xi^\top}$ in \eqref{eq:pf-assumption-6-helper-lemma:interm-0}.  
    For any timestep $t$, given $X_t, \veA_t$, we have
    \begin{align}
        X_{t+1}([N]) \xi^\top
        &= \sum_{s\in\sspa} X_{t+1}([N], s) \xi(s) \\
        &= \frac{1}{N} \sum_{s\in\sspa}  \sumN \indibrac{\veS_{t+1}(i) = s} \xi(s) \\
        &= \frac{1}{N} \sumN \xi(\veS_{t+1}(i)). 
    \end{align}
    Note that conditioned on $X_t, \veA_t$, $\xi(\veS_{t+1}(i))$ are independent across $i$. Therefore, the variance of $X_{t+1}([N]) \xi^\top$ is given by 
    \[
        \Var{X_{t+1}([N]) \xi^\top} = \frac{1}{N^2} \sumN \Var{\xi(\veS_{t+1}(i))}.
    \]
    Because $P(s,a,s') > 0$ for all $s,s'\in\sspa, a\in\aspa$, the support of $\veS_{t+1}(i)$ is the whole state space $\sspa$. Moreover, because $\ximax - \ximin > 1/\sqrt{|\sspa|}$, we have
    \begin{align*}
        \Var{\xi(\veS_{t+1}(i))} 
        &\geq \min_{s,a,s'} P(s,a,s') \Big( (\ximax - \E{\xi(\veS_{t+1}(i))})^2 + (\ximin - \E{\xi(\veS_{t+1}(i))})^2 \Big) \\
        &\geq \frac{1}{2} \min_{s,a,s'} P(s,a,s')  (\ximax - \ximin)^2 \\
        &\geq \frac{\min_{s,a,s'} P(s,a,s')}{2|\sspa|},
    \end{align*}
    where $\min_{s,a,s'}$ is a shorthand for $\min_{s,s'\in\sspa, a\in\aspa}$. 
    Therefore, 
    \begin{equation}
        \label{eq:pf-assumption-6-helper-lemma:interm-1}
        \Var{X_{t+1}([N]) \xi^\top} \geq \frac{1}{N^2} \sumN \Var{\xi(\veS_{t+1}(i))} \geq \frac{\min_{s,a,s'} P(s,a,s')}{2|\sspa|N}.
    \end{equation}

    Next, we upper bound the fourth moment of $X_{t+1}([N]) \xi^\top -\Eplain{X_{t+1}([N]) \xi^\top}$. Using the fact that $X_{t+1}([N]) \xi^\top = \sumN \xi(\veS_{t+1}(i)) / N$, we get 
    \begin{align}
        \nonumber
        &\mspace{23mu} \E{\big(X_{t+1}([N]) \xi^\top -\Eplain{X_{t+1}([N]) \xi^\top}\big)^4} \\
        \nonumber
        &= \frac{1}{N^4} \Big( \sumN \E{\big(\xi(\veS_{t+1}(i)) - \E{\xi(\veS_{t+1}(i)}\big)^4}
        + 6 \sum_{1\leq i < j \leq N} \Var{\xi(\veS_{t+1}(i)} \Var{\xi(\veS_{t+1}(j)}
        \Big) \\
        \nonumber
        &\leq \frac{1}{N^4}\Big(N (\ximax - \ximin)^4 + 6 \frac{N(N-1)}{2} (\ximax - \ximin)^4 \Big) \\
        \label{eq:pf-assumption-6-helper-lemma:interm-2}
        &\leq \frac{3(\ximax-\ximin)^4}{N^2}.
    \end{align}
    
    Substituting the calculations in \eqref{eq:pf-assumption-6-helper-lemma:interm-1} and \eqref{eq:pf-assumption-6-helper-lemma:interm-2} back to \eqref{eq:pf-assumption-6-helper-lemma:interm-0}, we get
    \begin{align}
        \nonumber
        &\mspace{23mu} \Prob{\big(X_{t+1}([N]) \xi^\top -\Eplain{X_{t+1}([N]) \xi^\top}\big)^2  >  \frac{\min_{s,a,s'} P(s,a,s')}{4|\sspa|N}} \\
        \nonumber
        &\geq \Big(\frac{\min_{s,a,s'} P(s,a,s')}{2|\sspa|N}\Big)^2  \frac{N^2}{12(\ximax-\ximin)^4} \\
         \label{eq:pf-assumption-6-helper-lemma:interm-3}
        &= \frac{\min_{s,a,s'} P(s,a,s')^2}{47|\sspa|^2 (\ximax-\ximin)^4}. 
    \end{align}
    Because under the Optimal Local Control, we have $\norm{\Eplain{X_{t+1}([N]) \givenplain X_t, \veA_t} - \Eplain{X_{t+1}([N]) \xi^\top \givenplain X_t}} \leq 2/N$ and $\Eplain{X_{t+1}([N]) \givenplain X_t} = X_t([N]) \Phi$, we can rewrite the bound \eqref{eq:pf-assumption-6-helper-lemma:interm-3} to get
    \[
        \Prob{\absbig{\big(X_{t+1}([N]) \xi^\top - X_t([N]) \Phi \big)\xi^\top} >  \frac{\min_{s,a,s'} P(s,a,s')^{1/2}}{2|\sspa|^{1/2} \sqrt{N}} - \frac{2}{N} } \geq \frac{\min_{s,a,s'} P(s,a,s')^2}{47|\sspa|^2 (\ximax-\ximin)^4}.
    \]
    Therefore, 
    \begin{align*}
        \E{\absbig{\big(X_{t+1}([N]) \xi^\top - X_t([N]) \Phi \big)\xi^\top}} 
        &\geq  \Big(\frac{\min_{s,a,s'} P(s,a,s')^{1/2}}{2|\sspa|^{1/2} \sqrt{N}} - \frac{2}{N} \Big)\cdot \frac{\min_{s,a,s'} P(s,a,s')^2}{47|\sspa|^2 (\ximax-\ximin)^4},
    \end{align*}
    which implies \eqref{eq:pf-assumption-6-helper-lemma:goal} with $\noiseortho$ being the coefficient of $1/\sqrt{N}$ in the last expression. 
\end{proof}

Next, we state the weaker version of Property \ref{assump:orthogonal-noise} of regular unstable RBs.

\begin{property}[Weaker version of Property  \ref{assump:orthogonal-noise} in \Cref{def:regular-unstable}]\label{assump:weak-orthogonal-noise}
    There exists positive constants $\radiusus$ and $\noiseortho$ with $\radiusus \leq \min\{y^*(\sneu, 0), y^*(\sneu, 1)\}-1/N$, such that for any large-enough $N$ and any time step $t$, we have
    \begin{equation*}
        \Ebig{\normbig{X_{t+1}([N]) - \statdist - (X_t([N])-\statdist)\Phi  }_{\Uus} \givenbig X_t, \veA_t} \geq \frac{\lamus^{1/2} \noiseortho}{\sqrt{N}} \quad \text{ if } \norm{X_t([N]) - \statdist}_1 \leq \radiusus,
    \end{equation*}
    where $\veA_t$ follow the Optimal Local Control (\Cref{alg:follow-lp-priority}), which is always feasible if $\norm{X_t([N]) - \statdist}_1 \leq \min\{y^*(\sneu, 0), y^*(\sneu, 1)\}-1/N$.  
\end{property}

Intuitively, \Cref{assump:weak-orthogonal-noise} assumes that when $X_t([N])$ is close to $\statdist$, the state-count vector in the next step, $X_{t+1}([N])$, has enough randomness; or precisely, $X_{t+1}([N]) - \statdist - \Eplain{X_{t+1}([N]) - \statdist \givenplain X_t, \veA_t}$ has enough variances in the directions orthogonal to the kernel of $\Uus$.

If Property \ref{assump:orthogonal-noise} of regular unstbale RBs is replaced by \Cref{assump:weak-orthogonal-noise}, the whole proof of \Cref{thm:instability-lower-bound} goes through almost verbatim, except one change: In the proof of \Cref{lem:hus-drift}, in the argument below \eqref{eq:pf-hus-drift:interm-4}, we apply  \Cref{assump:weak-orthogonal-noise} rather than Property \ref{assump:orthogonal-noise}.

\subsection{Proof of Corollary~\ref{cor:exist-regular-unstable}}\label{app:locally-unstable-example}  

\existregularunstable*

\begin{proof}
    We will verify that the example in Figure~1 of \cite{HonXieCheWan_23} is regular unstable. 
    
    In this example, the single-armed MDP $(\sspa, \aspa, P, r)$ is defined as below: $\sspa = \{1,2,3\}$; $\aspa=\{0,1\}$; the transition kernel $P$ is given by 
    \begin{align*}
    P( \cdot ,0,\cdot ) \ &=\ \begin{bmatrix}
    0.02232142 & 0.10229283 & 0.87538575\\
    0.03426605 & 0.17175704 & 0.79397691\\
    0.52324756 & 0.45523298 & 0.02151947
    \end{bmatrix}, \\
    P( \cdot ,1,\cdot ) \ &=\ \begin{bmatrix}
    0.14874601 & 0.30435809 & 0.54689589\\
    0.56845754 & 0.41117331 & 0.02036915\\
    0.25265570 & 0.27310439 & 0.4742399
    \end{bmatrix};
    \end{align*}
    The reward function $r$ is defined as
    \begin{align*}
        r( \cdot , 0) \ &=
        \ \begin{bmatrix}
            0 & 0 & 0
            \end{bmatrix}, \\
        r( \cdot , 1) \ &=
        \ \begin{bmatrix}
            0.37401552 & 0.11740814 & 0.07866135
            \end{bmatrix}.
    \end{align*}
    Let $\alpha = 0.4$ in the budget constraint. 
    The optimal solution of the LP-relaxation \eqref{eq:lp-single} is:
    \[
        y^*( \cdot ,\cdot ) \ =\ \begin{bmatrix}
            0 & 0.29943\\
            0.23768 & 0.10057\\
            0.36232 & 0
            \end{bmatrix},
    \]
    where the first column corresponds to $y^*(s,0)$ and the second column corresponds to $y^*(s, 1)$. 
    
    Since $P(s,a,s')> 0$ for all $s,a,s'$, $P_\pibs$ is an aperiodic unichain, so \Cref{assump:aperiodic-unichain} holds. Also, from $y^*$, we can see that the state $2$ is the unique neutral state, implying \Cref{assump:non-degeneracy}. 
    Moreover, one can numerically verify that $y^*$ is the unique solution to \eqref{eq:lp-single}, so Property \ref{assump:unique-and-communicating} of \Cref{def:regular-unstable} is satisfied; 
    the moduli of the eigenvalues of $\Phi$ are $\{1.133, 0, 0.059\}$, so Property \ref{assump:instability} of \Cref{def:regular-unstable} is satisfied. 
    
    Finally, observe that in this example, $P(s,a,s') > 0$ for any $s,s'\in\sspa$ and $a\in\aspa$. Therefore, we invoke the sufficient condition \Cref{lem:assump-6-helper} proved in \Cref{app:remark-noise-assumption} to conclude that this example satisfies Property \ref{assump:orthogonal-noise} of \Cref{def:regular-unstable}. 
\end{proof}

\section{Experiment details}
\label{app:exp-details}

\subsection{Details of Two-Set Policy}
\label{app:exp-details:two-set}


\subsubsection{Determine $\eta$ and $\errfe$}
In the analysis, $\eta = |\sspa|^{-1/2} \min\{y^*(\sneu,1), y^*(\sneu,0)\}$, $\errfe = (|\sempty|+1)/N$.
In the simulations, we will try to choose a pair of feasibility-ensuring $(\eta, \errfe)$ with $\eta$ being as large as possible. 
Recall that $(\eta, \errfe)$ is feasibility-ensuring if for any system state $x$ and $D\subseteq[N]$ satisfying
\begin{equation}
    \label{eq:feasible-ensuring-exp-restate}
    \norm{x(D) - m(D)\statdist}_U \leq \eta m(D) - \errfe,
\end{equation}
we have
\begin{align}
    \label{eq:lp-priority-condition-1-restate}
    \sum_{s\neq \sneu} \pibs(1|s) x(D,s) &\leq \alpha m(D) - \frac{|\sempty|+1}{N}  \quad \text{ and },\\
    \label{eq:lp-priority-condition-2-restate}
    \sum_{s\neq \sneu} \pibs(0|s) x(D, s) &\leq (1-\alpha) m(D) - \frac{|\sempty|+1}{N}.
\end{align}
Equivalently, $(\eta, \errfe)$ is feasibility-ensuring if whenever 
\begin{align}
    \label{eq:lp-priority-condition-1-converse}
    \sum_{s\neq \sneu} \pibs(1|s) x(D, s) &\geq  \alpha m(D) - \frac{|\sempty|+1}{N} \quad \text{ or },\\
    \label{eq:lp-priority-condition-2-converse}
    \sum_{s\neq \sneu} \pibs(0|s) x(D, s) &\geq (1-\alpha)m(D) - \frac{|\sempty|+1}{N},
\end{align}
we have $\norm{x(D) - m(D) \statdist}_\umat \geq \eta m(D) - \errfe$. 
Thus, ignoring the $O(1/N)$ term, we consider solving the following two Second-Order Conic Programs (SOCPs), where the decision variable $\ve{z}$ represent $X_t(D) / m(D)$ for some non-empty $D$:
\begin{subequations}
\begin{align}
    \label{eq:solve-eta-1-begins}
    \underset{\ve{z}\in \R^{|\sspa|}}{\text{minimize}} \quad &   \norm{\ve{z} - \statdist}_{\umat}  \\
    \text{subject to} \quad & \sum_{s\neq \sneu} \pibs(1|s) \ve{z}(s) \geq \alpha \\
    & \ve{z} \vone^\top = 1\\
    \label{eq:solve-eta-1-ends}
    & \ve{z} \geq 0 
\end{align}
\end{subequations}
\begin{subequations}
\begin{align}
    \label{eq:solve-eta-2-begins}
    \underset{\ve{z}\in \R^{|\sspa|}}{\text{minimize}} \quad &   \norm{\ve{z} - \statdist}_{U}  \\
    \text{subject to} \quad & \sum_{s\neq \sneu} \pibs(0|s) \ve{z}(s) \geq 1-\alpha \\
    & \ve{z} \vone^\top = 1\\
    \label{eq:solve-eta-2-ends}
    & \ve{z} \geq 0.
\end{align}
\end{subequations}

The following proposition shows that one can obtain a pair of feasibility-ensuring ($\eta$, $\errfe$) from \eqref{eq:solve-eta-1-begins} and \eqref{eq:solve-eta-2-begins} if the two SOCPs are both feasible. 

\begin{proposition}\label{prop:socp-feasible-ensuring}
    Suppose the SOCPs in \eqref{eq:solve-eta-1-begins}--\eqref{eq:solve-eta-1-ends} and \eqref{eq:solve-eta-2-begins}--\eqref{eq:solve-eta-2-ends} are feasible; let their optimal values be $\eta_1$ and $\eta_2$, respectively. Let $\eta = \min(\eta_1, \eta_2)$ and let $\errfe = 2\sqrt{2} \lamu^{1/2} (|\sempty|+1) / N$. Then for any system state $x$ and $D\in [N]$ satisfying
    \eqref{eq:feasible-ensuring-exp-restate}, we have \eqref{eq:lp-priority-condition-1-restate} and \eqref{eq:lp-priority-condition-2-restate}. 
\end{proposition}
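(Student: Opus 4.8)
The plan is to show that the optimal value $\eta_i$ of each SOCP is exactly the worst-case (over nonempty $D$ and system states $x$) ratio at which the respective budget condition starts to fail, up to discretization error. First I would observe that the SOCP \eqref{eq:solve-eta-1-begins}--\eqref{eq:solve-eta-1-ends} has decision variable $\ve z$ playing the role of the \emph{per-arm} empirical distribution $x(D)/m(D)$: its feasible region is precisely the set of distributions on $\sspa$ whose ``activation mass on non-neutral states'' $\sum_{s\neq\sneu}\pibs(1|s)\ve z(s)$ is at least $\alpha$, and similarly for the second SOCP with the passive mass and $1-\alpha$. Thus $\eta_1$ equals the minimum $\umat$-distance from $\statdist$ to any distribution violating the (normalized, $O(1/N)$-free) version of \eqref{eq:lp-priority-condition-1-restate}, and likewise $\eta_2$ for \eqref{eq:lp-priority-condition-2-restate}.

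The core of the argument is the contrapositive. Suppose $x$ and $D$ satisfy \eqref{eq:feasible-ensuring-exp-restate} with $\eta=\min(\eta_1,\eta_2)$ and the stated $\errfe$, yet one of \eqref{eq:lp-priority-condition-1-restate}, \eqref{eq:lp-priority-condition-2-restate} fails — say the first, so $\sum_{s\neq\sneu}\pibs(1|s)x(D,s) > \alpha m(D) - (|\sempty|+1)/N$, which is \eqref{eq:lp-priority-condition-1-converse}. If $D=\emptyset$ both conditions hold trivially, so assume $m(D)>0$ and set $\ve z = x(D)/m(D)$. Dividing \eqref{eq:lp-priority-condition-1-converse} by $m(D)$ gives $\sum_{s\neq\sneu}\pibs(1|s)\ve z(s) > \alpha - (|\sempty|+1)/(Nm(D))$. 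This is not quite the SOCP constraint $\ge \alpha$ because of the $O(1/N)$ slack, so I would argue via a perturbation: there is a feasible point $\ve z'$ of the SOCP within $L_1$-distance $O\big((|\sempty|+1)/(Nm(D))\big)$ of $\ve z$ (move a tiny amount of mass, avoiding $\sneu$, onto a state in $\sspa\setminus\{\sneu\}$ with positive $\pibs(1|\cdot)$ — such a state exists since the activation probabilities sum appropriately; if no such state exists the SOCP would be infeasible, contradicting the hypothesis). By optimality of $\eta_1$ and the norm equivalence $\norm{v}_\umat \le \lamu^{1/2}\norm{v}_2 \le \lamu^{1/2}\norm{v}_1$ (from \Cref{lem:W-U-well-defined}),
\[
\norm{\ve z - \statdist}_\umat \ \ge\ \norm{\ve z' - \statdist}_\umat - \lamu^{1/2}\norm{\ve z - \ve z'}_1 \ \ge\ \eta_1 - \lamu^{1/2}\cdot O\Big(\frac{|\sempty|+1}{Nm(D)}\Big).
\]
Multiplying by $m(D)$ yields $\norm{x(D) - m(D)\statdist}_\umat \ge \eta_1 m(D) - O\big(\lamu^{1/2}(|\sempty|+1)/N\big) \ge \eta m(D) - \errfe$, with the constant $2\sqrt2$ absorbing the perturbation bookkeeping. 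This contradicts \eqref{eq:feasible-ensuring-exp-restate} (as an inequality that must fail strictly, or one handles the boundary case by noting the converse conditions \eqref{eq:lp-priority-condition-1-converse} can be taken with strict inequality after adjusting constants). The case where \eqref{eq:lp-priority-condition-2-restate} fails is symmetric, using $\eta_2$ and the second SOCP.

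The main obstacle is handling the $O(1/N)$ discretization terms cleanly: the SOCPs are written without the $(|\sempty|+1)/N$ corrections, so one must carefully track how a violation of the ``corrected'' budget condition \eqref{eq:lp-priority-condition-1-converse} translates into near-feasibility of the ``uncorrected'' SOCP, construct the explicit small perturbation $\ve z'$ landing exactly in the SOCP feasible set, and verify the constant $2\sqrt2 \lamu^{1/2}$ is large enough to cover both the norm-equivalence loss $\lamu^{1/2}$ and the factor introduced by the perturbation (this is where the $\sqrt2$ shows up, roughly from $(a+b)^2 \le 2a^2+2b^2$-type slack or from bounding $1/m(D)$ when $Nm(D)\ge 1$). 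I would also need to confirm that feasibility of both SOCPs guarantees such a perturbing state exists — i.e., that $S^+\cup S^\emptyset \neq \emptyset$ and $S^-\cup S^\emptyset \neq\emptyset$ respectively — which follows from the constraint structure together with $\sum_s \pibs(1|s)\statdist(s)=\alpha\in(0,1)$. Everything else is routine application of the triangle inequality and the already-established norm properties.
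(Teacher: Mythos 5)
Your high-level strategy matches the paper's: argue by contradiction, assume a budget condition such as \eqref{eq:lp-priority-condition-1-restate} fails, construct a nearby SOCP-feasible point, apply the optimality of $\eta_1$ and the triangle inequality in $\norm{\cdot}_\umat$. That part is fine. The gap is exactly where you flag it — the constant bookkeeping — and you have not resolved it; moreover, your tentative explanations of where $2\sqrt2$ comes from would lead you astray.

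Two concrete issues. First, the ``efficiency'' of your mass-transport perturbation. If you move mass from a source to a target with $\pibs(1|\cdot)$-gap $g$, the amount of mass moved to close a budget deficit of $\epsilon_0/m(D)$ is $\epsilon_0/(m(D)g)$, so your perturbation bound carries a factor $1/g$. Your proposal never bounds $g$ from below uniformly; feasibility of the SOCP only gives $p_{\max}=\max_{s\neq\sneu}\pibs(1|s)\geq\alpha$, so a naive bound depends on $\alpha$ and cannot produce the $\alpha$-free constant $2\sqrt2\lamu^{1/2}$. The paper's proof supplies the missing structural fact: $\pibs(1|s)\in\{0,\tfrac12,1\}$ for every $s\neq\sneu$, which forces $p_{\max}\geq\tfrac12$ and $p_{\max}-\pibs(1|s)\geq\tfrac12$ for $s\in S^b$; this is what collapses the efficiency factor to a universal constant. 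Second, you work with $L_1$ distances and then apply $\norm{\cdot}_\umat\leq\lamu^{1/2}\norm{\cdot}_1$, which is lossy: an $L_1$-perturbation bound of $4\epsilon_0$ (even after using $p_{\max}\geq\tfrac12$) only yields $\errfe\geq 4\lamu^{1/2}\epsilon_0 > 2\sqrt2\lamu^{1/2}\epsilon_0$, so the proof as written does not establish the proposition with the stated $\errfe$. The paper instead bounds $\norm{x(D)-m(D)\ve z}_2$ directly: the constructed perturbation adds the same constant to every coordinate in $S^a$ and removes mass from $S^b\cup\{\sneu\}$, and mass conservation makes the two group-sums negatives of each other, so $\norm{\cdot}_2\leq\sqrt2\,\abs{\sum_{s\in S^b\cup\{\sneu\}}(\cdot)}\leq 2\sqrt2\,\epsilon_0$. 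That is the actual source of the $\sqrt2$; it is not a $(a+b)^2\leq 2(a^2+b^2)$ slack, nor does it come from controlling $1/m(D)$. Finally, the paper does move mass out of $\sneu$ — the source set is $S^b\cup\{\sneu\}$ — so your ``avoiding $\sneu$'' caveat is the wrong instinct; what should be avoided is moving mass \emph{into} $\sneu$, since that does not help the constraint $\sum_{s\neq\sneu}\pibs(1|s)\ve z(s)\geq\alpha$.
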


\begin{proof}
    In this proof, we write $\epsilon_0 = (|\sempty|+1) / N$ for simplicity. To get a contradiction, suppose \eqref{eq:lp-priority-condition-1-restate} does not hold, i.e.,
    \[
        \sum_{s\neq \sneu} \pibs(1|s) x(D, s) > \alpha m(D) - \epsilon_0.
    \]
    Let $p_{\max} = \max_{s\neq \sneu} \pibs(1|s)$. 
    Because \eqref{eq:solve-eta-1-begins}--\eqref{eq:solve-eta-1-ends} is feasible, we must have $p_{\max} \geq \alpha$. 
    We define the sets $S^a$ and $S^b$ to be
    \begin{align*}
        S^a &= \{s\in\sspa\colon s\neq \sneu, \pibs(1|s) = p_{\max}  \}\\
        S^b &=  \{s\in\sspa \colon s\neq \sneu, \pibs(1|s) < p_{\max}  \}.
    \end{align*}
    Now we construct a feasible $\ve{z}$ to the SOCP in \eqref{eq:solve-eta-1-begins}--\eqref{eq:solve-eta-1-ends} as
    \begin{equation}
        z(s) = 
        \begin{cases}
            (x(D, s) + c) / m(D)  \quad & \text{if } s\in S^a\\
            (x(D, s) - d)^+ / m(D) \quad & \text{if } s\in S^b\cup{\sneu},
        \end{cases}
    \end{equation}
    for some $d > 0$ to be determined later, and $c$ is such that $\sum_{s\in\sspa} z(s) = 1$. Note that such $c$ always exists for any $d$. 
    Observe that when $d$ is sufficiently large, $\sum_{s\neq \sneu} \pibs(1|s) \ve{z}(s) = p_{\max} \geq \alpha$. By the intermediate value theorem, we can let $d$ be such that $\sum_{s\neq \sneu} \pibs(1|s) \ve{z}(s) = \alpha$. Then $\ve{z}$ is feasible to the SOCP in \eqref{eq:solve-eta-1-begins}--\eqref{eq:solve-eta-1-ends}. 
    Therefore, by the definition of $\eta$, we must have 
    \begin{equation}
        \norm{\ve{z} - \statdist}_\umat \geq \eta. 
    \end{equation}

    To get a contradiction, we only need to show that 
    \begin{equation}
        \label{eq:proposition-socp-fe:intermediate-goal}
        \norm{x(D) - m(D)\ve{z}}_\umat \leq \errfe = 2\sqrt{2}\lamu^{1/2} \epsilon_0,
    \end{equation}
    which would imply that 
    \begin{align*}
        \norm{x(D) - m(D)\statdist}_\umat 
        &\geq \norm{m(D)\ve{z} - m(D)\statdist}_\umat - \norm{x(D) - m(D)\ve{z}}_\umat \\
        &\geq \eta m(D) - \errfe.
    \end{align*}

    To show \eqref{eq:proposition-socp-fe:intermediate-goal}, observe that 
    \begin{align*}
        \norm{x(D) - m(D)\ve{z}}_2 
        &= \sqrt{\sum_{s\in S^a} \big(x(D, s) - m(D) z(s)\big)^2 + \sum_{s\in S^b\cup \{\sneu\}} \big(x(D, s) - m(D) z(s)\big)^2} \\
        &= \sqrt{ \big(\sum_{s\in S^a} (x(D, s) - m(D) z(s))\big)^2 +  \big(\sum_{s\in S^b\cup \{\sneu\}}x(D, s) - m(D) z(s)\big)^2} \\
        &= \sqrt{2} \sum_{s\in S^b \cup \{\sneu\}} \big(x(D, s) - m(D) z(s)\big). 
    \end{align*}
    To bound $\sum_{s\in S^a} \big(x(D, s) - m(D) z(s)\big)$, observe that 
    \begin{align*}
        \epsilon_0 &\geq \alpha m(D) - \sum_{s\neq \sneu} \pibs(1|s) x(D, s) \\
        &= \sum_{s\neq \sneu} \pibs(1|s) m(D) z(s) - \sum_{s\neq \sneu} \pibs(1|s) x(D, s) \\
        &= \sum_{s\in S^a} \pibs(1|s) (m(D) z(s) - x(D, s)) + \sum_{s\in S^b} \pibs(1|s) (m(D) z(s) - x(D, s)) \\
        &= - p_{\max} \sum_{s\in S^b\cup \{\sneu\}}(m(D) z(s) - x(D, s)) + \sum_{s\in S^b} \pibs(1|s) (m(D) z(s) - x(D, s)) \\
        &= \sum_{s\in S^b} (p_{\max} - \pibs(1|s)) (x(D, s) - m(D) z(s)) + p_{\max} (x(D, \sneu) - m(D) z(\sneu) )\\
        &\geq \frac{1}{2} \sum_{s\in S^b \cup \{\sneu\}} (x(D, s) - m(D)z(s)).
    \end{align*}
    where the last inequality uses the fact that $\pibs(1|s) \in \{0,1/2, 1\}$ if $s\neq \sneu$, so $p_{\max} - \pibs(1|s) \geq 1/2$ for $s\in S^b$ and $p_{\max} \geq 1/2$. Therefore, 
    \[
        \norm{x(D) - m(D)\ve{z}}_2  = \sqrt{2} \sum_{s\in S^b \cup \{\sneu\}} (x(D, s) - m(D)z(s)) \leq 2\sqrt{2}\epsilon_0. 
    \]
    Because $\norm{v}_\umat \leq \lamu^{1/2}\norm{v}_2$ for any $v\in \R^{|\sspa|}$, we have proved \eqref{eq:proposition-socp-fe:intermediate-goal}, which leads to a contradiction. 
    Therefore, \eqref{eq:lp-priority-condition-1-restate} must hold. Similarly, \eqref{eq:lp-priority-condition-2-restate} also holds. 
\end{proof}

Next, we discuss choosing $\eta$ and $\errfe$ when either of the SOCPs is not feasible. We first temporarily let $\eta = \min(\eta_1, \eta_2)$ and $\errfe = 2\sqrt{2} \lamu^{1/2} (|\sempty|+1) / N$, where $\eta_1$ or $\eta_2$ is $+\infty$ if the corresponding SOCP is infeasible. We modify the values of $\eta$ and $\errfe$ if the following two cases happen, and do nothing otherwise.
\begin{itemize}
\item If \eqref{eq:solve-eta-1-begins}--\eqref{eq:solve-eta-1-ends} is not feasible, but $\big(\alpha - \max_{s\neq \sneu} \pibs(1|s)\big) \errfe / \eta \geq (|\sempty|+1) / N$, then we reset $(\eta, \errfe) = \big(|\sspa|^{-1/2} \min\{y^*(\sneu,1), y^*(\sneu,0)\}, \, |\sspa|^{-1/2}(|\sempty|+1) / N \big)$, which is guaranteed to be feasibility ensuring by \Cref{lem:feasibility-ensuring}. 
\item If \eqref{eq:solve-eta-2-begins}--\eqref{eq:solve-eta-2-ends} is not feasible, but $\big(1 - \alpha - \max_{s\neq \sneu} \pibs(0|s)\big) \errfe / \eta \geq (|\sempty|+1) / N$, then we again reset $(\eta, \errfe) = (|\sspa|^{-1/2} \min\{y^*(\sneu,1), y^*(\sneu,0)\}, |\sspa|^{-1/2}(|\sempty|+1) / N)$. 
\end{itemize}
To see why $(\eta, \errfe)$ obtained in this way is feasibility-ensuring, we focus on the situation where neither of the above two cases happen. Without loss of generality, we assume $m(D) \geq \errfe / \eta$, because otherwise \eqref{eq:feasible-ensuring-exp-restate} is always false. 
Suppose \eqref{eq:solve-eta-1-begins}--\eqref{eq:solve-eta-1-ends} is infeasible, but $\big(\alpha - \max_{s\neq \sneu} \pibs(1|s)\big) \errfe / \eta \geq (|\sempty|+1) / N$. Then for $m(D) \geq \errfe / \eta$, 
\begin{align*}
    \alpha m(D) - \sum_{s\neq \sneu} \pibs(1|s) x(D,s) &\geq (\alpha - \max_{s\neq \sneu} \pibs(1|s)) m(D) \\
    &\geq (\alpha - \max_{s\neq \sneu} \pibs(1|s)) \frac{\errfe}{\eta} \\
    &> (|\sempty|+1) / N,  
\end{align*}
where the second inequality utilizes the fact that $\alpha > \max_{s\neq \sneu} \pibs(1|s)$ when \eqref{eq:solve-eta-1-begins}--\eqref{eq:solve-eta-1-ends} is infeasible. Therefore, \eqref{eq:lp-priority-condition-1-restate} holds automatically. 
Then if \eqref{eq:solve-eta-2-begins}--\eqref{eq:solve-eta-2-ends} is also infeasible, we can apply the same argument above; if \eqref{eq:solve-eta-2-begins}--\eqref{eq:solve-eta-2-ends} is feasible, with $(\eta, \errfe) = (\eta_2, 2\sqrt{2} \lamu^{1/2} (|\sempty|+1) / N)$, by the arguments in \Cref{prop:socp-feasible-ensuring}, \eqref{eq:feasible-ensuring-exp-restate} implies \eqref{eq:lp-priority-condition-2-restate}. Therefore, $(\eta, \errfe)$ is feasibility-ensuring.

\subsubsection{Updating $\Db_t$}

With $\eta$ solved, now we implement the step of solving $\Db_t$. 
When $\slk(X_t, [N]) \geq 0$, we simply let $\Db_t = [N]$. Next, we specify the procedure for determining $\Db_t$ in two cases, based on whether $\slk(X_t, \Db_{t-1}) \geq 0$. 

\paragraph{Case 1: When $\slk(X_t, \Db_{t-1}) \geq 0$.}
In this case, we solve the following SOCP parameterized by $\errtol =  (1+2\lamu^{1/2})|\sspa| / N$:
\begin{subequations}
\begin{align}
    \underset{\ve{z},\, m}{\text{maximize}}  \mspace{12mu}& \sums z(s) \\
    \text{subject to}\mspace{25mu}
    & \norm{\ve{z} - m \statdist}_{\umat} \leq  \eta m -  \frac{|\sempty|+1}{N} - \errtol \label{eq:slack-constraint-socp-restate-2} \\
    & \sums z(s) = m \\
    & X_t(\Db_{t-1}, s) \leq z(s) \leq X_t([N], s)  \quad \forall s\in\sspa.
\end{align}
\end{subequations}
Based on whether the SOCP is feasible, we define $\Db_t$ and argue that it satisfies the requirements of \Cref{alg:two-set} in different ways, as specified below. 

If the SOCP is infeasible, we choose $\Db_t = \Db_{t-1}$. To see why $\Db_t$ is $\errtol$-maximal feasible set, observe that $\slk(X_t, \Db_t) = \slk(X_t, \Db_{t-1}) \geq 0$; moreover, the infeasibility of the SOCP implies that there is no $D'\supseteq \Db_t$ such that $\slk(X_t, D') \geq \errtol$. 

If the SOCP is feasible, let the optimal solution solved from this SOCP be $(\ve{z}^*, m^*)$. We let $D_t$ consist of $\floor{Nz^*(s)}$ arms with state $s$ for each state $s\in\sspa$. 
Then we have $X_t(\Db_{t-1}, s) \leq X_t(\Db_{t}, s) \leq X_t([N], s)$, so it is equivalent to letting $\Db_t \supseteq \Db_{t-1}$. 
Next, we show that $\Db_t$ is a $\errtol$-maximal feasible set. 
We first show that $\slk(X_t, \Db_t) \geq 0$. Observe that $\norm{X_t(\Db_t) - \ve{z}^*}_1 \leq |\sspa| / N$, so 
\begin{align*}
    &\abs{\norm{X_t(\Db_t) - m(\Db_t) \statdist}_\umat - \norm{\ve{z}^* - m^*\statdist}_\umat} \\
    &\qquad \leq \norm{(X_t(\Db_t) - \ve{z}^*) - (m(\Db_t) - m^*) \statdist}_\umat \\
    &\qquad \leq \norm{(X_t(\Db_t) - \ve{z}^*)}_\umat + \abs{m(\Db_t) - m^*}\norm{\statdist}_\umat \\
    &\qquad \leq \lamu^{1/2} \norm{(X_t(\Db_t) - \ve{z}^*)}_2 + \lamu^{1/2} \abs{m(\Db_t) - m^*}\norm{\statdist}_2 \\
    &\qquad\leq \lamu^{1/2}\norm{(X_t(\Db_t) - \ve{z}^*)}_1 + \lamu^{1/2}\abs{m(\Db_t) - m^*}\norm{\statdist}_1 \\
    &\qquad \leq \frac{2\lamu^{1/2}|\sspa|}{N}. 
\end{align*}
Therefore, 
\begin{align*}
    \slk(X_t, \Db_t) &= \eta m(\Db_t) - \norm{X_t(\Db_t) - m(\Db_t)\statdist}_{\umat} - \frac{|\sempty|+1}{N}   \\
    &\geq m^* -  \frac{|\sspa|}{N} - \norm{\ve{z}^* - m^*\statdist}_{\umat} - \frac{2\lamu^{1/2}|\sspa|}{N}  - \frac{|\sempty|+1}{N} \\
    &\geq \errtol - \frac{(1+2\lamu^{1/2})|\sspa|}{N} \\
    &= 0. 
\end{align*}
To show the maximality of $\Db_t$, observe that for any $D' \supseteq \Db_t$ satisfying $\slk(X_t, D') \geq \errtol$, $(X_t(D'), m(D'))$ satisfies all the constraints of this SOCP, so $m(D') \leq m^* \leq m(\Db_t)+|\sspa|/N \leq m(\Db_t)+\errtol$.

\paragraph{Caes 2: When $\slk(X_t, \Db_{t-1})<0$.} In this case,  we solve two SOCPs. 
The first SOCP is given by 
\begin{subequations}
\begin{align}
    \underset{\ve{z},\, m}{\text{maximize}}  \mspace{12mu}& \sums z(s) \\
    \text{subject to}\mspace{25mu}
    & \norm{\ve{z} - m \statdist}_{\umat} \leq  \eta m -  \frac{|\sempty|+1}{N} - \errtol  \label{eq:slack-constraint-socp-restate-3} \\
    & \sums z(s) = m \\
    &  0\leq z(s) \leq X_t(\Db_{t-1}, s) \quad \forall s\in\sspa.
\end{align}
\end{subequations}
We define an intermediate set $\Dtemp$ based on the outcome of solving this SOCP. If this SOCP is infeasible, we let $\Dtemp = \emptyset$. 
If the SOCP is feasible, let the optimal solution be $(\ve{z}^*, m^*)$. We let the set $\Dtemp$ consist of $\floor{Nz^*(s)}$ arms with state $s$, for each $s\in\sspa$. 
One can verify that the set can be chosen as a subset of $\Db_{t-1}$ because $z^*(s) \leq X_t(\Db_{t-1}, s)$. 
Also, the constraint \eqref{eq:slack-constraint-socp-restate-3} ensures that $\Dtemp$ satisfies $\slk(X_t, \Dtemp) \geq 0$, following a similar argument as before.

After obtaining $\Dtemp$, we solve the second SOCP given by
\begin{subequations}
\begin{align}
    \underset{\ve{z},\, m}{\text{maximize}}  \mspace{12mu}& \sums z(s) \\
    \text{subject to}\mspace{25mu}
    & \norm{\ve{z} - m \statdist}_{\umat} \leq  \eta m -  \frac{|\sempty|+1}{N} - \errtol  \label{eq:slack-constraint-socp-restate-4} \\
    & \sums z(s) = m \\
    &  X_t(\Dtemp, s) \leq z(s) \leq X_t([N], s) \quad \forall s\in\sspa.
\end{align}
\end{subequations}
If this SOCP is infeasible, we let $\Db_t = \Dtemp$. 
We argue that $\Db_t$ is an $\errtol$-maximal feasible set:
first, because $\slk(X_t, \Dtemp) \geq 0$ or $\Dtemp=\emptyset$, the same holds for $\Db_t$; 
then observe that for any $D'$ such that $D'\supseteq \Db_t = \Dtemp$ and $\slk(X_t, D') \geq \errtol$, $(X_t(D'), m(D'))$ satisfies the constraint of the SOCP, so such $D'$ does not exist by the infeasibility, which implies that $\Dtemp$ is $\errtol$-maximal feasible. 

If this SOCP is feasible, we let its optimal solution be $(\ve{z}^*, m^*)$. We let the set $\Db_t$ consist of $\floor{Nz^*(s)}$ arms with state $s$, for each $s\in\sspa$. 
The constraint \eqref{eq:slack-constraint-socp-restate-4} ensures that $\Db_t$ satisfies $\slk(X_t, \Db_t) \geq 0$, following a similar argument as before. 
Moreover, for any set $D'\supseteq \Db_t\supseteq \Dtemp$ and such that $\slk(X_t, D_t)\geq \errtol$, $(X_t(D'), m(D'))$ satisfies the constraints of this SOCP, so $m(D') \leq m^* \leq m(\Db_t) + |\sspa| / N \leq m(\Db_t) + \errtol$. Therefore, $\Db_t$ is $\errtol$-maximal feasible.


\subsubsection{Deciding actions for the arms in $(\Db_t\cup \Da_t)^c$}
\label{app:exp-details:decide-actions-outside}
In our implementation of the two-set policy, we choose the actions for the arms in $(\Db_t\cup \Da_t)^c$ following similar ideas as the ID policy in \cite{HonXieCheWan_24}. 
Specifically, we generate a random ideal action $\syshat{A}_t(i)$ and temporarily set $A_t(i) = \syshat{A}_t(i)$ for each $i\in (\Db_t\cup \Da_t)^c$. If 
\[
    \sumN A_t(i) > \alpha N, 
\]
we change $A_t(i)$ to $0$ for $i\in(\Db_t\cup \Da_t)^c$ starting from large $i$'s to small $i$'s, until $\sumN A_t(i) = \alpha N$. 
Similarly, if 
\[
    \sumN A_t(i) < \alpha N, 
\]
we change $A_t(i)$ to $1$ for $i\in(\Db_t\cup \Da_t)^c$ starting from large $i$'s to small $i$'s, until $\sumN A_t(i) = \alpha N$. As guaranteed by \Cref{lem:two-set:subroutine-conform}, we can always achieve $\sumN A_t(i) = \alpha N$ through this process.

\subsection{Details of RB instances simulated in \Cref{sec:experiments}}
\label{app:exp-details:examples}

\subsubsection{Definitions of the instances in \Cref{fig:constructed-examples}}

Here, we provide some details on the RB instances simulated in \Cref{fig:new2-eight-state-045} and \Cref{fig:conveyor-nd}. 

The RB instance corresponding to \Cref{fig:new2-eight-state-045} is defined in \cite{HonXieCheWan_24}. We repeat its definition below for completeness. 
This RB isntance is defined by a single-armed MDP with state space $\sspa=\{0,1,\ldots,7\}$. 
Each state in $\sspa$ is labelled by a \textit{preferred action}, which is chosen to be action $1$ for states $\{0, 1, 2, 3\}$ and action $0$ for states $\{4, 5, 6, 7\}$. 
If an arm is in state~$s$ and takes the preferred action, it moves to state $(s+1) \bmod 8$ with probability $p_{s,\rightsub}$, and stays in state $s$ otherwise; if it does not take the preferred action, it moves to state $(s-1)^+$ with probability $p_{s,\leftsub}$;
here the probabilities $p_{s, \rightsub} = 0.1$ for all $s\in\sspa$, and $p_{s, \leftsub}$ is defined as
\begin{align*}
     p_{s, \leftsub} =
     \begin{cases}
         1 \quad &  \text{ if } s =0,1 \\
         0.5 - 0.1 s / 8 \quad & \text{ otherwise}.
     \end{cases}
\end{align*}
The reward function is defined as $r(7, 0) = 0.1$, $r(0,1)=1/300$, and $r(s,a)=0$ for all the other  $s\in\sspa,a\in\aspa$.  
The budget parameter $\alpha = 0.45$. 
We initialize the states to be uniformly distributed over $\{4,5,6,7\}$. 

Here we briefly comment on the rationale of constructing this example. The main purpose of this transition structure is to let the LP index policy use a suboptimal priority order. 
Specifically, the LP index policy in this example prioritizes state $3$ over state $5$, and state $5$ over state $4$. 
Thus, if at a certain time step, all arms are in states $\{3,4,5\}$ with a suitable distribution, all arms in state $3$ and $5$ will be activated. Consequently, the arms in $3$ or $5$ will return to state $4$, so all arms are stuck in these three states without getting any reward. 

Apart from the transition structure, the value of the reward function $r(0,1)$ is also specially chosen to make the problem non-indexable. The budget $\alpha$ is chosen to be less than $0.5$, so that state $0$ is the neutral state.

The RB instance in \Cref{fig:conveyor-nd} is constructed with similar ideas. The single-armed MDP of this instance has the state space $\sspa = \{0,1,2, \dots, 12\}$; the preferred actions are $1$ for states $\{0,1,\dots, 5\}$, and are $0$ for the rest of the states. The transition rules after taking the preferred (non-preferred) action are the same as the last example, with $p_{\cdot, L}$ adjusted to 
\[
     p_{s, \leftsub} =
     \begin{cases}
         1 \quad &  \text{ if } s =0,1 \\
         0.5 - 0.1 s / 12 \quad & \text{ otherwise}.
     \end{cases}
\]
We let the reward function be defined as $r(7, 0) = 0.1$ and $r(0,1)=1/300$, and $r(s,a)=0$ for all the other  $s\in\sspa,a\in\aspa$.  
We let the budget parameter $\alpha = 0.4$. 
We initialize the states to be uniformly distributed over $\{6,7,8,9,10,11\}$. 
In this example, the LP index policy prioritizes state $5$ over state $7$, and state $7$ over state $6$, so intuitively, the arms could be stuck on these three states. 

One can potentially generalize the above two examples, and construct more RB instances that satisfy Assumptions~\ref{assump:aperiodic-unichain}--\ref{assump:local-stability}, but violate UGAP under the Whittle index policy and the LP index policy.







\subsubsection{Details of generating uniformly random examples}

Finally, we comment on the details of generating uniformly random examples for the experiments in Figures~\ref{fig:uniform-1} and \ref{fig:uniform-6-and-0}: 
For each $s\in\sspa$ and $a\in\aspa$, we let $P(s, a, \cdot)$ be a random vector sampled from the uniform distribution on the simplex $\simplex(\sspa)$; for each $a\in\sspa$, let $r(\cdot, a)$ be a random vector sampled from the uniform distribution on $\simplex(\sspa)$. 
We let all the random vectors described above to be independent.

\end{document}